\documentclass[journal, 10pt, twocolumn]{IEEEtran}

\pdfoutput=1

\usepackage{cite}

\usepackage[pdftex]{graphicx}

\usepackage[cmex10]{amsmath}

\usepackage{algorithmic}

\usepackage{amssymb}

\usepackage{array}

\usepackage{mdwmath}
\usepackage{mdwtab}

\usepackage{eqparbox}

\usepackage{multirow}

\usepackage{fixltx2e}

\usepackage{enumerate}

\usepackage[hyphens]{url}
\usepackage{hyperref}
\usepackage{breakurl}

\usepackage{tabularx}

\usepackage{color}

\usepackage{bm}
\usepackage{bbm}

\usepackage{multibib}
\newcites{Supp}{References}
\bstctlcite[@auxoutSupp]{BSTcontrol2}

\begin{document}
%
% paper title
% can use linebreaks \\ within to get better formatting as desired
%\title{Learning Doubly Sparse Transforms \\ for Image Processing}
\title{Efficient Sum of Outer Products Dictionary Learning (SOUP-DIL) and Its Application to Inverse Problems}

\author{Saiprasad~Ravishankar,~\IEEEmembership{Member,~IEEE,}~Raj~Rao~Nadakuditi,~\IEEEmembership{Member,~IEEE,}\\ and~Jeffrey~A.~Fessler,~\IEEEmembership{Fellow,~IEEE}% <-this % stops a space

\thanks{DOI: 10.1109/TCI.2017.2697206. Copyright (c) 2016 IEEE. Personal use of this material is permitted. However, permission to use this material for any other purposes must be obtained from the IEEE by sending a request to pubs-permissions@ieee.org.}

\thanks{This work was supported in part by the following grants: ONR grant N00014-15-1-2141, DARPA Young Faculty Award  D14AP00086, ARO MURI grants  W911NF-11-1-0391 and 2015-05174-05, NIH grant U01 EB018753, and a UM-SJTU seed grant.}

\thanks{S. Ravishankar, R. R. Nadakuditi,  and J. A. Fessler are with the Department of Electrical Engineering and Computer Science, University of Michigan, Ann Arbor, MI, 48109 USA emails: (ravisha, rajnrao, fessler)@umich.edu.}
}
% <-this % stops a space
%\thanks{\mathbf{Y}. Bresler is with the Department of Electrical and Computer Engineering, University of Illinois, Urbana-Champaign, IL, 61801 USA e-mail: ybresler@illinois.edu.}}

% make the title area
\maketitle

\begin{abstract}
The sparsity of signals in a transform domain or dictionary has been exploited in applications such as compression, denoising and inverse problems. More recently, data-driven adaptation of synthesis dictionaries has shown promise compared to analytical dictionary models. However, dictionary learning problems are typically non-convex and NP-hard, and the usual alternating minimization approaches for these problems are often computationally expensive, with the computations dominated by the NP-hard synthesis sparse coding step. This paper exploits the ideas that drive algorithms such as K-SVD, and investigates in detail efficient methods for aggregate sparsity penalized dictionary learning by first approximating the data with a sum of sparse rank-one matrices (outer products) and then using a block coordinate descent approach to estimate the unknowns. The resulting block coordinate descent algorithms involve efficient closed-form solutions. Furthermore, we consider the problem of dictionary-blind image reconstruction, and propose novel and efficient algorithms for adaptive image reconstruction using block coordinate descent and sum of outer products methodologies. We provide a convergence study of the algorithms for dictionary learning and dictionary-blind image reconstruction.  Our numerical experiments show the promising performance and speed-ups provided by the proposed methods over previous schemes in sparse data representation and compressed sensing-based image reconstruction.
%\boldmath
\end{abstract}

\begin{IEEEkeywords}
Sparsity, Dictionary learning, Inverse problems, Compressed sensing, Fast algorithms, Convergence analysis.
\end{IEEEkeywords}

\IEEEpeerreviewmaketitle

\section{Introduction} \label{sec1}

The sparsity of natural signals and images in a transform domain or dictionary has been exploited in applications such as compression, denoising, and inverse problems.  
Well-known models for sparsity include the synthesis, analysis \cite{elmiru, Cand201159}, and transform \cite{tfcode, sabres} (or generalized analysis) models. Alternative signal models include the balanced sparse model for tight frames \cite{liuqu2}, where the signal is sparse in a synthesis dictionary and also approximately sparse in the corresponding transform (transpose of the dictionary) domain, with a common sparse representation in both domains. 
These various models have been exploited in inverse problem settings such as in compressed sensing-based magnetic resonance imaging \cite{lustig, Liu7448, liuqu2}.
More recently, the data-driven adaptation of sparse signal models has benefited many applications \cite{elad2, elad3, elad4, irami, Mai, bresai, akd, sabres, doubsp2l, saiwen, Cai201489, zhan33} compared to fixed or analytical models.
This paper focuses on data-driven adaptation of the synthesis model and investigates highly efficient methods with convergence analysis and applications, particularly inverse problems. 
In the following, we first briefly review the topic of synthesis dictionary learning before summarizing the contributions of this work.

%\vspace{-0.1in}
\subsection{Dictionary Learning}\label{sec1a} 

The well-known synthesis model approximates a signal $ \mathbf{y}  \in \mathbb{C}^{n} $ by a linear combination of a small subset of atoms or columns of a dictionary $\mathbf{D}  \in \mathbb{C}^{n \times J} $, i.e., $\mathbf{y} \approx \mathbf{D} \mathbf{x}$ with $\mathbf{x} \in \mathbb{C}^{J} $ sparse, i.e., $\left \| \mathbf{x} \right \|_{0}\ll n$.
Here, the $\ell_0$ ``norm" counts the number of non-zero entries in a vector, and we assume $\left \| \mathbf{x} \right \|_{0}$ is much lower than the signal dimension $n$.
Since different signals may be approximated using different subsets of columns in the dictionary $\mathbf{D}$, the synthesis model is also known as a union of subspaces model \cite{vidal2, elhamm}. When $n=J$ and $\mathbf{D}$ is full rank, it is a basis. Else when $J>n$, $\mathbf{D}$ is called an overcomplete dictionary. Because of their richness, overcomplete dictionaries can provide highly sparse (i.e., with few non-zeros) representations of data and are popular.

For a given signal $\mathbf{y}$ and dictionary $\mathbf{D}$, finding a sparse coefficient vector $\mathbf{x}$ involves solving the well-known synthesis \emph{sparse coding} problem. Often this problem is to  minimize $\left \| \mathbf{y}-\mathbf{D}\mathbf{x} \right \|_{2}^{2}$ subject to  $\left \| \mathbf{x} \right \|_{0}\leq s$, where $s$ is a set sparsity level. The synthesis sparse coding problem is NP-hard (Non-deterministic Polynomial-time hard) in general \cite{npb}. Numerous algorithms \cite{pati, mp2, chen2,  befro, Needell2, wei} including greedy and relaxation algorithms have been proposed for such problems. While some of these algorithms are guaranteed to provide the correct solution under certain conditions, these conditions are often restrictive and violated in applications. Moreover, these algorithms typically tend to be computationally expensive for large-scale problems.

More recently, data-driven adaptation of synthesis dictionaries, called dictionary learning, has been investigated \cite{ols, eng, elad, Yagh, Mai}. 
Dictionary learning provides promising results in several applications, including in inverse problems \cite{elad2, elad3, bresai, Kongwang}.
Given a collection of signals $\left \{ \mathbf{y}_{i} \right \}_{i=1}^{N}$ (e.g., patches extracted from some images) that are represented as columns of the matrix $ \mathbf{Y} \in \mathbb{C}^{n \times N} $, the dictionary learning problem is often formulated as follows \cite{elad}:
\begin{align} 
\nonumber (\mathrm{P0})\; & \min_{\mathbf{D},\mathbf{X}}\: \left \| \mathbf{Y}-\mathbf{D}\mathbf{X} \right \|_{F}^{2}\; \: \mathrm{s.t.}\; \:  \left \| \mathbf{x} _{i} \right \|_{0}\leq s\; \: \forall \,  i, \, \left \| \mathbf{d}_{j} \right \|_2 =1 \,\forall \, j.
\end{align}
Here, $\mathbf{d}_{j}$ and $\mathbf{x}_{i}$ denote the columns of the dictionary $\mathbf{D} \in \mathbb{C}^{n \times J}$ and sparse code matrix $\mathbf{X} \in \mathbb{C}^{J \times N}$, respectively, and $s$ denotes the maximum sparsity level (number of non-zeros in representations $\mathbf{x}_i$) allowed for each signal. Constraining the columns of the dictionary to have unit norm eliminates the scaling ambiguity \cite{kar}. Variants of Problem (P0) include replacing the $\ell_0$ ``norm" for sparsity with an $\ell_1$ norm or an alternative sparsity criterion, or enforcing additional properties (e.g., incoherence \cite{barchi1, irami}) for the dictionary $\mathbf{D}$, or solving an online version (where the dictionary is updated sequentially as new signals arrive) of the problem \cite{Mai}.

Algorithms for Problem (P0) or its variants \cite{eng, elad, Yagh, zibul, skret, Mai, ophel4, smith1, sadeg2, segh2, bao1} typically alternate in some form between a \emph{sparse coding step} (updating $\mathbf{X}$), and a \emph{dictionary update step} (updating $\mathbf{D}$). Some of these algorithms (e.g., \cite{elad, smith1, segh2}) also partially update $\mathbf{X}$ in the dictionary update step. A few recent methods update $\mathbf{D}$ and $\mathbf{X}$ jointly in an iterative fashion \cite{directdl11, hawesep22}.
The K-SVD method \cite{elad} has been particularly popular \cite{elad2, elad3, bresai}.
Problem (P0) is highly non-convex and NP-hard, and most dictionary learning approaches lack proven convergence guarantees. Moreover, existing algorithms for (P0) tend to be computationally expensive (particularly alternating-type algorithms), with the computations usually dominated by the sparse coding step.

Some recent works  \cite{spel2b, agra1, arora1, yint3, bao1, agra2} have studied the convergence of (specific) dictionary learning algorithms.
However, these dictionary learning methods have not been demonstrated to be useful in applications such as inverse problems. Bao et al. \cite{bao1} find that their proximal scheme denoises less effectively than K-SVD \cite{elad2}. Many prior works use restrictive assumptions (e.g., noiseless data, etc.) for their convergence results.

Dictionary learning has been demonstrated to be useful in inverse problems such as in tomography \cite{gusup} and magnetic resonance imaging (MRI) \cite{bresai, wangying}.
The goal in inverse problems is to estimate an unknown signal or image from its (typically corrupted) measurements. We consider the following general regularized linear inverse problem:
\begin{equation}\label{reginveq1}
\min_{\mathbf{y} \in \mathbb{C}^{p}}\:\left \| \mathbf{A}\mathbf{y}-\mathbf{z} \right \|_{2}^{2}+ \zeta (\mathbf{y})
\end{equation}
where $\mathbf{y} \in \mathbb{C}^{p}$ is a vectorized version of a signal or image (or volume) to be reconstructed, $\mathbf{z} \in \mathbb{C}^{m}$ denotes the observed measurements, and $\mathbf{A} \in \mathbb{C}^{m \times p}$ is the associated measurement matrix for the application.  For example, in the classical denoising application (assuming i.i.d. gaussian noise), the operator $\mathbf{A}$ is the identity matrix, whereas in inpainting (i.e., missing pixels case), $\mathbf{A}$ is a diagonal matrix of zeros and ones. In medical imaging applications such as computed tomography or magnetic resonance imaging, the system operator takes on other forms such as a Radon transform, or a Fourier encoding, respectively.
A regularizer $ \zeta (\mathbf{y})$ is used in \eqref{reginveq1} to capture assumed properties of the underlying image $\mathbf{y}$ and to help compensate for noisy or incomplete data $\mathbf{z}$. 
For example, $ \zeta (\mathbf{y})$ could encourage the sparsity of $\mathbf{y}$ in some fixed or known sparsifying transform or dictionary, or alternatively, it could be an adaptive dictionary-type regularizer such as one based on (P0) \cite{bresai}.
The latter case corresponds to dictionary-blind image reconstruction, where the dictionary for the underlying image patches is unknown a priori. The goal is then to reconstruct both the image $\mathbf{y}$ as well as the dictionary $\mathbf{D}$ (for image patches) from the observed measurements $\mathbf{z}$.
Such an approach allows the dictionary to adapt to the underlying image \cite{bresai}.

%\vspace{-0.1in}
\subsection{Contributions} \label{sec1b} 

This work focuses on dictionary learning using a general overall sparsity penalty instead of column-wise constraints like in (P0).
We focus on $\ell_{0}$ ``norm'' penalized dictionary learning, but also consider alternatives.
Similar to recent works \cite{elad, sadeg33}, we approximate the data ($\mathbf{Y}$) by a sum of sparse rank-one matrices or outer products.
The constraints and penalties in the learning problems are separable in terms of the dictionary columns and their corresponding coefficients, which enables efficient optimization.
In particular, we use simple and exact block coordinate descent approaches to estimate the factors of the various rank-one matrices in the dictionary learning problem. 
Importantly, we consider the application of such sparsity penalized dictionary learning in inverse problem settings, and investigate the problem of overall sparsity penalized dictionary-blind image reconstruction.  
We propose novel methods for image reconstruction that exploit the proposed efficient dictionary learning methods.
We provide a novel convergence analysis of the algorithms for overcomplete dictionary learning and dictionary-blind image reconstruction for both $\ell_{0}$ and $\ell_{1}$ norm-based settings.
Our experiments illustrate the empirical convergence behavior of our methods, and demonstrate their promising performance and speed-ups over some recent related schemes in sparse data representation and compressed sensing-based \cite{tao1, don} image reconstruction. These experimental results illustrate the benefits of aggregate sparsity penalized dictionary learning, and the proposed $\ell_{0}$ ``norm''-based methods.

\subsection{Relation to Recent Works} \label{sec1c} 

The sum of outer products approximation to data has been exploited in recent works \cite{elad, sadeg33} for developing dictionary learning algorithms. 
Sadeghi et al. \cite{sadeg33} considered a variation of the Approximate K-SVD algorithm \cite{ge35b} by including an $\ell_{1}$ penalty for coefficients in the dictionary update step of Approximate K-SVD. However, a formal and rigorous description of the formulations and various methods for overall sparsity penalized dictionary learning, and their extensions, was not developed in that work.
 In this work, we investigate in detail Sum of OUter Products (SOUP) based learning methodologies in a variety of novel problem settings. We focus mainly on $\ell_{0}$ ``norm'' penalized dictionary learning. While Bao et al. \cite{bao2, bao1} proposed proximal alternating schemes for $\ell_{0}$ dictionary learning, we show superior performance (both in terms of data representation quality and runtime) with the proposed simpler direct block coordinate descent methods for sparse data representation.
Importantly, we investigate the novel extensions of SOUP learning methodologies to inverse problem settings. 
We provide a detailed convergence analysis and empirical convergence studies for the various efficient algorithms for both dictionary learning and dictionary-blind image reconstruction. Our methods work better than classical overcomplete dictionary learning-based schemes (using K-SVD) in applications such as sparse data representation and magnetic resonance image reconstruction from undersampled data.
We also show some benefits of the proposed $\ell_{0}$ ``norm''-based adaptive methods over corresponding $\ell_{1}$ methods in applications.

\subsection{Organization} \label{sec1d} 

The rest of this paper is organized as follows. Section \ref{sec2} discusses the formulation for $\ell_{0}$ ``norm''-based dictionary learning, along with potential alternatives. Section \ref{sec3} presents the dictionary learning algorithms and their computational properties. Section \ref{sec7} discusses the formulations for dictionary-blind image reconstruction, along with the corresponding algorithms.
Section \ref{sec4} presents a convergence analysis for various algorithms. Section \ref{sec5} illustrates the empirical convergence behavior of various methods and demonstrates their usefulness for sparse data representation and inverse problems (compressed sensing). Section \ref{sec6} concludes with proposals for future work.

%\vspace{-0.1in}
\section{Dictionary Learning Problem Formulations}
\label{sec2}

This section and the next focus on the ``classical'' problem of dictionary learning for sparse signal representation. Section \ref{sec7} generalizes these methods to inverse problems.

%\vspace{-0.1in}
\subsection{$\ell_{0}$ Penalized Formulation} \label{sec2a} 

Following  \cite{bao1}, we consider a sparsity penalized variant of Problem (P0). Specifically, replacing the sparsity constraints in (P0) with an $\ell_0$ penalty $\sum_{i=1}^{N} \left \| \mathbf{x}_{i} \right \|_{0}$ and introducing a variable $\mathbf{C} = \mathbf{X}^{H} \in \mathbb{C}^{N \times J}$, where $(\cdot)^{H}$ denotes matrix Hermitian (conjugate transpose), leads to the following formulation:
\begin{align}  \label{eqop4}
& \min_{\mathbf{D},\mathbf{C}}\: \left \| \mathbf{Y}-\mathbf{D}\mathbf{C}^{H} \right \|_{F}^{2} + \lambda^{2} \left \| \mathbf{C} \right \|_{0} \; \: \mathrm{s.t.}\; \: \left \| \mathbf{d}_{j} \right \|_2 =1 \,\forall \, j.
\end{align}
where $\left \| \mathbf{C} \right \|_{0}$ counts the number of non-zeros in matrix $\mathbf{C}$, and $\lambda^{2}$ with $\lambda >0$, is a weight to control the overall sparsity.

Next, following previous work like \cite{elad, sadeg33}, we express the matrix $\mathbf{D}\mathbf{C}^{H}$ in \eqref{eqop4} as a sum of (sparse) rank-one matrices or outer products $\sum_{j=1}^{J} \mathbf{d}_{j}\mathbf{c}_{j}^{H}$, where $\mathbf{c}_{j}$ is the $j$th column of $\mathbf{C}$. This SOUP representation of the data $\mathbf{Y}$ is natural because it separates out the contributions of the various atoms in representing the data.
For example, atoms of a dictionary whose contributions to the data ($\mathbf{Y}$) representation error or modeling error are small could be dropped. 
With this model, \eqref{eqop4} becomes (P1) as follows, where $\left \| \mathbf{C} \right \|_{0} = \sum_{j=1}^{J} \left \| \mathbf{c}_{j} \right \|_{0}$:
\begin{align} 
\nonumber (\mathrm{P1})\; & \min_{\left \{ \mathbf{d}_{j},\mathbf{c}_{j} \right \}}\: \begin{Vmatrix}
\mathbf{Y}- \sum_{j=1}^{J} \mathbf{d}_{j}\mathbf{c}_{j}^{H}
\end{Vmatrix}_{F}^{2} + \lambda^{2} \sum_{j=1}^{J} \left \| \mathbf{c}_{j} \right \|_{0} \\
\nonumber & \; \; \; \mathrm{s.t.}\; \: \left \| \mathbf{d}_{j} \right \|_2 =1, \, \left \| \mathbf{c}_{j} \right \|_{\infty} \leq L \,\forall \, j.
\end{align}
As in Problem (P0), the matrix $\mathbf{d}_{j}\mathbf{c}_{j}^{H}$ in (P1) is invariant to joint scaling of $\mathbf{d}_{j}$ and $\mathbf{c}_{j}$ as $\alpha \mathbf{d}_{j}$ and $(1/\alpha) \mathbf{c}_{j}$, for $\alpha \neq 0$. The constraint $\left \| \mathbf{d}_{j} \right \|_2 =1$ helps in removing this scaling ambiguity. 
We also enforce the constraint $\left \| \mathbf{c}_{j} \right \|_{\infty} \leq L$, with $L>0$, in (P1) \cite{bao1} (e.g., $L=\left \| \mathbf{Y} \right \|_{F}$).
This is because the objective in (P1) is non-coercive. In particular, consider a dictionary $\mathbf{D}$ that has a column $\mathbf{d}_{j}$ that repeats. Then, in this case, the SOUP approximation for $\mathbf{Y}$ in (P1) could have both the terms $\mathbf{d}_{j}\mathbf{c}_{j}^{H}$ and $-\mathbf{d}_{j}\mathbf{c}_{j}^{H}$ with $\mathbf{c}_{j}$ that is highly sparse (and non-zero), and the objective would be invariant\footnote{Such degenerate representations for $\mathbf{Y}$, however, cannot be minimizers in the problem because they simply increase the $\ell_{0}$ sparsity penalty without affecting the fitting error (the first term) in the cost.} to (arbitrarily) large scalings of $\mathbf{c}_{j}$ (i.e., non-coercive objective).
The $\ell_{\infty}$ constraints on the columns of $\mathbf{C}$ (that constrain the magnitudes of entries of $\mathbf{C}$) alleviate possible problems (e.g., unbounded iterates in algorithms) due to such a non-coercive objective.

Problem (P1) aims to learn the factors $\left \{ \mathbf{d}_{j} \right \}_{j=1}^{J}$ and $\left \{ \mathbf{c}_{j} \right \}_{j=1}^{J}$ that enable the best SOUP sparse representation of $\mathbf{Y}$. However, (P1), like (P0), is non-convex, even if one replaces the $\ell_{0}$ ``norm'' with a convex penalty.

Unlike the sparsity constraints in (P0), the term $\left \| \mathbf{C} \right \|_{0}=\sum_{j=1}^{J}\left \| \mathbf{c}_{j} \right \|_{0}=\sum_{i=1}^{N}\left \| \mathbf{x}_{i} \right \|_{0}=\left \| \mathbf{X} \right \|_{0}$ in Problem (P1) (or \eqref{eqop4}) penalizes the number of non-zeros in the (entire) coefficient matrix (i.e., the number of non-zeros used to represent a collection of signals), allowing variable sparsity levels across the signals. This flexibility could enable better data representation error versus sparsity trade-offs
than with a fixed column sparsity constraint (as in (P0)). 
For example, in imaging or image processing applications, the dictionary is usually learned for image patches. Patches from different regions of an image typically contain different amounts of information\footnote{Here, the emphasis is on the required sparsity levels for encoding different patches. This is different from the motivation for multi-class models such as in \cite{saiwen, sravTCI1} (or \cite{irami, zhan33}), where patches from different regions of an image are assumed to contain different ``types'' of features or textures or edges, and thus common sub-dictionaries or sub-transforms are learned for groups of patches with similar features.}, and thus enforcing a common sparsity bound
for various patches does not reflect typical image properties (i.e., is restrictive) and usually leads to sub-optimal
performance in applications.
In contrast, Problem (P1) encourages a more general and flexible image model, and leads to promising performance in the experiments of this work.
Additionally, we have observed that the different columns of $\mathbf{C}$ (or rows of $ \mathbf{X}$) learned by the proposed algorithm (in Section \ref{sec3}) for (P1) typically have widely different sparsity levels or number of non-zeros in practice.

\subsection{Alternative Formulations} \label{sec2b} 

Several variants of Problem (P1) could be constructed that also involve the SOUP representation. For example, the $\ell_0$ ``norm" for sparsity could be replaced by the $\ell_1$ norm \cite{sadeg33} resulting in the following formulation: 
\begin{align} 
\nonumber (\mathrm{P2})\; & \min_{\left \{ \mathbf{d}_{j},\mathbf{c}_{j} \right \}}\: \begin{Vmatrix}
\mathbf{Y}- \sum_{j=1}^{J} \mathbf{d}_{j}\mathbf{c}_{j}^{H}
\end{Vmatrix}_{F}^{2} + \mu \sum_{j=1}^{J} \left \| \mathbf{c}_{j} \right \|_{1} \\
\nonumber & \; \; \; \mathrm{s.t.}\; \: \left \| \mathbf{d}_{j} \right \|_2 =1 \,\forall \, j.
\end{align}
Here, $\mu>0$, and the objective is coercive with respect to $\mathbf{C}$  because of the $\ell_{1}$ penalty.
Another alternative to (P1) enforces $p$-block-orthogonality constraints on $\mathbf{D}$. The dictionary in this case is split into blocks (instead of individual atoms), each of which has $p$ (unit norm) atoms that are orthogonal to each other. For $p=2$, we would have (added) constraints such as  $\mathbf{d}_{2j-1}^{H}\mathbf{d}_{2j} = 0$, $1\leq j \leq J/2$. In the extreme (more constrained) case of $p=n$, the dictionary would be made of several square unitary\footnote{Recent works have shown the promise of learned orthonormal (or unitary) dictionaries or sparsifying transforms in applications such as image denoising \cite{sabres3, Cai201489}. Learned multi-class unitary models have been shown to work well in inverse problem settings such as in MRI \cite{sravTCI1, zhan33}.} blocks (cf. \cite{lesa1}).
For tensor-type data, (P1) can be modified by enforcing the dictionary atoms to be in the form of a Kronecker product.
The algorithm proposed in Section \ref{sec3} can be easily extended to accommodate several such variants of Problem (P1).
We do not explore all such alternatives in this work due to space constraints, and a more detailed investigation of these is left for future work.

\newtheorem{theorem}{Theorem}
\newtheorem{cor}{Corollary}
\newtheorem{lem}{Lemma}
\newtheorem{prop}{Proposition}

%\vspace{-0.07in}
\section{Learning Algorithm and Properties}
\label{sec3}  

\subsection{Algorithm} \label{sec3a} 

We apply a block coordinate descent method to estimate the unknown variables in Problem (P1). For each $j$ ($1 \leq j \leq J$), the algorithm has two steps.
First, we solve (P1) with respect to $\mathbf{c}_{j}$ keeping all the other variables fixed. We refer to this step as the \emph{sparse coding step} in our method. Once $\mathbf{c}_{j}$ is updated, we solve (P1) with respect to $\mathbf{d}_{j}$ keeping all other variables fixed. This step is referred to as the \emph{dictionary atom update step} or simply \emph{dictionary update step}. The algorithm thus updates the factors of the various rank-one matrices one-by-one.
The approach for (P2) is similar and is a simple extension of the OS-DL method in \cite{sadeg33} to the complex-valued setting.
We next describe the sparse coding and dictionary atom update steps of the methods for (P1) and (P2).

\subsubsection{Sparse Coding Step for (P1)} \label{sec3a1}
Minimizing (P1) with respect to $\mathbf{c}_{j}$ leads to the following non-convex problem, where $\mathbf{E}_{j} \triangleq \mathbf{Y} - \sum_{k\neq j} \mathbf{d}_{k}\mathbf{c}_{k}^{H}$ is a fixed matrix based on the most recent values of all other atoms and coefficients:
\begin{equation} \label{eqop5}
\min_{\mathbf{c}_{j} \in \mathbb{C}^{N}} \; \begin{Vmatrix}
\mathbf{E}_{j} - \mathbf{d}_{j}\mathbf{c}_{j}^{H}
\end{Vmatrix}_{F}^{2} + \lambda^{2} \left \| \mathbf{c}_{j} \right \|_{0}  \;\; \mathrm{s.t.}\; \: \left \| \mathbf{c}_{j} \right \|_{\infty} \leq L.
\end{equation}
The following proposition provides the solution to Problem \eqref{eqop5}, where the hard-thresholding operator $H_{\lambda} (\cdot)$ is defined as
\begin{equation} \label{equ88ch4}
\left ( H_{\lambda} (\mathbf{b}) \right )_{i}=\left\{\begin{matrix}
 0,& \;\;\left | b_{i} \right | < \lambda \\
b_{i},  & \;\;\left | b_{i} \right | \geq \lambda 
\end{matrix}\right.
\end{equation}
with $\mathbf{b} \in \mathbb{C}^{N}$, and the subscript $i$ above indexes vector entries. We use $b_{i}$ (without bold font) to denote the $i$th (scalar) element of a vector $\mathbf{b}$.
We assume that the bound $L > \lambda$ holds and let $\mathbf{1}_{N}$ denote a vector of ones of length $N$. The operation ``$\odot$" denotes element-wise multiplication,
and $\mathbf{z}= \min(\mathbf{a}, \mathbf{u})$ for vectors $\mathbf{a}, \mathbf{u} \in \mathbb{R}^{N}$ denotes the element-wise minimum operation, i.e., $z_{i}= \min(a_{i}, b_{i})$, $1 \leq i \leq N$.
For a vector $\mathbf{c} \in \mathbb{C}^{N}$, $e^{j \angle \mathbf{c}} \in \mathbb{C}^{N}$ is computed element-wise, with ``$\angle$'' denoting the phase.

\begin{prop}\label{prop1} \vspace{0.02in}
Given $\mathbf{E}_{j} \in \mathbb{C}^{n \times N}$ and $\mathbf{d}_{j} \in \mathbb{C}^{n}$, and assuming $L > \lambda$,  a global minimizer of the sparse coding problem \eqref{eqop5} is obtained by the following truncated hard-thresholding operation:
\begin{equation} \label{tru1ch4}
\hat{\mathbf{c}}_{j} =  \min\left ( \begin{vmatrix}
H_{\lambda} \left ( \mathbf{E}_{j}^{H}\mathbf{d}_{j} \right )
\end{vmatrix}, L \mathbf{1}_{N} \right ) \, \odot \, e^{j \angle \,  \mathbf{E}_{j}^{H}\mathbf{d}_{j}  }.
\end{equation}
The minimizer of \eqref{eqop5} is unique if and only if the vector $\mathbf{E}_{j}^{H}\mathbf{d}_{j}$ has no entry with a magnitude of $\lambda$. 
\end{prop}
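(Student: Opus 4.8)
The plan is to remove the constant and cross terms so that \eqref{eqop5} decouples into $N$ independent scalar problems, each solvable by an elementary comparison. First I would expand the data-fidelity term: since $\mathbf{d}_j$ has unit norm, $\|\mathbf{E}_j - \mathbf{d}_j\mathbf{c}_j^H\|_F^2 = \|\mathbf{E}_j\|_F^2 - 2\,\mathrm{Re}\big(\mathbf{c}_j^H \mathbf{E}_j^H\mathbf{d}_j\big) + \|\mathbf{c}_j\|_2^2$. Writing $\mathbf{b} \triangleq \mathbf{E}_j^H\mathbf{d}_j$ and dropping the constant $\|\mathbf{E}_j\|_F^2$, Problem \eqref{eqop5} becomes $\min\,\sum_{i=1}^{N}\big(|c_{ji}|^2 - 2\,\mathrm{Re}(\bar c_{ji} b_i) + \lambda^2\,\mathbbm{1}\{c_{ji}\neq 0\}\big)$ subject to $|c_{ji}|\le L$ for all $i$. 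Because both the objective and the box constraint are separable over the entries, it suffices to solve, for each $i$, the scalar subproblem $\min_{c\in\mathbb{C},\,|c|\le L}\, h_i(c)$ with $h_i(c) \triangleq |c|^2 - 2\,\mathrm{Re}(\bar c\, b_i) + \lambda^2\,\mathbbm{1}\{c\neq 0\}$, and then stack the scalar minimizers.

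For the scalar subproblem I would treat $c=0$ (value $h_i(0)=0$) and $c\neq 0$ separately. For nonzero $c$ with modulus $r\in(0,L]$, the term $\mathrm{Re}(\bar c\, b_i)$ is maximized, over the phase of $c$, by aligning $c$ with $b_i$, i.e., $c = r\,e^{j\angle b_i}$, and this maximizer is unique whenever $b_i\neq 0$; with the phase fixed, $h_i$ reduces to the strictly convex function $r\mapsto r^2 - 2r|b_i| + \lambda^2$ on $(0,L]$, whose minimizer is $r^\star = \min(|b_i|,L)$. Hence the best admissible nonzero candidate is $c^\star = \min(|b_i|,L)\,e^{j\angle b_i}$ with objective value $\lambda^2 + (r^\star)^2 - 2 r^\star |b_i|$.

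Next I would compare $c^\star$ against $0$. If $|b_i|\le L$, then $r^\star=|b_i|$ and the value at $c^\star$ is $\lambda^2 - |b_i|^2$, so $c^\star$ beats $0$ iff $|b_i|>\lambda$ and ties $0$ exactly when $|b_i|=\lambda$. If $|b_i|>L$, then $r^\star=L$ and the value at $c^\star$ is $\lambda^2 + L^2 - 2L|b_i| < \lambda^2 + L^2 - 2L^2 = \lambda^2 - L^2 < 0$, using $L>\lambda$; so $c^\star$ strictly beats $0$, and here $|b_i|>L>\lambda$. Collecting the cases, a minimizer of the scalar subproblem is $0$ when $|b_i|<\lambda$, equals $b_i$ when $\lambda<|b_i|\le L$, and equals $L\,e^{j\angle b_i}$ when $|b_i|>L$; this is precisely $\min\big(|H_\lambda(b_i)|,\,L\big)\,e^{j\angle b_i}$ (the undefined phase when $b_i=0$ being harmless, since it multiplies $|H_\lambda(0)|=0$). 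Stacking over $i$ gives \eqref{tru1ch4} as a global minimizer. For the uniqueness claim: when $b_i\neq 0$ both the optimal phase and the optimal modulus are unique, so the only possible non-uniqueness is the tie between $c^\star$ and $0$, which by the above occurs iff $|b_i|=\lambda$; and when $b_i=0$ the unique minimizer is $c=0$. Therefore \eqref{tru1ch4} is the unique global minimizer if and only if no entry of $\mathbf{E}_j^H\mathbf{d}_j$ has magnitude equal to $\lambda$.

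I expect the only non-routine step to be the second comparison above: checking that the modulus-clipped candidate (radius $L$) still beats the zero solution when $|b_i|>L$. That is exactly where the hypothesis $L>\lambda$ enters, and ignoring it would break the stated closed form. The remaining pieces — the expansion of the Frobenius norm, the separability reduction, the one-dimensional optimization over the modulus, and the careful tracking of strict versus non-strict inequalities needed for the uniqueness statement — are routine.
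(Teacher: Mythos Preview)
Your proposal is correct and follows essentially the same approach as the paper: expand the Frobenius norm using $\|\mathbf{d}_j\|_2=1$, reduce to $N$ separable scalar problems in the entries of $\mathbf{c}_j$, fix the optimal phase to that of $b_i$, optimize the modulus, and compare the best nonzero candidate against $c=0$, invoking $L>\lambda$ precisely in the case $|b_i|>L$. The paper additionally completes the square to write the scalar cost as $|c_{ji}-b_i|^2+\lambda^2\theta(c_{ji})$ rather than your expanded form, but this is a cosmetic difference; your treatment of the phase ambiguity at $b_i=0$ and of the uniqueness tie at $|b_i|=\lambda$ is slightly more explicit than the paper's but otherwise identical in substance.
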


The proof of Proposition \ref{prop1} is provided in the supplementary material.

\subsubsection{Sparse Coding Step for (P2)} \label{sec3a1b}

The sparse coding step of (P2) involves solving the following problem: 
\begin{equation} \label{eqop5bb}
\min_{\mathbf{c}_{j}  \in \mathbb{C}^{N}} \; \begin{Vmatrix}
\mathbf{E}_{j} - \mathbf{d}_{j}\mathbf{c}_{j}^{H}
\end{Vmatrix}_{F}^{2} + \mu \left \| \mathbf{c}_{j} \right \|_{1}.
\end{equation}
The solution is given by the following proposition (proof in the supplement), and was previously discussed in \cite{sadeg33} for the case of real-valued data.

\begin{prop}\label{prop1b} \vspace{0.02in}
Given $\mathbf{E}_{j} \in \mathbb{C}^{n \times N}$ and $\mathbf{d}_{j} \in \mathbb{C}^{n}$,  the unique global minimizer of the sparse coding problem \eqref{eqop5bb} is
\begin{equation} \label{tru1ch4bnmn}
\hat{\mathbf{c}}_{j} =  \max \left (\begin{vmatrix}
  \mathbf{E}_{j}^{H}\mathbf{d}_{j}
\end{vmatrix} - \frac{\mu}{2} \mathbf{1}_{N}, \, 0 \right ) \, \odot \, e^{j \angle \, \mathbf{E}_{j}^{H}\mathbf{d}_{j} }.
\end{equation}
\end{prop}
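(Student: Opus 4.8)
The plan is to reduce the vector-valued minimization in \eqref{eqop5bb} to $N$ decoupled scalar problems, solve each one in closed form, and then recognize the aggregate solution as the soft-thresholding formula \eqref{tru1ch4bnmn}. First I would expand the Frobenius-norm term. Writing $\mathbf{E}_j = [\mathbf{e}_1 \; \mathbf{e}_2 \; \cdots \; \mathbf{e}_N]$ column-wise and letting $c_{j,i}$ denote the $i$th entry of $\mathbf{c}_j$, the matrix $\mathbf{d}_j \mathbf{c}_j^H$ has $i$th column $\overline{c_{j,i}}\,\mathbf{d}_j$, so $\|\mathbf{E}_j - \mathbf{d}_j\mathbf{c}_j^H\|_F^2 = \sum_{i=1}^N \|\mathbf{e}_i - \overline{c_{j,i}}\,\mathbf{d}_j\|_2^2$. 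Using $\|\mathbf{d}_j\|_2 = 1$, each term equals $\|\mathbf{e}_i\|_2^2 - 2\,\mathrm{Re}(c_{j,i}\,\mathbf{d}_j^H\mathbf{e}_i) + |c_{j,i}|^2$. Since $\|\mathbf{c}_j\|_1 = \sum_i |c_{j,i}|$, the whole objective separates as $\sum_{i=1}^N \big( |c_{j,i}|^2 - 2\,\mathrm{Re}(c_{j,i}\,\mathbf{d}_j^H\mathbf{e}_i) + \mu |c_{j,i}| \big) + \text{const}$, so it suffices to minimize $f(\alpha) = |\alpha|^2 - 2\,\mathrm{Re}(\alpha\,\overline{\beta}) + \mu|\alpha|$ over $\alpha \in \mathbb{C}$, where I set $\beta \triangleq \overline{\mathbf{d}_j^H\mathbf{e}_i} = (\mathbf{E}_j^H\mathbf{d}_j)_i$ so that $\mathrm{Re}(\alpha\,\overline{\beta}) = \mathrm{Re}(\overline{\alpha}\,\beta)$ matches the $i$th entry of $\mathbf{E}_j^H\mathbf{d}_j$.

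Next I would solve this scalar problem. Fixing the modulus $r = |\alpha| \geq 0$ and optimizing the phase, $\mathrm{Re}(\overline{\alpha}\beta)$ is maximized when $\alpha$ has the same phase as $\beta$, giving value $r|\beta|$; if $\beta = 0$ the minimizer is $\alpha = 0$ and the formula holds trivially. For $\beta \neq 0$ the problem becomes $\min_{r \geq 0} \, r^2 - 2|\beta| r + \mu r = \min_{r\geq 0}\, r^2 - (2|\beta| - \mu) r$, a one-dimensional convex quadratic on the half-line. Its unconstrained vertex is at $r^\star = |\beta| - \mu/2$; hence the constrained minimizer is $r = \max(|\beta| - \mu/2, 0)$, and it is unique because the quadratic is strictly convex. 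Recombining the optimal modulus and phase gives $\widehat{\alpha} = \max(|\beta| - \mu/2,\,0)\,e^{j\angle\beta}$. Assembling this over $i = 1,\dots,N$ with $\beta = (\mathbf{E}_j^H\mathbf{d}_j)_i$ yields exactly \eqref{tru1ch4bnmn}, and uniqueness of each scalar minimizer gives uniqueness of the vector minimizer.

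I do not expect a serious obstacle here — this is the complex soft-thresholding / proximal operator of the $\ell_1$ norm, and the argument is essentially routine. The only point requiring a little care is the complex phase bookkeeping: one must be consistent about whether $\mathbf{d}_j^H\mathbf{e}_i$ or its conjugate appears, and verify that the phase that minimizes $f$ is $\angle\beta$ with $\beta = (\mathbf{E}_j^H\mathbf{d}_j)_i$, so that the stated $e^{j\angle\,\mathbf{E}_j^H\mathbf{d}_j}$ factor is correct (including the degenerate $\beta = 0$ case, where $e^{j\angle 0}$ is immaterial since the amplitude is $0$). It is also worth noting explicitly that, unlike the $\ell_0$ case in Proposition \ref{prop1}, no $\ell_\infty$ truncation is needed: the $\ell_1$ penalty already makes the objective coercive in $\mathbf{c}_j$, so the minimizer is finite and, by strict convexity of each scalar subproblem, globally unique.
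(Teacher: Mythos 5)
Your proposal is correct and follows essentially the same route as the paper's proof: the paper completes the square using $\|\mathbf{d}_j\|_2=1$ to reduce \eqref{eqop5bb} to $\min_{\mathbf{c}_j}\|\mathbf{c}_j-\mathbf{E}_j^H\mathbf{d}_j\|_2^2+\mu\|\mathbf{c}_j\|_1$, then solves the resulting decoupled scalar problems by phase alignment and a one-dimensional quadratic, exactly as you do (your column-wise expansion of the Frobenius norm is just a different bookkeeping of the same algebra). Your remarks on uniqueness and the absence of an $\ell_\infty$ truncation are consistent with the paper.
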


\subsubsection{Dictionary Atom Update Step} \label{sec3a2}

Minimizing (P1) or (P2) with respect to $\mathbf{d}_{j}$ leads to the following problem:
\begin{equation} \label{eqop6}
 \min_{\mathbf{d}_{j} \in \mathbb{C}^{n}} \; \begin{Vmatrix}
\mathbf{E}_{j} - \mathbf{d}_{j}\mathbf{c}_{j}^{H}
\end{Vmatrix}_{F}^{2}  \;\:\; \mathrm{s.t.}\; \: \left \| \mathbf{d}_{j} \right \|_2 =1.
\end{equation}
Proposition \ref{prop2} provides the closed-form solution for \eqref{eqop6}. The solution takes the form given in \cite{ge35b}. We briefly derive the solution in the supplementary material considering issues such as uniqueness.

\begin{prop}\label{prop2} \vspace{0.02in}
Given $\mathbf{E}_{j} \in \mathbb{C}^{n \times N}$ and $\mathbf{c}_{j} \in \mathbb{C}^{N}$, a global minimizer of the dictionary atom update problem \eqref{eqop6} is
\begin{equation} \label{tru1ch4g}
\hat{\mathbf{d}}_{j} =  \left\{\begin{matrix}
\frac{\mathbf{E}_{j}\mathbf{c}_{j}}{\left \| \mathbf{E}_{j}\mathbf{c}_{j} \right \|_{2}}, & \mathrm{if}\,\, \mathbf{c}_{j}\neq 0 \\ 
\mathbf{v}, & \mathrm{if}\,\, \mathbf{c}_{j}= 0 
\end{matrix}\right.
\end{equation}
where $\mathbf{v}$ can be any unit $\ell_{2}$ norm vector (i.e., on the unit sphere). In particular, here, we set $\mathbf{v}$ to be the first column of the $n \times n $ identity matrix.
The solution is unique if and only if $\mathbf{c}_{j}\neq 0$. 
\end{prop}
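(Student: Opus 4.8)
The plan is to reduce the Frobenius-norm objective in \eqref{eqop6} to a simple linear functional of $\mathbf{d}_j$ on the unit sphere, and then invoke Cauchy--Schwarz. First I would expand
$\bigl\| \mathbf{E}_j - \mathbf{d}_j \mathbf{c}_j^{H} \bigr\|_F^2 = \|\mathbf{E}_j\|_F^2 - 2\,\mathrm{Re}\,\bigl( \mathbf{d}_j^{H} \mathbf{E}_j \mathbf{c}_j \bigr) + \|\mathbf{d}_j\|_2^2 \, \|\mathbf{c}_j\|_2^2$,
using $\|\mathbf{d}_j \mathbf{c}_j^H\|_F^2 = \mathrm{tr}(\mathbf{c}_j \mathbf{d}_j^H \mathbf{d}_j \mathbf{c}_j^H) = \|\mathbf{d}_j\|_2^2\|\mathbf{c}_j\|_2^2$ and $\mathrm{tr}(\mathbf{E}_j^H \mathbf{d}_j \mathbf{c}_j^H) = \mathbf{c}_j^H \mathbf{E}_j^H \mathbf{d}_j = \overline{\mathbf{d}_j^H \mathbf{E}_j \mathbf{c}_j}$. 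On the constraint set $\|\mathbf{d}_j\|_2 = 1$, the first and last terms are constants, so \eqref{eqop6} is equivalent to
$\max_{\|\mathbf{d}_j\|_2 = 1} \, \mathrm{Re}\,\bigl( \mathbf{d}_j^{H} \mathbf{E}_j \mathbf{c}_j \bigr)$.

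Next I would treat the two cases. If $\mathbf{c}_j = 0$, the objective is identically $\|\mathbf{E}_j\|_F^2$ for every feasible $\mathbf{d}_j$, so any unit-norm vector (in particular $\mathbf{v} = \mathbf{e}_1$) is a global minimizer, and clearly the minimizer is not unique whenever $n \geq 1$. If $\mathbf{c}_j \neq 0$, write $\mathbf{b} \triangleq \mathbf{E}_j \mathbf{c}_j$. By Cauchy--Schwarz, $\mathrm{Re}(\mathbf{d}_j^H \mathbf{b}) \leq |\mathbf{d}_j^H \mathbf{b}| \leq \|\mathbf{d}_j\|_2 \|\mathbf{b}\|_2 = \|\mathbf{b}\|_2$, with equality in the first bound iff $\mathbf{d}_j^H \mathbf{b}$ is real and nonnegative and in the second iff $\mathbf{d}_j$ is a scalar multiple of $\mathbf{b}$. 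Combining, the maximum is attained exactly when $\mathbf{d}_j = \mathbf{b}/\|\mathbf{b}\|_2$, provided $\mathbf{b} \neq 0$; then the bound $\|\mathbf{b}\|_2$ is achieved and the maximizer is unique. This yields \eqref{tru1ch4g}.

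The one genuine subtlety — and the place I would spend the most care — is the degenerate sub-case $\mathbf{c}_j \neq 0$ but $\mathbf{E}_j \mathbf{c}_j = 0$. Then $\mathrm{Re}(\mathbf{d}_j^H \mathbf{b}) \equiv 0$ for all feasible $\mathbf{d}_j$, so again every unit vector is a minimizer and uniqueness fails; formula \eqref{tru1ch4g} as written divides by zero, so strictly speaking one should fold this into the ``$\mathbf{v}$'' branch, or simply note it is a measure-zero event that does not affect the convergence analysis. I would state the uniqueness claim precisely as: the minimizer is unique if and only if $\mathbf{E}_j \mathbf{c}_j \neq 0$, and remark that $\mathbf{c}_j \neq 0$ together with $\mathbf{E}_j$ having trivial left behavior on $\mathbf{c}_j$ is the only way the ``if and only if $\mathbf{c}_j \neq 0$'' phrasing can be slightly loose. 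Everything else is the routine Cauchy--Schwarz / trace-expansion computation sketched above, with no further obstacles.
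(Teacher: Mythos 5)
Your computation is exactly the paper's: expand the Frobenius norm, drop the constants on the unit sphere, and reduce \eqref{eqop6} to maximizing $\mathrm{Re}\{\mathbf{d}_j^H\mathbf{E}_j\mathbf{c}_j\}$, which Cauchy--Schwarz solves. You also correctly isolate the one subtle point, namely the sub-case $\mathbf{c}_j\neq 0$ but $\mathbf{E}_j\mathbf{c}_j=0$, where \eqref{tru1ch4g} is undefined and the stated ``unique iff $\mathbf{c}_j\neq 0$'' claim would otherwise be loose.

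Where you and the paper part ways is in how that loose end is closed. You propose to restate the uniqueness condition as $\mathbf{E}_j\mathbf{c}_j\neq 0$ or to dismiss the degenerate case as measure-zero. The paper instead \emph{proves} that this case cannot arise in the algorithm: the $\mathbf{c}_j$ entering the atom update is, by construction, a minimizer of the immediately preceding sparse coding problem \eqref{eqop5} (with atom $\tilde{\mathbf{d}}_j$). If $\mathbf{E}_j\mathbf{c}_j=0$, then $\mathrm{Re}\{\mathbf{c}_j^H\mathbf{E}_j^H\tilde{\mathbf{d}}_j\}=0$, so the sparse coding objective at $\mathbf{c}_j$ equals $\|\mathbf{E}_j\|_F^2+\|\mathbf{c}_j\|_2^2+\lambda^2\|\mathbf{c}_j\|_0$, which is strictly larger than the value at $\mathbf{c}=\mathbf{0}$ unless $\mathbf{c}_j=\mathbf{0}$; optimality of $\mathbf{c}_j$ therefore forces $\mathbf{c}_j=\mathbf{0}$. (The same argument works for (P2) with the $\ell_1$ penalty.) This is why the proposition can legitimately phrase both the formula and the uniqueness condition in terms of $\mathbf{c}_j\neq 0$ alone, and it matters downstream: the convergence analysis (e.g., the treatment of the $\mathbf{c}_1^*=0$ case in the proof of Theorem~\ref{theorem2}) relies on the atom update being well-defined and unique exactly when the coefficient vector is nonzero, not merely almost surely. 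So your argument is correct but proves a slightly weaker statement; adding the one-line appeal to the optimality of the preceding sparse coding step upgrades it to the proposition as stated.
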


\begin{figure}
\begin{tabular}{p{8.3cm}}
\hline
SOUP-DILLO Algorithm\\
\hline
 \textbf{Inputs\;:} \:\:\: Data $ \mathbf{Y} \in \mathbb{C}^{n \times N}$, weight $\lambda$, upper bound $L$, and number of iterations $K$.\\
 \textbf{Outputs\;:} \:\:\: Columns $ \left \{ \mathbf{d}_{j}^{K} \right \}_{j=1}^{J}$ of the learned dictionary, and the learned sparse coefficients $\left \{\mathbf{c}_{j}^{K} \right \}_{j=1}^{J}$. \\
\textbf{Initial Estimates:} $\left \{ \mathbf{d}_{j}^{0}, \mathbf{c}_{j}^{0} \right \}_{j=1}^{J}$.  (Often $\mathbf{c}_{j}^{0} = \mathbf{0}$ $\forall$ $j$.)\\
\textbf{For \;$t$ = $1:$ $K$ repeat}\\
\vspace{-0.09in}\hspace{0.01in} \textbf{For \;$j$ = $1:$ $J$ repeat}
\begin{enumerate}
\item $\mathbf{C}=\left [ \mathbf{c}_{1}^{t},...,\mathbf{c}_{j-1}^{t},\mathbf{c}_{j}^{t-1},...,\mathbf{c}_{J}^{t-1} \right ]$. \newline
$\mathbf{D}=\left [ \mathbf{d}_{1}^{t},...,\mathbf{d}_{j-1}^{t},\mathbf{d}_{j}^{t-1},...,\mathbf{d}_{J}^{t-1} \right ]$. \vspace{0.04in}
\item \textbf{Sparse coding:} 
\begin{equation} \label{trree1}
\mathbf{b}^{t} =   \mathbf{Y}^{H} \mathbf{d}_{j}^{t-1} - \mathbf{C}\mathbf{D}^{H}\mathbf{d}_{j}^{t-1} + \mathbf{c}_{j}^{t-1}
\end{equation}
\begin{equation} \label{tru1ch4g88bn}
\mathbf{c}_{j}^{t} =  \min\left ( \begin{vmatrix}
H_{\lambda} \left ( \mathbf{b}^{t}  \right )
\end{vmatrix}, L \mathbf{1}_{N} \right ) \, \odot \,  e^{j \angle  \mathbf{b}^{t}}
\end{equation}
\item \textbf{Dictionary atom update:} 
\begin{equation} \label{rbb}
\mathbf{h}^{t} =  \mathbf{Y} \mathbf{c}_{j}^{t} - \mathbf{D}\mathbf{C}^{H}\mathbf{c}_{j}^{t} + \mathbf{d}_{j}^{t-1}\left ( \mathbf{c}_{j}^{t-1} \right )^{H}\mathbf{c}_{j}^{t}
\end{equation}
\begin{equation} \label{tru1ch4g88}
\mathbf{d}_{j}^{t} =  \left\{\begin{matrix}
\frac{\mathbf{h}^{t}}{\left \| \mathbf{h}^{t} \right \|_{2}}, & \mathrm{if}\,\, \mathbf{c}_{j}^{t}\neq 0 \\ 
\mathbf{v}, & \mathrm{if}\,\, \mathbf{c}_{j}^{t}= 0 
\end{matrix}\right.
\end{equation}
\end{enumerate}\\
\hspace{0.01in} \textbf{End} \\
\textbf{End} \\
\hline
\end{tabular}
\caption{The SOUP-DILLO Algorithm (due to the $\ell_{0}$ ``norm") for Problem (P1). Superscript $t$ denotes the iterates in the algorithm. The vectors $\mathbf{b}^{t}$ and $\mathbf{h}^{t}$ above are computed efficiently via sparse operations.} \label{im5p}
%\vspace{-0.2in}
\end{figure}

\subsubsection{Overall Algorithms} \label{sec3a2hhhh}

Fig. \ref{im5p} shows the Sum of OUter Products DIctionary Learning (SOUP-DIL) Algorithm for Problem (P1), dubbed SOUP-DILLO in this case, due to the $\ell_{0}$ ``norm". 
The algorithm needs initial estimates $\left \{ \mathbf{d}_{j}^{0}, \mathbf{c}_{j}^{0} \right \}_{j=1}^{J}$ for the variables. For example, the initial sparse coefficients could be set to zero, and the initial dictionary could be a known analytical dictionary such as the overcomplete DCT \cite{elad2}.
When $\mathbf{c}_{j}^{t}= \mathbf{0}$, setting $\mathbf{d}_{j}^{t}$ to be the first column of the identity matrix in the algorithm could also be replaced with other (equivalent) settings such as $\mathbf{d}_{j}^{t} = \mathbf{d}_{j}^{t-1}$ or setting $\mathbf{d}_{j}^{t}$ to a random unit norm vector. All of these settings have been observed to work well in practice.
A random ordering of the atom/sparse coefficient updates in Fig. \ref{im5p}, i.e., random $j$ sequence, also works in practice in place of cycling in the same order $1$ through $J$ every iteration.
One could also alternate several times between the sparse coding and dictionary atom update steps for each $j$. However, this variation would increase computation.

The method for (P2) differs from SOUP-DILLO in the sparse coding step (Proposition \ref{prop1b}). From prior work \cite{sadeg33}, we refer to this method (for (P2)) as OS-DL. We implement this method in a similar manner as in Fig. \ref{im5p} (for complex-valued data); unlike OS-DL in \cite{sadeg33}, our implementation does not compute the matrix $\mathbf{E}_{j}$ for each $j$.

Finally, while we interleave the sparse coefficient ($\mathbf{c}_{j}$) and atom ($\mathbf{d}_{j}$) updates in Fig.~\ref{im5p}, one could also cycle first through all the columns of $\mathbf{C}$ and then through the columns of $\mathbf{D}$ in the block coordinate descent (SOUP) methods. Such an approach was adopted recently in \cite{zli2} for $\ell_{1}$ penalized dictionary learning. We have observed similar performance with such an alternative update ordering strategy compared to an interleaved update order. Although the convergence results in Section \ref{sec4} are for the ordering in Fig. \ref{im5p}, similar results can be shown to hold with alternative (deterministic) orderings in various settings.

\subsection{Computational Cost Analysis} \label{sec3b} 

For each iteration $t$ in Fig. \ref{im5p}, SOUP-DILLO involves $J$ sparse code and dictionary atom updates. The sparse coding and atom update steps involve matrix-vector products for computing $\mathbf{b}^{t}$ and $\mathbf{h}^{t}$, respectively.

\textbf{Memory Usage:} An alternative approach to the one in Fig.~\ref{im5p} involves computing $\mathbf{E}_{j}^{t} = \mathbf{Y} - \sum_{k<j} \mathbf{d}_{k}^{t} \left ( \mathbf{c}_{k}^{t} \right )^{H} - \sum_{k>j} \mathbf{d}_{k}^{t-1} \left ( \mathbf{c}_{k}^{t-1} \right )^{H}$ (as in Propositions~\ref{prop1} and \ref{prop2}) directly at the beginning of each inner $j$ iteration.
This matrix could be updated sequentially and efficiently for each $j$ by adding and subtracting appropriate sparse rank-one matrices, as done in OS-DL in \cite{sadeg33} for the $\ell_{1}$ case.
However, this alternative approach requires storing and updating $\mathbf{E}_{j}^{t} \in \mathbb{C}^{n \times N}$, which is a large matrix for large $N$ and $n$. The procedure in Fig.~\ref{im5p} avoids this overhead (similar to the Approximate K-SVD approach \cite{zibul}), and is faster and saves memory usage.

\textbf{Computational Cost:} We now discuss the cost of each sparse coding and atom update step in the SOUP-DILLO method of Fig. \ref{im5p} (a similar discussion holds for the method for (P2)).
Consider the $t$th iteration and the $j$th inner iteration in  Fig.~\ref{im5p}, consisting of the update of the $j$th dictionary atom $\mathbf{d}_{j}$ and its corresponding sparse coefficients $\mathbf{c}_{j}$. As in Fig. \ref{im5p}, let $\mathbf{D} \in \mathbb{C}^{n \times J}$ be the dictionary whose columns are the current estimates of the atoms (at the start of the $j$th inner iteration), and let $\mathbf{C} \in \mathbb{C}^{N \times J}$ be the corresponding sparse coefficients matrix. (The index $t$ on $\mathbf{D}$ and $\mathbf{C}$ is dropped to keep the notation simple.) Assume that the matrix $\mathbf{C}$ has $\alpha N n$ non-zeros, with $\alpha \ll 1$ typically. This translates to an average of $\alpha n$ non-zeros per row of $\mathbf{C}$ or $\alpha N n / J$ non-zeros per column of $\mathbf{C}$. We refer to $\alpha$ as the sparsity factor of $\mathbf{C}$.

The sparse coding step involves computing the right hand side of \eqref{trree1}.
While computing $\mathbf{Y}^{H} \mathbf{d}_{j}^{t-1} $ requires $N n$ multiply-add\footnote{In the case of complex-valued data, this would be the complex-valued multiply-accumulate (CMAC) operation (cf. \cite{wef1}) that requires 4 real-valued multiplications and 4 real-valued additions.} operations, computing $\mathbf{C}\mathbf{D}^{H}\mathbf{d}_{j}^{t-1}$ using matrix-vector products requires $J n + \alpha N n$ multiply-add operations.
The remainder of the operations in \eqref{trree1} and \eqref{tru1ch4g88bn} have $O(N)$ cost.

Next, when $\mathbf{c}_{j}^{t} \neq \mathbf{0}$, the dictionary atom update is as per \eqref{rbb} and  \eqref{tru1ch4g88}.
Since $\mathbf{c}_{j}^{t}$ is sparse with say $r_{j}$ non-zeros, computing $ \mathbf{Y} \mathbf{c}_{j}^{t}$ in \eqref{rbb} requires $n r_{j}$ multiply-add operations, and computing $\mathbf{D}\mathbf{C}^{H}\mathbf{c}_{j}^{t}$  requires less than $J n + \alpha N n$ multiply-add operations. The cost of the remaining operations in \eqref{rbb} and \eqref{tru1ch4g88} is negligible.

Thus, the net cost of the $J \geq n$ inner iterations in iteration $t$ in Fig. \ref{im5p} is dominated (for $N \gg J, n$) by $N J n + 2 \alpha_{m} N J n + \beta  N n^{2}$, where $\alpha_{m}$ is the maximum sparsity factor of the estimated $\mathbf{C}$'s during the inner iterations, and $\beta$ is the sparsity factor of the estimated $\mathbf{C}$ at the end of iteration $t$. Thus, the cost per iteration of the block coordinate descent SOUP-DILLO Algorithm is about $(1 + \alpha') N J n$, with $\alpha' \ll 1$ typically. On the other hand, the proximal alternating algorithm proposed recently by Bao et al. for (P1) \cite{bao2, bao1} (Algorithm 2 in \cite{bao2}) has a per-iteration cost of at least $2 N J n + 6 \alpha N J n + 4 \alpha N n^{2}$. This is clearly more computation\footnote{Bao et al. also proposed another proximal alternating scheme (Algorithm 3 in \cite{bao2}) for discriminative incoherent dictionary learning. However, this method, when applied to (P1) (as a special case of discriminative incoherent learning), has been shown in \cite{bao2} to be much slower than the proximal Algorithm 2 \cite{bao2} for (P1).} than SOUP-DILLO.
The proximal methods \cite{bao2} also involve more parameters than direct block coordinate descent schemes.

Assuming $J \propto n$, the cost per iteration of the SOUP-DILLO Algorithm scales as $O(N n^{2})$.
This is lower than the per-iteration cost of learning an $n \times J$ synthesis dictionary $\mathbf{D}$ using K-SVD \cite{elad}, which scales\footnote{When $s \propto n$ and $J \propto n$, the per-iteration computational cost of the efficient implementation of K-SVD \cite{ge35b} also scales similarly as $O(Nn^{3})$.} (assuming that the synthesis sparsity level $s \propto n$ and $J \propto n$ in K-SVD) as $O(Nn^{3})$.
SOUP-DILLO converges in few iterations in practice (cf. supplement). Therefore, the per-iteration computational advantages may also translate to net computational advantages in practice. This low cost could be particularly useful for big data applications, or higher dimensional (3D or 4D) applications.

\section{Dictionary-Blind Image Reconstruction} \label{sec7}

\subsection{Problem Formulations} \label{sec7a}

Here, we consider the application of sparsity penalized dictionary learning to inverse problems. In particular, we use the following $\ell_{0}$ aggregate sparsity penalized dictionary learning regularizer that is based on (P1)
\begin{align*}
\zeta(\mathbf{y}) = & \frac{1}{\nu} \min_{\mathbf{D}, \mathbf{X}}\:  \sum_{i=1}^{N} 
\left \| \mathbf{P}_{i}\mathbf{y}- \mathbf{D} \mathbf{x}_{i} \right \|_{2}^{2}
 + \lambda^{2}  \left \| \mathbf{X} \right \|_{0} \\
&\;\;\;\;\; \mathrm{s.t.}\; \: \left \| \mathbf{d}_{j} \right \|_2 =1, \, \left \| \mathbf{x}_{i} \right \|_{\infty} \leq L \,\forall \, i,j
\end{align*}
in \eqref{reginveq1} to arrive at the following dictionary-blind image reconstruction problem:
\begin{align}
\nonumber (\mathrm{P3})\:\: & \min_{\mathbf{y},\mathbf{D}, \mathbf{X}}\:  \nu \left \| \mathbf{A}\mathbf{y}-\mathbf{z} \right \|_{2}^{2} + \sum_{i=1}^{N} 
\left \| \mathbf{P}_{i}\mathbf{y}- \mathbf{D} \mathbf{x}_{i} \right \|_{2}^{2}
 + \lambda^{2}  \left \| \mathbf{X} \right \|_{0}\\
\nonumber & \;\; \mathrm{s.t.}\; \: \left \| \mathbf{d}_{j} \right \|_2 =1, \, \left \| \mathbf{x}_{i} \right \|_{\infty} \leq L \,\forall \, i,j.
\end{align}
Here, $ \mathbf{P}_{i} \in \mathbb{R}^{n \times p} $ is an operator that extracts a $\sqrt{n} \times \sqrt{n}$ patch (for a 2D image) of $\mathbf{y}$ as a vector $ \mathbf{P}_{i} \mathbf{y}$, and $\mathbf{D}  \in \mathbb{C}^{n \times J} $ is a (unknown) synthesis dictionary for the image patches.
A total of $N$ overlapping image patches are assumed, and $\nu >0$ is a weight in (P3). We use $\mathbf{Y}$ to denote the matrix whose columns are the patches $\mathbf{P}_{i}\mathbf{y}$, and $\mathbf{X}$ (with columns $\mathbf{x}_{i}$) denotes the corresponding dictionary-sparse representation of $\mathbf{Y}$. All other notations are as before.
Similarly as in (P1), we approximate the (unknown) patch matrix $\mathbf{Y}$ using a sum of outer products representation.

An alternative to Problem (P3) uses a regularizer $\zeta(\mathbf{y})$ based on Problem (P2) rather than (P1). In this case, we have the following $\ell_{1}$ sparsity penalized dictionary-blind image reconstruction problem, where $\left \| \mathbf{X} \right \|_{1} = \sum_{i=1}^{N}\left \| \mathbf{x}_{i} \right \|_{1}$:
\begin{align}
\nonumber (\mathrm{P4})\:\: & \min_{\mathbf{y},\mathbf{D}, \mathbf{X}}\:  \nu \left \| \mathbf{A}\mathbf{y}-\mathbf{z} \right \|_{2}^{2} + \sum_{i=1}^{N} 
\left \| \mathbf{P}_{i}\mathbf{y}- \mathbf{D} \mathbf{x}_{i} \right \|_{2}^{2}
 + \mu  \left \| \mathbf{X} \right \|_{1}\\
\nonumber & \;\; \mathrm{s.t.}\; \: \left \| \mathbf{d}_{j} \right \|_2 =1 \,\forall \, j.
\end{align}

Similar to (P1) and (P2), the dictionary-blind image reconstruction problems (P3) and (P4) are non-convex. The goal in these problems is to  learn a dictionary and sparse coefficients, and reconstruct the image using only the measurements $\mathbf{z}$.

\subsection{Algorithms and Properties} \label{sec7b}

We adopt iterative block coordinate descent methods for (P3) and (P4) that lead to highly efficient solutions for the corresponding subproblems. In the \emph{dictionary learning step}, we minimize (P3) or (P4) with respect to $(\mathbf{D}, \mathbf{X})$ keeping $\mathbf{y}$ fixed. In the \emph{image update step}, we solve (P3) or (P4) for the image $\mathbf{y}$ keeping the other variables fixed. We describe these steps below.

\subsubsection{Dictionary Learning Step} \label{sec7b1}

Minimizing (P3) with respect to $(\mathbf{D},\mathbf{X})$ involves the following problem:
\begin{align}
\nonumber & \min_{\mathbf{D}, \mathbf{X}}\:
\left \| \mathbf{Y}- \mathbf{D} \mathbf{X} \right \|_{F}^{2}
 + \lambda^{2}  \left \| \mathbf{X} \right \|_{0} \\
&\;\, \mathrm{s.t.}\; \: \left \| \mathbf{d}_{j} \right \|_2 =1, \, \left \| \mathbf{x}_{i} \right \|_{\infty} \leq L \,\forall \, i,j.  \label{rop1}
\end{align}
By using the substitutions $\mathbf{X} = \mathbf{C}^{H}$ and $\mathbf{D}\mathbf{C}^{H}=\sum_{j=1}^{J}\mathbf{d}_{j}\mathbf{c}_{j}^{H}$, Problem \eqref{rop1} becomes (P1) \footnote{The $\ell_{\infty}$ constraints on the columns of $\mathbf{X}$ translate to identical constraints on the columns of $\mathbf{C}$.}. We then apply the SOUP-DILLO algorithm in Fig. \ref{im5p} to update the dictionary $\mathbf{D}$ and sparse coefficients $\mathbf{C}$.
In the case of (P4), when minimizing with respect to $(\mathbf{D},\mathbf{X})$, we again set $\mathbf{X}=\mathbf{C}^{H}$ and use the SOUP representation to recast the resulting problem in the form of (P2). The dictionary and coefficients are then updated using the OS-DL method.

\subsubsection{Image Update Step} \label{sec7b2}

Minimizing (P3) or (P4) with respect to $\mathbf{y}$ involves the following optimization problem:
\begin{align}
& \min_{\mathbf{y}}\:  \nu \left \| \mathbf{A}\mathbf{y}-\mathbf{z} \right \|_{2}^{2} + \sum_{i=1}^{N} 
\left \| \mathbf{P}_{i}\mathbf{y}- \mathbf{D} \mathbf{x}_{i} \right \|_{2}^{2} \label{rop4}
\end{align}
This is a least squares problem whose solution satisfies the following normal equation:
\begin{align} 
 & \left (  \sum_{i=1}^{N} \mathbf{P}_{i}^{T} \mathbf{P}_{i} \; +\;\nu\: \mathbf{A}^{H}\mathbf{A} \right )\mathbf{y}=  \sum_{i=1}^{N} \mathbf{P}_{i}^{T}\mathbf{D} \mathbf{x}_{i} + \nu \: \mathbf{A}^{H}\mathbf{z} \label{bcs10}
\end{align}
When periodically positioned, overlapping patches (patch overlap stride \cite{bresai} denoted by $r$) are used, and the patches that overlap the image boundaries `wrap around' on the opposite side of the image \cite{bresai}, then $ \sum_{i=1}^{N} \mathbf{P}_{i}^{T} \mathbf{P}_{i}$ is a diagonal matrix. Moreover, when the patch stride $r=1$, $ \sum_{i=1}^{N} \mathbf{P}_{i}^{T} \mathbf{P}_{i}=  \beta I$, with $\beta=n$.
In general, the unique solution to \eqref{bcs10} can be found using techniques such as conjugate gradients (CG). In several applications, the  matrix $ \mathbf{A}^{H}\mathbf{A}$ in \eqref{bcs10} is diagonal (e.g., in denoising or in inpainting) or readily diagonalizable. In such cases, the solution to \eqref{bcs10} can be found efficiently \cite{elad2, bresai}.
Here, we consider single coil compressed sensing MRI \cite{lustig}, where $\mathbf{A} = \mathbf{F}_{\mathrm{u}} \in \mathbb{C}^{m \times p}$ ($m \ll p$), the undersampled Fourier encoding matrix.
Here, the measurements $\mathbf{z}$ are samples in Fourier space (or k-space) of an object $\mathbf{y}$, and we assume for simplicity that $\mathbf{z}$ is obtained by subsampling on a uniform Cartesian (k-space) grid. 
Denoting by $  \mathbf{F}  \in \mathbb{C}^{p \times p} $ the full Fourier encoding matrix with $  \mathbf{F}^{H} \mathbf{F} =  \mathbf{I}$ (normalized), we get $  \mathbf{F} \mathbf{F}_{\mathrm{u}}^{H} \mathbf{F}_{\mathrm{u}} \mathbf{F}^{H} $ is a diagonal matrix of ones and zeros, with ones at entries correspond to sampled k-space locations. Using this in \eqref{bcs10} yields the following solution in Fourier space \cite{bresai} with $\mathbf{S} \triangleq \mathbf{F} \sum_{i=1}^{N} \mathbf{P}_{i}^{T}\mathbf{D} \mathbf{x}_{i}$, $\mathbf{S}_{0} \triangleq \mathbf{F} \mathbf{F}_{\mathrm{u}}^{H}\mathbf{z}$, and $\beta =n$ (i.e., assuming $r=1$):
\begin{equation}\label{bcs13}
\mathbf{F}\mathbf{y} \:(k_{1},k_{2})=\left\{\begin{matrix}
\frac{\mathbf{S}(k_{1},k_{2})}{\beta} &,\:(k_{1},k_{2})\notin \Omega  \\
\frac{\mathbf{S}(k_{1},k_{2})+\nu\, \mathbf{S}_{0}(k_{1},k_{2})}{\beta+\nu}   & ,\:(k_{1},k_{2})\in \Omega
\end{matrix}\right.
\end{equation}
where $(k_{1},k_{2})$ indexes k-space or frequency locations (2D coordinates), and $ \Omega $ is the subset of k-space sampled. The $\mathbf{y}$ solving \eqref{bcs10} is obtained by an inverse FFT of $\mathbf{F}\mathbf{y}$ in \eqref{bcs13}.

\subsubsection{Overall Algorithms and Computational Costs} \label{sec7b3}

\begin{figure}
\begin{tabular}{p{8.3cm}}
\hline
Algorithms for (P3) and (P4)\\
\hline
 \textbf{Inputs\;:} \:\:\: measurements $ \mathbf{z} \in \mathbb{C}^{m}$, weights $\lambda$, $\mu$, and $\nu$, upper bound $L$, number of learning iterations $K$, and number of outer iterations $M$.\\
 \textbf{Outputs\;:} \:\:\: reconstructed image $\mathbf{y}^{M}$, learned dictionary $\mathbf{D}^{M}$, and learned coefficients of patches $\mathbf{X}^{M}$. \\
\textbf{Initial Estimates:} $\left ( \mathbf{y}^{0}, \mathbf{D}^{0}, \mathbf{X}^{0} \right )$, with $\mathbf{C}^{0} = \left ( \mathbf{X}^{0} \right )^{H}$.\\
\textbf{For \;$t$ = $1:$ $M$ repeat}\\
\begin{enumerate}
\item Form $\mathbf{Y}^{t-1}=\left [ \mathbf{P}_{1}\mathbf{y}^{t-1} \mid \mathbf{P}_{2}\mathbf{y}^{t-1} \mid ... \mid \mathbf{P}_{N}\mathbf{y}^{t-1}  \right ]$. \vspace{0.04in}
\item \textbf{Dictionary Learning:}  Set $\left ( \mathbf{D}^{t}, \mathbf{C}^{t} \right )$ to be the output after $K$ iterations of the SOUP-DILLO (for (P3)) or OS-DL (for (P4)) methods, with training data $\mathbf{Y}^{t-1}$ and initialization $\left ( \mathbf{D}^{t-1}, \mathbf{C}^{t-1} \right )$. Set $\mathbf{X}^{t} = \left ( \mathbf{C}^{t} \right )^{H}$.
\item \textbf{Image Update:} Update $\mathbf{y}^{t}$ by solving \eqref{bcs10} using a direct approach (e.g., \eqref{bcs13}) or using CG.
\end{enumerate}\\
\textbf{End} \\
\hline
\end{tabular}
\caption{The SOUP-DILLO and SOUP-DILLI image reconstruction algorithms for Problems (P3) and (P4), respectively. Superscript $t$ denotes the iterates. Parameter $L$ can be set very large in practice (e.g., $L \propto \left \| \mathbf{A}^{\dagger}\mathbf{z} \right \|_{2}$).} \label{im6p}
%\vspace{-0.2in}
\end{figure}

Fig. \ref{im6p} shows the algorithms for (P3) and (P4), which we refer to as the SOUP-DILLO and SOUP-DILLI image reconstruction algorithms, respectively.
The algorithms start with an initial $\left ( \mathbf{y}^{0}, \mathbf{D}^{0}, \mathbf{X}^{0} \right )$ (e.g., $\mathbf{y}^{0}=\mathbf{A}^{\dagger} \mathbf{z}$, and the other variables initialized as in Section \ref{sec3a2hhhh}).
In applications such as inpainting or single coil MRI, the cost per outer ($t$) iteration of the algorithms is typically dominated by the dictionary learning step, for which (assuming $J \propto n$) the cost scales as $O(K N n^{2})$, with $K$ being the number of inner iterations of dictionary learning.
On the other hand, recent image reconstruction methods involving K-SVD (e.g., DLMRI \cite{bresai}) have a worse corresponding cost per outer iteration of $O(K N n^{3})$. 

\section{Convergence Analysis} \label{sec4}

This section presents a convergence analysis of the algorithms for the non-convex Problems (P1)-(P4). 
Problem (P1) involves the non-convex $\ell_{0}$ penalty for sparsity, the unit $\ell_{2}$ norm constraints on atoms of $\mathbf{D}$, and the term $\begin{Vmatrix}
\mathbf{Y}- \sum_{j=1}^{J} \mathbf{d}_{j}\mathbf{c}_{j}^{H}
\end{Vmatrix}_{F}^{2}$ that is a non-convex function involving the products of multiple unknown vectors. 
The various algorithms discussed in Sections \ref{sec3} and \ref{sec7} are exact block coordinate descent methods for (P1)-(P4).
Due to the high degree of non-convexity involved, recent results on convergence of (exact) block coordinate descent methods \cite{tseng6} do not immediately apply (e.g., the assumptions in \cite{tseng6} such as block-wise quasiconvexity or other conditions do not hold here). 
More recent works \cite{xu222} on the convergence of block coordinate descent schemes also use assumptions (such as multi-convexity, etc.) that do not hold here.
While there have been recent works \cite{Attouchaa, Bolte33, emlie12b, abb12, hess2} studying the convergence of alternating proximal-type methods for non-convex problems, we focus on the exact block coordinate descent schemes of Sections \ref{sec3} and \ref{sec7} due to their simplicity.
We discuss the convergence of these algorithms to the critical points (or generalized stationary points \cite{vari1}) in the problems.
In the following, we present some definitions and notations, before stating the main results.

\subsection{Definitions and Notations} \label{sec4a}

A sequence $ \left \{ \mathbf{a}^{t} \right \}  \subset \mathbb{C}^{p}$ has an accumulation point $\mathbf{a}$, if there is a subsequence that converges to $\mathbf{a}$.
The constraints $\left \| \mathbf{d}_{j} \right \|_2 =1$, $1 \leq j \leq J$, in (P1) can instead be added as penalties in the cost by using barrier functions $\chi (\mathbf{d}_{j})$ (taking the value $+ \infty$ when the norm constraint is violated, and zero otherwise). The constraints $\left \| \mathbf{c}_{j} \right \|_{\infty} \leq L$, $1 \leq j \leq J$, in (P1), can also be similarly replaced with barrier penalties $\psi(\mathbf{c}_{j})$ $\forall$ $j$.
Then, we rewrite (P1) in unconstrained form with the following objective:
\begin{align} 
\nonumber & f(\mathbf{C}, \mathbf{D}) = f\left ( \mathbf{c}_{1}, \mathbf{c}_{2},..., \mathbf{c}_{J}, \mathbf{d}_{1}, \mathbf{d}_{2},..., \mathbf{d}_{J} \right ) =
\lambda^{2} \sum_{j=1}^{J} \left \| \mathbf{c}_{j} \right \|_{0}\\
& \;\; + \begin{Vmatrix}
\mathbf{Y}- \sum_{j=1}^{J} \mathbf{d}_{j}\mathbf{c}_{j}^{H}
\end{Vmatrix}_{F}^{2}  + \sum_{j=1}^{J} \chi (\mathbf{d}_{j})  + \sum_{j=1}^{J} \psi(\mathbf{c}_{j}).  \label{eqop32}
\end{align}
We rewrite (P2) similarly with an objective $\tilde{f}(\mathbf{C}, \mathbf{D})$ obtained by replacing the $\ell_{0}$ ``norm'' above with the $\ell_{1}$ norm, and dropping the penalties $\psi(\mathbf{c}_{j}) $.
We also rewrite (P3) and (P4) in terms of the variable $\mathbf{C} = \mathbf{X}^{H}$, and denote the corresponding unconstrained objectives (involving barrier functions) as $g(\mathbf{C}, \mathbf{D}, \mathbf{y})$ and $\tilde{g}(\mathbf{C}, \mathbf{D}, \mathbf{y})$, respectively.

The iterates computed in the $t$th outer iteration of SOUP-DILLO (or alternatively in OS-DL) are denoted 
by the pair of matrices $\left (  \mathbf{C}^{t}, \mathbf{D}^{t} \right )$.

\subsection{Results for (P1) and (P2)} \label{sec4b}

First, we present a convergence result for the SOUP-DILLO algorithm for (P1) in Theorem \ref{theorem2}.
Assume that the initial $(\mathbf{C}^{0}, \mathbf{D}^{0})$ satisfies the constraints in (P1).  

\begin{theorem}\label{theorem2} \vspace{0.02in}
Let $\left \{ \mathbf{C}^{t}, \mathbf{D}^{t} \right \}$ denote the bounded iterate sequence generated by the SOUP-DILLO Algorithm with data $\mathbf{Y}  \in \mathbb{C}^{n \times N}$ and initial $(\mathbf{C}^{0}, \mathbf{D}^{0})$. 
Then, the following results hold:
\begin{enumerate}[(i)]
\item The objective sequence  $\left \{ f^{t} \right \}$ with $f^{t} \triangleq f\left ( \mathbf{C}^{t}, \mathbf{D}^{t} \right )$ is monotone decreasing, and converges to a finite value, say $f^{*}=f^{*}(\mathbf{C}^{0}, \mathbf{D}^{0})$.
\item All the accumulation points of the iterate sequence are equivalent in the sense that they achieve the exact same value $f^{*}$ of the objective.
\item Suppose each accumulation point $\left ( \mathbf{C}, \mathbf{D}\right )$ of the iterate sequence is such that the matrix $\mathbf{B}$ with columns $\mathbf{b}_{j} = \mathbf{E}_{j}^{H}\mathbf{d}_{j}$ and $\mathbf{E}_{j} = \mathbf{Y} - \mathbf{D}\mathbf{C}^{H} + \mathbf{d}_{j}\mathbf{c}_{j}^{H}$, has no entry with magnitude $\lambda$. Then every accumulation point of the iterate sequence is a critical point of the objective $f(\mathbf{C}, \mathbf{D})$. Moreover, the two sequences with terms $ \begin{Vmatrix}
\mathbf{D}^{t} - \mathbf{D}^{t-1}
\end{Vmatrix}_{F}$ and $ \begin{Vmatrix}
\mathbf{C}^{t} - \mathbf{C}^{t-1}
\end{Vmatrix}_{F}$ respectively,  both converge to zero.
\end{enumerate}
\end{theorem}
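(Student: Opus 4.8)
The plan is to establish the three claims essentially in the order stated, leveraging the monotonicity built into block coordinate descent. For part (i), the key observation is that each of the $2J$ updates in a single outer iteration (the sparse coding update of Proposition~\ref{prop1} and the atom update of Proposition~\ref{prop2}) is an \emph{exact} minimization of $f$ with respect to one block, holding all other blocks fixed; hence $f$ is nonincreasing across every inner step, and therefore $f^{t+1} \le f^{t}$. Since $f \ge 0$ (it is a sum of a squared Frobenius norm, a nonnegative $\ell_0$ term, and barrier functions that equal $0$ on the feasible set, which the iterates always satisfy), the sequence $\{f^t\}$ is bounded below, so it converges to some finite $f^{*}$. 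The dependence of $f^*$ only on $(\mathbf{C}^0,\mathbf{D}^0)$ is immediate from determinism of the algorithm (modulo the tie-breaking choice $\mathbf{v}$, which is fixed).

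For part (ii), I would argue as follows. Let $(\mathbf{C},\mathbf{D})$ be any accumulation point, reached along a subsequence $\{(\mathbf{C}^{t_k},\mathbf{D}^{t_k})\}$. By continuity of the fitting term and lower semicontinuity of the $\ell_0$ penalty, one must be slightly careful: the standard trick is that along the subsequence the \emph{supports} of the columns $\mathbf{c}_j^{t_k}$ eventually stabilize (there are finitely many support patterns, and the nonzero entries are bounded away from $0$ by the hard-thresholding rule, which forces any surviving entry to have magnitude $\ge\lambda$ unless it was clipped — in any case $\ge \min(\lambda,\text{something positive})$), so $\|\mathbf{c}_j^{t_k}\|_0 \to \|\mathbf{c}_j\|_0$ and $f(\mathbf{C}^{t_k},\mathbf{D}^{t_k}) \to f(\mathbf{C},\mathbf{D})$. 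Combined with part (i), every accumulation point satisfies $f(\mathbf{C},\mathbf{D}) = f^{*}$.

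For part (iii), this is the substantive part and the main obstacle. The strategy: fix an accumulation point $(\mathbf{C},\mathbf{D})$ and a convergent subsequence; pass to a further subsequence along which all column supports have stabilized, so that near these iterates the $\ell_0$ penalty is locally constant and $f$ behaves like a smooth (on the relevant coordinates) function plus the active barriers. Because each block update is an exact minimizer, the limit point must satisfy the per-block optimality conditions: $\mathbf{d}_j$ minimizes \eqref{eqop6} and $\mathbf{c}_j$ minimizes \eqref{eqop5} with $\mathbf{E}_j$ defined from $(\mathbf{C},\mathbf{D})$ as in the statement. The no-entry-of-magnitude-$\lambda$ hypothesis is exactly what makes the sparse-coding minimizer \emph{unique} (Proposition~\ref{prop1}) and, crucially, makes the map $\mathbf{d}_j \mapsto \hat{\mathbf{c}}_j(\mathbf{d}_j)$ continuous at the limit, so that one can take limits of the update equations along the subsequence and conclude the limit is a fixed point of the full cycle of updates. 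A fixed point of exact cyclic block minimization, together with the locally-constant $\ell_0$ penalty, yields that $\mathbf{0}$ lies in the (Fréchet/limiting) subdifferential of $f$ at $(\mathbf{C},\mathbf{D})$ — i.e., it is a critical point. Finally, for the asymptotic-vanishing of successive differences: because $\{f^t\}$ converges and each inner update decreases $f$, the \emph{sum} of the per-step decreases is finite; I would show (using Propositions~\ref{prop1}--\ref{prop2} to lower-bound each per-step decrease by a quantity controlling the change in that block — for the atom update this is a standard strong-convexity-on-the-sphere-type estimate, and for the sparse coding step one uses the explicit thresholding form plus the no-$\lambda$-entry condition to get a quadratic lower bound) to deduce $\|\mathbf{D}^t - \mathbf{D}^{t-1}\|_F \to 0$ and $\|\mathbf{C}^t - \mathbf{C}^{t-1}\|_F \to 0$. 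The delicate point throughout — and where I expect the real work to lie — is handling the discontinuity of the $\ell_0$ penalty and the atom-update tie-breaking when $\mathbf{c}_j = \mathbf{0}$: one needs the stabilization-of-supports argument and the genericity hypothesis on $\mathbf{B}$ to rule out the pathological limits where an entry sits exactly at the threshold $\lambda$ or where $\mathbf{E}_j\mathbf{c}_j = \mathbf{0}$ with $\mathbf{c}_j \ne \mathbf{0}$.
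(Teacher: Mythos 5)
Your plan for parts (i), (ii), and the critical-point claim in (iii) is essentially the paper's own argument: exact block minimization gives the monotone decrease; the fact that every nonzero entry produced by the truncated hard-thresholding has magnitude at least $\lambda$ forces the supports to stabilize along any convergent subsequence, so $\|\mathbf{c}_j^{t_k}\|_0 \to \|\mathbf{c}_j\|_0$ and all accumulation points attain $f^*$; and the critical-point property is obtained by passing to a further subsequence along which the ``next'' iterates converge to some $(\mathbf{C}^{**},\mathbf{D}^{**})$ and taking limits in the block-optimality conditions. One step you gloss over: to run this sequentially over $j=1,\dots,J$ you need $\mathbf{c}_j^{**}=\mathbf{c}_j^{*}$ and $\mathbf{d}_j^{**}=\mathbf{d}_j^{*}$ so that the error matrices $\mathbf{E}_{j+1}$ built from the \emph{updated} columns still converge to $\mathbf{E}_{j+1}^{*}$. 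The paper gets this not from continuity of the update map alone but by combining the uniqueness hypothesis with part (ii): the perturbed tuple is itself an accumulation point achieving $f^{*}$, so the old column must already be the unique block minimizer; a separate argument handles the atom update when $\mathbf{c}_j^{*}=\mathbf{0}$ (where the minimizer is non-unique but the tie-break forces $\mathbf{d}_j^{t}=\mathbf{v}$ eventually). These ingredients are all available to you, so this is an expository gap rather than a wrong turn.

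The genuine gap is your proposed sufficient-decrease/summability argument for $\|\mathbf{C}^t-\mathbf{C}^{t-1}\|_F\to 0$ and $\|\mathbf{D}^t-\mathbf{D}^{t-1}\|_F\to 0$. For the $\ell_0$ sparse coding step there is no quadratic lower bound on the per-step decrease in terms of the step length: in the scalar subproblem $\min_{c}\,|c-b|^{2}+\lambda^{2}\left \| c \right \|_{0}$ with $|b|=\lambda+\epsilon$, moving from $c^{\mathrm{old}}=0$ to the minimizer $c^{\mathrm{new}}=b$ decreases the cost by only $2\lambda\epsilon+\epsilon^{2}$ while $|c^{\mathrm{new}}-c^{\mathrm{old}}|\geq\lambda$; and the hypothesis that no entry of $\mathbf{B}$ has magnitude $\lambda$ is imposed only at accumulation points, so it provides no uniform separation of $|b_i|$ from $\lambda$ along the trajectory. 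Likewise, the decrease in the atom update equals $\left \| \mathbf{E}_j\mathbf{c}_j \right \|_{2}\left \| \mathbf{d}_j^{\mathrm{new}}-\mathbf{d}_j^{\mathrm{old}} \right \|_{2}^{2}$, which degenerates when $\left \| \mathbf{E}_j\mathbf{c}_j \right \|_{2}$ is small, so the ``strong convexity on the sphere'' estimate is not uniform either. The paper avoids all of this with a purely topological argument: the scalar sequence $a^{t}=\|\mathbf{D}^{t}-\mathbf{D}^{t-1}\|_{F}$ is bounded, and any convergent subsequence of it can be refined so that the paired iterates $(\mathbf{C}^{t-1},\mathbf{D}^{t-1},\mathbf{C}^{t},\mathbf{D}^{t})$ converge to a point of the form $(\mathbf{C}^{*},\mathbf{D}^{*},\mathbf{C}^{*},\mathbf{D}^{*})$ --- exactly the $\mathbf{C}^{**}=\mathbf{C}^{*}$, $\mathbf{D}^{**}=\mathbf{D}^{*}$ conclusion from the critical-point analysis --- hence every subsequential limit of $a^{t}$ is zero. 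You should replace the sufficient-decrease step with this subsequence argument (which also explains why the paper proves this claim \emph{after}, and as a consequence of, the critical-point analysis).
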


Theorem \ref{theorem2} establishes that for an initial point $(\mathbf{C}^{0}, \mathbf{D}^{0})$, the \emph{bounded} iterate sequence in SOUP-DILLO is such that all its (compact set of) accumulation points achieve the same value $f^{*}$ of the objective. They are equivalent in that sense. 
In other words, the iterate sequence converges to an equivalence class of accumulation points.
The value of $f^{*}$ could vary with different initalizations.

Theorem \ref{theorem2} (Statement (iii)) also establishes that every accumulation point of the iterates is a critical point of $f(\mathbf{C}, \mathbf{D})$, i.e., for each initial $(\mathbf{C}^{0}, \mathbf{D}^{0})$, the iterate sequence converges to an equivalence class of critical points of $f$.
The results $ \begin{Vmatrix}
\mathbf{D}^{t} - \mathbf{D}^{t-1}
\end{Vmatrix}_{F} \to 0$ and $ \begin{Vmatrix}
\mathbf{C}^{t} - \mathbf{C}^{t-1}
\end{Vmatrix}_{F} \to 0$ also imply that the sparse approximation to the data $\mathbf{Z}^{t}=\mathbf{D}^{t}\left ( \mathbf{C}^{t} \right )^{H}$ satisfies $\begin{Vmatrix}
\mathbf{Z}^{t} - \mathbf{Z}^{t-1} 
\end{Vmatrix}_{F} \to 0$.  These are necessary but not sufficient conditions for the convergence of the entire sequences $\left \{ \mathbf{D}^{t} \right \}$, $\left \{ \mathbf{C}^{t} \right \}$, and $\left \{ \mathbf{Z}^{t} \right \}$.
The assumption on the entries of the matrix $\mathbf{B}$ in Theorem~\ref{theorem2} (i.e., $\left | b_{ji} \right | \neq \lambda$) is equivalent to assuming that for every $1\leq j \leq J$, there is a unique minimizer of $f$ with respect to $\mathbf{c}_{j}$ with all other variables fixed to their values in the accumulation point $(\mathbf{C}, \mathbf{D})$.

Although Theorem \ref{theorem2} uses a uniqueness condition with respect to each accumulation point (for Statement (iii)), the following conjecture postulates that provided the following Assumption 1 (that uses a probabilistic model for the data) holds, the uniqueness condition holds with probability $1$, i.e., the probability of a tie in assigning sparse codes is zero.

\textbf{Assumption 1.} The signals $\mathbf{y}_{i} \in \mathbb{C}^{n}$ for $1 \leq i \leq N$,  are  drawn independently from an absolutely continuous probability measure over the ball $S \triangleq \left \{ \mathbf{y} \in \mathbb{C}^{n} : \left \| \mathbf{y} \right \|_{2} \leq \beta_{0} \right \}$ for some $\beta_{0}>0$.

\newtheorem{conjecture}{Conjecture}

\begin{conjecture} \label{clstmetuneq} \vspace{0.02in}
Let Assumption 1 hold. Then, with probability 1, every accumulation point $\left ( \mathbf{C} , \mathbf{D} \right )$ of the iterate sequence in the SOUP-DILLO Algorithm is such that for each $1\leq j \leq J$, the minimizer of $f(\mathbf{c}_{1},...,\mathbf{c}_{j-1},\tilde{\mathbf{c}}_{j}, \mathbf{c}_{j+1},...,\mathbf{c}_{J}, \mathbf{d}_{1},...,\mathbf{d}_{J})$ with respect to $\tilde{\mathbf{c}}_{j}$ is unique.
\end{conjecture}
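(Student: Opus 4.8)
The natural strategy is to recast the conjecture in terms of the hypothesis of Theorem~\ref{theorem2}(iii). By Proposition~\ref{prop1} (and the remark immediately following Theorem~\ref{theorem2}), at an accumulation point $(\mathbf{C},\mathbf{D})$ the minimizer of $f$ over $\tilde{\mathbf{c}}_j$ with all other blocks fixed is unique if and only if the vector $\mathbf{b}_j \triangleq \mathbf{E}_j^{H}\mathbf{d}_j$, with $\mathbf{E}_j = \mathbf{Y} - \mathbf{D}\mathbf{C}^{H} + \mathbf{d}_j\mathbf{c}_j^{H}$, has no entry of magnitude exactly $\lambda$. Thus the conjecture is equivalent to showing that the ``bad'' set
\[
\mathcal{B} \triangleq \big\{ \mathbf{Y} \in \mathbb{C}^{n\times N} : \text{some accumulation point } (\mathbf{C},\mathbf{D}) \text{ has } |(\mathbf{b}_j)_i| = \lambda \text{ for some } j,i \big\}
\]
is Lebesgue-null; since Assumption~1 makes the joint law of $\mathbf{Y}$ absolutely continuous, a null set has probability zero. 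A first simplification uses $L>\lambda$: at any coordinate $i$ with $|(\mathbf{c}_j)_i|=L$ one has $|(\mathbf{b}_j)_i|\ge L>\lambda$, so a tie can occur only at a coordinate lying off the support of $\mathbf{c}_j$, or on the support but strictly inside the $\ell_\infty$ ball.

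Next I would exploit the structure of accumulation points. Building on Theorem~\ref{theorem2}(i)--(ii), every accumulation point can be shown to be a block-coordinate-wise minimizer of $f$: each $\mathbf{c}_j$ solves \eqref{eqop5} and each $\mathbf{d}_j$ solves \eqref{eqop6} given the other blocks. Fixing a \emph{combinatorial type} $\Gamma$ --- the support $S_j\subseteq\{1,\dots,N\}$ of each $\mathbf{c}_j$, the set of $\ell_\infty$-active coordinates, and the set of ``dead'' atoms ($\mathbf{c}_j=0$) --- converts these optimality conditions (via Propositions~\ref{prop1} and \ref{prop2}) into a finite system of real-analytic equalities and inequalities in the joint variable $(\mathbf{Y},\mathbf{C},\mathbf{D})$: on the active part of $S_j$, $(\mathbf{c}_j)_i=(\mathbf{b}_j)_i$; off $S_j$, $(\mathbf{c}_j)_i=0$ and $|(\mathbf{b}_j)_i|\le\lambda$; at $\ell_\infty$-active coordinates, $|(\mathbf{c}_j)_i|=L$ with $(\mathbf{c}_j)_i$ and $(\mathbf{b}_j)_i$ sharing the same phase; and $\mathbf{d}_j=\mathbf{E}_j\mathbf{c}_j/\|\mathbf{E}_j\mathbf{c}_j\|_2$ whenever $\mathbf{c}_j\neq 0$. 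Since there are finitely many types and finitely many indices $(j_0,i_0)$, we get $\mathcal{B}\subseteq\bigcup_{\Gamma,(j_0,i_0)}\pi\big(V_{\Gamma,j_0,i_0}\big)$, where $V_{\Gamma,j_0,i_0}$ is the real-analytic set in $(\mathbf{Y},\mathbf{C},\mathbf{D})$-space cut out by the type-$\Gamma$ equations together with the extra equation $|(\mathbf{b}_{j_0})_{i_0}|^{2}=\lambda^{2}$, and $\pi$ is the projection onto $\mathbf{Y}$-space. It now suffices to show each $\pi(V_{\Gamma,j_0,i_0})$ is Lebesgue-null in $\mathbb{C}^{n\times N}\cong\mathbb{R}^{2nN}$.

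On each type one would eliminate $\mathbf{D}$ and the nonzero part of $\mathbf{C}$ using $\mathbf{d}_j=\mathbf{E}_j\mathbf{c}_j/\|\mathbf{E}_j\mathbf{c}_j\|_2$ and $(\mathbf{c}_j)_i=(\mathbf{E}_j^{H}\mathbf{E}_j\mathbf{c}_j)_i/\|\mathbf{E}_j\mathbf{c}_j\|_2$, reducing the augmented system to a real-analytic relation among the entries of $\mathbf{Y}$; if this relation is not identically zero it defines a null set, and the finite union over $(\Gamma,j_0,i_0)$ finishes. \textbf{The main obstacle} is precisely this non-degeneracy step: showing that the extra equation genuinely constrains the data rather than being forced by the accumulation-point equations. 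Two failure modes must be ruled out: (a) $|(\mathbf{b}_{j_0})_{i_0}|^{2}=\lambda^{2}$ could hold identically on some irreducible component of the type variety (a transversality issue), and (b), more seriously, for fixed $\mathbf{Y}$ the set of accumulation points of a given type could be positive-dimensional, so that cutting it by one equation need not give its $\pi$-image positive codimension. The promising route around (b) is to prove that generic data admit only finitely many accumulation points of each type --- equivalently, that the fibers of $\pi$ over the un-augmented type variety are zero-dimensional --- so that $\dim_{\mathbb{R}} V_{\Gamma,j_0,i_0}\le 2nN-1$ and hence $\pi(V_{\Gamma,j_0,i_0})$ is null; the equivalence-class structure of Theorem~\ref{theorem2}(ii) (allowing one representative accumulation point per orbit) together with a Sard- or parametric-transversality-type argument applied type-by-type are the natural tools. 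Establishing this generic finiteness, or a weaker surrogate sufficient for the measure bound, is exactly the gap that currently leaves the statement a conjecture.
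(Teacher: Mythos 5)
The statement you are attacking is labeled a \emph{conjecture} in the paper: the authors give no proof of it, so there is no ``paper proof'' to compare against, and your proposal does not close the gap either --- you say as much in your final sentence. Your reduction is the right one and matches the paper's own framing: by Proposition~\ref{prop1} and the remark following Theorem~\ref{theorem2}, uniqueness of the partial minimizer in $\tilde{\mathbf{c}}_j$ at an accumulation point is exactly the condition $|(\mathbf{E}_j^{H}\mathbf{d}_j)_i|\neq\lambda$ for all $i$, and Assumption~1 (independent draws from an absolutely continuous law) does make the joint law of $\mathbf{Y}$ absolutely continuous, so it suffices to show the tie set is Lebesgue-null. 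The stratification by support/active-set type and the attempt to exhibit $|(\mathbf{b}_{j_0})_{i_0}|^2=\lambda^2$ as one extra nontrivial analytic equation per stratum is the natural plan. But the two obstacles you name --- transversality of the extra equation on each component, and generic finiteness (zero-dimensionality of the fibers) of accumulation points of a fixed type --- are precisely what is unproven, so the proposal is a program, not a proof.

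One additional difficulty you should be aware of, beyond the two you list: the ``type equations'' you want to impose on accumulation points are not freely available without the very uniqueness you are trying to establish. In the paper's proof of Theorem~\ref{theorem2}(iii) (supplement, Section on the critical point property), the blockwise fixed-point identities $\mathbf{c}_j^{*}=\arg\min f(\mathbf{E}_j^{*},\cdot,\mathbf{d}_j^{*})$ and $\mathbf{d}_j^{*}=\mathbf{E}_j^{*}\mathbf{c}_j^{*}/\|\mathbf{E}_j^{*}\mathbf{c}_j^{*}\|_2$ are propagated from $j=1$ to $j=2,\dots,J$ only by first invoking the uniqueness assumption to conclude $\mathbf{c}_1^{**}=\mathbf{c}_1^{*}$ and $\mathbf{d}_1^{**}=\mathbf{d}_1^{*}$, which is what makes $\mathbf{E}_2^{q_{n_t}+1}\to\mathbf{E}_2^{*}$. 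Without uniqueness you only learn that \emph{some} minimizer $\mathbf{c}_1^{**}$ (possibly different from $\mathbf{c}_1^{*}$) exists achieving the same objective value, and the subsequent blocks see the environment built from $(\mathbf{c}_1^{**},\mathbf{d}_1^{**})$ rather than $(\mathbf{c}_1^{*},\mathbf{d}_1^{*})$. So the very definition of your type varieties $V_{\Gamma,j_0,i_0}$ --- as loci where the accumulation point satisfies the full set of blockwise optimality equations --- rests on a characterization that the paper only derives \emph{under} the uniqueness hypothesis. Any complete proof would have to either derive a weaker (uniqueness-free) system of necessary conditions satisfied by all accumulation points that is still rich enough to force positive codimension after adjoining $|(\mathbf{b}_{j_0})_{i_0}|=\lambda$, or else run the measure argument on the algorithm iterates themselves rather than on their limit points. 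This, together with the generic-finiteness issue you already flagged, is why the statement remains a conjecture.
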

\vspace{0.04in}

If Conjecture \ref{clstmetuneq} holds, then every accumulation point of the iterates in SOUP-DILLO is immediately a critical point of $f(\mathbf{C}, \mathbf{D})$ with probability $1$.

We now briefly state the convergence result for the OS-DL method for (P2). The result is more of a special case of Theorem \ref{theorem2}. Here, the iterate sequence for an initial $(\mathbf{C}^{0}, \mathbf{D}^{0})$ converges directly (without additional conditions) to an equivalence class (i.e., corresponding to a common objective value $\tilde{f}^{*}=\tilde{f}^{*}(\mathbf{C}^{0}, \mathbf{D}^{0})$) of critical points of the objective $\tilde{f}(\mathbf{C}, \mathbf{D})$.

\begin{theorem}\label{theorem4} \vspace{0.02in}
Let $\left \{ \mathbf{C}^{t}, \mathbf{D}^{t} \right \}$ denote the bounded iterate sequence generated by the OS-DL Algorithm with data $\mathbf{Y}  \in \mathbb{C}^{n \times N}$ and initial $(\mathbf{C}^{0}, \mathbf{D}^{0})$. 
Then, the iterate sequence converges to an equivalence class of critical points of $\tilde{f}(\mathbf{C}, \mathbf{D})$, and $ \begin{Vmatrix}
\mathbf{D}^{t} - \mathbf{D}^{t-1}
\end{Vmatrix}_{F} \to 0$ and $ \begin{Vmatrix}
\mathbf{C}^{t} - \mathbf{C}^{t-1}
\end{Vmatrix}_{F} \to 0$ as $t \to \infty$.
\end{theorem}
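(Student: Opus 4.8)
The plan is to follow the structure of the proof of Theorem~\ref{theorem2}, noting that the $\ell_{1}$ case is genuinely easier: the sparse coding subproblem \eqref{eqop5bb} has a \emph{unique} global minimizer for every input (Proposition~\ref{prop1b}), and the objective $\tilde f$ is coercive in $\mathbf{C}$, so the ``no-tie'' hypothesis of Theorem~\ref{theorem2}(iii) is automatic. I would break the argument into five ingredients: (i) the objective sequence $\{\tilde f^{t}\}$, $\tilde f^{t}\triangleq\tilde f(\mathbf{C}^{t},\mathbf{D}^{t})$, is monotone decreasing and converges to some finite $\tilde f^{*}$; (ii) the iterate sequence stays in a bounded set; (iii) every accumulation point attains the value $\tilde f^{*}$; (iv) $\|\mathbf{C}^{t}-\mathbf{C}^{t-1}\|_{F}\to 0$ and $\|\mathbf{D}^{t}-\mathbf{D}^{t-1}\|_{F}\to 0$; and (v) every accumulation point is a critical point of $\tilde f$. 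Ingredients (iii) and (v) together are exactly the claim that the iterate sequence converges to an equivalence class of critical points.

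For (i), each inner step (sparse coding by Proposition~\ref{prop1b}, atom update by Proposition~\ref{prop2}) is an exact minimization of $\tilde f$ over a single block with the others held fixed, so $\tilde f$ is nonincreasing along the iterations; since $\tilde f\ge 0$, $\{\tilde f^{t}\}$ converges to a finite limit $\tilde f^{*}$. For (ii), the columns of $\mathbf{D}^{t}$ have unit $\ell_{2}$ norm, and the $\ell_{1}$ penalty makes $\tilde f(\cdot,\mathbf{D})$ coercive in $\mathbf{C}$ uniformly over feasible $\mathbf{D}$; combined with $\mu\|\mathbf{C}^{t}\|_{1}\le \tilde f^{t}\le \tilde f^{0}$ this bounds $\{\mathbf{C}^{t}\}$. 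For (iii), $\tilde f$ is continuous on the (closed) feasible set, so letting $t$ run along a subsequence converging to an accumulation point and using $\tilde f^{t}\to\tilde f^{*}$ shows the accumulation point has objective value $\tilde f^{*}$.

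The work is in (iv). For the sparse coding step, the unit-norm constraint on $\mathbf{d}_{j}$ forces the quadratic term of \eqref{eqop5bb}, viewed as a function of $\mathbf{c}_{j}\in\mathbb{C}^{N}\cong\mathbb{R}^{2N}$, to have (real) Hessian $2\mathbf{I}$; adding the convex $\ell_{1}$ term, the subproblem is $2$-strongly convex, so its minimizer $\mathbf{c}_{j}^{t}$ is unique and the standard strong-convexity inequality gives that the decrease in $\tilde f$ from this particular block update is at least $\|\mathbf{c}_{j}^{t}-\mathbf{c}_{j}^{t-1}\|_{2}^{2}$. Summing over $j$ inside iteration $t$ (the total decrease over the whole iteration, $\tilde f^{t-1}-\tilde f^{t}$, dominates the sum of the sparse-coding decreases) and telescoping over $t$ gives $\sum_{t\ge 1}\|\mathbf{C}^{t}-\mathbf{C}^{t-1}\|_{F}^{2}\le \tilde f^{0}-\tilde f^{*}<\infty$, hence $\|\mathbf{C}^{t}-\mathbf{C}^{t-1}\|_{F}\to 0$. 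For the atom update, a short computation using $\mathbf{h}^{t}=\mathbf{E}_{j}\mathbf{c}_{j}^{t}$ (the vector in \eqref{rbb}) and $\mathbf{d}_{j}^{t}=\mathbf{h}^{t}/\|\mathbf{h}^{t}\|_{2}$ shows that, when $\mathbf{c}_{j}^{t}\neq\mathbf{0}$, the decrease in $\tilde f$ equals $\|\mathbf{h}^{t}\|_{2}\,\|\mathbf{d}_{j}^{t}-\mathbf{d}_{j}^{t-1}\|_{2}^{2}$.

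This identity is where the main obstacle lies: in contrast with the $\mathbf{c}_{j}$ case, the ``curvature'' factor $\|\mathbf{h}^{t}\|_{2}=\|\mathbf{E}_{j}\mathbf{c}_{j}^{t}\|_{2}$ need not be bounded below, and the column $\mathbf{c}_{j}^{t}$ can be exactly $\mathbf{0}$, so one cannot simply sum a uniform quadratic decrease to conclude $\|\mathbf{D}^{t}-\mathbf{D}^{t-1}\|_{F}\to 0$. I would handle the zero column by adopting the convention $\mathbf{d}_{j}^{t}=\mathbf{d}_{j}^{t-1}$ whenever $\mathbf{c}_{j}^{t}=\mathbf{0}$ (legitimate, since $\tilde f$ is then independent of $\mathbf{d}_{j}$). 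For the rest, instead of a per-step bound I would obtain $\|\mathbf{D}^{t}-\mathbf{D}^{t-1}\|_{F}\to 0$ jointly with ingredient~(v): pass to a convergent subsequence $(\mathbf{C}^{t_{k}},\mathbf{D}^{t_{k}})\to(\mathbf{C}^{*},\mathbf{D}^{*})$, use $\|\mathbf{C}^{t}-\mathbf{C}^{t-1}\|_{F}\to 0$ to force the $\mathbf{C}$-sub-iterates inside iteration $t_{k}$ to converge to $\mathbf{C}^{*}$, invoke continuity of the single-valued sparse-code map of Proposition~\ref{prop1b} to identify the limiting $\mathbf{c}_{j}$'s as partial minimizers of $\tilde f$, and then invoke (outer semi)continuity of the atom-update map of Proposition~\ref{prop2} together with $\tilde f^{t}\to\tilde f^{*}$ to identify the limiting $\mathbf{d}_{j}$'s as partial minimizers over the sphere and to rule out consecutive $\mathbf{D}$-iterates staying a fixed distance apart. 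Finally, block-wise minimality upgrades to $\mathbf{0}\in\partial\tilde f(\mathbf{C}^{*},\mathbf{D}^{*})$ because the nonsmooth part of $\tilde f$, namely $\mu\sum_{j}\|\mathbf{c}_{j}\|_{1}+\sum_{j}\chi(\mathbf{d}_{j})$, is separable across the blocks (so its limiting subdifferential is the product of the block subdifferentials) while the fitting term is $C^{1}$ (so the subdifferential sum rule applies exactly); this is the structural reason coordinate-wise minimality yields criticality here even though it fails in general nonconvex problems. Combined with~(iii), this delivers convergence to an equivalence class of critical points. I expect the reconciliation of the atom-update decrease identity with the possibly vanishing curvature and with the zero coefficient columns to be the only genuinely delicate point; the remaining steps are a routine transcription of the $\ell_{0}$ proof with the tie-breaking ambiguity removed.
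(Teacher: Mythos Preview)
Your proposal is correct and its overall architecture matches the paper's: both treat Theorem~\ref{theorem4} as a simplification of the Theorem~\ref{theorem2} proof, using (a) continuity of the $\ell_{1}$ norm so that the equivalence-of-accumulation-points step is immediate, and (b) uniqueness of the sparse-coding minimizer (Proposition~\ref{prop1b}) so that the ``no-tie'' hypothesis of Theorem~\ref{theorem2}(iii) holds automatically. Both then obtain criticality from block-wise partial optimality together with separability of the nonsmooth part, and both obtain $\|\mathbf{D}^{t}-\mathbf{D}^{t-1}\|_{F}\to 0$ via the paired-subsequence argument (take a convergent subsequence of consecutive iterates, show the two limits coincide).

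The one substantive difference is your handling of $\|\mathbf{C}^{t}-\mathbf{C}^{t-1}\|_{F}\to 0$. You exploit that each sparse-coding subproblem is $2$-strongly convex (Hessian of the quadratic part equals $2\mathbf{I}$ since $\|\mathbf{d}_{j}\|_{2}=1$) to obtain a sufficient-decrease inequality and hence the stronger conclusion $\sum_{t}\|\mathbf{C}^{t}-\mathbf{C}^{t-1}\|_{F}^{2}<\infty$. The paper does not do this; it treats $\mathbf{C}$ exactly like $\mathbf{D}$, via the same paired-subsequence argument, invoking uniqueness of the soft-thresholding map to force the two limits to agree. Your route is more quantitative and decouples the $\mathbf{C}$-asymptotics from the critical-point analysis, but it is specific to the $\ell_{1}$ setting; the paper's route is uniform across the $\ell_{0}$ and $\ell_{1}$ cases, which is precisely why the supplement can dispatch Theorem~\ref{theorem4} in a few lines as a specialization of the Theorem~\ref{theorem2} proof. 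Your convention $\mathbf{d}_{j}^{t}=\mathbf{d}_{j}^{t-1}$ when $\mathbf{c}_{j}^{t}=\mathbf{0}$ is explicitly noted in the paper as an admissible alternative to $\mathbf{d}_{j}^{t}=\mathbf{v}$, so that choice causes no discrepancy.
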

\vspace{0.04in}

A brief proof of Theorem \ref{theorem2} is provided in the supplementary material. The proof for Theorem \ref{theorem4} is similar, as discussed in the supplement.

\subsection{Results for (P3) and (P4)} \label{sec4c}

First, we present the result for the SOUP-DILLO image reconstruction algorithm for (P3) in Theorem \ref{theorem5}.
We again assume that the initial $(\mathbf{C}^{0}, \mathbf{D}^{0})$ satisfies the constraints in the problem.  Recall that $\mathbf{Y}$ denotes the matrix with patches $\mathbf{P}_{i} \mathbf{y}$ for $1\leq i \leq N$, as its columns.

\begin{theorem}\label{theorem5} \vspace{0.02in}
Let $\left \{ \mathbf{C}^{t}, \mathbf{D}^{t}, \mathbf{y}^{t} \right \}$ denote the iterate sequence generated by the SOUP-DILLO image reconstruction Algorithm with measurements $\mathbf{z} \in \mathbb{C}^{m}$ and initial $(\mathbf{C}^{0}, \mathbf{D}^{0}, \mathbf{y}^{0})$. Then, the following results hold:
\begin{enumerate}[(i)]
\item The objective sequence  $\left \{ g^{t} \right \}$ with $g^{t} \triangleq g\left ( \mathbf{C}^{t}, \mathbf{D}^{t}, \mathbf{y}^{t} \right )$ is monotone decreasing, and converges to a finite value, say $g^{*}=g^{*}(\mathbf{C}^{0}, \mathbf{D}^{0}, \mathbf{y}^{0})$.
\item The iterate sequence is bounded, and all its accumulation points are equivalent in the sense that they achieve the exact same value $g^{*}$ of the objective.
\item Each accumulation point $(\mathbf{C}, \mathbf{D}, \mathbf{y})$ of the iterate sequence satisfies
\begin{equation}
 \mathbf{y} \in  \underset{\tilde{\mathbf{y}}}{\arg\min} \; \,  g(\mathbf{C}, \mathbf{D}, \tilde{\mathbf{y}})     \label{cnbcs4a}
\end{equation}
\item As $t \to \infty$, $ \left \| \mathbf{y}^{t} - \mathbf{y}^{t-1} \right \|_{2}$ converges to zero.
\item Suppose each accumulation point $(\mathbf{C}, \mathbf{D}, \mathbf{y})$ of the iterates is such that the matrix $\mathbf{B}$ with columns $\mathbf{b}_{j} = \mathbf{E}_{j}^{H}\mathbf{d}_{j}$ and $\mathbf{E}_{j} = \mathbf{Y} - \mathbf{D}\mathbf{C}^{H} + \mathbf{d}_{j}\mathbf{c}_{j}^{H}$, has no entry with magnitude $\lambda$. Then every accumulation point of the iterate sequence is a critical point of the objective $g$. Moreover, $ \begin{Vmatrix}
\mathbf{D}^{t} - \mathbf{D}^{t-1}
\end{Vmatrix}_{F} \to 0$ and $ \begin{Vmatrix}
\mathbf{C}^{t} - \mathbf{C}^{t-1}
\end{Vmatrix}_{F} \to 0$ as $t \to \infty$.
\end{enumerate}
\end{theorem}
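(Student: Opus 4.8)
The plan is to view the SOUP-DILLO image reconstruction algorithm as a single cyclic (exact) block coordinate descent scheme over the blocks $\mathbf{c}_{1},\dots,\mathbf{c}_{J},\mathbf{d}_{1},\dots,\mathbf{d}_{J}$ (cycled $K$ times inside the dictionary learning step) together with the image block $\mathbf{y}$, and then to lift the argument behind Theorem \ref{theorem2} to this enlarged setting, with the new $\mathbf{y}$-block handled separately via the strict convexity of the image update.

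\emph{Steps (i), (ii)-boundedness, (iv).} First I would establish monotonicity: every $\mathbf{c}_{j}$- and $\mathbf{d}_{j}$-update is an exact global minimization of $g$ over that block (Propositions \ref{prop1} and \ref{prop2}), and the image update returns the exact minimizer of the quadratic \eqref{rop4} (via \eqref{bcs10}/\eqref{bcs13} or CG), so $g$ is non-increasing; since $g\geq0$, $\{g^{t}\}$ converges to some $g^{*}\geq0$, which is (i). For boundedness, the unit-norm and $\ell_{\infty}$ constraints bound $\{\mathbf{D}^{t}\}$ and $\{\mathbf{C}^{t}\}$; since \eqref{bcs10} has a unique solution, its Hermitian PSD system matrix $\mathbf{M}\triangleq\sum_{i}\mathbf{P}_{i}^{T}\mathbf{P}_{i}+\nu\mathbf{A}^{H}\mathbf{A}$ is in fact positive definite, $\mathbf{M}\succeq\beta'\mathbf{I}$ with $\beta'>0$, and writing the $\mathbf{y}$-dependent part of $g^{t}$ as $(\mathbf{y}^{t})^{H}\mathbf{M}\,\mathbf{y}^{t}-2\,\mathrm{Re}((\mathbf{w}^{t})^{H}\mathbf{y}^{t})+(\text{const})$ with $\{\mathbf{w}^{t}\}$ bounded (as $\mathbf{D}^{t},\mathbf{X}^{t},\mathbf{z}$ are bounded) forces $\{\mathbf{y}^{t}\}$ bounded from $g^{t}\leq g^{0}$. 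For (iv), strong convexity of $g(\mathbf{C}^{t},\mathbf{D}^{t},\cdot)$ (Hessian $\succeq2\mathbf{M}\succeq2\beta'\mathbf{I}$) gives $g(\mathbf{C}^{t},\mathbf{D}^{t},\mathbf{y}^{t-1})-g^{t}\geq\beta'\|\mathbf{y}^{t}-\mathbf{y}^{t-1}\|_{2}^{2}$; since the dictionary learning step does not increase $g$, $g(\mathbf{C}^{t},\mathbf{D}^{t},\mathbf{y}^{t-1})\leq g^{t-1}$, hence $\beta'\|\mathbf{y}^{t}-\mathbf{y}^{t-1}\|_{2}^{2}\leq g^{t-1}-g^{t}\to0$.

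\emph{Equivalence of accumulation points (ii) and Statement (iii).} Along a convergent subsequence $(\mathbf{C}^{t_{k}},\mathbf{D}^{t_{k}},\mathbf{y}^{t_{k}})\to(\mathbf{C}^{*},\mathbf{D}^{*},\mathbf{y}^{*})$, I would use that nonzero entries produced by $H_{\lambda}(\cdot)$ have magnitude in $[\lambda,L]$, so after passing to a further subsequence the support of $\mathbf{C}^{t_{k}}$ is eventually constant; hence $\|\mathbf{C}^{t_{k}}\|_{0}$ is eventually constant, $g$ is continuous along the subsequence, and $g(\mathbf{C}^{*},\mathbf{D}^{*},\mathbf{y}^{*})=\lim_{k}g^{t_{k}}=g^{*}$, which is (ii). Statement (iii) then follows because $\mathbf{y}^{t}$ is the unique solution of \eqref{bcs10}, whose solution map $(\mathbf{C},\mathbf{D})\mapsto\mathbf{M}^{-1}(\cdots)$ is continuous ($\mathbf{M}\succ0$), so passing to the limit gives $\mathbf{y}^{*}\in\arg\min_{\tilde{\mathbf{y}}}g(\mathbf{C}^{*},\mathbf{D}^{*},\tilde{\mathbf{y}})$.

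\emph{Statement (v) --- the main obstacle.} Under the no-tie hypothesis on $\mathbf{B}$, the minimizer of $g$ with respect to each $\mathbf{c}_{j}$ (others fixed at an accumulation point) is unique (Proposition \ref{prop1}), the minimizer with respect to each $\mathbf{d}_{j}$ is unique whenever $\mathbf{c}_{j}\neq\mathbf{0}$ (Proposition \ref{prop2}) and otherwise $\mathbf{d}_{j}$ does not enter $g$, and by (iii) $\mathbf{y}^{*}$ already minimizes $g$ over $\mathbf{y}$. The goal is to conclude that each accumulation point is block-coordinate-wise optimal in every block, which by the separable structure of $g$ (each block's partial problem being smooth apart from the closed-set/barrier constraints) yields criticality, and to obtain $\|\mathbf{D}^{t}-\mathbf{D}^{t-1}\|_{F}\to0$ and $\|\mathbf{C}^{t}-\mathbf{C}^{t-1}\|_{F}\to0$. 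This is exactly where the supplementary proof of Theorem \ref{theorem2} is reused essentially verbatim: one argues by contradiction --- if $\|\mathbf{C}^{t}-\mathbf{C}^{t-1}\|_{F}$ (or the $\mathbf{D}$-analogue) does not vanish, extract convergent subsequences of consecutive iterates, both limits achieving the common value $g^{*}$, and contradict the uniqueness of the block minimizers forced by the no-tie condition; continuity of the (now single-valued) update maps at the accumulation point then gives that the limit satisfies all the partial optimality/stationarity conditions. The extra $\mathbf{y}$-block adds no real difficulty because (iv) already supplies $\|\mathbf{y}^{t}-\mathbf{y}^{t-1}\|_{2}\to0$ and (iii) supplies its partial optimality; it merely has to be carried along in the bookkeeping. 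I expect this contradiction argument --- reconciling the absence of strong convexity/single-valuedness of the $\ell_{0}$ sparse-coding and atom-update maps with the desired convergence of successive iterates --- to be the main technical burden.
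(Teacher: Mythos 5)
Your proposal is correct and, for statements (i), (ii), (iii) and (v), follows essentially the same route as the paper's proof: monotone descent from exactness of every block update, boundedness of $\mathbf{y}^{t}$ from positive definiteness of $\sum_{i}\mathbf{P}_{i}^{T}\mathbf{P}_{i}+\nu\mathbf{A}^{H}\mathbf{A}$, finite-time stabilization of the supports (since nonzero hard-thresholded entries have magnitude at least $\lambda$) to pass $\left\|\cdot\right\|_{0}$ to the limit, and a verbatim reuse of the Theorem~\ref{theorem2} machinery for the critical-point property and the vanishing of $\|\mathbf{D}^{t}-\mathbf{D}^{t-1}\|_{F}$ and $\|\mathbf{C}^{t}-\mathbf{C}^{t-1}\|_{F}$ (the paper's argument there is a direct subsequence argument rather than a contradiction, and it handles the $\mathbf{c}_{j}^{*}=\mathbf{0}$ case via the deterministic tie-break $\mathbf{d}_{j}=\mathbf{v}$, a detail you gloss over but which is subsumed in the Theorem~\ref{theorem2} proof you invoke). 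The genuine divergence is in statement (iv): the paper proves $\|\mathbf{y}^{t}-\mathbf{y}^{t-1}\|_{2}\to 0$ by showing that for every convergent subsequence of the iterates the lagged sequence $\{\mathbf{y}^{q_{t}-1}\}$ converges to the same limit (via equivalence of accumulation points plus uniqueness of the image-update minimizer), whereas you use a sufficient-decrease inequality: strong convexity of $g(\mathbf{C}^{t},\mathbf{D}^{t},\cdot)$ with modulus $\lambda_{\min}(\mathbf{M})>0$ gives $\beta'\|\mathbf{y}^{t}-\mathbf{y}^{t-1}\|_{2}^{2}\leq g(\mathbf{C}^{t},\mathbf{D}^{t},\mathbf{y}^{t-1})-g^{t}\leq g^{t-1}-g^{t}\to 0$. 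Your argument is shorter, quantitative, and avoids the subsequence bookkeeping; the paper's argument yields the slightly stronger byproduct that the lagged image iterates converge along every convergent subsequence of the full iterates, which it then recycles in the proof of statement (v). Similarly, your derivation of (iii) via continuity of the solution map $(\mathbf{C},\mathbf{D})\mapsto\mathbf{M}^{-1}(\sum_{i}\mathbf{P}_{i}^{T}\mathbf{D}\mathbf{x}_{i}+\nu\mathbf{A}^{H}\mathbf{z})$ is a legitimate alternative to the paper's passage to the limit in the optimality inequality $g(\mathbf{C}^{q_t},\mathbf{D}^{q_t},\mathbf{y}^{q_t})\leq g(\mathbf{C}^{q_t},\mathbf{D}^{q_t},\mathbf{y})$; both are sound.
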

\vspace{0.04in}

Statements (i) and (ii) of Theorem \ref{theorem5} establish that for each initial $(\mathbf{C}^{0}, \mathbf{D}^{0}, \mathbf{y}^{0})$, the bounded iterate sequence in the SOUP-DILLO image reconstruction algorithm converges to an equivalence class (common objective value) of accumulation points. 
Statements (iii) and (iv) establish that each accumulation point is a partial global minimizer (i.e., minimizer with respect to some variables while the rest are kept fixed) of $g\left ( \mathbf{C}, \mathbf{D}, \mathbf{y} \right )$ with respect to $\mathbf{y}$, and that $\begin{Vmatrix} \mathbf{y}^{t} - \mathbf{y}^{t-1} \end{Vmatrix}_{2} \to 0$.
Statement (v) shows that the iterates converge to the critical points of $g$. In fact, the accumulation points of the iterates can be shown to be partial global minimizers of $g\left ( \mathbf{C}, \mathbf{D}, \mathbf{y} \right )$ with respect to each column of $\mathbf{C}$ or $\mathbf{D}$. 
Statement (v) also establishes the properties $ \begin{Vmatrix}
\mathbf{D}^{t} - \mathbf{D}^{t-1}
\end{Vmatrix}_{F} \to 0$ and $ \begin{Vmatrix}
\mathbf{C}^{t} - \mathbf{C}^{t-1}
\end{Vmatrix}_{F} \to 0$. Similarly as in Theorem \ref{theorem2}, Statement (v) of Theorem \ref{theorem5} uses a uniqueness condition with respect to the accumulation points of the iterates.

Finally, we briefly state the convergence result for the SOUP-DILLI image reconstruction Algorithm for (P4). The result is a special version of Theorem \ref{theorem5}, where the iterate sequence for an initial $(\mathbf{C}^{0}, \mathbf{D}^{0}, \mathbf{y}^{0})$ converges directly (without additional conditions) to an equivalence class (i.e., corresponding to a common objective value $\tilde{g}^{*}=\tilde{g}^{*}(\mathbf{C}^{0}, \mathbf{D}^{0}, \mathbf{y}^{0})$) of critical points of the objective $\tilde{g}$.

\begin{theorem}\label{theorem6} \vspace{0.02in}
Let $\left \{ \mathbf{C}^{t}, \mathbf{D}^{t}, \mathbf{y}^{t} \right \}$ denote the iterate sequence generated by the SOUP-DILLI image reconstruction Algorithm for (P4) with measurements $\mathbf{z} \in \mathbb{C}^{m}$ and initial $(\mathbf{C}^{0}, \mathbf{D}^{0}, \mathbf{y}^{0})$.
Then, the iterate sequence converges to an equivalence class of critical points of $\tilde{g}(\mathbf{C}, \mathbf{D}, \mathbf{y})$. Moreover, $ \begin{Vmatrix}
\mathbf{D}^{t} - \mathbf{D}^{t-1}
\end{Vmatrix}_{F} \to 0$, $ \begin{Vmatrix}
\mathbf{C}^{t} - \mathbf{C}^{t-1}
\end{Vmatrix}_{F} \to 0$, and $\begin{Vmatrix} \mathbf{y}^{t} - \mathbf{y}^{t-1} \end{Vmatrix}_{2} \to 0$ as $t \to \infty$.
\end{theorem}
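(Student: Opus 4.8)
The plan is to follow the proof of Theorem~\ref{theorem5} almost verbatim, exploiting the fact that the only change in (P4)/SOUP-DILLI relative to (P3)/SOUP-DILLO is that the sparse-coding inner step of the dictionary-learning stage uses the OS-DL update of Proposition~\ref{prop1b} rather than the hard-thresholding update of Proposition~\ref{prop1}. Because the soft-thresholding map in \eqref{tru1ch4bnmn} has a \emph{unique} minimizer that depends Lipschitz-continuously on $\mathbf{E}_j^H\mathbf{d}_j$, the genericity hypothesis (``$\mathbf{B}$ has no entry of magnitude $\lambda$'') needed for Statement~(v) of Theorem~\ref{theorem5} is no longer required; this is precisely why Theorem~\ref{theorem6} is unconditional, paralleling the passage from Theorem~\ref{theorem2} to Theorem~\ref{theorem4}.

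\textbf{Step 1 (monotonicity, boundedness, equivalent accumulation points).} Each block update --- a single OS-DL update of a column $\mathbf{c}_j$, an atom update of $\mathbf{d}_j$ via \eqref{tru1ch4g}, or the image update solving \eqref{bcs10} --- is an exact minimization of $\tilde{g}$ over that block, so $\{\tilde{g}^t\}$ is monotone decreasing, and, being bounded below by $0$, converges to some $\tilde{g}^*=\tilde{g}^*(\mathbf{C}^0,\mathbf{D}^0,\mathbf{y}^0)$. Boundedness of $\{\mathbf{D}^t\}$ is immediate from the unit-norm constraints; boundedness of $\{\mathbf{C}^t\}$ follows from $\mu\|\mathbf{C}^t\|_1\le\tilde{g}^t\le\tilde{g}^0$ (the $\ell_1$ penalty plays the role played by the $\ell_\infty$ constraints in (P3)); and boundedness of $\{\mathbf{y}^t\}$ follows because the normal matrix $\sum_i\mathbf{P}_i^T\mathbf{P}_i+\nu\mathbf{A}^H\mathbf{A}$ in \eqref{bcs10} is positive definite (its patch term alone is a positive diagonal when the overlapping patches cover every pixel), so $\mathbf{y}^t$ is controlled by $\bigl\|\sum_i\mathbf{P}_i^T\mathbf{D}^t\mathbf{x}_i^t+\nu\mathbf{A}^H\mathbf{z}\bigr\|_2$, a bounded quantity since $\|\mathbf{d}_j^t\|_2=1$ and $\{\mathbf{X}^t\}$ is bounded. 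Bolzano--Weierstrass then gives accumulation points, and, since $\tilde{g}$ is continuous on the (closed) feasible set, every accumulation point is feasible and attains the common value $\tilde{g}^*$.

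\textbf{Step 2 (vanishing successive differences, partial minimality, criticality).} For the image update, strict convexity of \eqref{rop4} --- whose Hessian $2\bigl(\sum_i\mathbf{P}_i^T\mathbf{P}_i+\nu\mathbf{A}^H\mathbf{A}\bigr)\succeq 2\rho\mathbf{I}$ for some $\rho>0$ --- gives a per-step decrease $\ge\rho\|\mathbf{y}^t-\mathbf{y}^{t-1}\|_2^2$; for the OS-DL sparse-coding step, $\|\mathbf{d}_j\|_2=1$ makes \eqref{eqop5bb} $1$-strongly convex in $\mathbf{c}_j$, giving a decrease $\ge\|\Delta\mathbf{c}_j\|_2^2$; and for the atom update the identity $\|\mathbf{a}-\mathbf{b}\|_2^2=2-2\,\mathrm{Re}\langle\mathbf{a},\mathbf{b}\rangle$ for unit vectors yields a decrease $\|\mathbf{E}_j\mathbf{c}_j\|_2\,\|\Delta\mathbf{d}_j\|_2^2$ when $\mathbf{c}_j\neq\mathbf{0}$. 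Summing these nonnegative per-step decreases telescopes to $\tilde{g}^0-\tilde{g}^*<\infty$, which forces $\|\mathbf{y}^t-\mathbf{y}^{t-1}\|_2\to 0$, $\|\mathbf{C}^t-\mathbf{C}^{t-1}\|_F\to 0$, and (as in the proof of Theorem~\ref{theorem4}) $\|\mathbf{D}^t-\mathbf{D}^{t-1}\|_F\to 0$. Passing to the limit along a subsequence converging to an accumulation point $(\mathbf{C},\mathbf{D},\mathbf{y})$, and using continuity of the soft-thresholding map, of \eqref{tru1ch4g}, and of the solution of \eqref{bcs10}, I would then show that $\mathbf{y}\in\arg\min_{\tilde{\mathbf{y}}}\tilde{g}(\mathbf{C},\mathbf{D},\tilde{\mathbf{y}})$ and that each column of $\mathbf{C}$ and of $\mathbf{D}$ is a global minimizer of $\tilde{g}$ over that column with all other variables fixed at their accumulation-point values; gluing these block-optimality conditions together --- the vanished successive differences guaranteeing that the ``other-block'' quantities appearing in the subsequential subproblems converge to their limits --- yields $\mathbf{0}\in\partial\tilde{g}(\mathbf{C},\mathbf{D},\mathbf{y})$, i.e., that every accumulation point is a critical point.

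\textbf{Main obstacle.} The delicate points, exactly as in Theorems~\ref{theorem2} and~\ref{theorem5}, are (a) the degenerate atom-update case $\mathbf{c}_j=\mathbf{0}$, where \eqref{tru1ch4g88} resets $\mathbf{d}_j$ to the fixed vector $\mathbf{v}$ discontinuously and one must argue either that the relevant column eventually has $\mathbf{c}_j^t\neq\mathbf{0}$ along the subsequence or that this reset does not obstruct criticality because the stationarity condition in that $\mathbf{d}_j$ is then vacuous; and (b) the passage from block-wise partial minimality \emph{at} the accumulation point to full criticality of the nonconvex, nonsmooth $\tilde{g}$, which is precisely where the summability-driven successive-difference estimates of Step~2 are indispensable. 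The comparatively easy feature of (P4) is that, the $\ell_1$ sparse-coding minimizer being unique and continuous, no probabilistic or genericity assumption enters the argument; assembling the proof then reduces to adjoining the $\mathbf{y}$-block to the OS-DL convergence argument behind Theorem~\ref{theorem4}, with the image-update facts from Steps~1--2 supplying the new ingredients.
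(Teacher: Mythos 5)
Your proposal is correct and its skeleton is the one the paper uses (exact block coordinate descent $\Rightarrow$ monotone, convergent objective; boundedness; equivalence of accumulation points; partial global optimality of each block at every accumulation point, established sequentially over $j$ along paired subsequences; criticality via the sum rule for separable sub-differentials), and you correctly identify both why Theorem~\ref{theorem6} needs no genericity hypothesis (the $\ell_1$ sparse-coding minimizer of \eqref{eqop5bb} is unique and continuous in $\mathbf{E}_j^H\mathbf{d}_j$, per Proposition~\ref{prop1b}) and the one place where the (P4) argument must genuinely deviate from (P3): boundedness of $\{\mathbf{C}^t\}$ comes from $\mu\|\mathbf{C}^t\|_1\le\tilde{g}^0$ rather than from $\ell_\infty$ constraints. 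The one genuine methodological difference is in how the successive-difference claims are obtained. The paper proves \emph{all} of $\|\mathbf{y}^t-\mathbf{y}^{t-1}\|_2\to0$, $\|\mathbf{C}^t-\mathbf{C}^{t-1}\|_F\to0$, $\|\mathbf{D}^t-\mathbf{D}^{t-1}\|_F\to0$ by a purely topological argument: take any convergent subsequence of the scalar sequence $a^t$, extract a convergent sub-subsequence of the paired iterates $(\cdot^{q_{n_t}-1},\cdot^{q_{n_t}})$, and use the already-established uniqueness of the partial minimizers to force the two limits to coincide. You instead derive the $\mathbf{y}$- and $\mathbf{C}$-claims from uniform sufficient-decrease estimates (strong convexity of \eqref{rop4} and of the soft-thresholding subproblem) plus telescoping against $\tilde{g}^0-\tilde{g}^*<\infty$; this is the standard Attouch--Bolte-style mechanism, is quantitatively stronger (it gives square-summability, not just convergence to zero), and is logically independent of the accumulation-point analysis, which lets you use $\mathbf{C}^{q_t+1}-\mathbf{C}^{q_t}\to0$ directly to identify $\mathbf{C}^{**}=\mathbf{C}^{*}$ instead of invoking uniqueness of the sparse-code minimizer for that purpose. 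The caveat you implicitly acknowledge is real: for the atom block the decrease is only $\|\mathbf{E}_j\mathbf{c}_j\|_2\,\|\Delta\mathbf{d}_j\|_2^2$, whose modulus is not bounded away from zero, so telescoping does not close that case and you must (and do) fall back on the paper's subsequence/uniqueness route from Theorem~\ref{theorem4}, including its treatment of the degenerate reset $\mathbf{c}_j=\mathbf{0}$ in \eqref{tru1ch4g} --- a corner the paper itself only sketches for the $\ell_1$ setting, and where criticality survives in any case because $\tilde{g}$ is constant in $\mathbf{d}_j$ when $\mathbf{c}_j=\mathbf{0}$.
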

\vspace{0.04in}

A brief proof sketch for Theorems \ref{theorem5} and \ref{theorem6} is provided in the supplementary material.

The convergence results for the algorithms in Figs. \ref{im5p} and \ref{im6p} use the deterministic and cyclic ordering of the various updates (of variables). Whether one could generalize the results to other update orders (such as stochastic) is an interesting question that we leave for future work.

\begin{figure*}[!t]
\begin{center}
\begin{tabular}{ccccccc}
\includegraphics[height=0.864in]{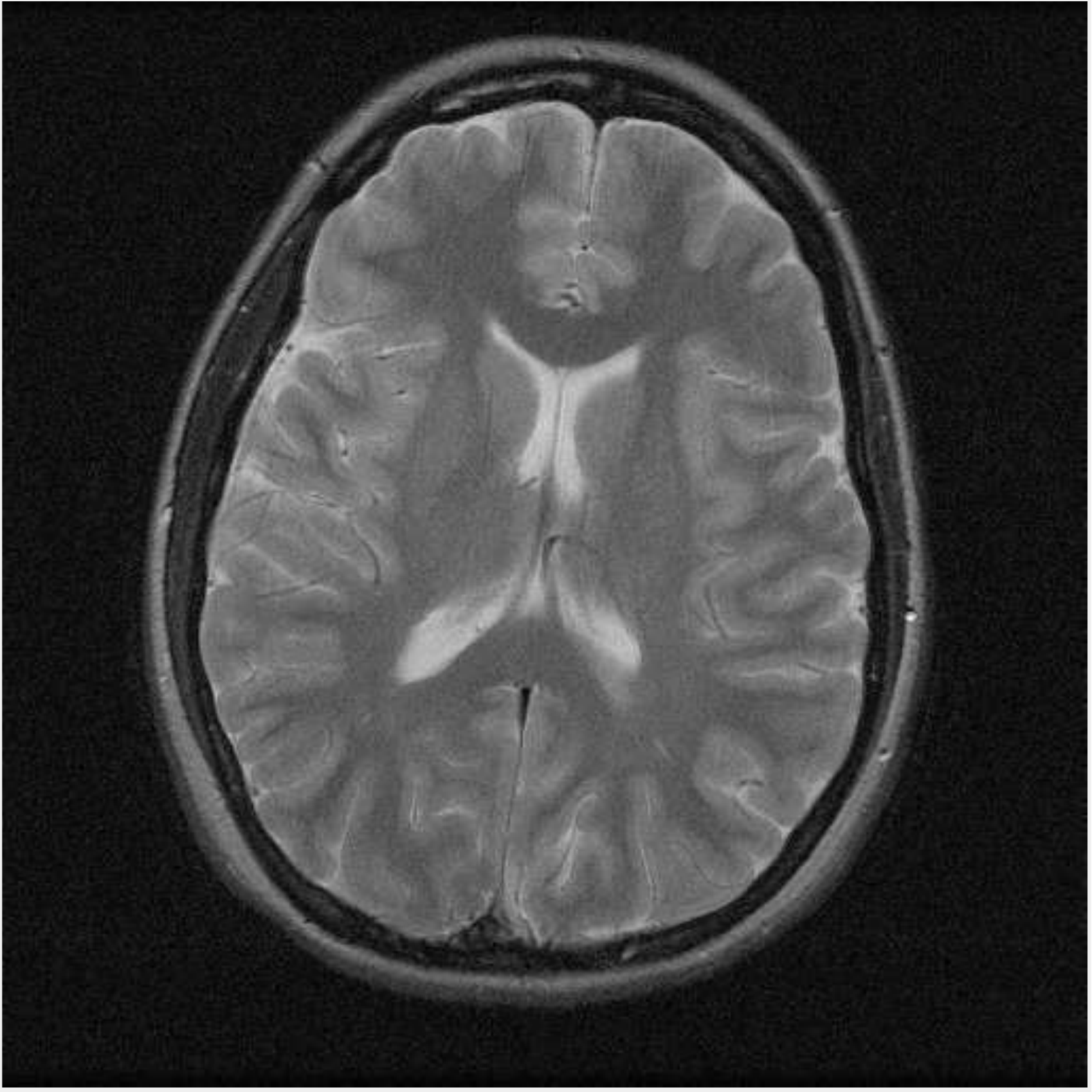}&
\includegraphics[height=0.864in]{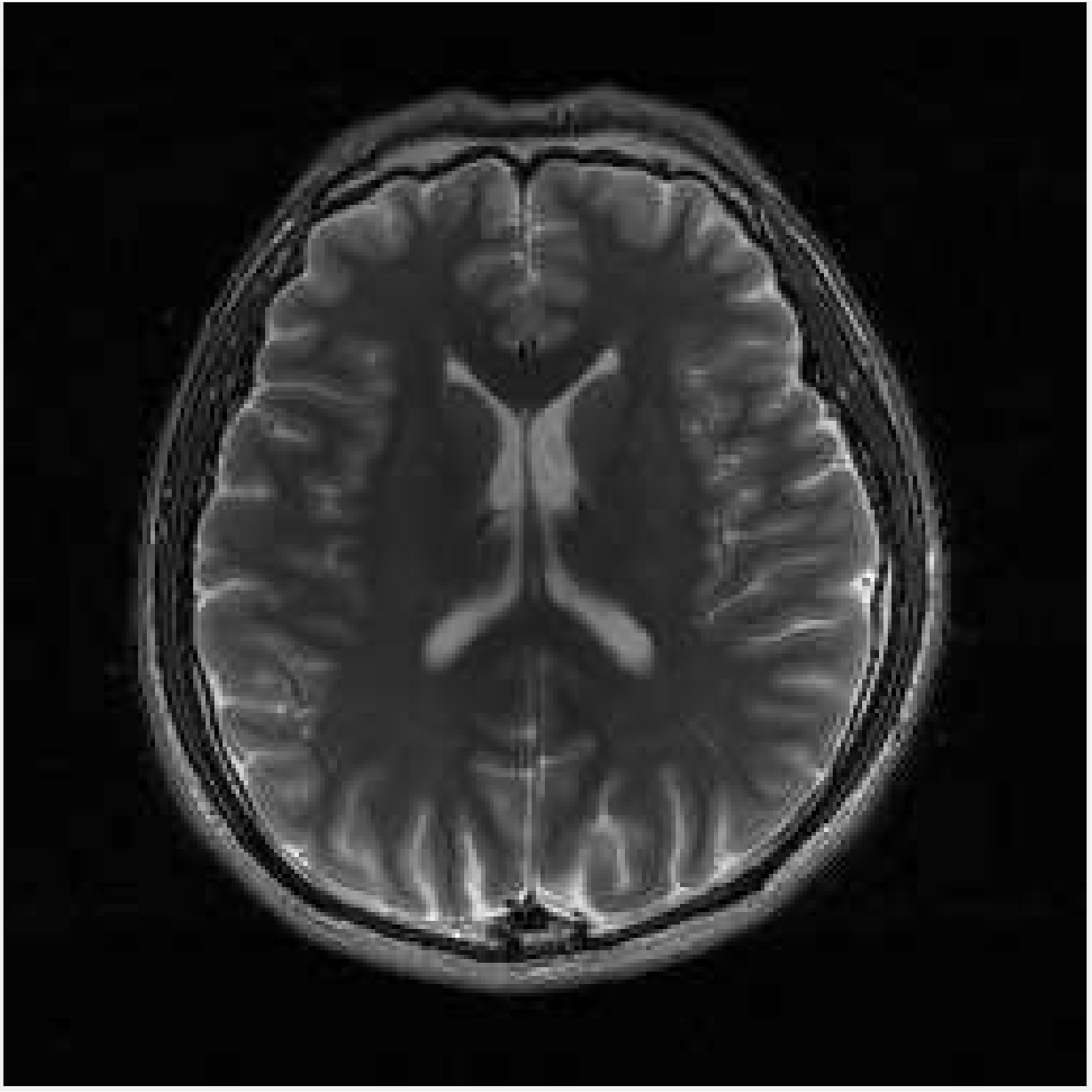} &
\includegraphics[height=0.864in]{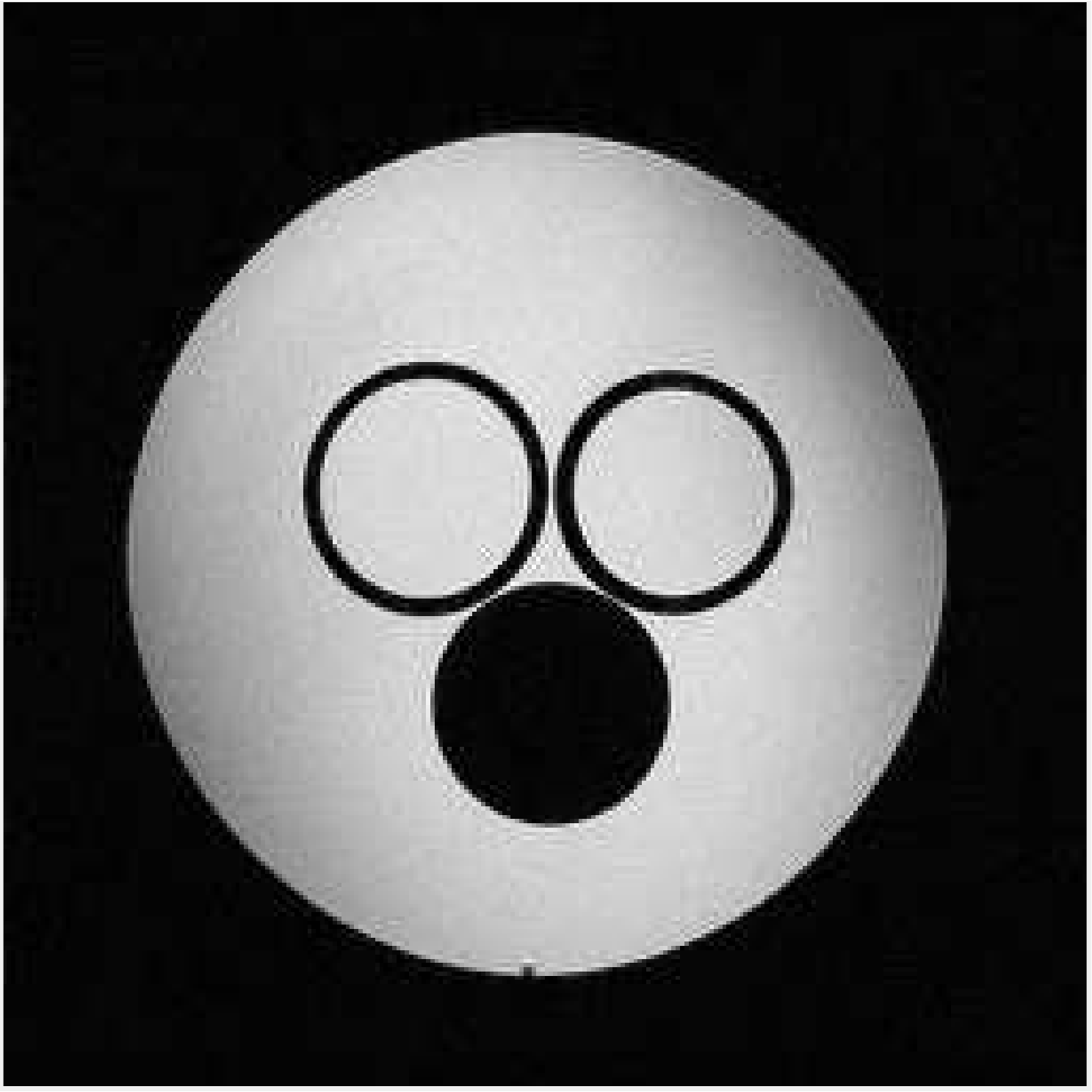}  &
\includegraphics[height=0.864in]{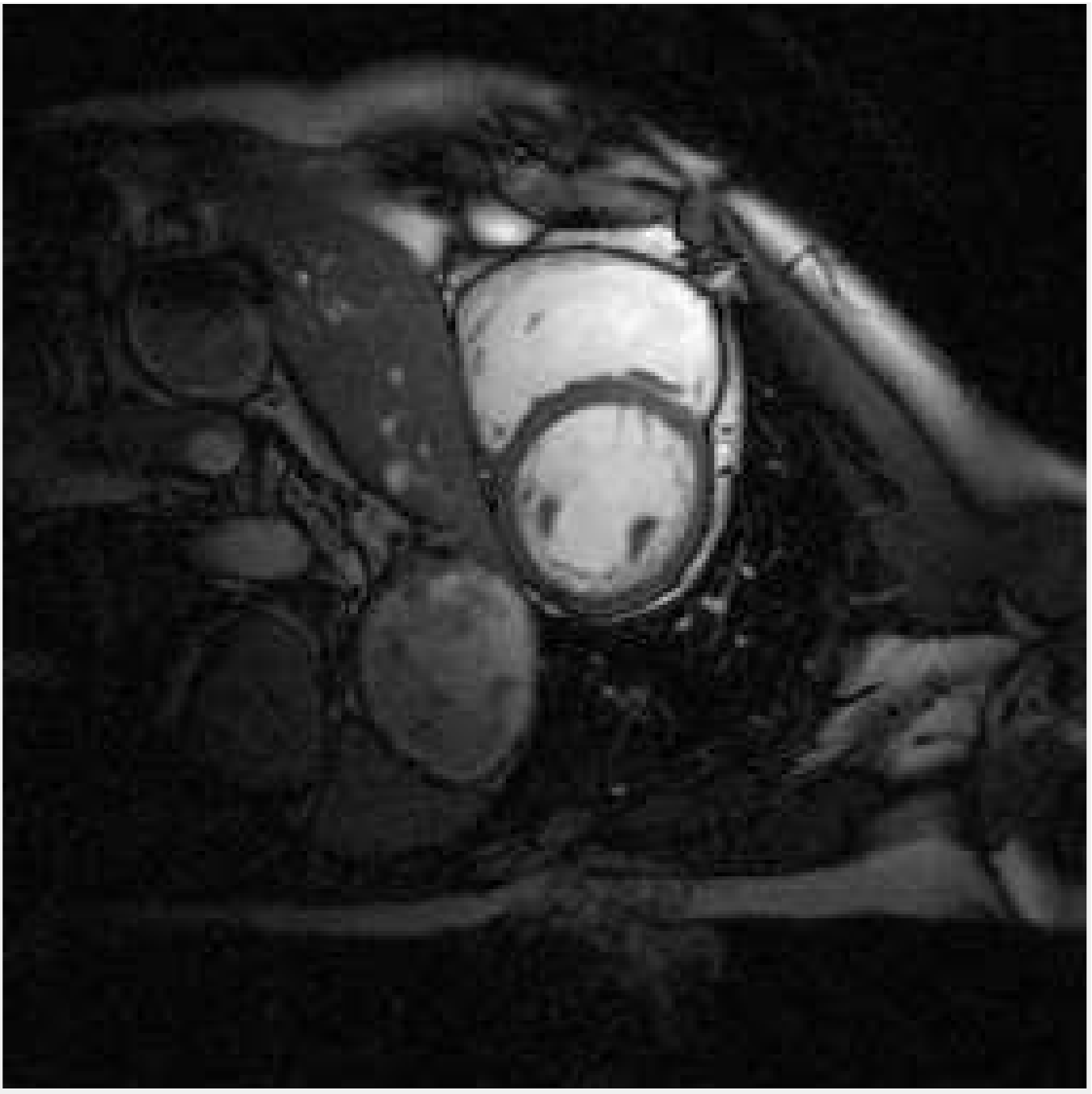}  &
\includegraphics[height=0.864in]{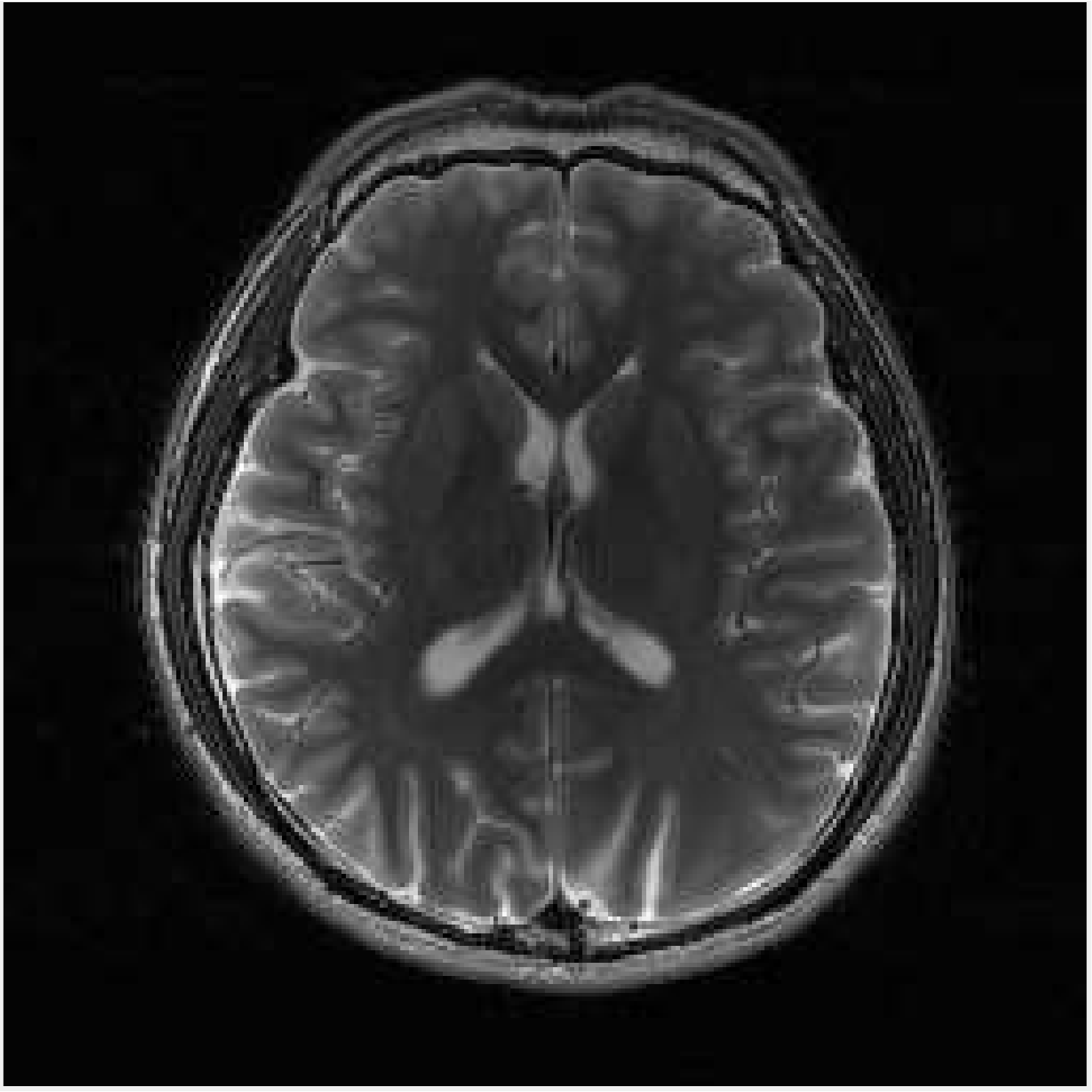} &
\includegraphics[height=0.864in]{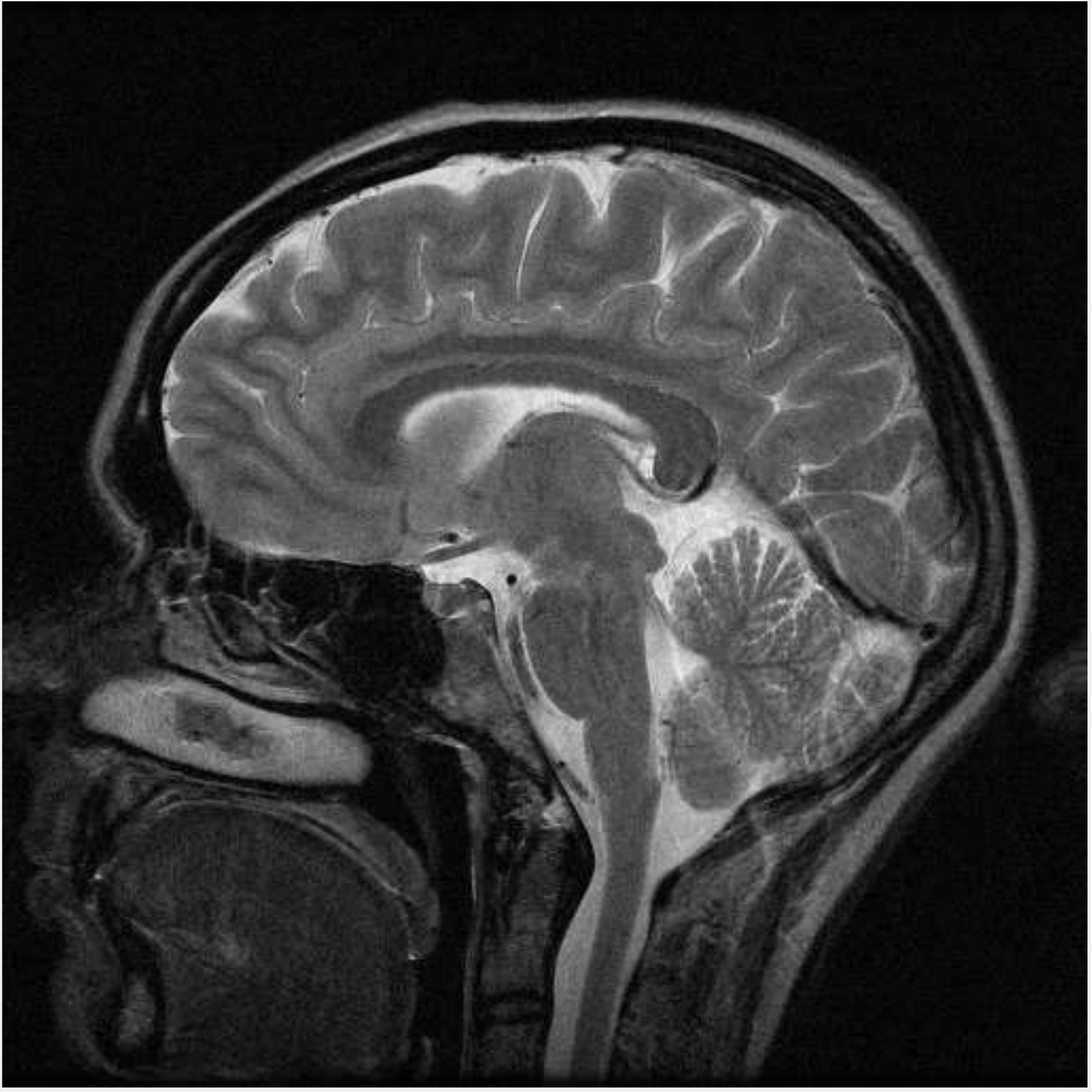} &
\includegraphics[height=0.864in]{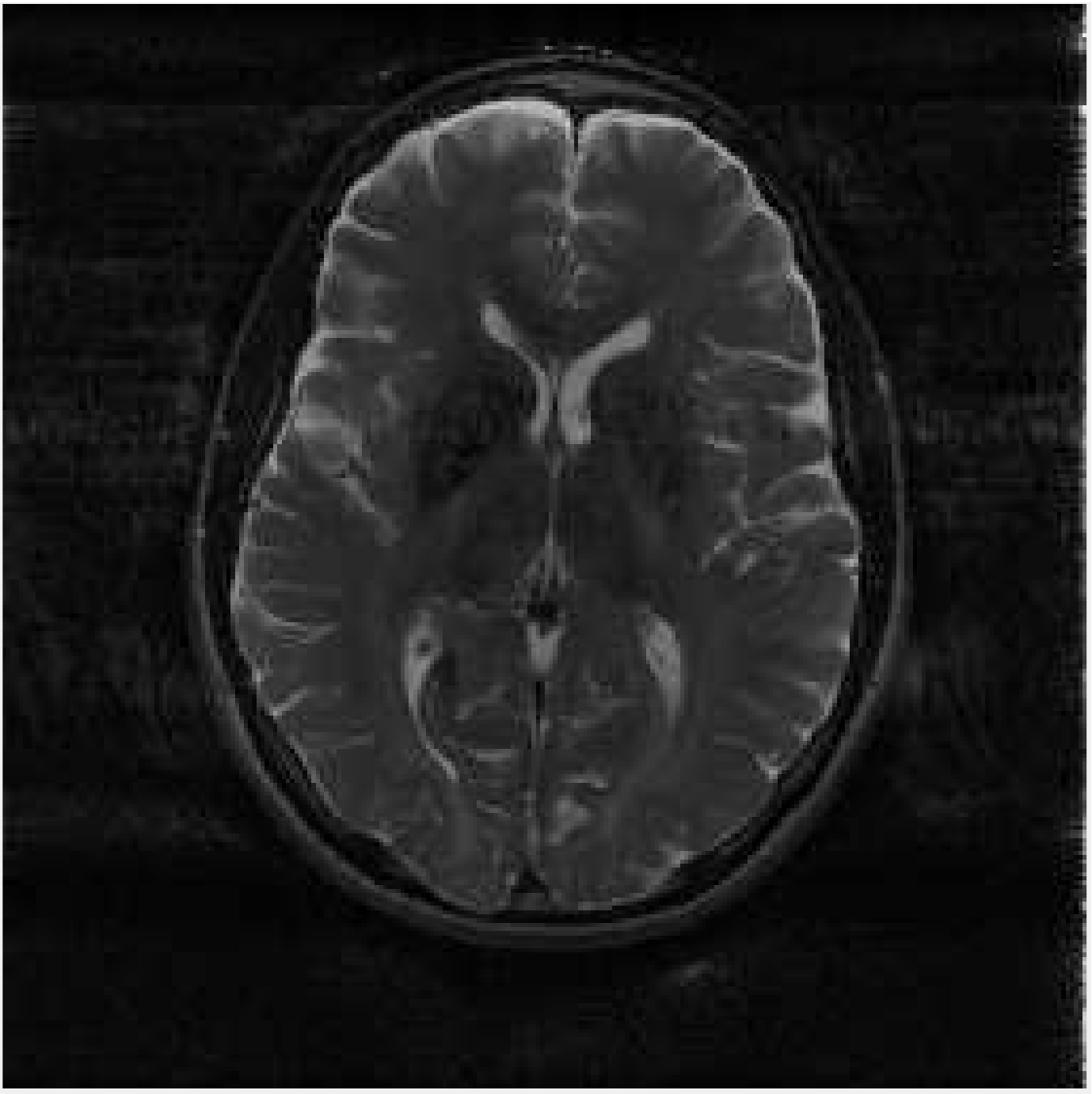} \\
(a) & (b) & (c) & (d) & (e)  & (f)  & (g) \\
\end{tabular}
\caption{Test data (magnitudes of the complex-valued MR data are displayed here). Image (a) is available at  \protect\url{http://web.stanford.edu/class/ee369c/data/brain.mat}. The images (b)-(e) are publicly available: (b) T2 weighted brain image \cite{PANOweb}, (c) water phantom \cite{Quweb1}, (d) cardiac image \cite{jcye1}, and (e) T2 weighted brain image \cite{Quweb2}. Image (f) is a reference sagittal brain slice provided by Prof. Michael Lustig, UC Berkeley. Image (g) is a complex-valued reference SENSE reconstruction of 32 channel fully-sampled Cartesian axial data from a standard spin-echo sequence. Images (a) and (f) are $512 \times 512$, while the rest are $256 \times 256$. The images (b) and (g) have been rotated clockwise by 90$^{\circ}$ here for display purposes. In the experiments, we use the actual orientations.}
\label{im1bcs}
\end{center}
%\vspace{-0.25in}
\end{figure*}

%\vspace{-0.1in}
\section{Numerical Experiments}
\label{sec5}

\subsection{Framework} \label{sec5a}

This section presents numerical results illustrating the convergence behavior as well as the usefulness of the proposed methods in applications such as sparse data representation and inverse problems. 
An empirical convergence study of the dictionary learning methods is included in the supplement.
We used a large $L=10^{8}$ in all experiments and the $\ell_{\infty}$ constraints were never active.

Section \ref{sec5c} illustrates the quality of sparse data representations obtained using the SOUP-DILLO method, where we consider data formed using vectorized 2D patches of natural images. 
We compare the sparse representation quality obtained with SOUP-DILLO to that obtained with OS-DL (for (P2)), and the recent proximal alternating dictionary learning (which we refer to as PADL) algorithm for (P1) \cite{bao1, bao2} (Algorithm 2 in \cite{bao2}). We used the publicly available implementation of the PADL method \cite{bao4}, and implemented OS-DL in a similar (memory efficient) manner as in Fig. \ref{im5p}.
We measure the quality of trained sparse representation of data $\mathbf{Y}$ using the \emph{normalized sparse representation error} (NSRE) $\left \| \mathbf{Y} - \mathbf{D}\mathbf{C}^{H} \right \|_{F}/$ $\left \| \mathbf{Y} \right \|_{F}$.

Results obtained using the SOUP-DILLO (learning) algorithm for image denoising are reported in \cite{sairajfes}. We have briefly discussed these results in Section \ref{app1} for completeness.
In the experiments of this work, we focus on general inverse problems involving non-trivial sensing matrices $\mathbf{A}$, where we use the iterative dictionary-blind image reconstruction algorithms discussed in Section \ref{sec7}.
In particular, we consider blind compressed sensing MRI \cite{bresai}, where $\mathbf{A} = \mathbf{F}_{\mathrm{u}} $, the undersampled Fourier encoding matrix.
Sections \ref{sec5d} and \ref{sec5e} examine the empirical convergence behavior and usefulness of the SOUP-DILLO and SOUP-DILLI image reconstruction algorithms for Problems (P3) and (P4), for blind compressed sensing MRI.
We refer to our algorithms for (P3) and (P4) for (dictionary-blind) MRI as SOUP-DILLO MRI and SOUP-DILLI MRI, respectively. Unlike recent synthesis dictionary learning-based works \cite{bresai, wangying} that involve computationally expensive algorithms with no convergence analysis, our algorithms for (P3) and (P4) are efficient and have proven convergence guarantees.

Figure \ref{im1bcs} shows the data (normalized to have unit peak pixel intensity) used in Sections \ref{sec5d} and \ref{sec5e}. 
In our experiments, we simulate undersampling of k-space with variable density 2D random sampling (feasible when data corresponding to multiple slices are jointly acquired, and the readout direction is perpendicular to image plane) \cite{bresai}, or using Cartesian sampling with variable density random phase encodes (1D random).
We compare the reconstructions from undersampled measurements provided by SOUP-DILLO MRI and SOUP-DILLI MRI to those provided by the benchmark DLMRI method \cite{bresai} that learns adaptive overcomplete dictionaries using K-SVD in a dictionary-blind image reconstruction framework.
We also compare to the non-adaptive Sparse MRI method \cite{lustig} that uses wavelets and total variation sparsity, the PANO method \cite{Qu2014843} that exploits the non-local similarities between image patches, and the very recent FDLCP method \cite{zhan33} that uses learned multi-class unitary dictionaries.
Similar to prior work \cite{bresai}, we employ the peak-signal-to-noise ratio (PSNR) to measure the quality of MR image reconstructions. The PSNR (expressed in decibels (dB)) is computed as the ratio of the peak intensity value of a reference image to the root mean square reconstruction error (computed between image magnitudes) relative to the reference.

All our algorithm implementations were coded in Matlab R2015a.
The computations in Section \ref{sec5c} were performed with an Intel Xeon CPU X3230 at 2.66 GHz and 8 GB memory, employing a 64-bit Windows 7 operating system.
The computations in Sections \ref{sec5d} and \ref{sec5e} were performed with an Intel Core i7 CPU at 2.6 GHz and 8 GB memory, employing a 64-bit Windows 7 operating system.
A link to software to reproduce results in this work will be provided at \url{http://web.eecs.umich.edu/~fessler/}.

%\vspace{-0.1in}
\subsection{Adaptive Sparse Representation of Data} \label{sec5c}

Here, we extracted $3 \times 10^{4}$ patches of size $8 \times 8$ from randomly chosen locations in the $512 \times 512$ standard images Barbara, Boats, and Hill. For this data, we learned dictionaries of size $64 \times 256$ for various choices of the parameter $\lambda$ in (P1) (i.e., corresponding to a variety of solution sparsity levels). 
The initial estimate for $\mathbf{C}$ in SOUP-DILLO is an all-zero matrix, and the initial estimate for $\mathbf{D}$ is the overcomplete DCT \cite{elad2, el2}.
We measure the quality (performance) of adaptive data approximations $\mathbf{D}
\mathbf{C}^{H}$ using the NSRE metric. We also learned dictionaries using the recent methods for sparsity penalized dictionary learning in \cite{bao1, sadeg33}. All learning methods were initialized the same way.
We are interested in the NSRE versus sparsity trade-offs achieved by different learning methods for the $3 \times 10^{4}$ image patches (rather than for separate test data)\footnote{This study is useful because in the dictionary-blind image reconstruction framework of this work, the dictionaries are adapted without utilizing separate training data. Methods that provide sparser adaptive representations of the underlying data also typically provide better image reconstructions in that setting \cite{sravTCI1}.}.

First, we compare the NSRE values achieved by SOUP-DILLO to those obtained using the recent PADL (for (P1)) approach \cite{bao1, bao2}. Both the SOUP-DILLO and PADL methods were simulated for 30 iterations for an identical set of $\lambda$ values in (P1). We did not observe any marked improvements in performance with more iterations of learning. Since the PADL code \cite{bao4} outputs only the learned dictionaries, we performed 60 iterations of block coordinate descent (over the $\mathbf{c}_{j}$'s in (P1)) to obtain the sparse coefficients with the learned dictionaries.
Figs. \ref{im3}(a) and \ref{im3}(b) show the NSREs and sparsity factors obtained in SOUP-DILLO, and with learned PADL dictionaries for the image patch data. The proposed SOUP-DILLO achieves both lower NSRE (improvements up to 0.8 dB over the PADL dictionaries) and lower net sparsity factors. Moreover, it also has much lower learning times (Fig. \ref{im3}(c)) than PADL.

\begin{figure}[!t]
\begin{center}
\begin{tabular}{cc}
\includegraphics[height=1.19in]{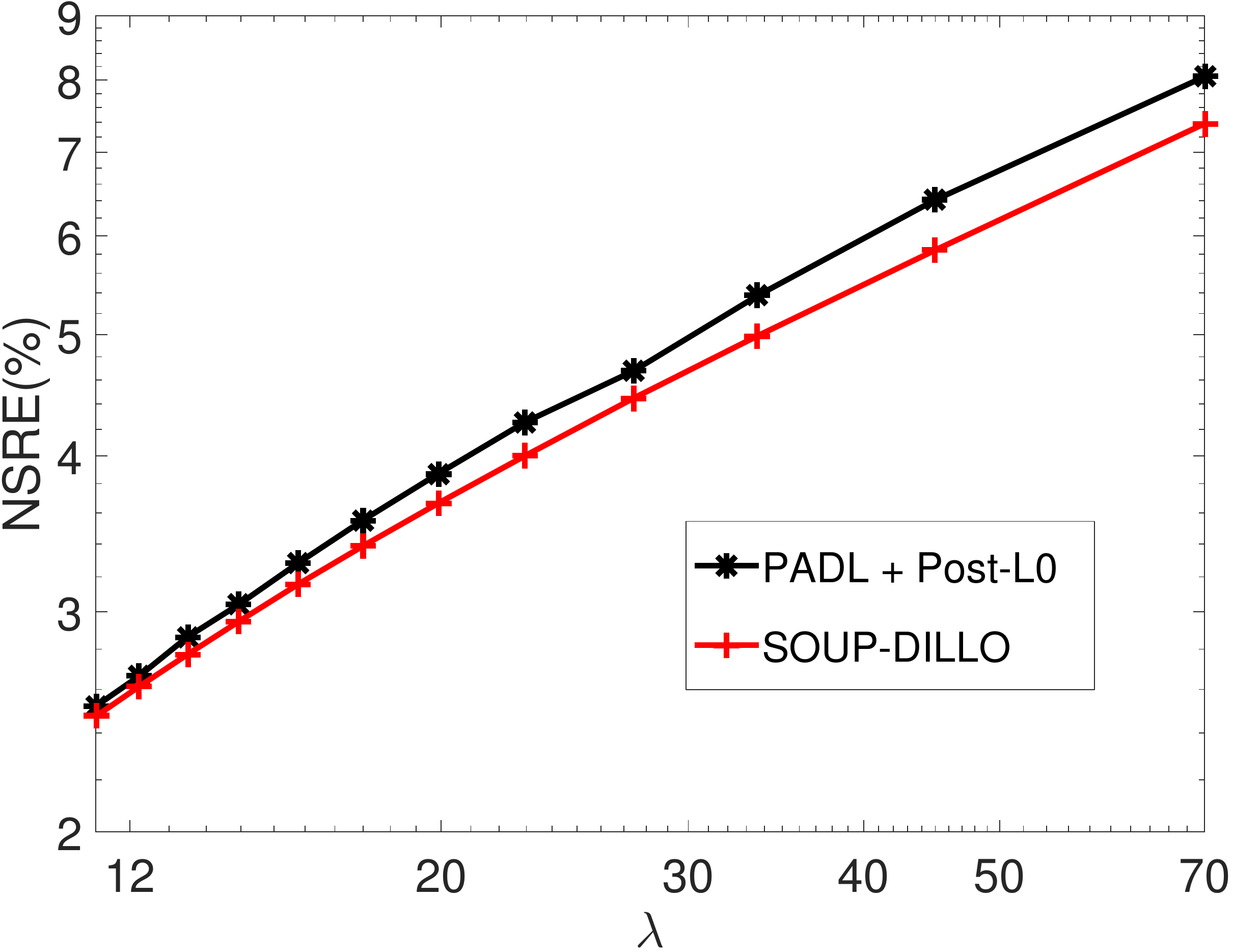}&
\includegraphics[height=1.19in]{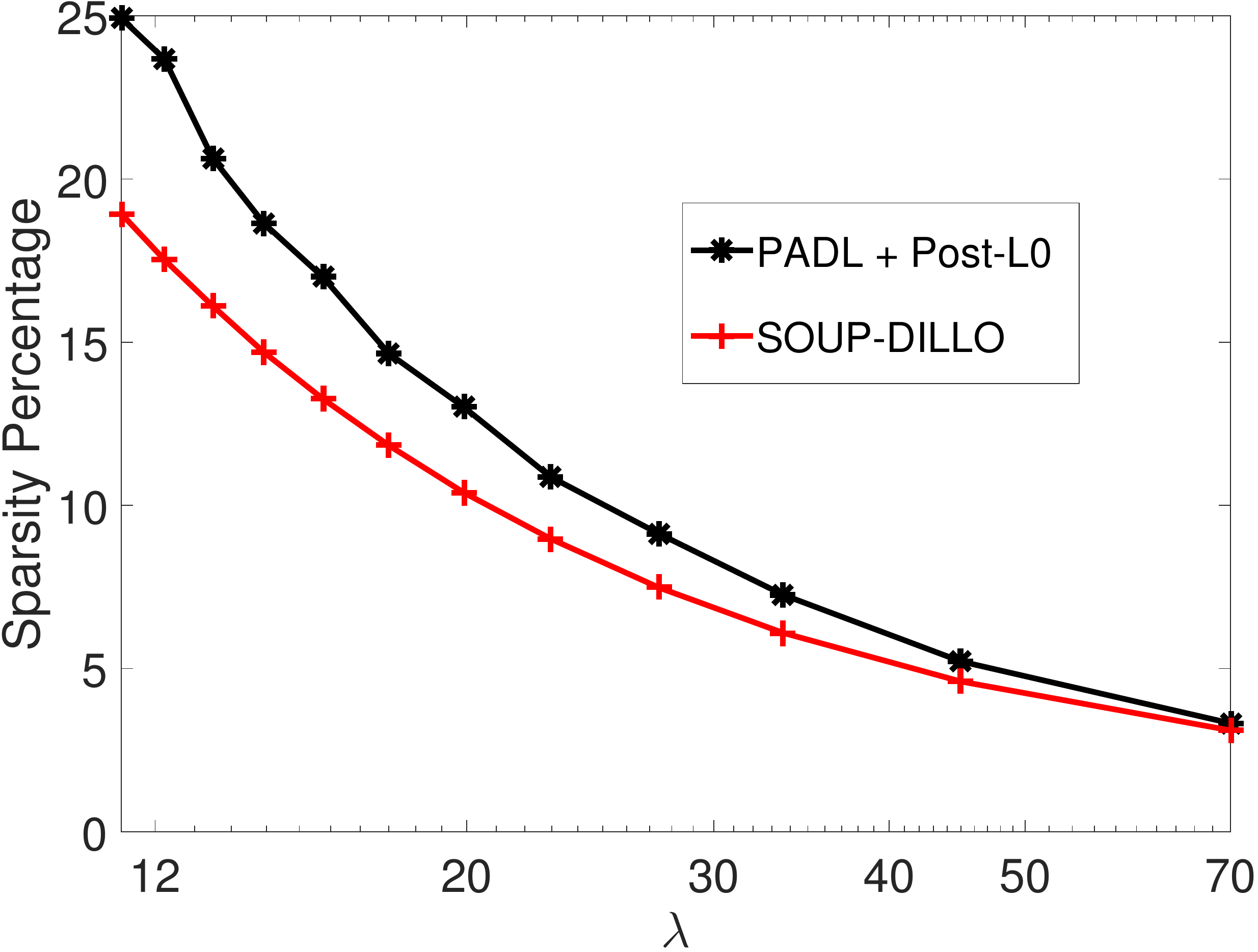}\\
(a) & (b) \\
\includegraphics[height=1.16in]{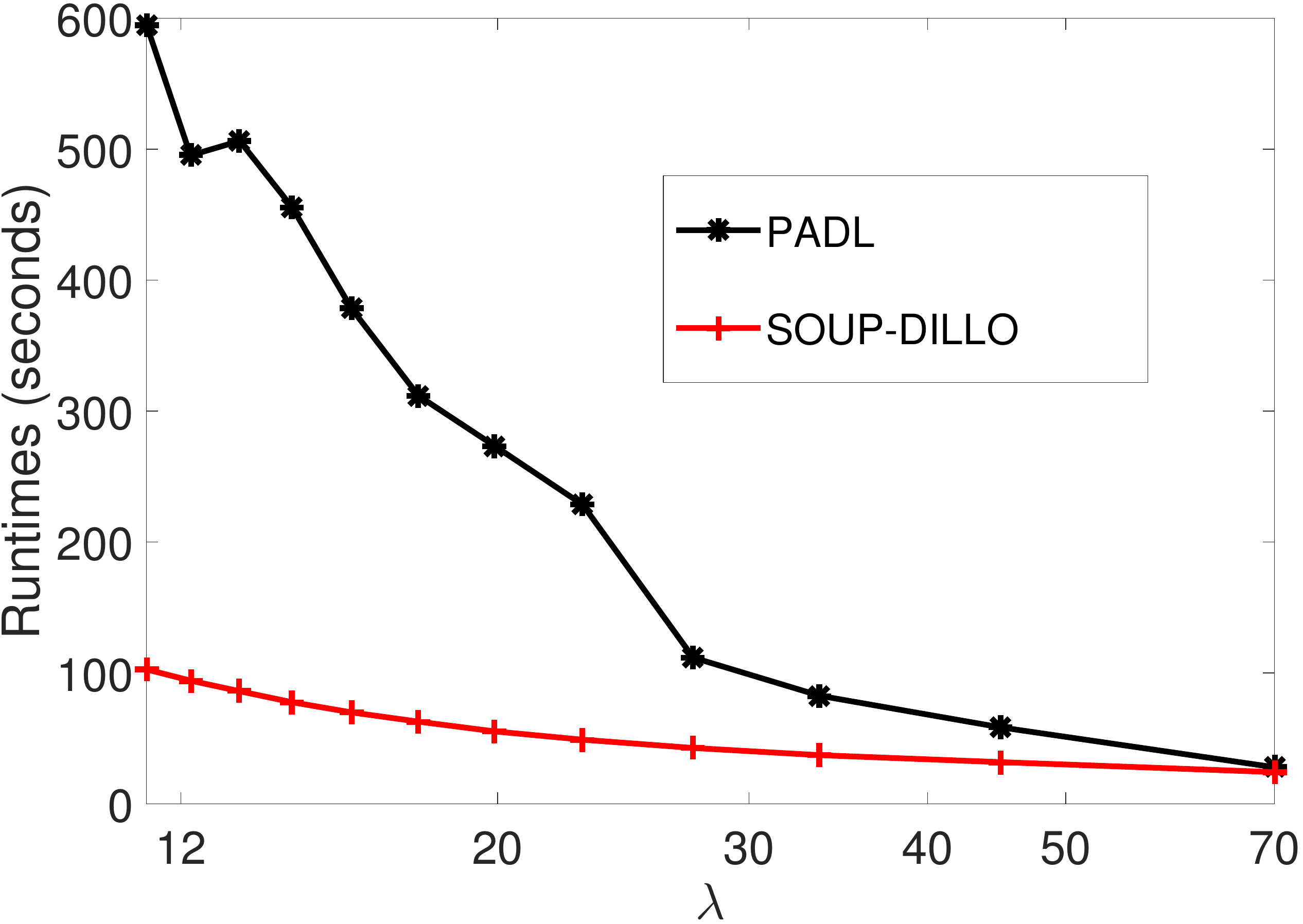}&
\includegraphics[height=1.17in]{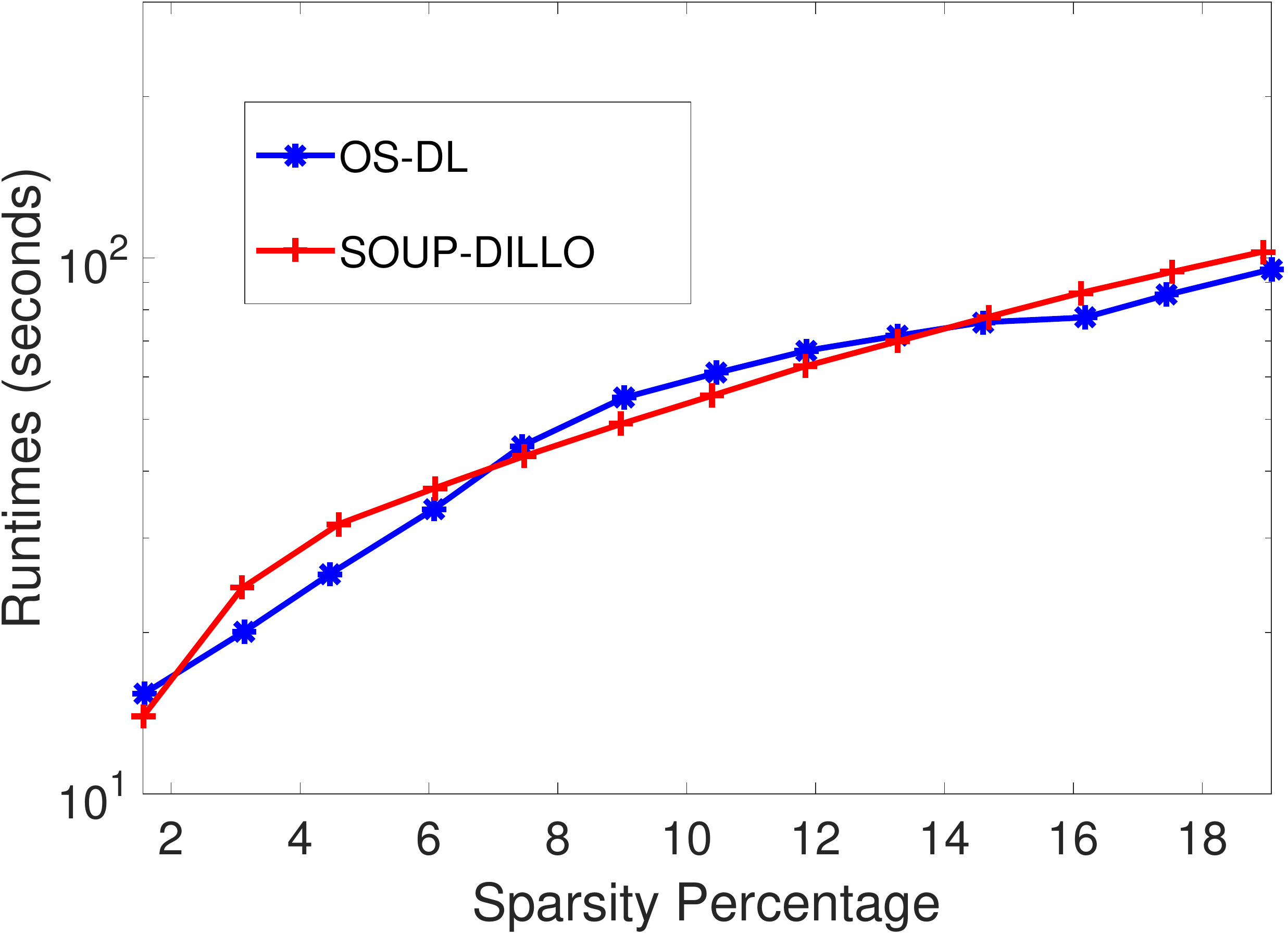}\\
(c) & (d) \\
\includegraphics[height=1.2in]{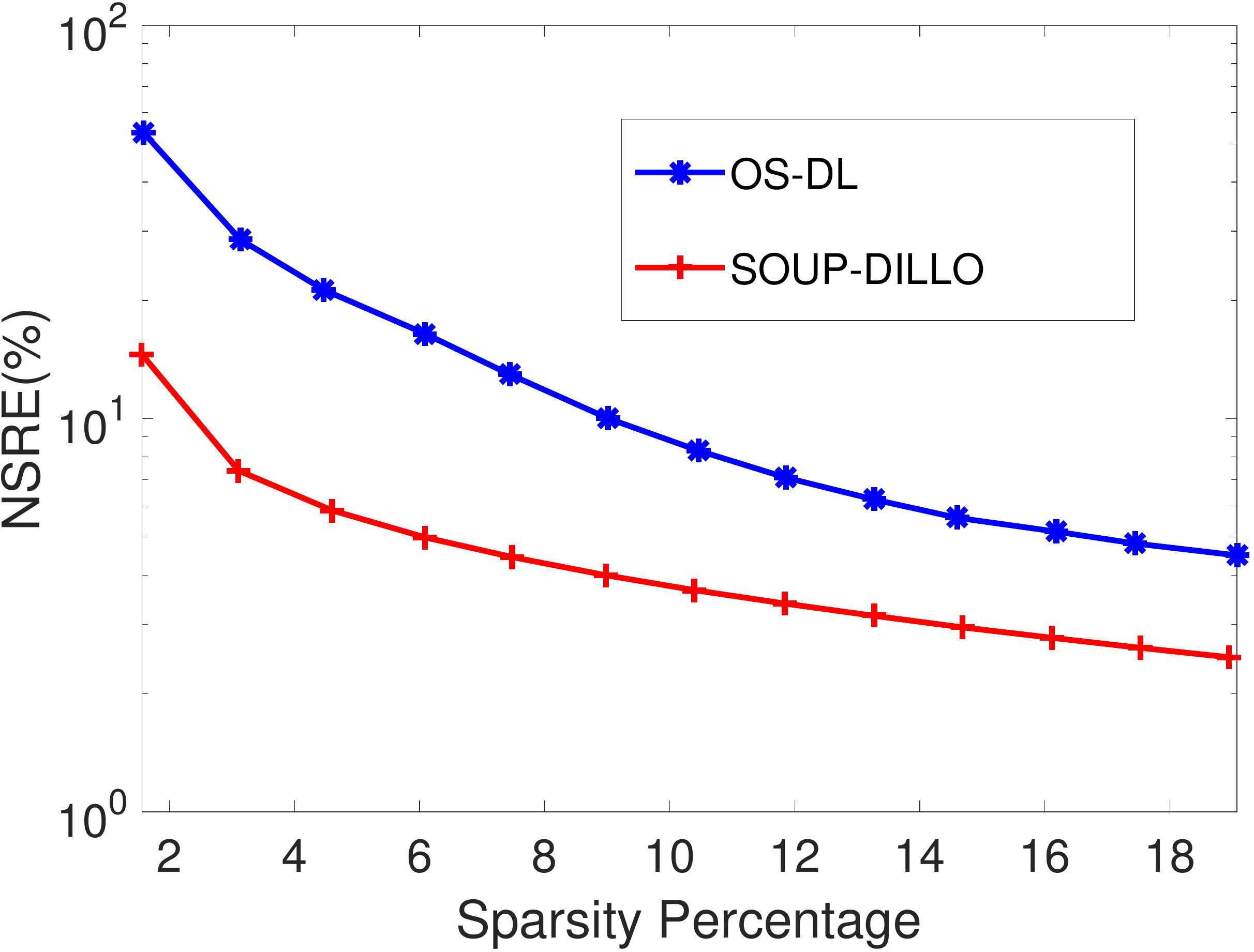} &
\includegraphics[height=1.2in]{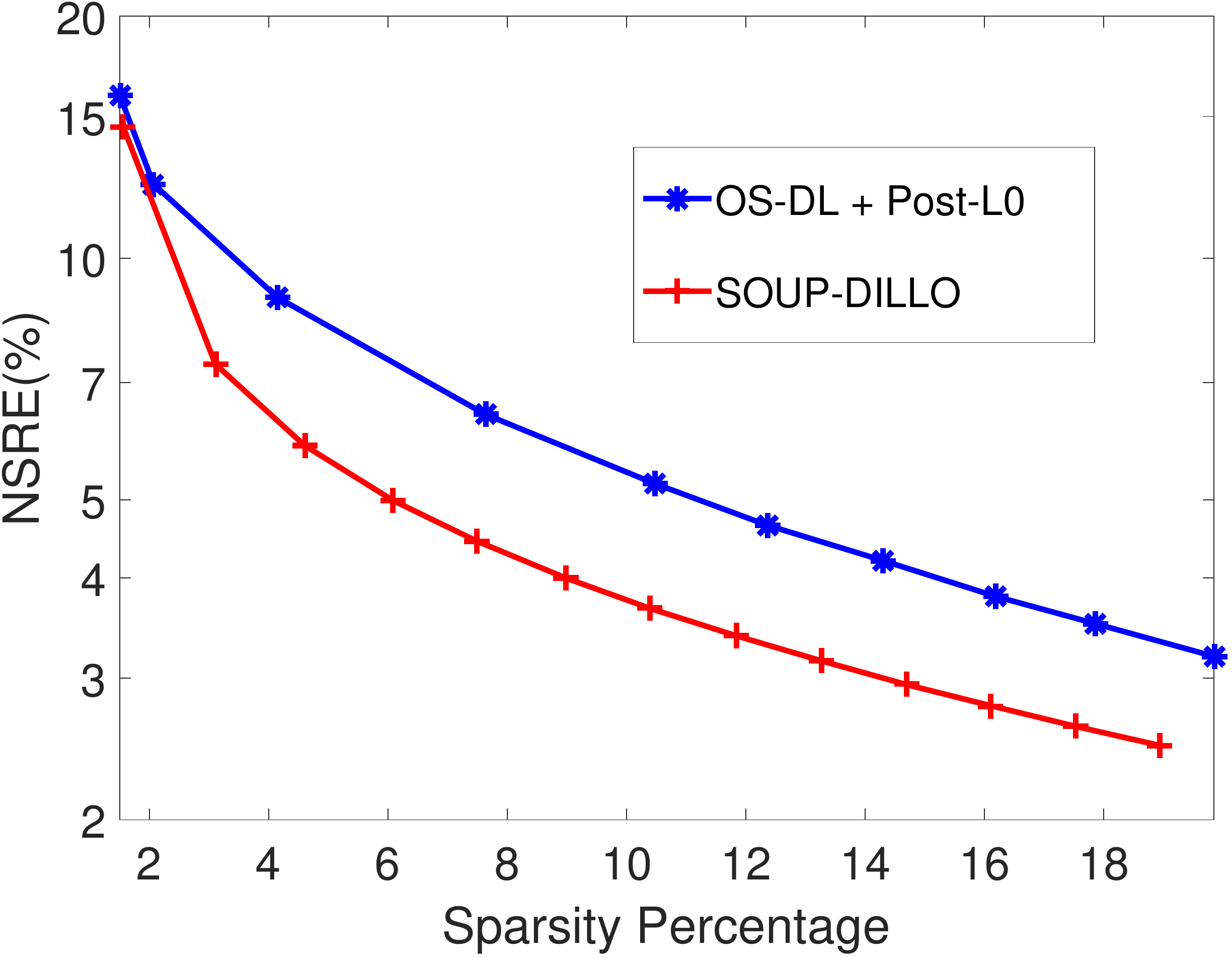}\\
(e)  & (f) \\
\end{tabular}
\caption{Comparison of dictionary learning approaches for adaptive sparse representation (NSRE and sparsity factors are expressed as percentages): (a) NSRE values for SOUP-DILLO at various $\lambda$ along with those obtained by performing $\ell_{0}$ block coordinate descent sparse coding (as in (P1)) using learned PADL \cite{bao1, bao2} dictionaries (denoted `Post-L0' in the plot legend);
(b) (net) sparsity factors for SOUP-DILLO at various $\lambda$ along with those obtained by performing $\ell_{0}$ block coordinate descent sparse coding using learned PADL \cite{bao1, bao2} dictionaries; (c) learning times for SOUP-DILLO and PADL; (d) learning times for SOUP-DILLO and OS-DL for various achieved (net) sparsity factors (in learning); (e) NSRE vs. (net) sparsity factors achieved within SOUP-DILLO and OS-DL; and (f)  NSRE vs. (net) sparsity factors achieved within SOUP-DILLO along with those obtained  by performing $\ell_{0}$ (block coordinate descent) sparse coding using learned OS-DL dictionaries.
}
\label{im3}
\end{center}
%\vspace{-0.1in}
\end{figure}

Next, we compare the SOUP-DILLO and OS-DL methods for sparsely representing the same data. 
For completeness, we first show the NSRE versus sparsity trade-offs achieved during learning.
Here, we measured the sparsity factors (of $\mathbf{C}$) achieved within the schemes for various $\lambda$ and $\mu$ values in (P1) and (P2), and then compared the NSRE values achieved within SOUP-DILLO and OS-DL at similar (achieved) sparsity factor settings. OS-DL ran for 30 iterations, which was sufficient for good performance. 
Fig. \ref{im3}(e) shows the NSRE versus sparsity trade-offs achieved within the algorithms.
SOUP-DILLO clearly achieves significantly lower 
NSRE values at similar net sparsities than OS-DL. 
Since these methods are for the $\ell_{0}$ and $\ell_{1}$ learning problems respectively, we also took the learned sparse coefficients in OS-DL in Fig. \ref{im3}(e) and performed \emph{debiasing} \cite{matfiguer} by re-estimating the non-zero coefficient values (with supports fixed to the estimates in OS-DL) for each signal in a least squares sense to minimize the data fitting error. In this case, SOUP-DILLO in Fig. 4(e) provides an average NSRE improvement across various sparsities of 2.1 dB over OS-DL dictionaries.
Since both SOUP-DILLO and OS-DL involve similar types of operations, their runtimes (Fig. \ref{im3}(d)) for learning were quite similar.
Next, when the dictionaries learned by OS-DL for various sparsities in Fig. \ref{im3}(e) were used to estimate the sparse coefficients $\mathbf{C}$ in (P1) (using 60 iterations of $\ell_{0}$ block coordinate descent over the $\mathbf{c}_j$'s and choosing the corresponding $\lambda$ values in Fig. \ref{im3}(e)); the resulting representations $\mathbf{D} \mathbf{C}^{H}$ had on average
worse NSREs and usually more nonzero coefficients than SOUP-DILLO. Fig. \ref{im3}(f) plots the trade-offs.  For example, SOUP-DILLO provides 3.15 dB better NSRE than the learned OS-DL dictionary (used with $\ell_{0}$ sparse coding) at 7.5\% sparsity.
These results further
illustrate the benefits of the learned models in SOUP-DILLO.

\begin{figure*}[!t]
\begin{center}
\begin{tabular}{cccc}
\includegraphics[height=1.42in]{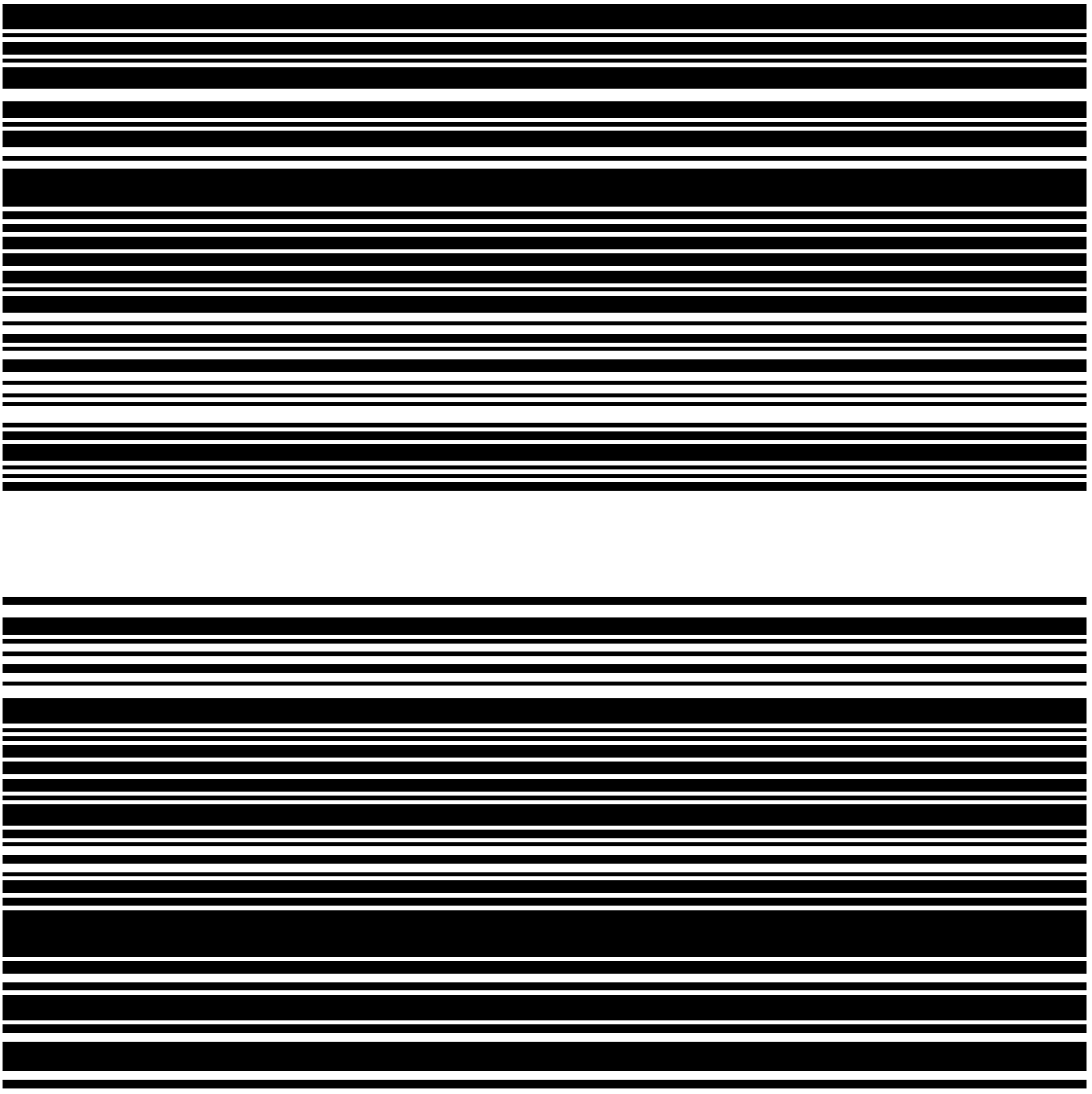}&
\includegraphics[height=1.42in]{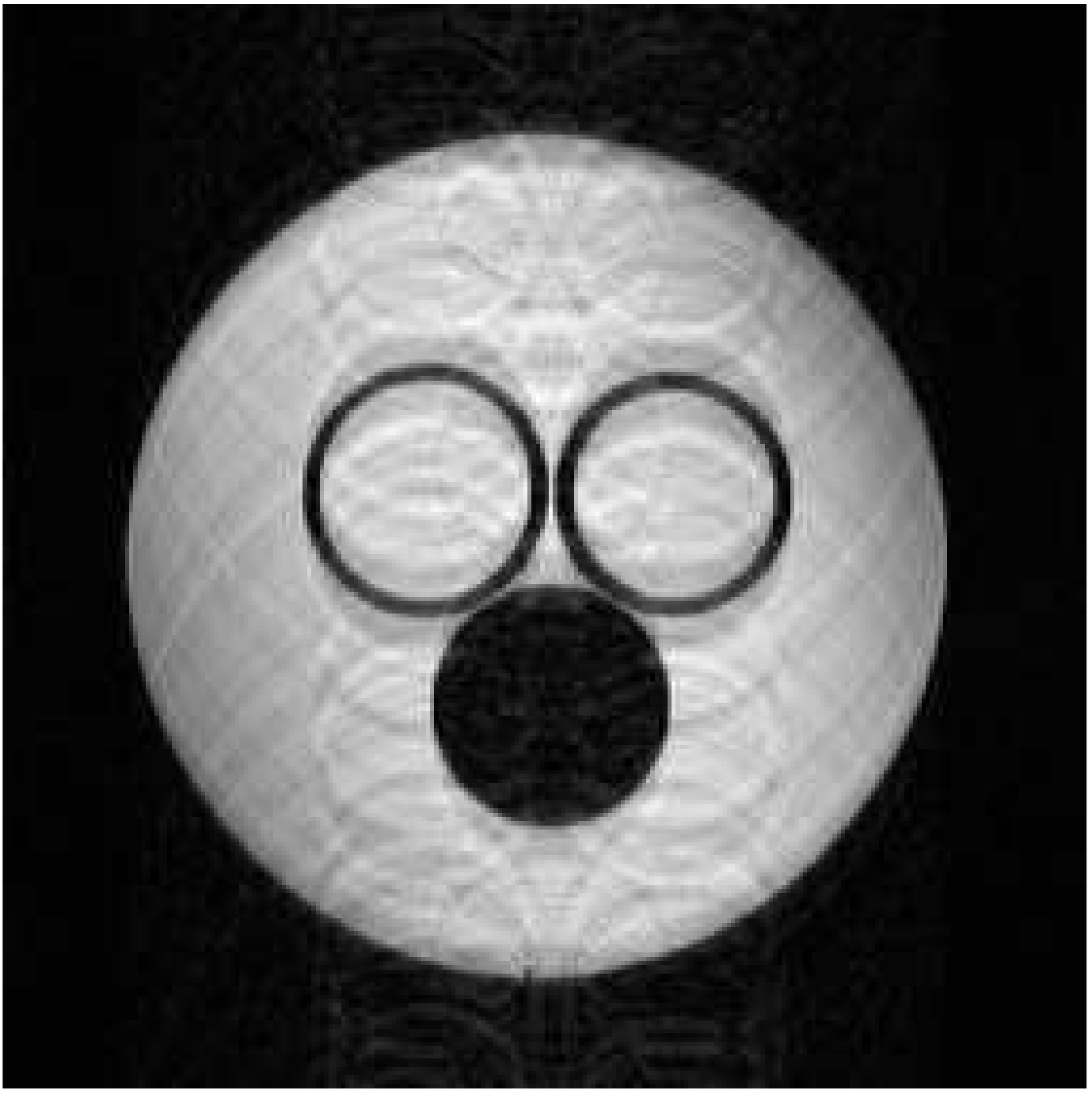}&
\includegraphics[height=1.42in]{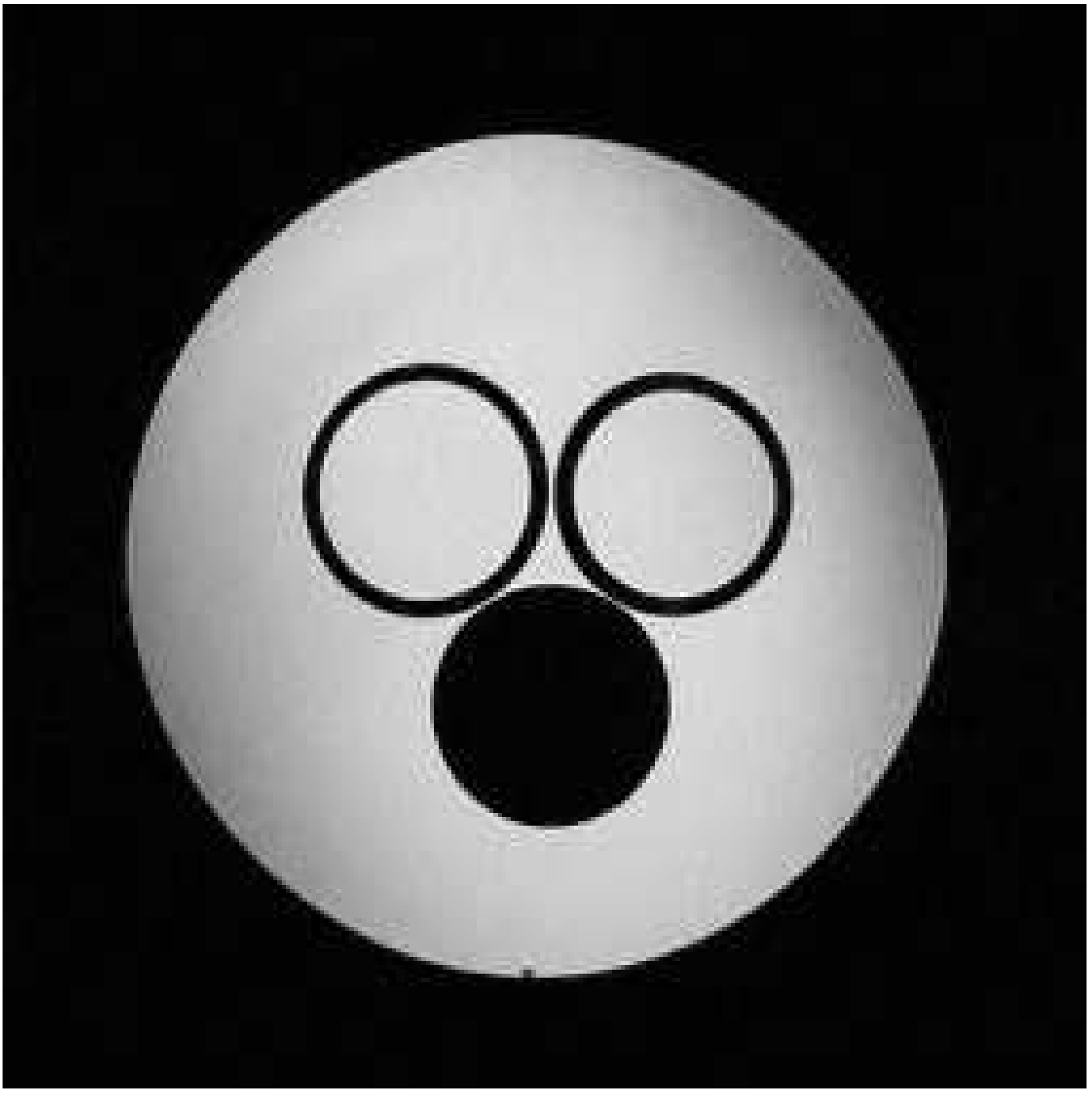}&
\includegraphics[height=1.42in]{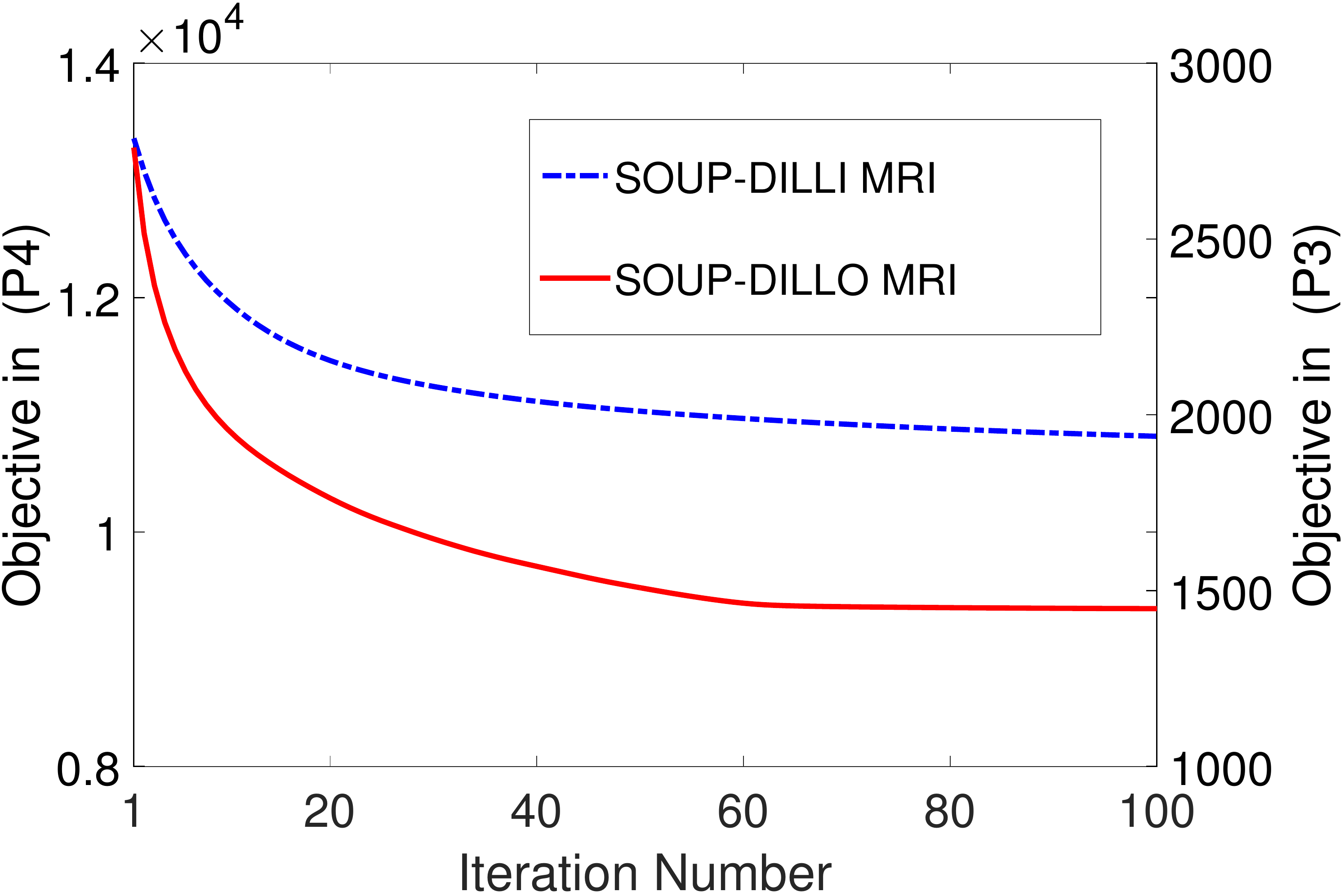}\\
(a) & (b) & (c) & (d) \\
\end{tabular}
\begin{tabular}{ccc}
\includegraphics[height=1.52in]{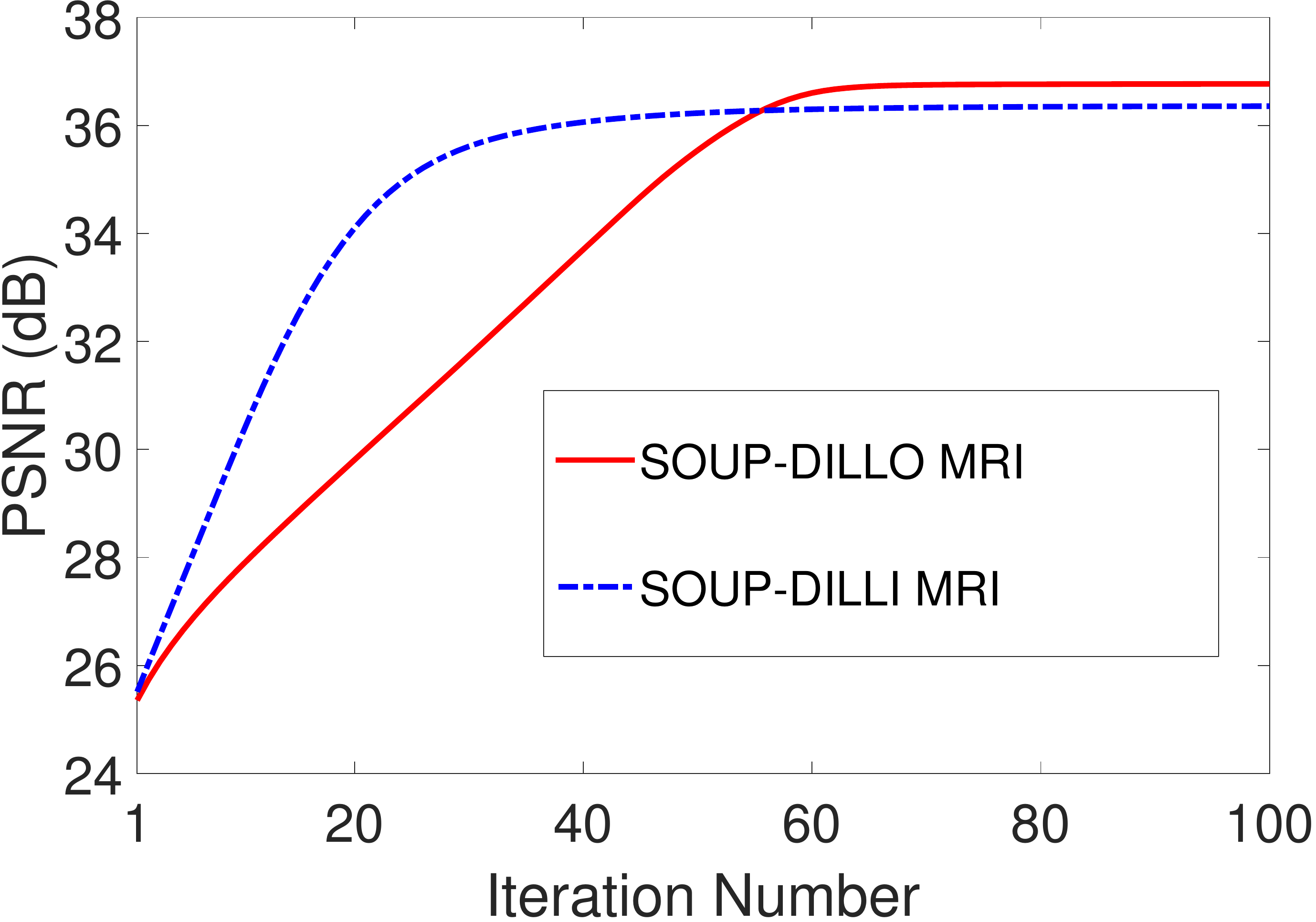}&
\hspace{0.0in}\includegraphics[height=1.482in]{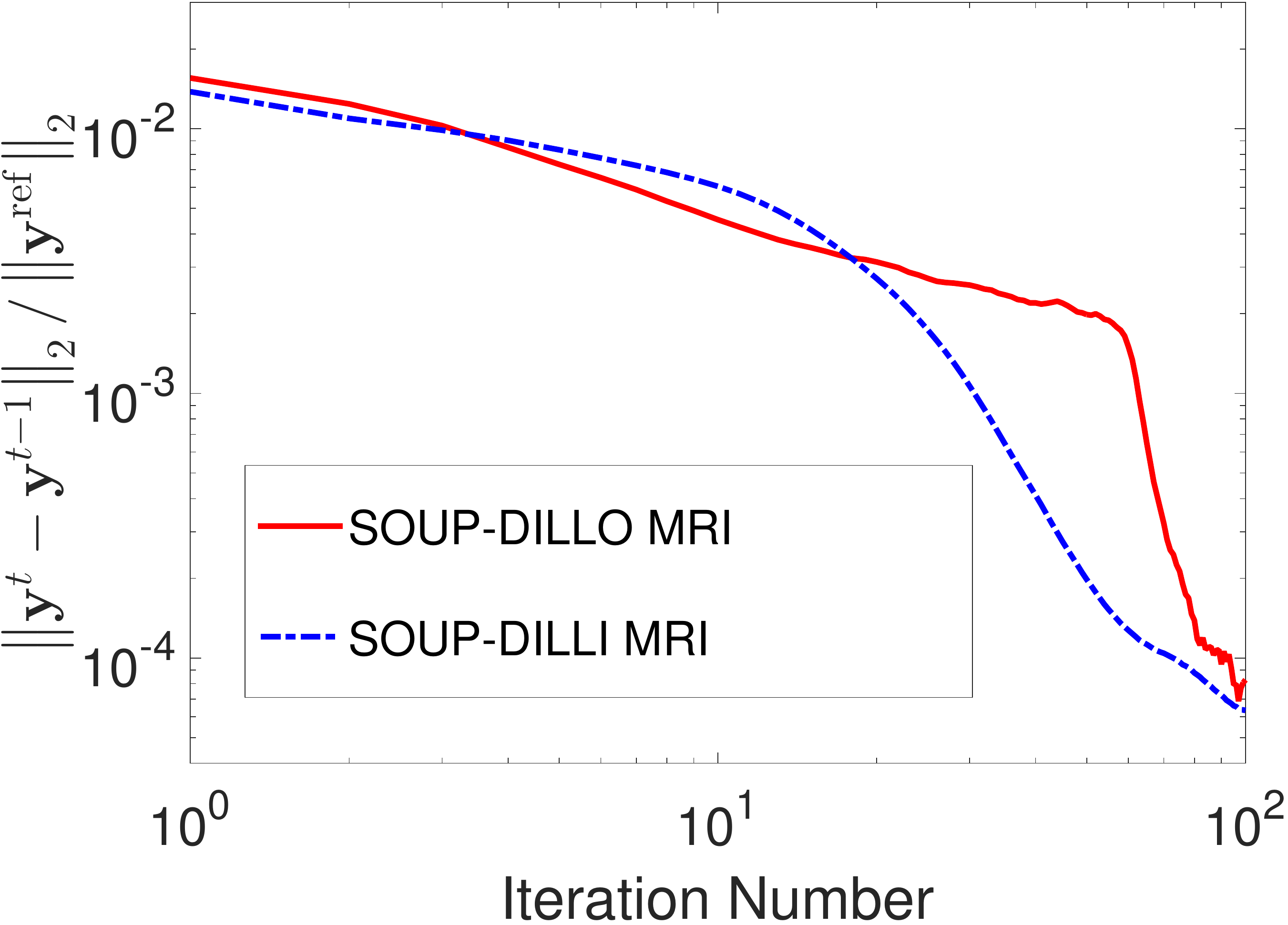}&
\hspace{0.0in}\includegraphics[height=1.568in]{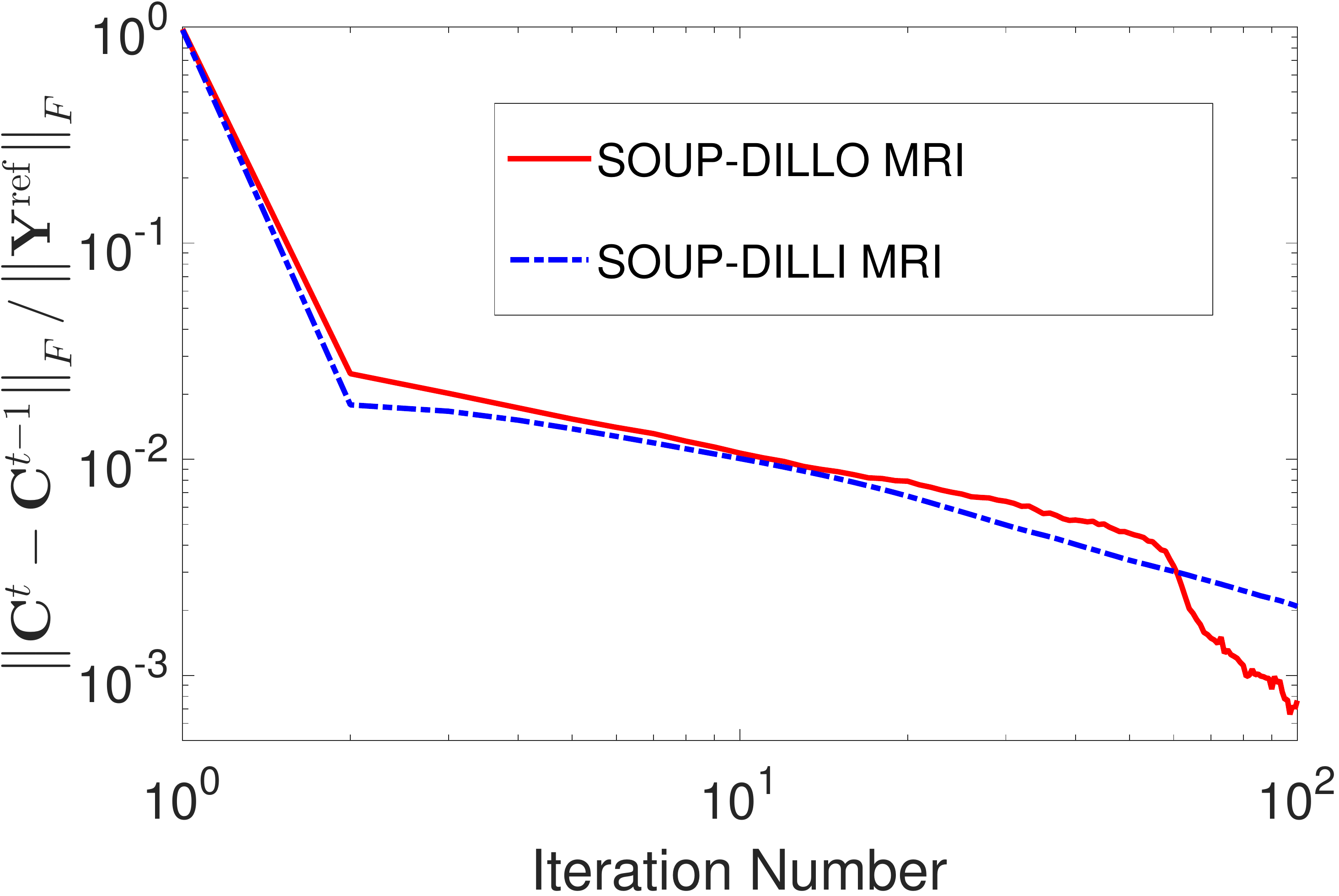}\\
(e) & \hspace{0.0in} (f) & \hspace{0.0in} (g)\\
\end{tabular}
\begin{tabular}{cccc}
\hspace{-0.0in}\includegraphics[height=1.462in]{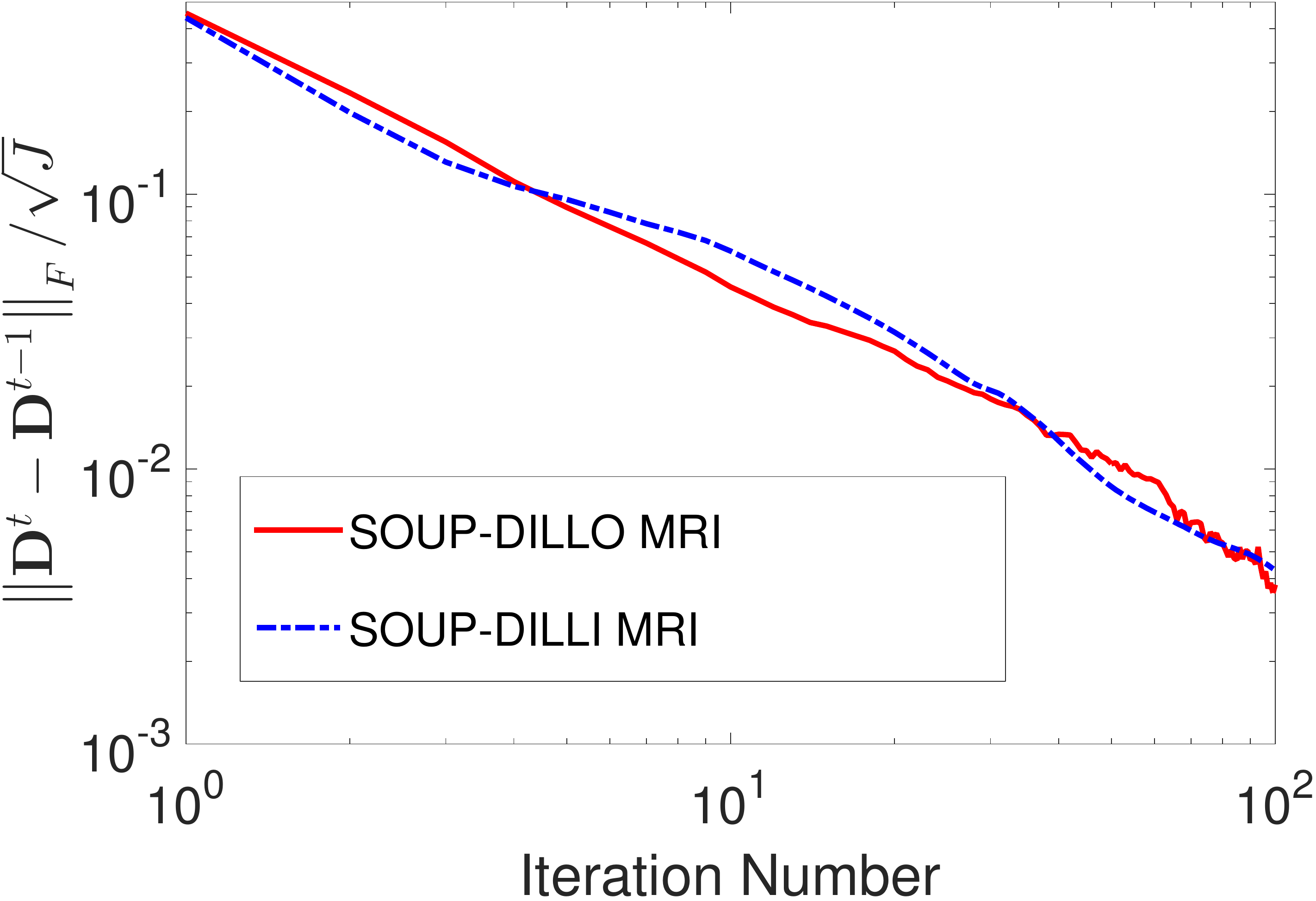}&
\includegraphics[height=1.45in]{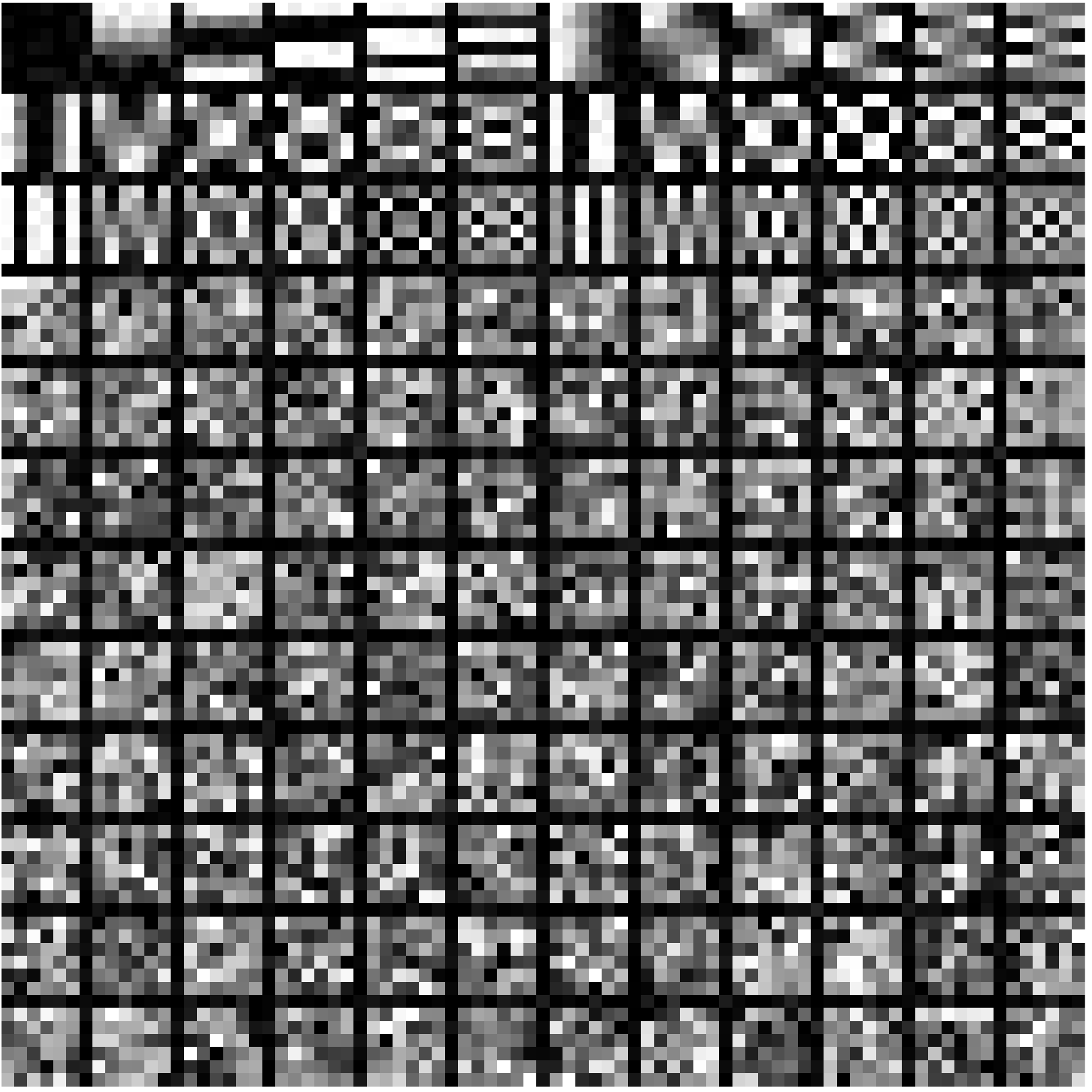}&
\includegraphics[height=1.45in]{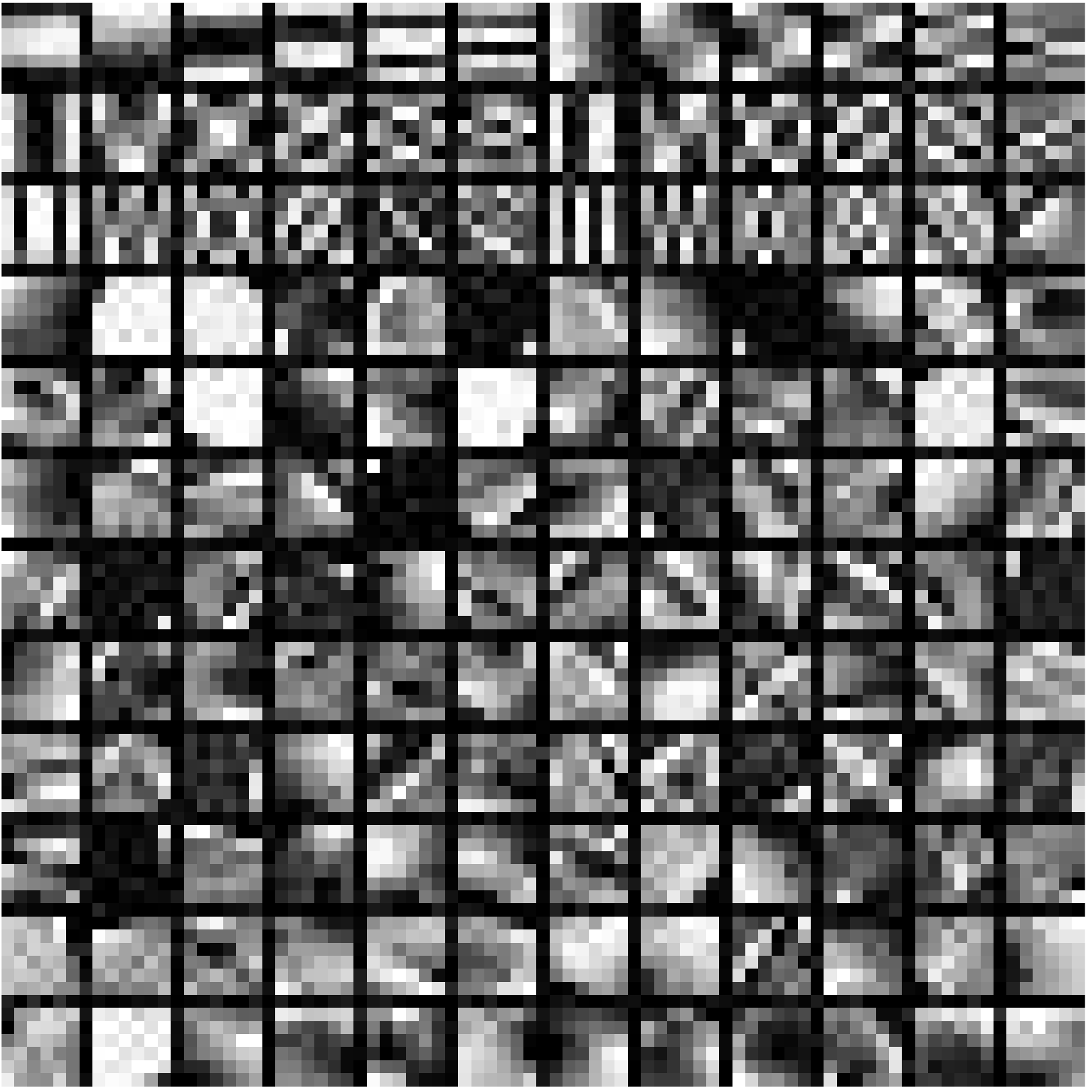}&
\includegraphics[height=1.45in]{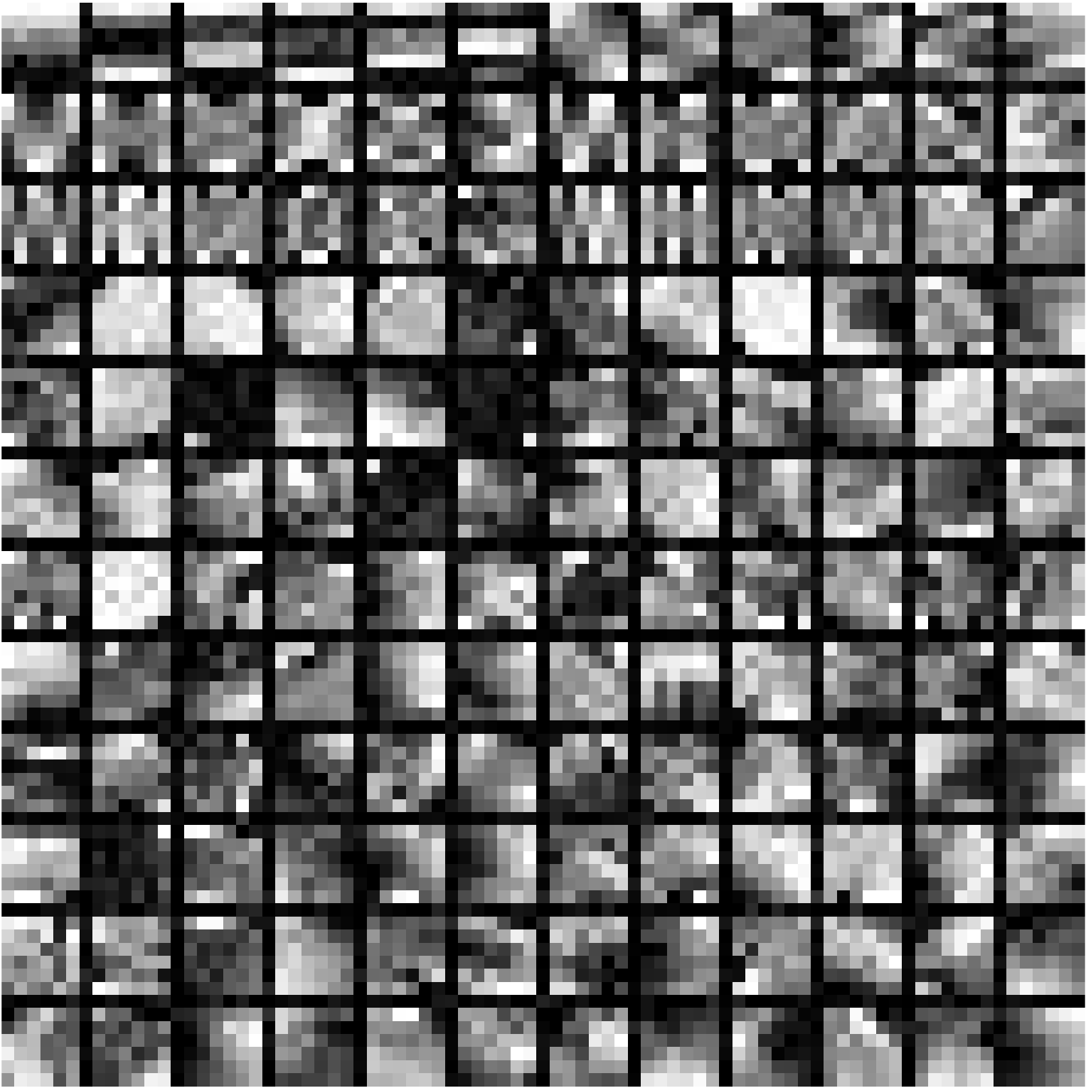}\\
(h) & (i) & (j) & (k)\\
\end{tabular}
\caption{Behavior of SOUP-DILLO MRI (for (P3)) and SOUP-DILLI MRI (for (P4)) for Image (c) with Cartesian sampling and 2.5x undersampling: (a) sampling mask in k-space; (b) magnitude of initial reconstruction $\mathbf{y}^{0}$ (PSNR = $24.9$ dB); (c) SOUP DILLO MRI (final) reconstruction magnitude (PSNR = $36.8$ dB); (d) objective function values for SOUP-DILLO MRI and SOUP-DILLI MRI; (e) reconstruction PSNR over iterations; (f) changes between successive image iterates ($\left \| \mathbf{y}^{t}-\mathbf{y}^{t-1} \right \|_{2}$) normalized by the norm of the reference image ($\left \| \mathbf{y}^{\mathrm{ref}} \right \|_{2} = 122.2$); (g) normalized changes between successive coefficient iterates ($\left \| \mathbf{C}^{t}-\mathbf{C}^{t-1} \right \|_{F}/\left \| \mathbf{Y}^{\mathrm{ref}} \right \|_{F}$) where $\mathbf{Y}^{\mathrm{ref}}$ is the patch matrix for the reference image; (h) normalized changes between successive dictionary iterates ($\left \| \mathbf{D}^{t}-\mathbf{D}^{t-1} \right \|_{F}/\sqrt{J}$); (i) initial real-valued dictionary in the algorithms; and (j) real and (k) imaginary parts of the learnt dictionary for SOUP-DILLO MRI. The dictionary columns or atoms are shown as $6 \times 6$ patches.}
\label{imcvbcs}
\end{center}
%\vspace{-0.1in}
\end{figure*}

Finally, results included in the supplementary material show that when the learned dictionaries are used to sparse code (using orthogonal matching pursuit \cite{pati}) the data in a column-by-column (or signal-by-signal) manner, SOUP-DILLO dictionaries again outperform PADL dictionaries in terms of achieved NSRE. Moreover, at low sparsities, SOUP-DILLO dictionaries used with such column-wise sparse coding also outperformed (by 14-15 dB) dictionaries learned using K-SVD \cite{elad} (that is adapted for Problem (P0) with column-wise sparsity constraints).
Importantly, at similar net sparsity factors, the NSRE values achieved by SOUP-DILLO in Fig. \ref{im3}(e)  tend to be quite a bit lower (better) than those obtained using the K-SVD method for (P0). Thus, solving Problem (P1) may offer potential benefits for adaptively representing data sets (e.g., patches of an image) using very few total non-zero coefficients.
Further exploration of the proposed methods and comparisons for different dictionary sizes or larger datasets (of images or image patches) is left for future work.

\subsection{Convergence of SOUP-DIL Image Reconstruction Algorithms in Dictionary-Blind Compressed Sensing MRI} \label{sec5d}

Here, we consider the complex-valued reference image in Fig. \ref{im1bcs}(c) (Image (c)), and perform 2.5 fold undersampling of the k-space of the reference. Fig. \ref{imcvbcs}(a) shows the variable density sampling mask.
We study the behavior of the SOUP-DILLO MRI and SOUP-DILLI MRI algorithms for (P3) and (P4) respectively, when used to reconstruct the water phantom data from undersampled measurements. For SOUP-DILLO MRI, overlapping image patches of size $6 \times 6$ ($n=36$) were used with stride $r=1$ (with patch wrap around), $\nu = 10^{6}/p$ (with $p$ the number of image pixels), and we learned a fourfold overcomplete (or $36 \times 144$) dictionary with $K=1$ and $\lambda = 0.08$ in Fig. \ref{im6p}. The same settings were used for the SOUP-DILLI MRI method for (P4) with $\mu = 0.08$.
We initialized the algorithms with $\mathbf{y}^{0} = \mathbf{A}^{\dagger} \mathbf{z}$, $\mathbf{C}^{0}=\mathbf{0}$, and the initial $\mathbf{D}^{0}$ was formed by concatenating a square DCT dictionary with normalized random gaussian vectors.

Fig. \ref{imcvbcs} shows the behavior of the proposed dictionary-blind image reconstruction methods. The objective function values (Fig. \ref{imcvbcs}(d)) in (P3) and (P4) decreased monotonically and quickly in the SOUP-DILLO MRI and SOUP-DILLI MRI algorithms, respectively.
The initial reconstruction (Fig. \ref{imcvbcs}(b)) shows large aliasing artifacts and has a low PSNR of $24.9$ dB. The reconstruction PSNR (Fig. \ref{imcvbcs}(e)), however, improves significantly over the iterations in the proposed methods and converges, with the final SOUP-DILLO MRI reconstruction (Fig. \ref{imcvbcs}(c)) having a PSNR of $36.8$ dB. 
For the $\ell_{1}$ method, the PSNR converges to $36.4$ dB, which is lower than for the $\ell_{0}$ case.
The sparsity factor for the learned coefficient matrix $C$ was $5\%$ for (P3) and $16\%$ for (P4). Although larger values of $\mu$ decrease the sparsity factor for the learned $C$ in (P4), we found that the PSNR also degrades for such settings in this example.

The changes between successive iterates $\left \| \mathbf{y}^{t}-\mathbf{y}^{t-1} \right \|_{2}$ (Fig. \ref{imcvbcs}(f)) or $\left \| \mathbf{C}^{t}-\mathbf{C}^{t-1} \right \|_{F}$ (Fig. \ref{imcvbcs}(g)) or $\left \| \mathbf{D}^{t}-\mathbf{D}^{t-1} \right \|_{F}$ (Fig. \ref{imcvbcs}(h)) decreased to small values for the proposed algorithms. 
Such behavior was predicted for the algorithms by Theorems \ref{theorem5} and \ref{theorem6}, and is indicative (necessary but not suffficient condition) of convergence of the respective sequences.

\begin{table*}[!t]
\centering
\fontsize{9}{10pt}\selectfont
\begin{tabular}{|c|c|c|c|c|c|c|c|c|}
\hline
Image & Sampling & UF       & Zero-filling  & Sparse MRI & PANO         & DLMRI  & SOUP-DILLI MRI          & SOUP-DILLO MRI \\ 
\hline
a          &     Cartesian   & 7x           &  27.9                 &  28.6         &    \textbf{31.1}               & \textbf{31.1}            &   30.8                &\textbf{31.1} \\
\hline 
b          &    Cartesian    & 2.5x          &  27.7                 &  31.6         &  41.3                 & 40.2            &  38.5                 &\textbf{42.3} \\
\hline 
c          &    Cartesian   & 2.5x         &  24.9                 &  29.9         &    34.8               & 36.7            &   36.6                &\textbf{37.3} \\
\hline 
c          &    Cartesian   & 4x            &  25.9                 & 28.8         &     \textbf{32.3}              &  32.1           &   32.2                &\textbf{32.3} \\
\hline 
d          &  Cartesian   &  2.5x        &   29.5                &  32.1         &      36.9             &  38.1           &    36.7              &\textbf{38.4} \\
\hline 
e          &   Cartesian     & 2.5x         &  28.1                 &  31.7         &  40.0                 & 38.0            &   37.9                &\textbf{41.5} \\
\hline 
f          & 2D random  &  5x           &   26.3                &  27.4         &      30.4             &  30.5           &    30.3              &\textbf{30.6} \\
\hline
g          &   Cartesian   & 2.5x         &   32.8                &  39.1         &     41.6              &  41.7           &   42.2               &\textbf{43.2} \\
\hline 
\end{tabular}
\caption{PSNRs corresponding to the zero-filling (initial $\mathbf{y}^{0}=\mathbf{A}^{\dagger}\mathbf{z}$), Sparse MRI \cite{lustig}, PANO \cite{Qu2014843}, DLMRI \cite{bresai}, SOUP-DILLI MRI (for (P4)), and SOUP-DILLO MRI (for (P3)) reconstructions for various images. The simulated undersampling factors (UF), and k-space undersampling schemes are listed for each example. The best PSNRs are marked in bold. The image labels are as per Fig. \ref{im1bcs}.}
\label{tab2bcs}
%\vspace{-0.1in}
\end{table*}

Finally, Fig. \ref{imcvbcs} also shows the dictionary learned (jointly with the reconstruction) for image patches by SOUP-DILLO MRI along with the initial (Fig. \ref{imcvbcs}(i)) dictionary.
The learned synthesis dictionary is complex-valued whose real (Fig. \ref{imcvbcs}(j)) and imaginary (Fig. \ref{imcvbcs}(k)) parts are displayed, with the atoms shown as patches. The learned atoms appear quite different from the initial ones and
display frequency or edge like structures that were learned efficiently from a few k-space measurements.

\subsection{Dictionary-Blind Compressed Sensing MRI Results} \label{sec5e}

We now consider images (a)-(g) in Fig. \ref{im1bcs} and evaluate the efficacy of the proposed algorithms for (P3) and (P4) for reconstructing the images from undersampled k-space measurements.
We compare the reconstructions obtained by the proposed methods to those obtained by the DLMRI \cite{bresai}, Sparse MRI \cite{lustig}, PANO \cite{Qu2014843}, and FDLCP \cite{zhan33} methods.
We used the built-in parameter settings in the publicly available implementations of Sparse MRI \cite{lus33} and PANO \cite{PANOweb}, which performed well in our experiments. We used the zero-filling reconstruction as the initial guide image in PANO \cite{Qu2014843, PANOweb}.

We used the publicly available implementation of the multi-class dictionaries learning-based FDLCP method \cite{FDLCPweb}.  The $\ell_{0}$ ``norm''-based FDLCP was used in our experiments, as it was shown in \cite{zhan33} to outperform the $\ell_{1}$ version. 
The built-in settings \cite{FDLCPweb} for the FDLCP parameters such as patch size,  $\lambda$, etc., performed well in our experiments, and we tuned the parameter $\beta$ in each experiment to achieve the best image reconstruction quality.

For the DLMRI implementation \cite{dlmri1}, we used image patches of size\footnote{The reconstruction quality improves slightly with a larger patch size, but with a substantial increase in runtime.} $6 \times 6$ \cite{bresai}, and learned a $36 \times 144$ dictionary and performed image reconstruction using 45 iterations of the algorithm. The patch stride $r=1$, and $14400$ randomly selected patches\footnote{Using a larger training size during the dictionary learning step of DLMRI provides negligible improvement in image reconstruction quality, while leading to increased runtimes. A different random subset is used in each iteration of DLMRI.} were used during the dictionary learning step (executed with 20 iterations of K-SVD) of DLMRI.
Mean-subtraction was not performed for the patches prior to the dictionary learning step. (We adopted this strategy for DLMRI here as it led to better performance.)
A maximum sparsity level (of $s= 7$ per patch) is employed together with an error threshold (for sparse coding) during the dictionary learning step.
The $\ell_{2}$ error threshold per patch varies linearly from $0.34$ to $0.04$ over the DLMRI iterations, except for Figs. \ref{im1bcs}(a), \ref{im1bcs}(c), and \ref{im1bcs}(f) (noisier data), where it varies from $0.34$ to $0.15$ over the iterations. 
Once the dictionary is learnt in the dictionary learning step of each DLMRI (outer) iteration, all image patches are sparse coded with the same error threshold as used in learning and a relaxed maximum sparsity level of $14$.
This relaxed sparsity level is indicated in the DLMRI-Lab toolbox \cite{dlmri1}, as it leads to better performance in practice. As an example, DLMRI with these parameter settings provides 0.4 dB better reconstruction PSNR for the data in Fig. \ref{imcvbcs} compared to DLMRI with a common maximum sparsity level (other parameters as above) of $s=7$ in the dictionary learning and follow-up sparse coding (of all patches) steps.
We observed the above parameter settings (everything else as per the indications in the DLMRI-Lab toolbox \cite{dlmri1}) to work well for DLMRI in the experiments.

For SOUP-DILLO MRI and SOUP-DILLI MRI, patches of size $6 \times 6$ were again used ($n=36$ like for DLMRI) with stride $r=1$ (with patch wrap around), $\nu = 10^{6}/p$, $M=45$ (same number of outer iterations as for DLMRI), and a $36 \times 144$ dictionary was learned. We found that using larger values of $\lambda$ or $\mu$ during the initial outer iterations of the methods led to faster convergence and better aliasing removal. Hence, we vary $\lambda$ from $0.35$ to $0.01$ over the  (outer $t$) iterations in Fig. \ref{im6p}, except for Figs. \ref{im1bcs}(a), \ref{im1bcs}(c), and \ref{im1bcs}(f) (noisier data), where it varies from $0.35$ to $0.04$. These settings and $\mu = \lambda/1.4$ worked well in our experiments. We used $5$ inner iterations of SOUP-DILLO and $1$ inner iteration (observed optimal) of OS-DL. The iterative reconstruction algorithms were initialized as mentioned in Section \ref{sec5d}.

Table \ref{tab2bcs} lists the reconstruction PSNRs\footnote{While we compute PSNRs using magnitudes (typically the useful component of the reconstruction) of images, we have observed similar trends as in Table \ref{tab2bcs} when the PSNR is computed based on the difference (error) between the complex-valued images.} corresponding to the zero-filling (the initial $\mathbf{y}^{0}$ in our methods), Sparse MRI, DLMRI, PANO, SOUP-DILLO MRI, and SOUP-DILLI MRI reconstructions for several cases.
The proposed SOUP-DILLO MRI Algorithm for (P3) provides the best reconstruction PSNRs in Table  \ref{tab2bcs}. In particular, it provides 1 dB better PSNR on the average compared to the K-SVD \cite{elad} based DLMRI method and the non-local patch similarity-based PANO method.
While the K-SVD-based algorithm for image denoising \cite{elad2} explicitly uses information of the noise variance (of Gaussian noise) in the observed noisy patches, in the compressed sensing MRI application here, the artifact properties (variance or distribution of the aliasing/noise artifacts) in each iteration of DLMRI are typically unknown, i.e., the DLMRI algorithm does not benefit from a superior modeling of artifact statistics and one must empirically set parameters such as the patch-wise error thresholds. The improvements provided by SOUP-DILLO MRI over DLMRI thus might stem from a better optimization framework for the former (e.g., the overall sparsity penalized formulation or the exact and guaranteed block coordinate descent algorithm). A more detailed theoretical analysis including the investigation of plausible recovery guarantees for the proposed schemes is left for future work.

\begin{table}[!t]
\centering
\fontsize{7}{10pt}\selectfont
\begin{tabular}{|c|c|c|c|c|}
\hline
Image &  UF       & FDLCP   & SOUP-DILLO MRI & SOUP-DILLO MRI \\ 
           &             & ($\ell_{0}$ ``norm'')  &   (Zero-filling init.)    &   (FDLCP init.)  \\
\hline
a          & 7x           &      \textbf{31.5}          &     31.1             &\textbf{31.5} \\
\hline 
b          & 2.5x         &      44.2          &   42.3               &\textbf{44.8} \\
\hline 
c          & 2.5x         &       33.5         &   \textbf{37.3}               &\textbf{37.3} \\
\hline 
c          & 4x            &        32.8        &   32.3               &\textbf{33.5} \\
\hline 
d         &  2.5x        &        38.5        &   38.4               &\textbf{38.7} \\
\hline 
e         & 2.5x         &        43.4        &   41.5               &\textbf{43.9} \\
\hline 
f           &  5x           &      30.4          &  \textbf{30.6}                &\textbf{30.6} \\
\hline
g         & 2.5x         &       43.2         &   43.2               &\textbf{43.5} \\
\hline 
\end{tabular}
\caption{PSNRs corresponding to the $\ell_{0}$ ``norm''-based FDLCP reconstructions \cite{zhan33}, and the SOUP-DILLO MRI (for (P3)) reconstructions obtained with a zero-filling ($\mathbf{y}^{0}=\mathbf{A}^{\dagger}\mathbf{z}$) initialization or by initializing with the FDLCP result (last column). The various images, sampling schemes, and undersampling factors (UF) are the same as in Table \ref{tab2bcs}. The best PSNRs are marked in bold.}
\label{tab2bcsb}
%\vspace{-0.1in}
\end{table}

SOUP-DILLO MRI (average runtime of 2180 seconds) was also faster in Table \ref{tab2bcs} than the previous DLMRI (average runtime of 3156 
seconds).
Both the proposed SOUP methods significantly improved the reconstruction quality compared to the classical non-adaptive Sparse MRI method.
Moreover, the $\ell_{0}$ ``norm''-based SOUP-DILLO MRI outperformed the corresponding $\ell_{1}$ method (SOUP-DILLI MRI) by 1.4 dB on average in Table \ref{tab2bcs}, indicating potential benefits for $\ell_{0}$ penalized dictionary adaptation in practice.
The promise of non-convex sparsity regularizers (including the $\ell_{0}$ or $\ell_{p}$ norm for $p<1$) compared to $\ell_{1}$ norm-based techniques for compressed sensing MRI has been demonstrated in prior works \cite{josh, Char, zhan33}.

Table \ref{tab2bcsb} compares the reconstruction PSNRs obtained by SOUP-DILLO MRI to those obtained by the recent $\ell_{0}$ ``norm''-based FDLCP \cite{zhan33} for the same cases as in Table \ref{tab2bcs}.
SOUP-DILLO MRI initialized with zero-filling reconstructions performs quite similarly on the average  (0.1 dB worse) as $\ell_{0}$ FDLCP in Table \ref{tab2bcsb}.
However, with better initializations, SOUP-DILLO MRI can provide even better reconstructions than with the zero-filling initialization.
We investigated SOUP-DILLO MRI, but initialized with the $\ell_{0}$ FDLCP reconstructions (for $\mathbf{y}$).
The parameter $\lambda$ was set to the eventual value in Table~\ref{tab2bcs}, i.e., $0.01$ or $0.04$ (for noisier data), with decreasing $\lambda$'s used for Image (c) with 2.5x Cartesian undersampling, where the FDLCP reconstruction was still highly aliased.
In this case, SOUP-DILLO MRI consistently improved over the $\ell_{0}$ FDLCP reconstructions (initializations), and provided 0.8 dB better PSNR on the average in Table \ref{tab2bcsb}.
These results illustrate the benefits and potential for the proposed dictionary-blind compressed sensing approaches.
The PSNRs for our schemes could be further improved with better parameter selection strategies.

\begin{figure}[!t]
\begin{center}
\begin{tabular}{cc}
\includegraphics[height=1.35in]{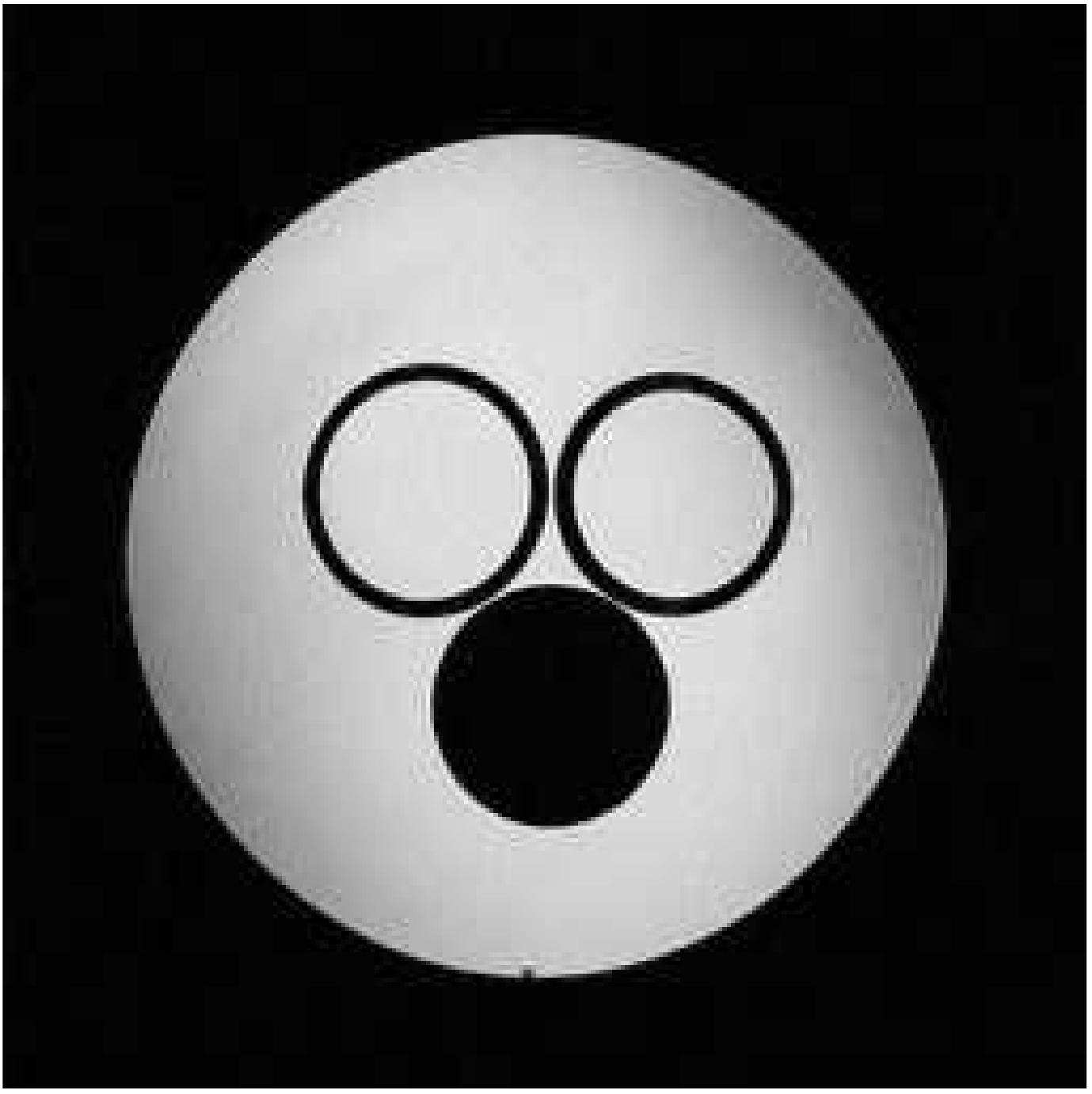}&
\includegraphics[height=1.35in]{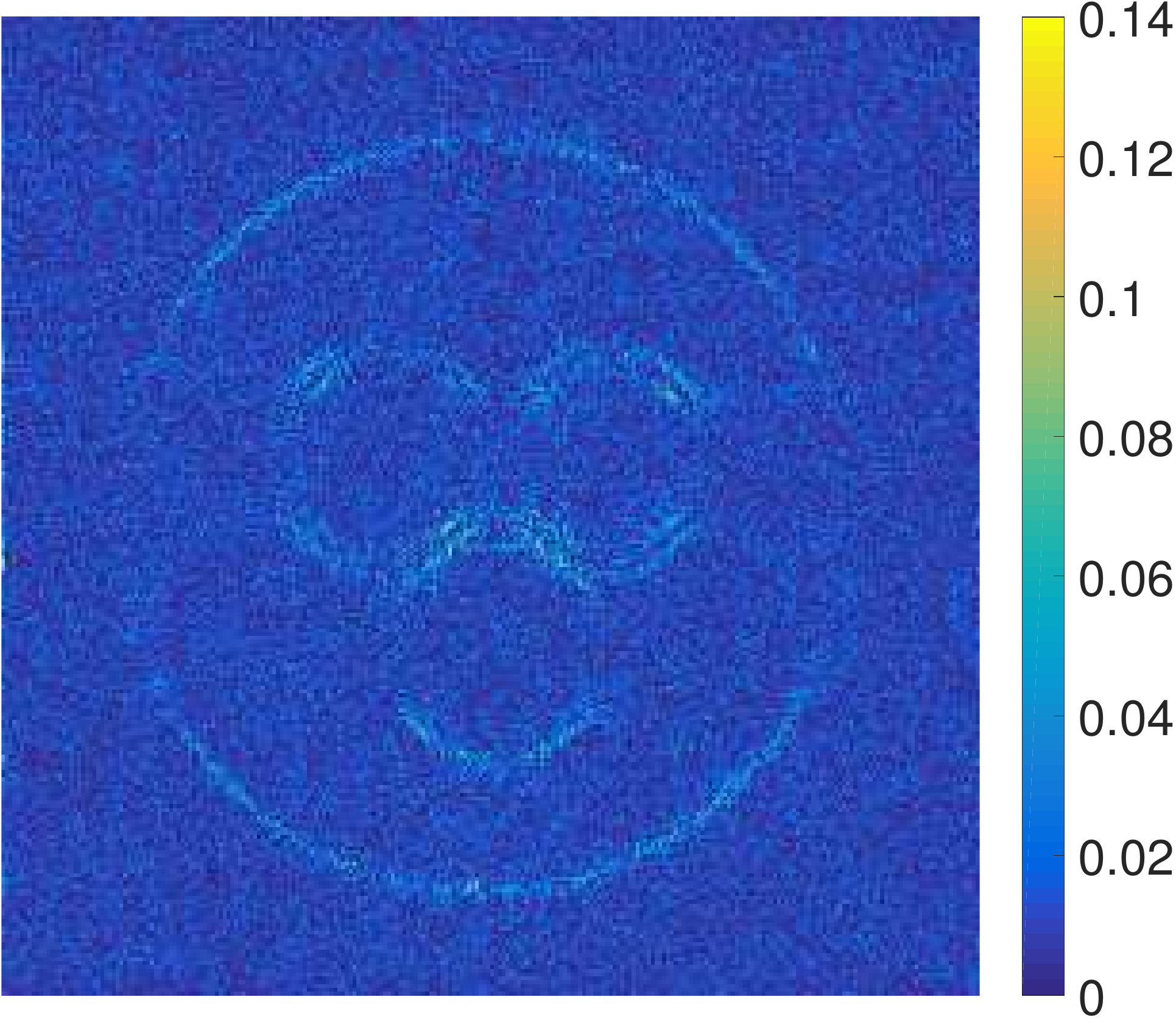}\\
(a) & (e) \\
\includegraphics[height=1.35in]{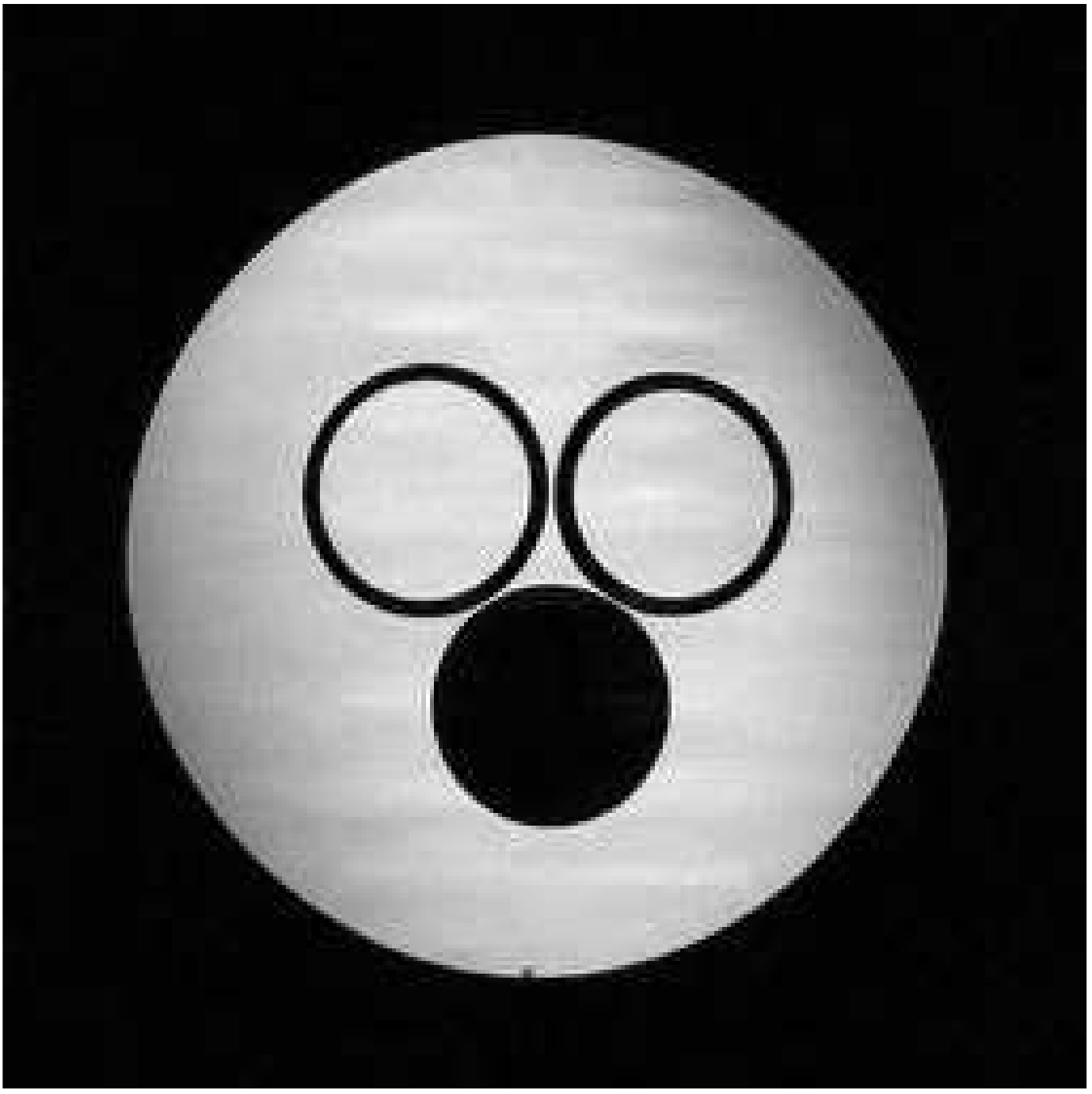}&
\includegraphics[height=1.35in]{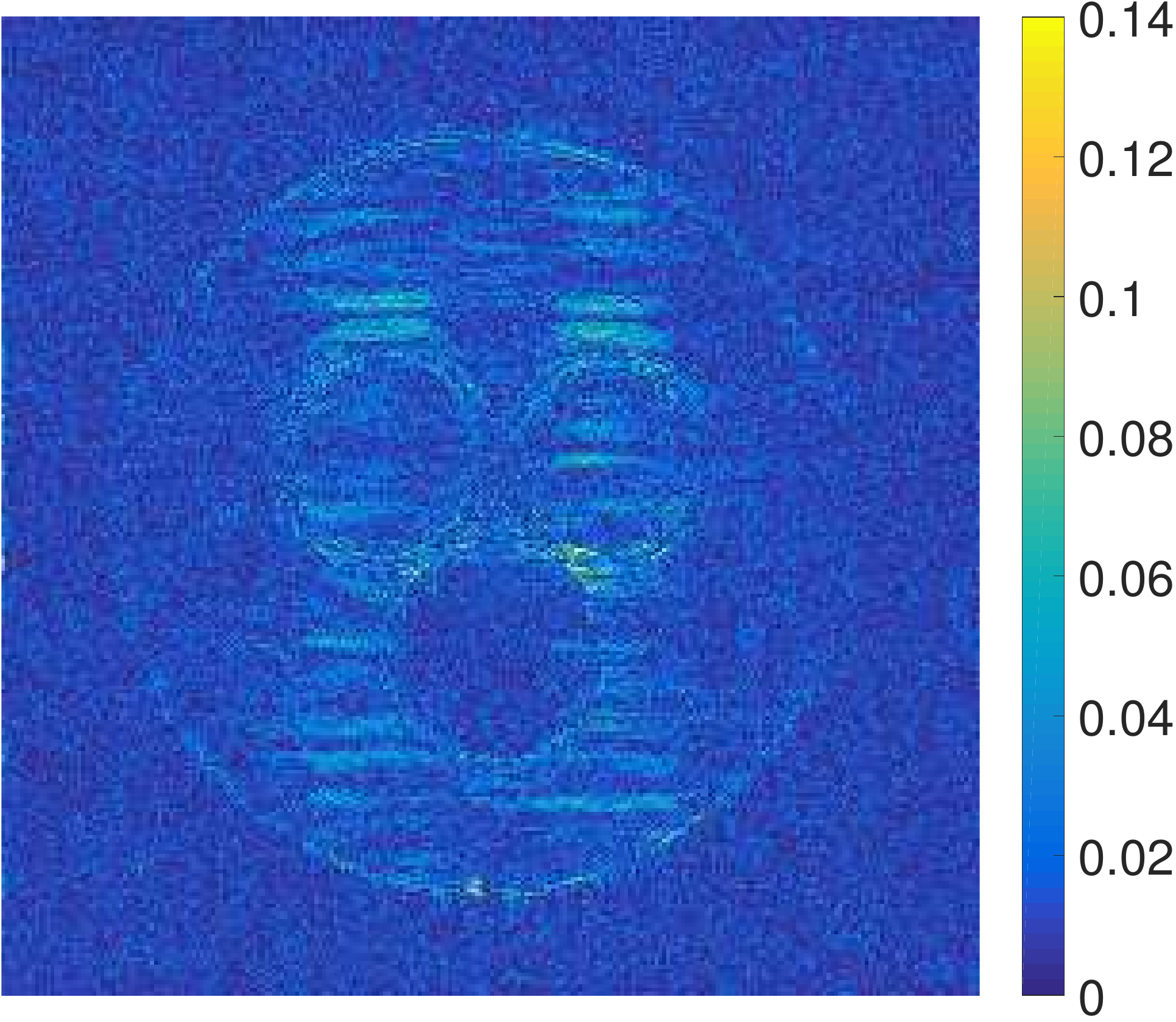}\\
 (b) & (f)\\
\includegraphics[height=1.35in]{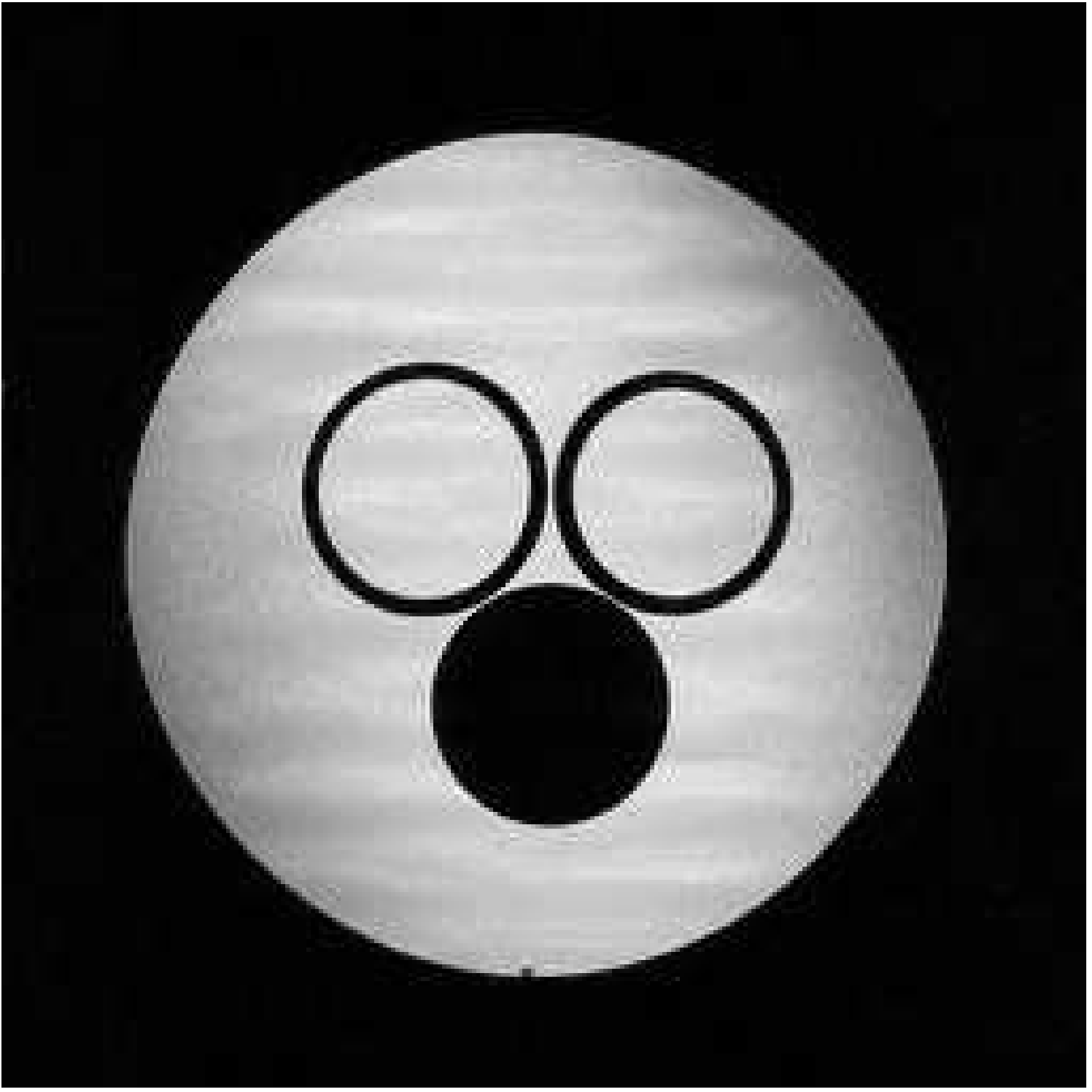}&
\includegraphics[height=1.35in]{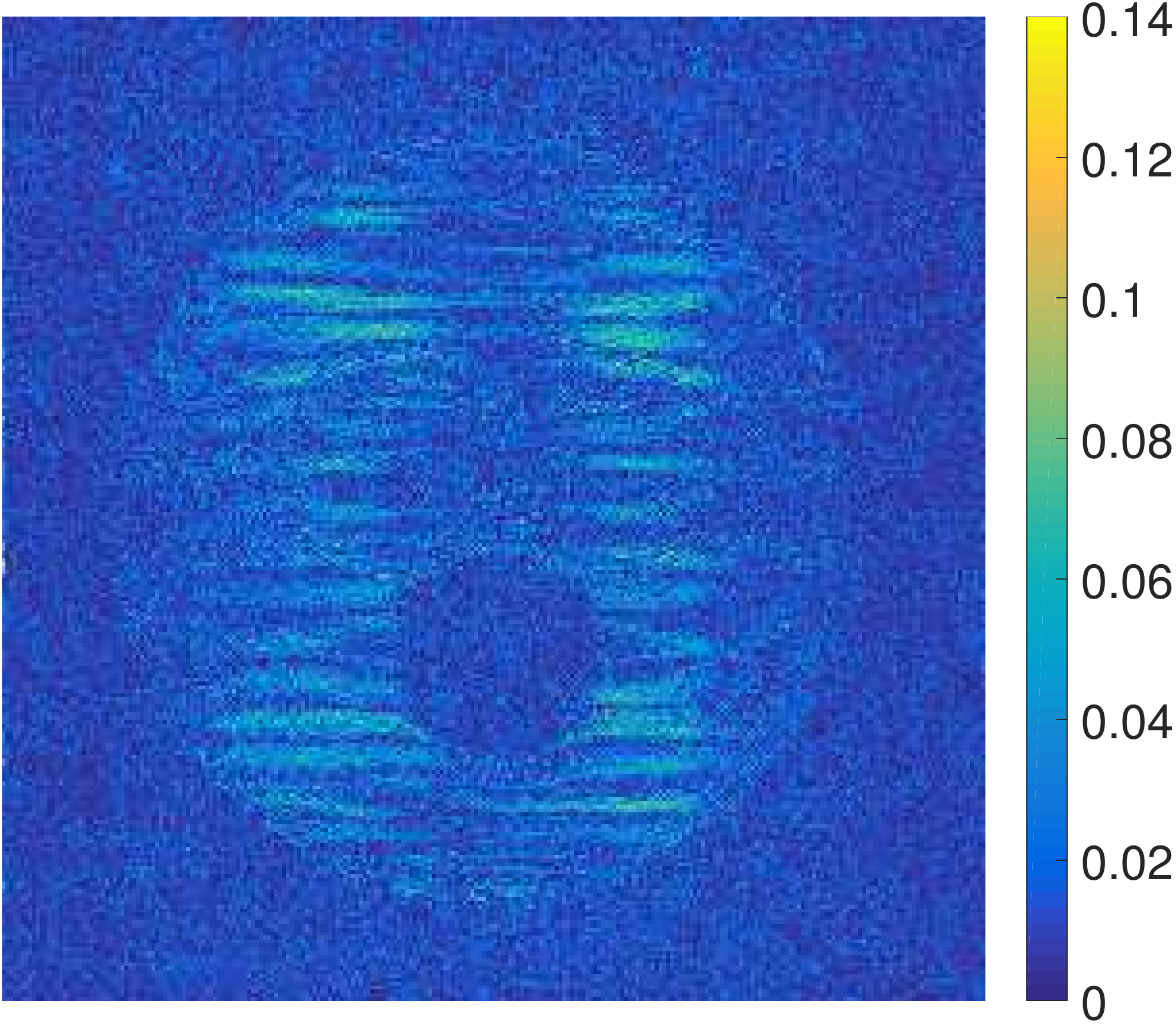}\\
 (c) & (g) \\
\includegraphics[height=1.35in]{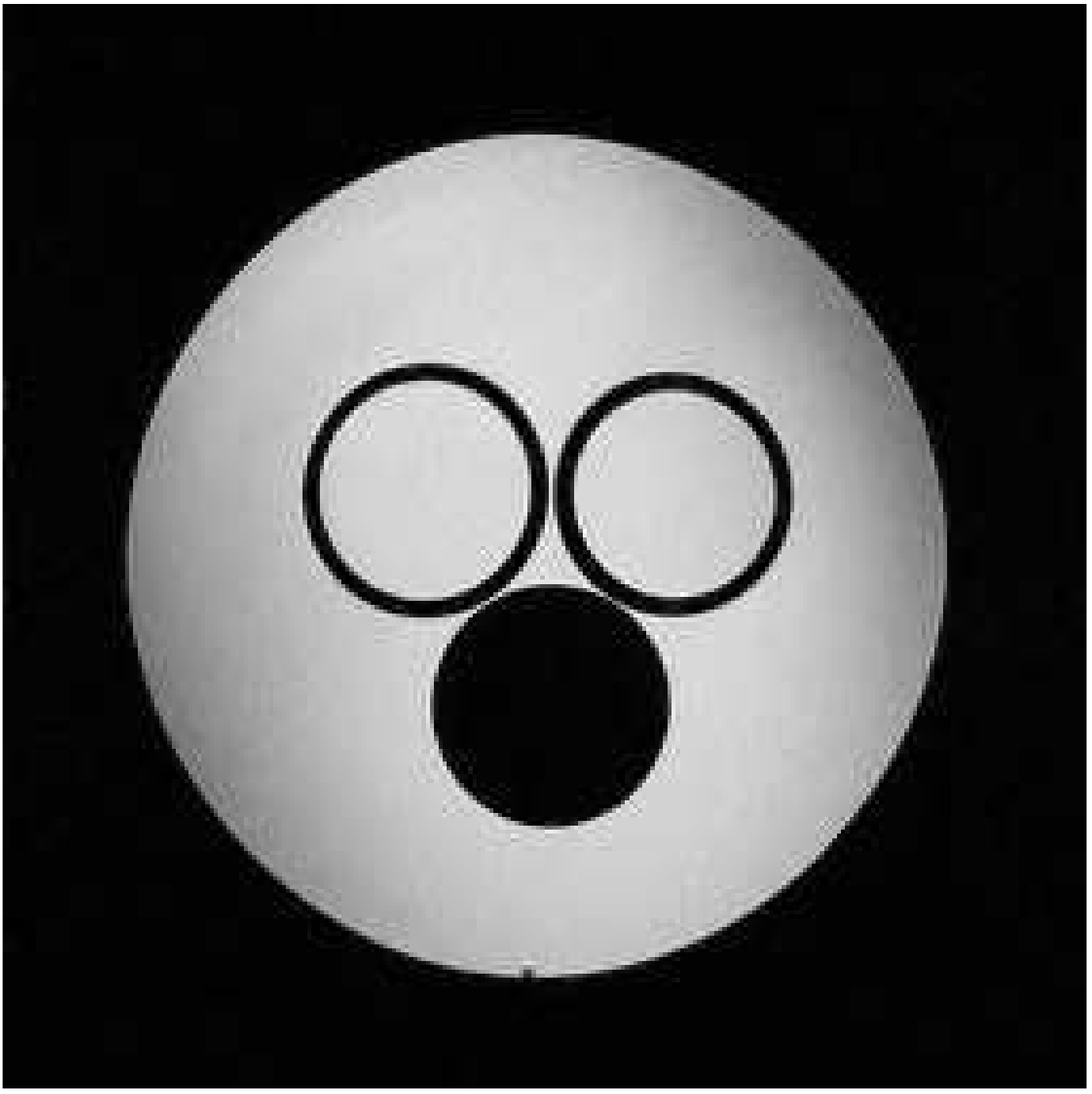}&
\includegraphics[height=1.35in]{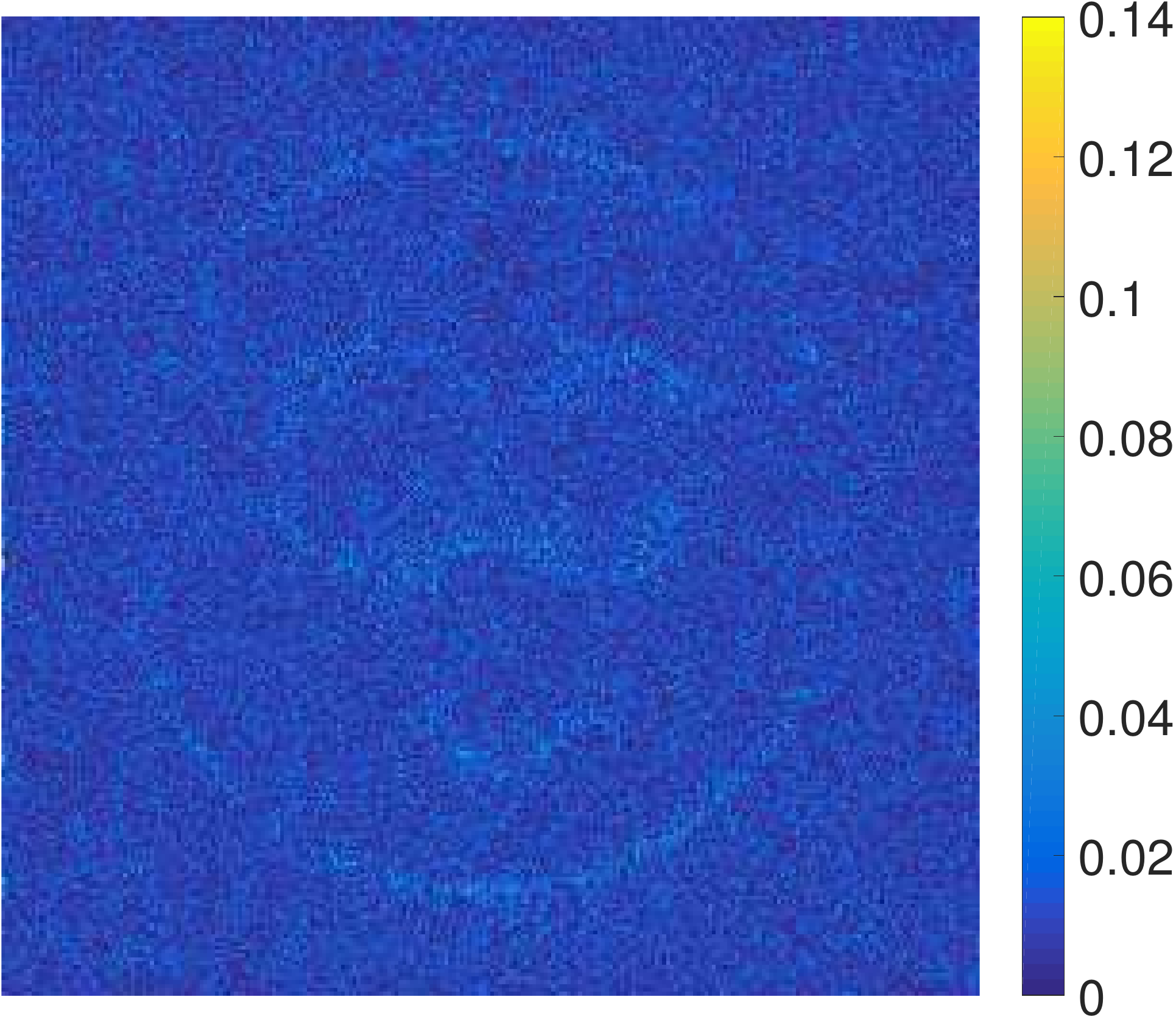}\\
(d) & (h) \\
\end{tabular}
\caption{Results for Image (c) with Cartesian sampling and 2.5x undersampling. The sampling mask is shown in Fig. \ref{imcvbcs}(a). Reconstructions (magnitudes): (a) DLMRI \cite{bresai}; (b) PANO \cite{Qu2014843}; (c) $\ell_{0}$ ``norm''-based FDLCP \cite{zhan33}; and (d) SOUP-DILLO MRI (with zero-filling initialization). (e)-(h) are the reconstruction error maps for (a)-(d), respectively.}
\label{im4bcsbb}
\end{center}
%\vspace{-0.2in}
%\vspace{-0.31in}
\end{figure}

Fig. \ref{im4bcsbb} shows the reconstructions and reconstruction error maps (i.e., the magnitude of the difference between the magnitudes of the reconstructed and reference images) for various methods for an example in Table \ref{tab2bcs}.
The reconstructed images and error maps for SOUP-DILLO MRI show much fewer artifacts and smaller distortions than for the other methods. Another comparison is included in the supplement.

\section{Conclusions}
\label{sec6}  
This paper investigated in detail fast methods for synthesis dictionary learning. 
The SOUP algorithms for dictionary learning were further extended to the scenario of dictionary-blind image reconstruction.
A convergence analysis was presented for the various efficient algorithms in highly non-convex problem settings.
The proposed SOUP-DILLO algorithm for aggregate sparsity penalized dictionary learning had superior performance over recent dictionary learning methods for sparse data representation.
The proposed SOUP-DILLO (dictionary-blind) image reconstruction method outperformed standard benchmarks involving the K-SVD algorithm, as well as some other recent methods in the compressed sensing MRI application.
Recent works have investigated the data-driven adaptation of alternative signal models such as the analysis dictionary \cite{akd} or transform model \cite{sabres, doubsp2l, saiwen, sravTCI1}. While we focused on synthesis dictionary learning methodologies in this work, we plan to compare various kinds of data-driven models in future work.
We have considered extensions of the SOUP-DIL methodology to other novel settings and applications elsewhere \cite{saibrrajfes1}.
Extensions of the SOUP-DIL methods for online learning \cite{Mai} or for learning multi-class models are also of interest, and are left for future work.

%\appendices

\section{Discussion of Image Denoising Results for SOUP-DILLO in  \cite{sairajfes}} \label{app1}

Results obtained using the SOUP-DILLO  (learning) algorithm for image denoising are reported in \cite{sairajfes}, where the results were compared to those obtained using the K-SVD image denoising method \cite{elad2}.  We briefly discuss these results here for completeness.

Recall that the goal in image denoising is to recover an estimate of an image $\mathbf{y} \in \mathbb{C}^{p}$ (2D image represented as a vector) from its corrupted measurements $\mathbf{z} = \mathbf{y} + \mathbf{\epsilon}$, where $\mathbf{\epsilon}$ is the noise (e.g., i.i.d. Gaussian).
First, while both K-SVD and the SOUP-DILLO (for (P1)) methods could be applied to noisy image patches to obtain adaptive denoising (as $\mathbf{D} \mathbf{x}_{i}$ in $\mathbf{P}_{i}\mathbf{z} \approx \mathbf{D} \mathbf{x}_{i}$) of the patches (the denoised image is obtained easily from denoised patches by averaging together the overlapping patches at their respective 2D locations, or solving (22) in \cite{sairajfes}), the K-SVD-based denoising method \cite{elad2} uses a dictionary learning procedure where the $\ell_{0}$ ``norms" of the sparse codes are minimized so that a fitting constraint or error constraint of $ \left \| \mathbf{P}_{i}\mathbf{z}- \mathbf{D} \mathbf{x}_{i} \right \|_{2}^{2} \leq\epsilon$ 
is met for representing each noisy patch. In particular, when the noise is i.i.d. Gaussian, $\epsilon=nC^{2}\sigma^{2}$ is used, with $C>1$  (typically chosen very close to 1) a constant and $\sigma^{2}$ being the noise variance for pixels. Such a constraint serves as a strong prior (law of large numbers), and is an important reason for the denoising capability of K-SVD \cite{elad2}.
 
In the SOUP-DILLO denoising method in \cite{sairajfes}, we set $\lambda\propto\sigma$ during learning (in (P1)), and once the dictionary is learned from noisy image patches, we re-estimated the patch sparse codes using a single pass (over the noisy patches) of orthogonal matching pursuit (OMP)  \cite{pati}, by employing an error constraint criterion like in K-SVD denoising. This strategy only uses information on the Gaussian noise statistics in a sub-optimal way, especially during learning. However, SOUP-DILLO still provided comparable denoising performance vis-a-vis K-SVD with this approach (cf. \cite{sairajfes}). Importantly, SOUP-DILLO provided up to 0.1-0.2 dB better denoising PSNR than K-SVD in (very) high noise cases in \cite{sairajfes}.

\bibliographystyle{./IEEEtran}
\bibliography{./IEEEabrv,./OPDL_v11}

%%%%%%%%%%%%%%%%%%%%%%%%%%%%%%%%%%%%%%%%%%%%%%%%%%
% END MAIN
%%%%%%%%%%%%%%%%%%%%%%%%%%%%%%%%%%%%%%%%%%%%%%%%%%

\newpage
\clearpage

%%%%%%%%%%%%%%%%%%%%%%%%%%%%%%%%%%%%%%%%%%%%%%%%%%
% BEGIN SUPPLEMENT
%%%%%%%%%%%%%%%%%%%%%%%%%%%%%%%%%%%%%%%%%%%%%%%%%%

% Title

{
\twocolumn[
\begin{center}
 \Huge Efficient Sum of Outer Products Dictionary Learning (SOUP-DIL) and Its Application to Inverse Problems: Supplementary Material
\vspace{0.2in}
\end{center}]
}

\setcounter{prop}{0}
\setcounter{theorem}{0}

This document provides proofs and additional experimental results to accompany our manuscript \citeSupp{sairakfes55:supp}.

\section{Proofs of Propositions 1-3} \label{app1new}

Here, we provide the proofs for Propositions 1-3 in our manuscript \citeSupp{sairakfes55:supp}. We state the propositions below for completeness.
Recall that propositions \ref{prop1}-\ref{prop2} provide the solutions for the following problems:
\begin{align} 
&\min_{\mathbf{c}_{j} \in \mathbb{C}^{N}} \; \begin{Vmatrix}
\mathbf{E}_{j} - \mathbf{d}_{j}\mathbf{c}_{j}^{H}
\end{Vmatrix}_{F}^{2} + \lambda^{2} \left \| \mathbf{c}_{j} \right \|_{0}  \;\; \mathrm{s.t.}\; \: \left \| \mathbf{c}_{j} \right \|_{\infty} \leq L \label{eqop5new} \\
&  \min_{\mathbf{c}_{j}  \in \mathbb{C}^{N}} \; \begin{Vmatrix}
\mathbf{E}_{j} - \mathbf{d}_{j}\mathbf{c}_{j}^{H}
\end{Vmatrix}_{F}^{2} + \mu \left \| \mathbf{c}_{j} \right \|_{1}  \label{eqop5bbnew} \\
&  \min_{\mathbf{d}_{j} \in \mathbb{C}^{n}} \; \begin{Vmatrix}
\mathbf{E}_{j} - \mathbf{d}_{j}\mathbf{c}_{j}^{H}
\end{Vmatrix}_{F}^{2}  \;\:\; \mathrm{s.t.}\; \: \left \| \mathbf{d}_{j} \right \|_2 =1.  \label{eqop6new}
\end{align}

\begin{prop} \vspace{0.02in}
Given $\mathbf{E}_{j} \in \mathbb{C}^{n \times N}$ and $\mathbf{d}_{j} \in \mathbb{C}^{n}$, and assuming $L > \lambda$,  a global minimizer of the sparse coding problem \eqref{eqop5new} is obtained by the following truncated hard-thresholding operation:
\begin{equation} \label{tru1ch4new}
\hat{\mathbf{c}}_{j} =  \min\left ( \begin{vmatrix}
H_{\lambda} \left ( \mathbf{E}_{j}^{H}\mathbf{d}_{j} \right )
\end{vmatrix}, L \mathbf{1}_{N} \right ) \, \odot \, e^{j \angle \,  \mathbf{E}_{j}^{H}\mathbf{d}_{j}  }.
\end{equation}
The minimizer of \eqref{eqop5new} is unique if and only if the vector $\mathbf{E}_{j}^{H}\mathbf{d}_{j}$ has no entry with a magnitude of $\lambda$. 
\end{prop}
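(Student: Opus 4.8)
The plan is to reduce \eqref{eqop5new} to $N$ decoupled scalar problems, each solvable in closed form. Using the dictionary normalization $\|\mathbf{d}_j\|_2 = 1$ (so that $\|\mathbf{d}_j\mathbf{c}_j^H\|_F^2 = \|\mathbf{c}_j\|_2^2$) and expanding the Frobenius norm with $\mathbf{b} \triangleq \mathbf{E}_j^H\mathbf{d}_j$, completing the square gives
\begin{equation} \label{cplsq}
\begin{Vmatrix} \mathbf{E}_j - \mathbf{d}_j\mathbf{c}_j^H \end{Vmatrix}_F^2 = \begin{Vmatrix} \mathbf{E}_j \end{Vmatrix}_F^2 - \begin{Vmatrix} \mathbf{b} \end{Vmatrix}_2^2 + \begin{Vmatrix} \mathbf{c}_j - \mathbf{b} \end{Vmatrix}_2^2 .
\end{equation}
Dropping the constant $\|\mathbf{E}_j\|_F^2 - \|\mathbf{b}\|_2^2$, Problem \eqref{eqop5new} becomes the minimization of $\sum_{i=1}^{N}\big( |c_{ji} - b_i|^2 + \lambda^2\,\mathbbm{1}_{\{c_{ji}\neq 0\}} \big)$ subject to $|c_{ji}| \leq L$ for all $i$. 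Both the cost and the box constraint separate across entries, so it suffices to minimize $q(c) \triangleq |c - b|^2 + \lambda^2\,\mathbbm{1}_{\{c\neq 0\}}$ over $\{c\in\mathbb{C} : |c|\leq L\}$ for each fixed scalar $b$.

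I would then solve the scalar problem by comparing two regimes. Choosing $c=0$ gives $q(0)=|b|^2$. Among $c\neq 0$, the quantity $|c-b|^2$ is minimized by the Euclidean projection of $b$ onto the disk $\{|c|\leq L\}$, i.e. $c^\star = \min(|b|,L)\,e^{j\angle b}$ when $b\neq 0$, so $q(c^\star) = \big(|b| - \min(|b|,L)\big)^2 + \lambda^2$. Comparing $q(0)$ with $q(c^\star)$ in the three cases $|b|<\lambda$, $\lambda\leq |b|\leq L$, and $|b|>L$ shows the minimizer is $0$ in the first case, $b$ in the second (with $0$ tying only when $|b|=\lambda$), and $L\,e^{j\angle b}$ in the third; when $b=0$ the choice $c=0$ is forced. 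Assembling these cases entrywise is exactly the truncated hard-thresholding map $\min(|H_\lambda(b)|,L)\,e^{j\angle b}$, which gives \eqref{tru1ch4new}.

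The step that needs care — and where the hypothesis $L>\lambda$ is used — is the case $|b|>L$: one must check that the truncated nonzero choice still beats $c=0$, i.e. $(|b|-L)^2 + \lambda^2 < |b|^2$, equivalently $\lambda^2 < L(2|b|-L)$; since $|b|>L$ the right-hand side exceeds $L^2 > \lambda^2$, so the strict inequality holds. For the uniqueness claim I would revisit the same analysis: the scalar minimizer is unique in every regime except $|b|=\lambda$, where $q(0)=q(b)=\lambda^2$ produces two distinct minimizers; hence the minimizer of \eqref{eqop5new} is unique if and only if no entry of $\mathbf{E}_j^H\mathbf{d}_j$ has magnitude exactly $\lambda$. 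The main obstacle is conceptual rather than computational: because the $\ell_0$ term makes $q$ non-convex, global optimality cannot be certified by a subgradient condition and must be argued by the explicit two-regime comparison — but once separability from \eqref{cplsq} is established, this is elementary one-variable reasoning.
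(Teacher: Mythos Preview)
Your proposal is correct and follows essentially the same route as the paper: expand the Frobenius norm using $\|\mathbf{d}_j\|_2=1$ to reduce \eqref{eqop5new} to $N$ decoupled scalar problems in $|c_{ji}-b_i|^2+\lambda^2\mathbbm{1}_{\{c_{ji}\neq 0\}}$ under $|c_{ji}|\leq L$, then do the two-regime comparison, invoking $L>\lambda$ precisely in the $|b_i|>L$ case and identifying the tie at $|b_i|=\lambda$ for the uniqueness claim. The only cosmetic difference is that the paper first matches phases and then splits on $|b_i|\leq L$ versus $|b_i|>L$, whereas you split on $|b|<\lambda$, $\lambda\leq|b|\leq L$, $|b|>L$; the content is identical.
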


\begin{proof}
First, for a vector $\mathbf{d}_{j}$ that has unit $\ell_2$ norm, we have the following equality:
\begin{align} 
\nonumber & \begin{Vmatrix}
\mathbf{E}_{j} - \mathbf{d}_{j}\mathbf{c}_{j}^{H}
\end{Vmatrix}_{F}^{2}= \left \| \mathbf{E}_{j} \right \|_{F}^{2} +  \left \| \mathbf{c}_{j} \right \|_{2}^{2} - 2\, \text{Re} \left \{ \mathbf{c}_{j}^{H} \mathbf{E}_{j}^{H} \mathbf{d}_{j}  \right \}\\
& \;\;\;\;= \left \| \mathbf{c}_{j} - \mathbf{E}_{j}^{H} \mathbf{d}_{j} \right \|_{2}^{2} + \left \| \mathbf{E}_{j} \right \|_{F}^{2} - \left \| \mathbf{E}_{j}^{H}\mathbf{d}_{j} \right \|_{2}^{2}.
\label{eqop8new}
\end{align}
By substituting \eqref{eqop8new} into \eqref{eqop5new},
it is clear that  Problem \eqref{eqop5new} is equivalent to 
\begin{equation} \label{eqop5bnew}
\min_{\mathbf{c}_{j}} \; \left \| \mathbf{c}_{j} - \mathbf{E}_{j}^{H} \mathbf{d}_{j} \right \|_{2}^{2}  + \lambda^{2} \left \| \mathbf{c}_{j} \right \|_{0} \;\; \mathrm{s.t.}\; \: \left \| \mathbf{c}_{j} \right \|_{\infty} \leq L.
\end{equation}
Define $\mathbf{b} \triangleq  \mathbf{E}_{j}^{H} \mathbf{d}_{j}$.
Then, the objective in \eqref{eqop5bnew} simplifies to $\sum_{i=1}^{N} \left \{ \left | c_{ji} - b_{i} \right |^{2}  + \lambda^{2} \, \theta(c_{ji})
 \right \}$  with
\begin{equation} \label{bbt5apeq3new}
 \theta \left ( a \right )=\left\{\begin{matrix}
 0,& \; \mathrm{if} \;\, a = 0 \\
1,  & \; \mathrm{if} \;\, a \neq 0.
\end{matrix}\right.
\end{equation}

Therefore, we solve for each entry $c_{ji}$ of $\mathbf{c}_{j}$ as
\begin{equation} \label{eqop10new}
\hat{c}_{ji} = \underset{c_{ji} \in \mathbb{C}}{\arg \min}  \left | c_{ji} - b_{i} \right |^{2}  + \lambda^{2} \, \theta(c_{ji})   \;\; \mathrm{s.t.}\; \: \left | c_{ji} \right | \leq L.
\end{equation}

For the term $\left | \hat{c}_{ji} - b_{i} \right |^{2}$ to be minimal in \eqref{eqop10new}, clearly, the phases of $\hat{c}_{ji}$ and $b_{i}$ must match, and thus, the first term in the cost can be equivalently replaced (by factoring out the optimal phase) with $\begin{vmatrix}
\left | c_{ji} \right | - \left | b_{i} \right |
\end{vmatrix}^{2}$.
It is straightforward to show that when $\left | b_{i} \right | \leq L$,
\begin{equation} \label{bbt5apeq5new}
\left | \hat{c}_{ji} \right | =\left\{\begin{matrix}
 0,& \; \mathrm{if} \;\, \left | b_{i} \right |^{2} < \lambda^{2} \\
 \left | b_{i} \right |,  & \; \mathrm{if} \;\, \left | b_{i} \right |^{2} > \lambda^{2}
\end{matrix}\right.
\end{equation}
When $\left | b_{i} \right | = \lambda$ ($\lambda \leq L$), the optimal $\left | \hat{c}_{ji} \right |$ can be either $\left | b_{i} \right |$ or $0$ (non-unique), and both these settings achieve the minimum objective value $\lambda^{2}$ in \eqref{eqop10new}.
We choose $\left | \hat{c}_{ji} \right | =  \left | b_{i} \right |$ to break the tie.
Next, when $\left | b_{i} \right | > L$, we have
\begin{equation} \label{bbt5apeq5yunew}
 \left | \hat{c}_{ji} \right | =\left\{\begin{matrix}
 0, & \mathrm{if} \;\, \left | b_{i} \right |^{2} < \left ( L - \left | b_{i} \right | \right )^{2} + \lambda^{2}\\
  L,   & \; \mathrm{if} \;\, \left | b_{i} \right |^{2} > \left ( L - \left | b_{i} \right | \right )^{2} + \lambda^{2}
\end{matrix}\right.
\end{equation}
Since $L > \lambda$, clearly $\left | b_{i} \right |^{2} > \left ( L - \left | b_{i} \right | \right )^{2} + \lambda^{2}$ in \eqref{bbt5apeq5yunew}.

Thus, an optimal $\hat{c}_{ji}$ in \eqref{eqop10new} is compactly written as $\hat{c}_{ji} =  \min\left ( \begin{vmatrix}
H_{\lambda} \left ( b_{i} \right )
\end{vmatrix}, L \right ) \, \cdot \, e^{j \angle b_{i}}$, thereby establishing \eqref{tru1ch4new}. The condition for uniqueness of the sparse coding solution follows from the arguments for the case $\left | b_{i} \right | = \lambda$ above.
\end{proof}

\begin{prop} \vspace{0.02in}
Given $\mathbf{E}_{j} \in \mathbb{C}^{n \times N}$ and $\mathbf{d}_{j} \in \mathbb{C}^{n}$,  the unique global minimizer of the sparse coding problem \eqref{eqop5bbnew} is
\begin{equation} \label{tru1ch4bnmnnew}
\hat{\mathbf{c}}_{j} =  \max \left (\begin{vmatrix}
  \mathbf{E}_{j}^{H}\mathbf{d}_{j}
\end{vmatrix} - \frac{\mu}{2} \mathbf{1}_{N}, \, 0 \right ) \, \odot \, e^{j \angle \, \mathbf{E}_{j}^{H}\mathbf{d}_{j} }.
\end{equation}
\end{prop}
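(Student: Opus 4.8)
The plan is to follow the same route as the proof of Proposition~1, exploiting the fact that the $\ell_{1}$ penalty, like the $\ell_{0}$ penalty, is separable over the entries of $\mathbf{c}_{j}$ and invariant under a change of phase. First I would use the unit-norm property of $\mathbf{d}_{j}$ together with the completion-of-squares identity \eqref{eqop8new} to rewrite the fidelity term $\bigl\|\mathbf{E}_{j} - \mathbf{d}_{j}\mathbf{c}_{j}^{H}\bigr\|_{F}^{2}$ as $\bigl\|\mathbf{c}_{j} - \mathbf{E}_{j}^{H}\mathbf{d}_{j}\bigr\|_{2}^{2} + \bigl\|\mathbf{E}_{j}\bigr\|_{F}^{2} - \bigl\|\mathbf{E}_{j}^{H}\mathbf{d}_{j}\bigr\|_{2}^{2}$. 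Dropping the two terms independent of $\mathbf{c}_{j}$ and writing $\mathbf{b}\triangleq\mathbf{E}_{j}^{H}\mathbf{d}_{j}$, Problem \eqref{eqop5bbnew} reduces to $\min_{\mathbf{c}_{j}}\, \bigl\|\mathbf{c}_{j} - \mathbf{b}\bigr\|_{2}^{2} + \mu\left\|\mathbf{c}_{j}\right\|_{1}$, which decouples into the $N$ independent scalar problems $\min_{c_{ji}\in\mathbb{C}}\, \left|c_{ji} - b_{i}\right|^{2} + \mu\left|c_{ji}\right|$.

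Next I would solve each scalar problem. As in Proposition~1, for a fixed modulus the term $\left|c_{ji} - b_{i}\right|^{2}$ is minimized when $\angle c_{ji} = \angle b_{i}$ (when $b_{i}\neq 0$); factoring out this optimal phase by writing $c_{ji} = t\,e^{j\angle b_{i}}$ with $t\geq 0$, the scalar cost becomes $\left(t - \left|b_{i}\right|\right)^{2} + \mu t$. This is a strictly convex, coercive function of $t$ on $[0,\infty)$; setting its derivative $2\left(t - \left|b_{i}\right|\right) + \mu$ to zero gives the unconstrained minimizer $\left|b_{i}\right| - \mu/2$, and projecting onto $t\geq 0$ yields $t^{\star} = \max\!\bigl(\left|b_{i}\right| - \mu/2,\, 0\bigr)$. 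Collecting entries and re-attaching the phases gives exactly the soft-thresholding formula \eqref{tru1ch4bnmnnew}.

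Finally, for uniqueness I would argue that strict convexity of $\left(t - \left|b_{i}\right|\right)^{2} + \mu t$ on $[0,\infty)$ makes $t^{\star}$ unique for every $i$; when $b_{i}\neq 0$ the phase of the optimal $c_{ji}$ is forced, so $\hat{c}_{ji}$ is unique, and when $b_{i} = 0$ the minimizer is $\hat{c}_{ji}=0$ irrespective of phase, hence still unique. Thus the global minimizer of \eqref{eqop5bbnew} is unique. Unlike the $\ell_{0}$ case, no truncation by an $\ell_{\infty}$ bound and no tie-breaking convention are needed, precisely because the $\ell_{1}$-regularized scalar objective is coercive and strictly convex; this is the only point of the argument requiring any care, and I do not anticipate a genuine obstacle.
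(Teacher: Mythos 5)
Your proposal is correct and follows essentially the same route as the paper's proof: completing the square using the unit norm of $\mathbf{d}_{j}$, decoupling into scalar problems over the entries $c_{ji}$, matching the phase to that of $b_{i}$, and solving the resulting one-dimensional problem $\min_{\beta\geq 0}(\beta-\left|b_{i}\right|)^{2}+\mu\beta$ to obtain the soft-thresholding formula. Your added remarks on uniqueness via strict convexity are consistent with the paper's (briefer) treatment.
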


\begin{proof}
 Following the same arguments as in the proof of Proposition \ref{prop1}, \eqref{eqop5bbnew} corresponds to solving the following problem for each $c_{ji}$:
\begin{equation} \label{eqop10bnew}
\hat{c}_{ji} = \underset{c_{ji} \in \mathbb{C}}{\arg \min}  \left | c_{ji} - b_{i} \right |^{2}  +  \mu \left | c_{ji} \right |,
\end{equation}
after replacing the term $\lambda^{2} \theta(c_{ji})$ in \eqref{eqop10new} with $ \mu \left | c_{ji} \right |$ above.
Clearly, the phases of $\hat{c}_{ji}$ and $b_{i}$ must match for the first term in the cost above to be minimal. We then have $\left | \hat{c}_{ji} \right | = \underset{\beta \geq 0}{\arg \min}  \begin{pmatrix}
\beta - \left | b_{i} \right |
\end{pmatrix}^{2}  +  \mu \, \beta$. Thus, $\left | \hat{c}_{ji} \right | = $ $\max(\left | b_{i} \right |- \mu/2, \, 0)$.
\end{proof}

\begin{prop} \vspace{0.02in}
Given $\mathbf{E}_{j} \in \mathbb{C}^{n \times N}$ and $\mathbf{c}_{j} \in \mathbb{C}^{N}$, a global minimizer of the dictionary atom update problem \eqref{eqop6new} is
\begin{equation} \label{tru1ch4gnew}
\hat{\mathbf{d}}_{j} =  \left\{\begin{matrix}
\frac{\mathbf{E}_{j}\mathbf{c}_{j}}{\left \| \mathbf{E}_{j}\mathbf{c}_{j} \right \|_{2}}, & \mathrm{if}\,\, \mathbf{c}_{j}\neq 0 \\ 
\mathbf{v}, & \mathrm{if}\,\, \mathbf{c}_{j}= 0 
\end{matrix}\right.
\end{equation}
where $\mathbf{v}$ can be any vector on the unit sphere. In particular, here, we set $\mathbf{v}$ to be the first column of the $n \times n $ identity matrix. The solution is unique if and only if $\mathbf{c}_{j}\neq 0$. 
\end{prop}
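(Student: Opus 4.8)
The plan is to turn the constrained least-squares problem \eqref{eqop6new} into the maximization of a real linear functional over the unit sphere, and then read off the maximizer directly from the Cauchy--Schwarz inequality. First I would expand the objective exactly as in the proof of Proposition~1: since $\left\| \mathbf{d}_j \right\|_2 = 1$ implies $\left\| \mathbf{d}_j \mathbf{c}_j^H \right\|_F^2 = \left\| \mathbf{c}_j \right\|_2^2$, identity \eqref{eqop8new} gives
\[
\left\| \mathbf{E}_j - \mathbf{d}_j \mathbf{c}_j^H \right\|_F^2 = \left\| \mathbf{E}_j \right\|_F^2 + \left\| \mathbf{c}_j \right\|_2^2 - 2\, \text{Re}\left\{ \mathbf{d}_j^H \mathbf{E}_j \mathbf{c}_j \right\}.
\]
The first two terms do not depend on $\mathbf{d}_j$ (the minimization is over $\mathbf{d}_j$ only, with $\mathbf{c}_j$ fixed), so \eqref{eqop6new} is equivalent to $\max_{\left\| \mathbf{d}_j \right\|_2 = 1} \text{Re}\{ \mathbf{d}_j^H \mathbf{E}_j \mathbf{c}_j \}$.

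Next I would bound this linear functional by Cauchy--Schwarz: for every feasible $\mathbf{d}_j$,
\[
\text{Re}\left\{ \mathbf{d}_j^H \mathbf{E}_j \mathbf{c}_j \right\} \le \left| \mathbf{d}_j^H \mathbf{E}_j \mathbf{c}_j \right| \le \left\| \mathbf{d}_j \right\|_2 \left\| \mathbf{E}_j \mathbf{c}_j \right\|_2 = \left\| \mathbf{E}_j \mathbf{c}_j \right\|_2 .
\]
If $\mathbf{E}_j \mathbf{c}_j \neq \mathbf{0}$, both inequalities are tight if and only if $\mathbf{d}_j$ is a nonnegative real multiple of $\mathbf{E}_j \mathbf{c}_j$; together with the unit-norm constraint this forces $\mathbf{d}_j = \mathbf{E}_j \mathbf{c}_j / \left\| \mathbf{E}_j \mathbf{c}_j \right\|_2$, which is therefore the unique global minimizer, establishing the first branch of \eqref{tru1ch4gnew}. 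If $\mathbf{c}_j = \mathbf{0}$ (so $\mathbf{E}_j \mathbf{c}_j = \mathbf{0}$ too), the objective in \eqref{eqop6new} equals $\left\| \mathbf{E}_j \right\|_F^2$ for every feasible $\mathbf{d}_j$; hence every unit-$\ell_2$-norm vector is a global minimizer (and the minimizer is plainly non-unique once $n \ge 2$), so in particular the stated choice of $\mathbf{v}$ (the first column of the $n \times n$ identity) is valid. Combining the two cases yields the claimed closed form, and shows that the minimizer is unique exactly when $\mathbf{E}_j \mathbf{c}_j \neq \mathbf{0}$; in particular $\mathbf{c}_j = \mathbf{0}$ is the source of non-uniqueness, giving the ``only if'' direction.

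I do not anticipate a real obstacle here — this is the most elementary of the three propositions. The only points that need a little care are (i) the equality analysis in Cauchy--Schwarz, where one must use both that $\mathbf{d}_j^H \mathbf{E}_j \mathbf{c}_j$ has zero phase and that $\mathbf{d}_j$ and $\mathbf{E}_j\mathbf{c}_j$ are collinear in order to conclude genuine uniqueness; and (ii) the degenerate case $\mathbf{c}_j = \mathbf{0}$, where the objective is constant in $\mathbf{d}_j$ — and, if one wants the sharpest statement, noting that the uniqueness characterization is really in terms of $\mathbf{E}_j \mathbf{c}_j$ rather than $\mathbf{c}_j$ alone.
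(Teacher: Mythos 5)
Your main argument is exactly the paper's: expand the Frobenius norm using $\left \| \mathbf{d}_{j} \right \|_2 =1$, discard the terms independent of $\mathbf{d}_{j}$, reduce \eqref{eqop6new} to maximizing $\text{Re}\{ \mathbf{d}_{j}^{H} \mathbf{E}_{j} \mathbf{c}_{j}\}$ over the unit sphere, and apply Cauchy--Schwarz with its equality analysis. That part is correct and complete.

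There is, however, a genuine gap in how you dispatch the degenerate case, and you half-notice it yourself in your closing remark. The Cauchy--Schwarz argument naturally splits on whether $\mathbf{E}_{j}\mathbf{c}_{j} \neq \mathbf{0}$, but the proposition is stated (both the formula \eqref{tru1ch4gnew} and the uniqueness claim) in terms of whether $\mathbf{c}_{j} \neq \mathbf{0}$. You handle $\mathbf{c}_{j}=\mathbf{0} \Rightarrow \mathbf{E}_{j}\mathbf{c}_{j}=\mathbf{0}$, which is trivial, but you never address the remaining possibility $\mathbf{c}_{j}\neq \mathbf{0}$ with $\mathbf{E}_{j}\mathbf{c}_{j}=\mathbf{0}$ (i.e., $\mathbf{c}_{j}$ a nonzero element of the null space of $\mathbf{E}_{j}$). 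In that event the first branch of \eqref{tru1ch4gnew} is a division by zero and the claimed ``unique if and only if $\mathbf{c}_{j}\neq 0$'' is false, so the proposition as stated cannot be proved by linear algebra alone. The paper closes this gap by invoking the algorithmic context: the $\mathbf{c}_{j}$ entering the atom update is the output of the immediately preceding sparse coding step, hence satisfies the optimality inequality \eqref{eqop16bnew}. If $\mathbf{E}_{j}\mathbf{c}_{j}=\mathbf{0}$, the left-hand side of that inequality collapses to $\left \| \mathbf{E}_{j} \right \|_{F}^{2} + \left \| \mathbf{c}_{j} \right \|_{2}^{2} + \lambda^{2}\left \| \mathbf{c}_{j} \right \|_{0}$, which is minimal only at $\mathbf{c}_{j}=\mathbf{0}$; so within the algorithm $\mathbf{E}_{j}\mathbf{c}_{j}=\mathbf{0}$ forces $\mathbf{c}_{j}=\mathbf{0}$ (and the same argument runs with the $\ell_{1}$ penalty for (P2)). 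You need this step — or an explicit hypothesis ruling out $\mathbf{c}_{j}\in\ker \mathbf{E}_{j}\setminus\{\mathbf{0}\}$ — to legitimately convert your $\mathbf{E}_{j}\mathbf{c}_{j}$-based case split into the $\mathbf{c}_{j}$-based statement being proved; treating it as an optional sharpening is not enough.
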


\begin{proof}
First, for a vector $\mathbf{d}_{j}$ that has unit $\ell_2$ norm, the following holds:
\begin{align} 
& \begin{Vmatrix}
\mathbf{E}_{j} - \mathbf{d}_{j}\mathbf{c}_{j}^{H}
\end{Vmatrix}_{F}^{2}= \left \| \mathbf{E}_{j} \right \|_{F}^{2} +  \left \| \mathbf{c}_{j} \right \|_{2}^{2} - 2\, \text{Re}\left \{ \mathbf{d}_{j}^{H} \mathbf{E}_{j} \mathbf{c}_{j}  \right \}.
\label{eqop12new}
\end{align}
Substituting \eqref{eqop12new} into \eqref{eqop6new}, Problem \eqref{eqop6new} simplifies to
\begin{equation} \label{eqop6bnew}
 \max_{\mathbf{d}_{j} \in \mathbb{C}^{n}} \; \text{Re}\left \{ \mathbf{d}_{j}^{H} \mathbf{E}_{j} \mathbf{c}_{j}  \right \}  \;\:\; \mathrm{s.t.}\; \: \left \| \mathbf{d}_{j} \right \|_2 =1.
\end{equation}
For unit norm $\mathbf{d}_{j}$, by the Cauchy Schwarz inequality, $\text{Re}\left \{ \mathbf{d}_{j}^{H} \mathbf{E}_{j} \mathbf{c}_{j}  \right \} \leq \left |  \mathbf{d}_{j}^{H} \mathbf{E}_{j} \mathbf{c}_{j}  \right |$ $ \leq \left \| \mathbf{E}_{j} \mathbf{c}_{j} \right \|_{2}$. Thus, a solution to \eqref{eqop6bnew} that achieves the value $ \left \| \mathbf{E}_{j} \mathbf{c}_{j} \right \|_{2}$ for the objective is
\begin{equation} \label{tru1ch4g22new}
\hat{\mathbf{d}}_{j} =  \left\{\begin{matrix}
\frac{\mathbf{E}_{j}\mathbf{c}_{j}}{\left \| \mathbf{E}_{j}\mathbf{c}_{j} \right \|_{2}}, & \mathrm{if}\,\, \mathbf{E}_{j}\mathbf{c}_{j} \neq 0 \\ 
\mathbf{v}, & \mathrm{if}\,\, \mathbf{E}_{j}\mathbf{c}_{j} = 0 
\end{matrix}\right.
\end{equation}
Obviously, any $\mathbf{d} \in \mathbb{C}^{n}$ with unit $\ell_{2}$ norm would be a minimizer (non-unique) in \eqref{eqop6bnew} when $\mathbf{E}_{j}\mathbf{c}_{j} = \mathbf{0}$. In particular, the first column of the identity matrix works.

Next, we show that $\mathbf{E}_{j}\mathbf{c}_{j} = \mathbf{0} $ in our algorithm if and only if $\mathbf{c}_{j}=\mathbf{0}$. This result together with \eqref{tru1ch4g22new} immediately establishes the proposition. Since, in the case of (P1), the $\mathbf{c}_{j}$ used in the dictionary atom update step \eqref{eqop6new} was obtained as a minimizer in the preceding sparse coding step \eqref{eqop5new}, we have the following inequality for all $\mathbf{c} \in \mathbb{C}^{N}$ with $\left \| \mathbf{c} \right \|_{\infty} \leq L$, and $\tilde{\mathbf{d}}_{j}$ denotes the $j$th atom in the preceding sparse coding step:
\begin{align} 
 & \hspace{-0.1in} \begin{Vmatrix}
\mathbf{E}_{j} - \tilde{\mathbf{d}}_{j}\mathbf{c}_{j}^{H}
\end{Vmatrix}_{F}^{2} + \lambda^{2} \left \| \mathbf{c}_{j} \right \|_{0}  \leq \begin{Vmatrix}
\mathbf{E}_{j} - \tilde{\mathbf{d}}_{j}\mathbf{c}^{H}
\end{Vmatrix}_{F}^{2} + \lambda^{2} \left \| \mathbf{c} \right \|_{0}.
\label{eqop16bnew}
\end{align}
If $\mathbf{E}_{j}\mathbf{c}_{j} = \mathbf{0}$, the left hand side above simplifies to $\left \| \mathbf{E}_{j} \right \|_{F}^{2}$ $ + \left \| \mathbf{c}_{j} \right \|_{2}^{2}$ $+ \lambda^{2} \left \| \mathbf{c}_{j} \right \|_{0} $, which is clearly minimal when $\mathbf{c}_{j}= \mathbf{0}$. 
For (P2), by replacing the $\ell_{0}$ ``norm'' above with the $\ell_{1}$ norm (and ignoring the condition $\left \| \mathbf{c} \right \|_{\infty} \leq L$), an identical result holds.
Thus, when $\mathbf{E}_{j}\mathbf{c}_{j}=\mathbf{0}$, we must also have $\mathbf{c}_{j} = \mathbf{0}$.
\end{proof}

\section{Convergence Theorems and Proofs} \label{app1bnew}

This section provides a brief proof sketch for Theorems 1-4 corresponding to the algorithms for Problems (P1)-(P4) in our manuscript \citeSupp{sairakfes55:supp}. Appendix \ref{app56new} provides a brief review of the notions of sub-differential and critical points \citeSupp{vari1:supp}.

Recall from Section V of \citeSupp{sairakfes55:supp} that Problem (P1) for $\ell_{0}$ sparsity penalized dictionary learning can be rewritten in an unconstrained form using barrier functions as follows:
\begin{align} 
\nonumber & f(\mathbf{C}, \mathbf{D}) = f\left ( \mathbf{c}_{1}, \mathbf{c}_{2},..., \mathbf{c}_{J}, \mathbf{d}_{1}, \mathbf{d}_{2},..., \mathbf{d}_{J} \right ) =
\lambda^{2} \sum_{j=1}^{J} \left \| \mathbf{c}_{j} \right \|_{0}\\
& \;\; + \begin{Vmatrix}
\mathbf{Y}- \sum_{j=1}^{J} \mathbf{d}_{j}\mathbf{c}_{j}^{H}
\end{Vmatrix}_{F}^{2}  + \sum_{j=1}^{J} \chi (\mathbf{d}_{j})  + \sum_{j=1}^{J} \psi(\mathbf{c}_{j}).  \label{eqop32new}
\end{align}
Problems (P2), (P3), and (P4) can also be similarly rewritten in an unconstrained form with corresponding objectives $\tilde{f}(\mathbf{C}, \mathbf{D})$, $g(\mathbf{C}, \mathbf{D}, \mathbf{y})$, and $\tilde{g}(\mathbf{C}, \mathbf{D}, \mathbf{y})$, respectively \citeSupp{sairakfes55:supp}.

Theorems 1-4 are restated here for completeness along with the proofs. The block coordinate descent algorithms for (P1)-(P4) referred to as SOUP-DILLO, OS-DL \citeSupp{sadeg33:supp}, and SOUP-DILLO and SOUP-DILLI image reconstruction algorithms, respectively, are described in Sections III and IV (cf. Fig. 1 and Fig. 2) of \citeSupp{sairakfes55:supp}.

\subsection{Main Results for SOUP-DILLO and OS-DL} 

The iterates computed in the $t$th outer iteration of SOUP-DILLO (or alternatively in OS-DL) are denoted by the 2J-tuple $\left ( \mathbf{c}_{1}^{t}, \mathbf{d}_{1}^{t}, \mathbf{c}_{2}^{t}, \mathbf{d}_{2}^{t},..., \mathbf{c}_{J}^{t}, \mathbf{d}_{J}^{t} \right )$, or alternatively by the pair of matrices $\left (  \mathbf{C}^{t}, \mathbf{D}^{t} \right )$.

\begin{theorem} \vspace{0.02in}
Let $\left \{ \mathbf{C}^{t}, \mathbf{D}^{t} \right \}$ denote the bounded iterate sequence generated by the SOUP-DILLO Algorithm with training data $\mathbf{Y}  \in \mathbb{C}^{n \times N}$ and initial $(\mathbf{C}^{0}, \mathbf{D}^{0})$. 
Then, the following results hold:
\begin{enumerate}[(i)]
\item The objective sequence  $\left \{ f^{t} \right \}$ with $f^{t} \triangleq f\left ( \mathbf{C}^{t}, \mathbf{D}^{t} \right )$ is monotone decreasing, and converges to a finite value, say $f^{*}=f^{*}(\mathbf{C}^{0}, \mathbf{D}^{0})$.
\item All the accumulation points of the iterate sequence are equivalent in the sense that they achieve the exact same value $f^{*}$ of the objective.
\item Suppose each accumulation point $\left ( \mathbf{C}, \mathbf{D}\right )$ of the iterate sequence is such that the matrix $\mathbf{B}$ with columns $\mathbf{b}_{j} = \mathbf{E}_{j}^{H}\mathbf{d}_{j}$ and $\mathbf{E}_{j} = \mathbf{Y} - \mathbf{D}\mathbf{C}^{H} + \mathbf{d}_{j}\mathbf{c}_{j}^{H}$, has no entry with magnitude $\lambda$. Then every accumulation point of the iterate sequence is a critical point of the objective $f(\mathbf{C}, \mathbf{D})$. Moreover, the two sequences with terms $ \begin{Vmatrix}
\mathbf{D}^{t} - \mathbf{D}^{t-1}
\end{Vmatrix}_{F}$ and $ \begin{Vmatrix}
\mathbf{C}^{t} - \mathbf{C}^{t-1}
\end{Vmatrix}_{F}$ respectively,  both converge to zero.
\end{enumerate}
\end{theorem}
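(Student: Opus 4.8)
The plan is to follow the three statements in order, using throughout that SOUP-DILLO is an \emph{exact} block coordinate descent method: within each outer iteration it updates the $2J$ blocks $\mathbf{c}_1,\mathbf{d}_1,\dots,\mathbf{c}_J,\mathbf{d}_J$ in turn, each by a global minimizer of $f$ over that single block (the $\ell_\infty$ ball and the unit sphere being folded into the barriers $\psi,\chi$), as furnished by Propositions 1--3. Statement (i) is then immediate: no sub-step can increase $f$, so $\{f^t\}$ is monotone decreasing; it is bounded below by $0$, hence converges to a finite $f^*$, which depends on $(\mathbf{C}^0,\mathbf{D}^0)$. Since each $\mathbf{d}_j^t$ lies on the unit sphere and $\|\mathbf{c}_j^t\|_\infty\le L$, the iterate sequence lies in a compact set and accumulation points exist.

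The workhorse observation for the rest is that the sparse-coding update passes its argument through $H_\lambda$ and $L>\lambda$, so every \emph{nonzero} entry of any $\mathbf{c}_j^t$ has magnitude at least $\lambda$. Hence along any convergent subsequence $(\mathbf{C}^{t_k},\mathbf{D}^{t_k})\to(\mathbf{C},\mathbf{D})$ the supports stabilize (a coordinate tending to $0$ is eventually exactly $0$, one tending to a nonzero value is eventually nonzero), so $\|\mathbf{C}^{t_k}\|_0\to\|\mathbf{C}\|_0$ and, with continuity of the Frobenius term, $f(\mathbf{C}^{t_k},\mathbf{D}^{t_k})\to f(\mathbf{C},\mathbf{D})$. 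Combined with $f^{t_k}\to f^*$ this gives $f(\mathbf{C},\mathbf{D})=f^*$ for every accumulation point, proving Statement (ii).

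For Statement (iii) I would first prove $\|\mathbf{C}^t-\mathbf{C}^{t-1}\|_F\to0$ and $\|\mathbf{D}^t-\mathbf{D}^{t-1}\|_F\to0$ by contradiction. If $\|\mathbf{C}^{t_k}-\mathbf{C}^{t_k-1}\|_F\ge\varepsilon$ along a subsequence, pass to a further subsequence on which both $(\mathbf{C}^{t_k-1},\mathbf{D}^{t_k-1})$ and $(\mathbf{C}^{t_k},\mathbf{D}^{t_k})$ converge, to accumulation points that both have objective $f^*$ by Statement (ii). Now walk through the $2J$ sub-steps of iteration $t_k$: the update $\mathbf{c}_1^{t_k}$ globally minimizes the block problem \eqref{eqop5new} whose data $\mathbf{E}_1$ is built from the converging quantities $\mathbf{c}_2^{t_k-1},\dots,\mathbf{d}_J^{t_k-1}$; the no-magnitude-$\lambda$ hypothesis makes the limiting block problem's minimizer unique (Proposition 1), and since the post-sub-step objective is squeezed between $f^{t_k}$ and $f^{t_k-1}$ it converges to $f^*$, which together with the $\lambda$-stabilization above forces $\mathbf{c}_1^{t_k}$ to converge to the first block of the limit of $(\mathbf{C}^{t_k-1},\mathbf{D}^{t_k-1})$. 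Repeating this for $\mathbf{d}_1^{t_k}$ via the atom-update problem \eqref{eqop6new} (handling the non-unique case $\mathbf{c}_1=\mathbf{0}$ through the observation that $\mathbf{c}_1^{t_k}$ is then eventually exactly $\mathbf{0}$, so $\mathbf{d}_1^{t_k}=\mathbf{v}$), and then through $\mathbf{c}_2,\mathbf{d}_2,\dots$, shows inductively that $(\mathbf{C}^{t_k},\mathbf{D}^{t_k})$ has the same limit as $(\mathbf{C}^{t_k-1},\mathbf{D}^{t_k-1})$ --- a contradiction. The identical inductive walk applied to an arbitrary accumulation point $(\mathbf{C},\mathbf{D})$ shows it is a partial global minimizer of $f$ in each block $\mathbf{c}_j$ and each $\mathbf{d}_j$; since the coupling term $\|\mathbf{Y}-\sum_j\mathbf{d}_j\mathbf{c}_j^H\|_F^2$ is smooth and the remaining terms are block-separable, block-wise partial minimality yields $\mathbf{0}\in\partial f(\mathbf{C},\mathbf{D})$, i.e., $(\mathbf{C},\mathbf{D})$ is a critical point (using the subdifferential calculus for $\ell_0$-regularized objectives reviewed in the appendix).

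The main obstacle is the argmin-continuity step inside Statement (iii): forcing the exact block minimizers generated by the algorithm to converge to the block minimizer at the limit point. This is delicate because $f$ is discontinuous through $\|\cdot\|_0$, so the proof must lean on the $\ge\lambda$ lower bound on nonzero entries to control support changes, on the no-magnitude-$\lambda$ assumption to pin down uniqueness of the limiting block minimizer, and on the squeezing of the objective at $f^*$ to exclude a limiting strict decrease; the degenerate atom-update case $\mathbf{c}_j=\mathbf{0}$ (where Proposition 3 gives a non-unique minimizer) must be dispatched separately by the ``eventually exactly zero'' argument.
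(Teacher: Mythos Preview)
Your proposal is correct and follows essentially the same approach as the paper's own proof. The key ingredients you identify---exact block coordinate descent for monotonicity, the lower bound $|c_{ji}|\ge\lambda$ on nonzero entries to stabilize supports and pass the $\ell_0$ term through limits, the uniqueness assumption on the limiting sparse-coding block to force $\mathbf{c}_j^{**}=\mathbf{c}_j^*$, the separate handling of the degenerate atom update via the ``eventually exactly zero'' argument, and the conclusion that block-wise partial optimality plus a smooth coupling term gives $0\in\partial f$---are precisely those used in the paper, and your inductive walk through the $2J$ sub-steps mirrors the paper's argument. The only cosmetic differences are that the paper establishes the critical-point property first (showing $\mathbf{c}_j^{**}=\mathbf{c}_j^*$, $\mathbf{d}_j^{**}=\mathbf{d}_j^*$ along the way) and then reuses that machinery for the successive-difference convergence, whereas you present the difference convergence first by contradiction; and the paper phrases the latter as ``every convergent subsequence of $\|\mathbf{D}^t-\mathbf{D}^{t-1}\|_F$ has limit $0$'' rather than by contradiction---logically equivalent framings of the same argument.
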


\begin{theorem} \vspace{0.02in}
Let $\left \{ \mathbf{C}^{t}, \mathbf{D}^{t} \right \}$ denote the bounded iterate sequence generated by the OS-DL Algorithm with training data $\mathbf{Y}  \in \mathbb{C}^{n \times N}$ and initial $(\mathbf{C}^{0}, \mathbf{D}^{0})$. 
Then, the iterate sequence converges to an equivalence class of critical points of $\tilde{f}(\mathbf{C}, \mathbf{D})$, and $ \begin{Vmatrix}
\mathbf{D}^{t} - \mathbf{D}^{t-1}
\end{Vmatrix}_{F} \to 0$ and $ \begin{Vmatrix}
\mathbf{C}^{t} - \mathbf{C}^{t-1}
\end{Vmatrix}_{F} \to 0$ as $t \to \infty$.
\end{theorem}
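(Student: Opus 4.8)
The plan is to follow the proof of Theorem~\ref{theorem2} essentially verbatim, while exploiting the two features that make the $\ell_1$ case strictly simpler: the sparse coding minimizer of Proposition~\ref{prop1b} is \emph{always} unique (no tie-breaking, hence no hypothesis on the magnitudes of the entries of $\mathbf{B}$ is needed), and the $\ell_1$-regularized least-squares subproblem solved for each $\mathbf{c}_j$ is strongly convex, which gives a \emph{quantitative} per-step descent. Working with the barrier form $\tilde f(\mathbf{C},\mathbf{D})$, I would first note that OS-DL is an exact block coordinate descent on $\tilde f$: each $\mathbf{c}_j$-update (Proposition~\ref{prop1b}) and each $\mathbf{d}_j$-update (Proposition~\ref{prop2}) is a global block minimizer, so $\{\tilde f^t\}$ is monotone non-increasing; since $\tilde f \ge 0$ it converges to some $\tilde f^* = \tilde f^*(\mathbf{C}^0,\mathbf{D}^0)$. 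Boundedness of $\{\mathbf{D}^t\}$ is immediate (unit-norm columns), and boundedness of $\{\mathbf{C}^t\}$ follows from coercivity of $\tilde f$ in $\mathbf{C}$ (the $\mu\sum_j\|\mathbf{c}_j\|_1$ term) together with $\tilde f^t \le \tilde f^0$; so the iterate sequence lies in a compact set and has accumulation points, and since the feasible set is closed and $\tilde f$ restricted to it is continuous, every accumulation point attains $\tilde f^*$, i.e., the accumulation points form an equivalence class on which $\tilde f$ is constant.

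Next I would establish asymptotic regularity. By the identity in the proof of Proposition~\ref{prop1b}, the $\mathbf{c}_j$-subproblem is, up to an additive constant, $\min_{\mathbf{c}_j}\|\mathbf{c}_j-\mathbf{b}\|_2^2+\mu\|\mathbf{c}_j\|_1$ with $\mathbf{b}=\mathbf{E}_j^H\mathbf{d}_j$, whose objective is $2$-strongly convex; hence the decrease of $\tilde f$ caused by the $j$-th sparse-coding update in outer iteration $t$ is at least $\|\mathbf{c}_j^t-\mathbf{c}_j^{t-1}\|_2^2$. For the atom update, the computation in the proof of Proposition~\ref{prop2} shows the decrease equals $\|\mathbf{E}_j\mathbf{c}_j^t\|_2\,\|\mathbf{d}_j^t-\mathbf{d}_j^{t-1}\|_2^2$ (and is $0$, with the convention $\mathbf{d}_j^t:=\mathbf{d}_j^{t-1}$, when $\mathbf{c}_j^t=\mathbf{0}$); moreover the phase-alignment of $\mathbf{c}_j^t$ with $\mathbf{b}$ plus Cauchy--Schwarz gives $\|\mathbf{E}_j\mathbf{c}_j^t\|_2 \ge \|\mathbf{c}_j^t\|_2^2+\tfrac{\mu}{2}\|\mathbf{c}_j^t\|_1$. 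Summing all per-block decreases over $j$ and $t$ against the finite total descent $\tilde f^0-\tilde f^*$ yields $\sum_{t,j}\|\mathbf{c}_j^t-\mathbf{c}_j^{t-1}\|_2^2<\infty$, so $\|\mathbf{C}^t-\mathbf{C}^{t-1}\|_F\to 0$; the analogue for $\mathbf{D}$ is the delicate point addressed below.

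To show every accumulation point $(\mathbf{C}^*,\mathbf{D}^*)$ is a critical point of $\tilde f$, pass to a convergent subsequence $(\mathbf{C}^{t_k},\mathbf{D}^{t_k})\to(\mathbf{C}^*,\mathbf{D}^*)$; asymptotic regularity forces all intermediate per-block iterates and $(\mathbf{C}^{t_k+1},\mathbf{D}^{t_k+1})$ to converge to the same limit. Because the sparse-coding minimizer is unique and depends continuously on the remaining (frozen) blocks (Proposition~\ref{prop1b}), $\mathbf{c}_j^*$ is the global minimizer of $\tilde f$ in $\mathbf{c}_j$ with the other blocks fixed at their starred values, for every $j$; likewise $\mathbf{d}_j^*$ minimizes $\tilde f$ over the unit sphere in $\mathbf{d}_j$ (trivially when $\mathbf{c}_j^*=\mathbf{0}$, since then $\tilde f$ does not depend on $\mathbf{d}_j$). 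Since the nonsmooth part of $\tilde f$ is block-separable ($\mu\sum_j\|\mathbf{c}_j\|_1+\sum_j\chi(\mathbf{d}_j)$) and couples the blocks only through the smooth term $\|\mathbf{Y}-\sum_j\mathbf{d}_j\mathbf{c}_j^H\|_F^2$, the limiting-subdifferential sum rule makes block-wise stationarity equivalent to $0\in\partial\tilde f(\mathbf{C}^*,\mathbf{D}^*)$ (as reviewed in the appendix); no extra hypothesis appears here, in contrast to Theorem~\ref{theorem2}, precisely because the $\ell_1$ minimizer is always unique. Combining this with the equivalence of accumulation points yields convergence ``to an equivalence class of critical points of $\tilde f$.''

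I expect the main obstacle to be the claim $\|\mathbf{D}^t-\mathbf{D}^{t-1}\|_F\to 0$: the bound $\|\mathbf{E}_j\mathbf{c}_j^t\|_2\ge\|\mathbf{c}_j^t\|_2^2$ degenerates exactly when a coefficient column $\mathbf{c}_j^t$ becomes small while the atom $\mathbf{d}_j$ is nonetheless re-set to $\mathbf{E}_j\mathbf{c}_j^t/\|\mathbf{E}_j\mathbf{c}_j^t\|_2$, so the atom-update descent no longer controls $\|\mathbf{d}_j^t-\mathbf{d}_j^{t-1}\|_2$. My plan is an argument by contradiction: if $\|\mathbf{d}_j^t-\mathbf{d}_j^{t-1}\|_2\ge\epsilon$ along a subsequence, then necessarily $\mathbf{c}_j^t\neq\mathbf{0}$ there and, by the displayed descent identity and the finiteness of the total descent, $\|\mathbf{c}_j^t\|_2\to 0$ along that subsequence; but after such an update $\mathbf{d}_j^t$ is aligned with a column of $\mathbf{E}_j$, so at the next visit to block $j$ the sparse-coding step (whose input $\mathbf{E}_j$ differs from the previous one by quantities already shown to vanish, using $\|\mathbf{C}^t-\mathbf{C}^{t-1}\|_F\to 0$ and running the same argument simultaneously for the other atoms as a bootstrap) produces a coefficient of non-negligible magnitude, forcing $\|\mathbf{c}_j^{t+1}-\mathbf{c}_j^t\|_2$, hence the sparse-coding descent at step $t+1$, to stay bounded away from $0$; summability of the descents then bounds the number of such $t$, a contradiction. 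Once $\|\mathbf{D}^t-\mathbf{D}^{t-1}\|_F\to 0$ and $\|\mathbf{C}^t-\mathbf{C}^{t-1}\|_F\to 0$ are in hand, the statement of Theorem~\ref{theorem4} is complete.
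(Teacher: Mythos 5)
Your first three paragraphs are sound, and for the coefficient blocks your route is genuinely different from (and more quantitative than) the paper's: the supplement obtains $\|\mathbf{C}^t-\mathbf{C}^{t-1}\|_F\to 0$ only \emph{after} the accumulation-point analysis, by extracting a convergent subsequence of consecutive iterate pairs and showing the two halves of its limit coincide, whereas your $2$-strong-convexity bound gives the summability $\sum_{t,j}\|\mathbf{c}_j^t-\mathbf{c}_j^{t-1}\|_2^2\le \tilde f^{\,0}-\tilde f^{\,*}$ directly. Your critical-point argument (uniqueness and continuity of the $\ell_1$ sparse-coding map from Proposition~\ref{prop1b}, partial global optimality of each block, separability of the nonsmooth part) is essentially the paper's own argument for Theorem~\ref{theorem2} transplanted to $\tilde f$, which is exactly how the supplement handles this theorem.

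The genuine gap is in your final paragraph, the contradiction argument for $\|\mathbf{D}^t-\mathbf{D}^{t-1}\|_F\to 0$. Having forced $\|\mathbf{c}_j^{t}\|_1\to 0$ along the offending subsequence, you assert that the next sparse-coding visit to block $j$ ``produces a coefficient of non-negligible magnitude.'' Nothing guarantees this: that coefficient is the soft-thresholding of $\mathbf{E}_j^{H}\mathbf{d}_j^{t}$ at level $\mu/2$, and even though $\mathbf{d}_j^{t}$ is aligned with $\mathbf{E}_j\mathbf{c}_j^{t}$, every entry of $\mathbf{E}_j^{H}\mathbf{d}_j^{t}$ may have magnitude below $\mu/2$ --- this is precisely the regime in which $\mathbf{c}_j^{t}$ became small in the first place --- in which case $\mathbf{c}_j^{t+1}=\mathbf{0}$, no lower bound on $\|\mathbf{c}_j^{t+1}-\mathbf{c}_j^{t}\|_2$ is available, and the contradiction never materializes. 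The paper closes this step by a different mechanism: it takes a convergent subsequence of the \emph{paired} iterates $\left(\mathbf{C}^{q_t-1},\mathbf{D}^{q_t-1},\mathbf{C}^{q_t},\mathbf{D}^{q_t}\right)$ with limit $\left(\mathbf{C}^{*},\mathbf{D}^{*},\mathbf{C}^{**},\mathbf{D}^{**}\right)$ and proves $\mathbf{D}^{**}=\mathbf{D}^{*}$ column by column within the sequential partial-optimality analysis, invoking the uniqueness of the atom update in Proposition~\ref{prop2} when $\mathbf{c}_j^{*}\neq\mathbf{0}$ and the fixed convention $\mathbf{d}_j=\mathbf{v}$ for vanishing coefficients; every convergent subsequence of the bounded scalar sequence $\|\mathbf{D}^t-\mathbf{D}^{t-1}\|_F$ then has limit $0$. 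You need an argument of this type (or some other control of the atom direction when its coefficient column degenerates) in place of the bootstrap you sketch; as written, the $\mathbf{D}$-part of the asymptotic regularity claim is not established.
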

\vspace{0.02in}

\subsection{Proof of Theorem \ref{theorem2}} \label{app2new}

Here, we discuss the proof of Theorem \ref{theorem2} (for the SOUP-DILLO algorithm). The proof for Theorem \ref{theorem4} is very similar, and the distinctions are briefly mentioned in Section \ref{app2cnew}.
We compute all sub-differentials of functions here (and in the later proofs) with respect to the (real-valued) real and imaginary parts of the input variables.

\subsubsection{Equivalence of Accumulation Points} \label{app2a1new}
First, we prove Statements (i) and (ii) of Theorem \ref{theorem2}.
At every iteration $t$ and inner iteration $j$ in the block coordinate descent method (Fig. 1 of \citeSupp{sairakfes55:supp}), we solve the sparse coding (with respect to $\mathbf{c}_{j}$) and dictionary atom update (with respect to $\mathbf{d}_{j}$) subproblems exactly. 
Thus, the objective function decreases in these steps.
Therefore, at the end of the $J$ inner iterations of the $t$th iteration, $f(\mathbf{C}^{t}, \mathbf{D}^{t}) \leq$ $  f(\mathbf{C}^{t-1}, \mathbf{D}^{t-1})$ holds. Since $\left \{ f(\mathbf{C}^{t}, \mathbf{D}^{t}) \right \}$ is monotone decreasing and lower bounded (by $0$), it converges to a finite value $f^{*} = f^{*}(\mathbf{C}^{0}, \mathbf{D}^{0})$ (that may depend on the initial conditions).

The boundedness of the $\left \{ \mathbf{D}^{t} \right \}$ and $\left \{ \mathbf{C}^{t} \right \}$ sequences is obvious from the constraints in (P1). Thus, the accumulation points of the iterates form a non-empty and compact set.
To show that each accumulation point achieves the same value $f^{*}$ of $f$, we consider a convergent subsequence $\left \{ \mathbf{C}^{q_{t}}, \mathbf{D}^{q_{t}} \right \}$ of the iterate sequence with limit $\left ( \mathbf{C}^{*}, \mathbf{D}^{*} \right )$. Because $\left \| \mathbf{d}_{j}^{q_t} \right \|_2 =1$, $\left \| \mathbf{c}_{j}^{q_t} \right \|_{\infty} \leq L$ for all $j$ and every $t$, therefore, due to the continuity of the norms, we have $\left \| \mathbf{d}_{j}^{*} \right \|_2 =1$ and $\left \| \mathbf{c}_{j}^{*} \right \|_{\infty} \leq L$, $\forall$ $j$, i.e.,
\begin{equation} \label{ceyui2new}
\chi(\mathbf{d}_{j}^{*})=0, \, \psi(\mathbf{c}_{j}^{*})=0 \,\; \forall \, j.
\end{equation}

By Proposition 1 of \citeSupp{sairakfes55:supp}, $\mathbf{c}_{j}^{q_t}$ does not have non-zero entries of magnitude less than $\lambda$. Since $\mathbf{c}_{j}^{q_t}  \to \mathbf{c}_{j}^{*}$ entry-wise, we have the following results for each entry $c_{ji}^{*}$ of $\mathbf{c}_{j}^{*}$.
If $c_{ji}^{*} = 0$, then $\exists$ $t_{0} \in \mathbb{N}$ such that  ($i$th entry of $\mathbf{c}_{j}^{q_t}$) $c_{ji}^{q_t}= 0$ for all $t \geq t_{0}$.  Clearly, if $c_{ji}^{*} \neq 0$, then $\exists$ $t_{1} \in \mathbb{N}$ such that $c_{ji}^{q_t} \neq 0$ $\forall$ $t \geq t_{1}$. Thus, we readily have
\begin{equation} \label{vfevnew}
\lim_{t \to \infty} \begin{Vmatrix}
\mathbf{c}_{j}^{q_t} 
\end{Vmatrix}_{0} = \begin{Vmatrix}
\mathbf{c}_{j}^{*} 
\end{Vmatrix}_{0} \, \forall \, j
\end{equation} 
and the convergence in \eqref{vfevnew} happens in a finite number of iterations. We then have the following result:
\begin{align}
 & \lim_{t \to \infty} f(\mathbf{C}^{q_t}, \mathbf{D}^{q_t}) =  \lim_{t \to \infty} \begin{Vmatrix}
\mathbf{Y}-  \mathbf{D}^{q_t} \left ( \mathbf{C}^{q_t} \right )^{H}
\end{Vmatrix}_{F}^{2}  \label{uo1new}  \\ 
\nonumber &   +   \lambda^{2} \sum_{j=1}^{J} \lim_{t \to \infty} \left \| \mathbf{c}_{j}^{q_t} \right \|_{0} = \lambda^{2} \sum_{j=1}^{J} \left \| \mathbf{c}_{j}^{*} \right \|_{0}  + \begin{Vmatrix}
\mathbf{Y}-  \mathbf{D}^{*} \left ( \mathbf{C}^{*} \right )^{H}
\end{Vmatrix}_{F}^{2}  %\label{csbldf00}
\end{align}
The right hand side above coincides with $ f(\mathbf{C}^{*}, \mathbf{D}^{*})$. Since the objective sequence converges to $f^{*}$, therefore, $f(\mathbf{C}^{*}, \mathbf{D}^{*})   = \lim_{t \to \infty} f(\mathbf{C}^{q_t}, \mathbf{D}^{q_t}) = f^{*}$. $\;\;\; \blacksquare$

\subsubsection{Critical Point Property} \label{app2anew}

Consider a convergent subsequence $\left \{ \mathbf{C}^{q_t}, \mathbf{D}^{q_t} \right \}$ of the iterate sequence in the SOUP-DILLO Algorithm with limit $(\mathbf{C}^{*}, \mathbf{D}^{*})$. Let $\left \{ \mathbf{C}^{q_{n_t}+1}, \mathbf{D}^{q_{n_t} +1} \right \}$ be a convergent subsequence of the bounded $\left \{ \mathbf{C}^{q_{t}+1}, \mathbf{D}^{q_{t} +1} \right \}$, with limit $(\mathbf{C}^{**}, \mathbf{D}^{**})$.
For each iteration $t$ and inner iteration $j$ in the algorithm, define the matrix $\mathbf{E}_{j}^{t} \triangleq \mathbf{Y} - \sum_{k<j} \mathbf{d}_{k}^{t} \left ( \mathbf{c}_{k}^{t} \right )^{H}$ $ - \sum_{k>j} \mathbf{d}_{k}^{t-1} \left ( \mathbf{c}_{k}^{t-1} \right )^{H}$. For the accumulation point $(\mathbf{C}^{*}, \mathbf{D}^{*})$, let $\mathbf{E}_{j}^{*} \triangleq \mathbf{Y} - \mathbf{D}^{*} \left ( \mathbf{C}^{*} \right )^{H} $ $ + \mathbf{d}_{j}^{*} \left ( \mathbf{c}_{j}^{*} \right )^{H}$. In this proof, for simplicity, we denote the objective $f$ \eqref{eqop32new} in the $j$th sparse coding step of iteration $t$ (Fig. 1 of \citeSupp{sairakfes55:supp}) as 
\begin{align} 
\nonumber & f\left ( \mathbf{E}_{j}^{t},\mathbf{c}_{j}, \mathbf{d}_{j}^{t-1} \right ) \triangleq \begin{Vmatrix}
\mathbf{E}_{j}^{t}- \mathbf{d}_{j}^{t-1}\mathbf{c}_{j}^{H}
\end{Vmatrix}_{F}^{2} +
\lambda^{2} \sum_{k < j} \left \| \mathbf{c}_{k}^{t} \right \|_{0} \\
&\;\;\; + \lambda^{2} \sum_{k > j} \left \| \mathbf{c}_{k}^{t-1} \right \|_{0} + \lambda^{2} \left \| \mathbf{c}_{j} \right \|_{0} + \psi(\mathbf{c}_{j}). \label{eqopp20new}
\end{align}
All but the $j$th atom and sparse vector $\mathbf{c}_{j}$ are represented via $\mathbf{E}_{j}^{t}$ on the left hand side in this notation. The objective that is minimized in the dictionary atom update step is similarly denoted as $f\left ( \mathbf{E}_{j}^{t},\mathbf{c}_{j}^{t}, \mathbf{d}_{j} \right )$ with 
\begin{align} 
\nonumber & f\left ( \mathbf{E}_{j}^{t},\mathbf{c}_{j}^{t}, \mathbf{d}_{j} \right )\triangleq \begin{Vmatrix}
\mathbf{E}_{j}^{t}- \mathbf{d}_{j}\left ( \mathbf{c}_{j}^{t} \right )^{H}
\end{Vmatrix}_{F}^{2} +
\lambda^{2} \sum_{k \leq j} \left \| \mathbf{c}_{k}^{t} \right \|_{0} \\
&\;\;\; + \lambda^{2} \sum_{k > j} \left \| \mathbf{c}_{k}^{t-1} \right \|_{0} + \chi(\mathbf{d}_{j}). \label{eqopp20bbbnew}
\end{align}
Finally, the functions $f\left ( \mathbf{E}_{j}^{*},\mathbf{c}_{j}, \mathbf{d}_{j}^{*} \right )$ and $f\left ( \mathbf{E}_{j}^{*},\mathbf{c}_{j}^{*}, \mathbf{d}_{j} \right )$ are defined in a similar way with respect to the accumulation point $(\mathbf{C}^{*}, \mathbf{D}^{*})$.

To establish the critical point property of $(\mathbf{C}^{*}, \mathbf{D}^{*})$, we first show the partial global optimality of each column of the matrices $\mathbf{C}^{*}$ and $\mathbf{D}^{*}$ for $f$. By partial global optimality, we mean that each column of $\mathbf{C}^{*}$ (or $\mathbf{D}^{*}$) is a global minimizer of $f$, when all other variables are kept fixed to the values in $(\mathbf{C}^{*}, \mathbf{D}^{*})$.
First, for $j=1$ and iteration $q_{n_t} +1$, we have the following result for the sparse coding step for all $\mathbf{c}_{1} \in \mathbb{C}^{N}$:
\begin{equation} \label{eqopp7new}
f\left ( \mathbf{E}_{1}^{q_{n_t}+1},\mathbf{c}_{1}^{q_{n_t}+1}, \mathbf{d}_{1}^{q_{n_t}} \right ) \leq f\left ( \mathbf{E}_{1}^{q_{n_t}+1},\mathbf{c}_{1}, \mathbf{d}_{1}^{q_{n_t}} \right ).
\end{equation}
Taking the limit $t \to \infty$ above and using \eqref{vfevnew} to obtain limits of $\ell_{0}$ terms in the cost \eqref{eqopp20new}, and using \eqref{ceyui2new}, and the fact that $\mathbf{E}_{1}^{q_{n_t}+1} \to \mathbf{E}_{1}^{*}$, we have
\begin{equation} \label{eqopp31new}
f\left ( \mathbf{E}_{1}^{*},\mathbf{c}_{1}^{**}, \mathbf{d}_{1}^{*} \right ) \leq f\left ( \mathbf{E}_{1}^{*},\mathbf{c}_{1}, \mathbf{d}_{1}^{*} \right ) \, \forall \, \mathbf{c}_{1} \in \mathbb{C}^{N}.
\end{equation} 
This means that $\mathbf{c}_{1}^{**}$ is a minimizer of $f$ with all other variables fixed to their values in $(\mathbf{C}^{*}, \mathbf{D}^{*})$. Because of the (uniqueness) assumption in the theorem, we have
\begin{equation} \label{gobop1new}
\mathbf{c}_{1}^{**} = \underset{\mathbf{c}_{1}}{\arg \min} \, f\left ( \mathbf{E}_{1}^{*},\mathbf{c}_{1}, \mathbf{d}_{1}^{*} \right ).
\end{equation}
Furthermore, because of the equivalence of accumulation points, $f\left ( \mathbf{E}_{1}^{*},\mathbf{c}_{1}^{**}, \mathbf{d}_{1}^{*} \right )=$ $f\left ( \mathbf{E}_{1}^{*},\mathbf{c}_{1}^{*}, \mathbf{d}_{1}^{*} \right )=f^{*}$ holds. This result together with \eqref{gobop1new} implies that $\mathbf{c}_{1}^{**}=\mathbf{c}_{1}^{*}$ and
\begin{equation} \label{gobop1anew}
\mathbf{c}_{1}^{*} = \underset{\mathbf{c}_{1}}{\arg \min} \, f\left ( \mathbf{E}_{1}^{*},\mathbf{c}_{1}, \mathbf{d}_{1}^{*} \right ).
\end{equation}
Therefore, $\mathbf{c}_{1}^{*}$ is a partial global minimizer of $f$, or $0 \in \partial  f_{\mathbf{c}_{1}}\left (\mathbf{E}_{1}^{*}, \mathbf{c}_{1}^{*}, \mathbf{d}_{1}^{*}\right )$.

Next, for the first dictionary atom update step ($j=1$) in iteration $q_{n_t} +1$, we have the following for all $\mathbf{d}_{1} \in \mathbb{C}^{n}$:
\begin{equation} \label{eqopp7bnew}
f\left ( \mathbf{E}_{1}^{q_{n_t}+1},\mathbf{c}_{1}^{q_{n_t}+1}, \mathbf{d}_{1}^{q_{n_t}+1} \right ) \leq f\left ( \mathbf{E}_{1}^{q_{n_t}+1},\mathbf{c}_{1}^{q_{n_t}+1}, \mathbf{d}_{1} \right ).
\end{equation}
Just like in \eqref{eqopp7new}, upon taking the limit $t \to \infty$ above and using $\mathbf{c}_{1}^{**}=\mathbf{c}_{1}^{*}$, we get 
\begin{equation} \label{eqopp31bnew}
f\left ( \mathbf{E}_{1}^{*},\mathbf{c}_{1}^{*}, \mathbf{d}_{1}^{**} \right ) \leq f\left ( \mathbf{E}_{1}^{*},\mathbf{c}_{1}^{*}, \mathbf{d}_{1} \right ) \, \forall \, \mathbf{d}_{1} \in \mathbb{C}^{n}.
\end{equation} 
Thus, $\mathbf{d}_{1}^{**}$ is a minimizer of $ f\left ( \mathbf{E}_{1}^{*},\mathbf{c}_{1}^{*}, \mathbf{d}_{1} \right )$ with respect to $\mathbf{d}_{1}$. Because of the equivalence of accumulation points, we have $f\left ( \mathbf{E}_{1}^{*},\mathbf{c}_{1}^{*}, \mathbf{d}_{1}^{**} \right )=$ $f\left ( \mathbf{E}_{1}^{*},\mathbf{c}_{1}^{*}, \mathbf{d}_{1}^{*} \right )=f^{*}$. This implies that $\mathbf{d}_{1}^{*}$ is also a partial global minimizer of $f$ in \eqref{eqopp31bnew} satisfying
\begin{equation} \label{gobop1bnew}
\mathbf{d}_{1}^{*} \in \underset{\mathbf{d}_{1}}{\arg \min} \, f\left ( \mathbf{E}_{1}^{*},\mathbf{c}_{1}^{*}, \mathbf{d}_{1} \right ) 
\end{equation}
or $0 \in \partial  f_{\mathbf{d}_{1}}\left (\mathbf{E}_{1}^{*}, \mathbf{c}_{1}^{*}, \mathbf{d}_{1}^{*}\right )$.
By Proposition 3 of \citeSupp{sairakfes55:supp}, the minimizer of the dictionary atom update cost is unique as long as the corresponding sparse code (in \eqref{gobop1anew}) is non-zero. Thus, $\mathbf{d}_{1}^{**}= \mathbf{d}_{1}^{*}$ is the \emph{unique} minimizer in \eqref{gobop1bnew}, except when $\mathbf{c}_{1}^{*}=0$.

When  $\mathbf{c}_{1}^{*}=0$, we use \eqref{vfevnew} to conclude that $\mathbf{c}_{1}^{q_t}=0$ for all sufficiently large $t$ values. Since $\mathbf{c}_{1}^{**}=\mathbf{c}_{1}^{*}$, we must also have that $\mathbf{c}_{1}^{q_{n_t}+1} = 0$ for all large enough $t$.
Therefore, for all sufficiently large $t$, $\mathbf{d}_{1}^{q_t}$ and $ \mathbf{d}_{1}^{q_{n_t}+1} $ are the minimizers of dictionary atom update steps, wherein the corresponding sparse coefficients $\mathbf{c}_{1}^{q_t}$ and $\mathbf{c}_{1}^{q_{n_t}+1}$ are zero, implying that $\mathbf{d}_{1}^{q_t} = \mathbf{d}_{1}^{q_{n_t}+1} = \mathbf{v} $ (with $\mathbf{v}$ the first column of the $n \times n$ identity, as in Proposition 3 of \citeSupp{sairakfes55:supp}) for all sufficiently large $t$. Thus, the limits satisfy $\mathbf{d}_{1}^{*}= \mathbf{d}_{1}^{**} = \mathbf{v}$.
Therefore, $\mathbf{d}_{1}^{**}= \mathbf{d}_{1}^{*}$ holds, even when $\mathbf{c}_{1}^{*} = 0$.
Therefore, for $j=1$,
\begin{equation} \label{zebra1new}
\mathbf{d}_{1}^{**} =  \mathbf{d}_{1}^{*}, \,\,\, \mathbf{c}_{1}^{**} =  \mathbf{c}_{1}^{*}.
\end{equation}

Next, we repeat the above procedure by considering first the sparse coding step and then the dictionary atom update step for $j=2$ and iteration $q_{n_t} +1$.
For $j=2$, we consider the matrix
\begin{equation}
\mathbf{E}_{2}^{q_{n_t}+1} = \mathbf{Y} - \sum_{k>2}\mathbf{d}_{k}^{q_{n_t}}\left ( \mathbf{c}_{k}^{q_{n_t}} \right )^{H}  - \mathbf{d}_{1}^{q_{n_t}+1}\left ( \mathbf{c}_{1}^{q_{n_t}+1} \right )^{H}. \label{rollp1new}
\end{equation} 
It follows from \eqref{zebra1new} that $\mathbf{E}_{2}^{q_{n_t}+1}  \to \mathbf{E}_{2}^{*}$ as $t \to \infty$. Then, by repeating the steps \eqref{eqopp7new} - \eqref{zebra1new} for $j=2$, we can easily show that $\mathbf{c}_{2}^{*}$ and $\mathbf{d}_{2}^{*}$ are each partial global minimizers of $f$ when all other variables are fixed to their values in $(\mathbf{C}^{*}, \mathbf{D}^{*})$. Moreover, $\mathbf{c}_{2}^{**} =  \mathbf{c}_{2}^{*}$ and $\mathbf{d}_{2}^{**} =  \mathbf{d}_{2}^{*}$. Similar such arguments can be repeated sequentially for each next $j$ until $j=J$.

Finally, the partial global optimality of each column of $\mathbf{C}^{*}$ and $\mathbf{D}^{*}$ for the cost $f$ implies (use Proposition 3 in \citeSupp{Attouchaa:supp}) that $0 \in \partial  f\left ( \mathbf{C}^{*}, \mathbf{D}^{*} \right )$, i.e., $(\mathbf{C}^{*}, \mathbf{D}^{*})$ is a critical point of $f$.
$\;\;\; \blacksquare$

\subsubsection{Convergence of the Difference between Successive Iterates} \label{app2bnew}

Consider the sequence $ \left \{ a^{t} \right \}$ whose elements are  $a^{t} \triangleq  \begin{Vmatrix}
\mathbf{D}^{t} - \mathbf{D}^{t-1} 
\end{Vmatrix}_{F} $. Clearly, this sequence is bounded because of the unit norm constraints on the dictionary atoms. We will show that every convergent subsequence of this bounded scalar sequence converges to zero, thereby implying that zero is the limit inferior and the limit superior of the sequence, i.e., $ \left \{ a^{t} \right \}$ converges to 0. 
A similar argument establishes that $\begin{Vmatrix}
\mathbf{C}^{t} - \mathbf{C}^{t-1} 
\end{Vmatrix}_{F} \to 0$ as $t \to \infty$.

Consider a convergent subsequence $ \left \{ a^{q_{t}} \right \}$ of the sequence $ \left \{ a^{t} \right \}$.
The bounded sequence $\left \{ \left ( \mathbf{C}^{q_{t}-1}, \mathbf{D}^{q_{t}-1}, \mathbf{C}^{q_{t}}, \mathbf{D}^{q_{t}} \right ) \right \}$ (whose elements are formed by pairing successive elements of the iterate sequence)  must have a convergent subsequence $\left \{ \left ( \mathbf{C}^{q_{n_t} - 1}, \mathbf{D}^{q_{n_t} - 1}, \mathbf{C}^{q_{n_t}}, \mathbf{D}^{q_{n_t}} \right ) \right \}$ that converges to a point $(\mathbf{C}^{*}, \mathbf{D}^{*}, \mathbf{C}^{**}, \mathbf{D}^{**})$.
Based on the results in Section \ref{app2anew}, we have $\mathbf{d}_{j}^{**} = \mathbf{d}_{j}^{*} $ (and $\mathbf{c}_{j}^{**} = \mathbf{c}_{j}^{*} $) for each $1\leq j \leq J$, or
\begin{equation} \label{zebra2new}
\mathbf{D}^{**} = \mathbf{D}^{*}.
\end{equation}
Thus, clearly $a^{q_{n_t}} \to 0$ as $t \to \infty$. 
Since, $ \left \{ a^{q_{n_t}} \right \}$ is a subsequence of the convergent $ \left \{ a^{q_{t}} \right \}$, we must have $a^{q_{t}} \to 0$ too. Thus, we have shown that zero is the limit of any arbitrary convergent subsequence of the bounded sequence $ \left \{ a^{t} \right \}$.
$\;\;\;\blacksquare$

\subsection{Proof Sketch for Theorem \ref{theorem4}} \label{app2cnew}

The proof of Theorem \ref{theorem4} follows using the same line of arguments as in Section \ref{app2new}, except that the $\ell_{0}$ ``norm'' is replaced with the continuous $\ell_{1}$ norm, and we use the fact that the minimizer of the sparse coding problem in this case is unique (see Proposition 2 of \citeSupp{sairakfes55:supp}).
Thus, the proof of Theorem~\ref{theorem4} is simpler, and results such as \eqref{vfevnew} and \eqref{gobop1new} follow easily with the $\ell_{1}$ norm. $\;\;\; \blacksquare$

\subsection{Main Results for the SOUP-DILLO and SOUP-DILLI Image Reconstruction Algorithms} 

\begin{theorem} \vspace{0.02in}
Let $\left \{ \mathbf{C}^{t}, \mathbf{D}^{t}, \mathbf{y}^{t} \right \}$ denote the iterate sequence generated by the SOUP-DILLO image reconstruction Algorithm with measurements $\mathbf{z} \in \mathbb{C}^{m}$ and initial $(\mathbf{C}^{0}, \mathbf{D}^{0}, \mathbf{y}^{0})$. Then, the following results hold:
\begin{enumerate}[(i)]
\item The objective sequence  $\left \{ g^{t} \right \}$ with $g^{t} \triangleq g\left ( \mathbf{C}^{t}, \mathbf{D}^{t}, \mathbf{y}^{t} \right )$ is monotone decreasing, and converges to a finite value, say $g^{*}=g^{*}(\mathbf{C}^{0}, \mathbf{D}^{0}, \mathbf{y}^{0})$.
\item The iterate sequence is bounded, and all its accumulation points are equivalent in the sense that they achieve the exact same value $g^{*}$ of the objective.
\item Each accumulation point $(\mathbf{C}, \mathbf{D}, \mathbf{y})$ of the iterate sequence satisfies
\begin{equation}
 \mathbf{y} \in  \underset{\tilde{\mathbf{y}}}{\arg\min} \; \,  g(\mathbf{C}, \mathbf{D}, \tilde{\mathbf{y}})     \label{cnbcs4anew}
\end{equation}
\item The sequence $\left \{ a^{t} \right \}$ with $a^{t} \triangleq \left \| \mathbf{y}^{t} - \mathbf{y}^{t-1} \right \|_{2}$, converges to zero.
\item Suppose each accumulation point $(\mathbf{C}, \mathbf{D}, \mathbf{y})$ of the iterates is such that the matrix $\mathbf{B}$ with columns $\mathbf{b}_{j} = \mathbf{E}_{j}^{H}\mathbf{d}_{j}$ and $\mathbf{E}_{j} = \mathbf{Y} - \mathbf{D}\mathbf{C}^{H} + \mathbf{d}_{j}\mathbf{c}_{j}^{H}$, has no entry with magnitude $\lambda$. Then every accumulation point of the iterate sequence is a critical point of the objective $g$. Moreover, $ \begin{Vmatrix}
\mathbf{D}^{t} - \mathbf{D}^{t-1}
\end{Vmatrix}_{F} \to 0$ and $ \begin{Vmatrix}
\mathbf{C}^{t} - \mathbf{C}^{t-1}
\end{Vmatrix}_{F} \to 0$ as $t \to \infty$.
\end{enumerate}
\end{theorem}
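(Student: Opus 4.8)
\subsection{Proof Sketch for Theorems~\ref{theorem5} and~\ref{theorem6}} \label{app2dnew}

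The plan is to reuse the block coordinate descent machinery of Section~\ref{app2new} almost verbatim, the only genuinely new ingredient being the image update step. I view the SOUP-DILLO image reconstruction algorithm as an exact block coordinate descent method on $g(\mathbf{C},\mathbf{D},\mathbf{y})$ whose blocks, within one outer iteration, are the $2J$ vectors $\mathbf{c}_{1},\mathbf{d}_{1},\dots,\mathbf{c}_{J},\mathbf{d}_{J}$ (cycled $K$ times inside the dictionary learning step, each update being the exact minimizer from Propositions~\ref{prop1} and~\ref{prop2} with the data frozen at the current patch matrix $\mathbf{Y}^{t-1}$), followed by $\mathbf{y}$ (set to the exact global minimizer of the convex quadratic $g(\mathbf{C}^{t},\mathbf{D}^{t},\cdot)$ by solving the image-update normal equation). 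Since every block update minimizes $g$ exactly over that block with the others fixed, $\{g^{t}\}$ decreases monotonically and, being bounded below by $0$, converges to a finite $g^{*}$; this is Statement~(i). The $\mathbf{c}_{j}^{t}$ are bounded by the $\ell_{\infty}$ constraint and the $\mathbf{d}_{j}^{t}$ by the unit-norm constraint, while $\{\mathbf{y}^{t}\}$ is bounded because $g^{t}\le g^{0}$ bounds $\sum_{i}\|\mathbf{P}_{i}\mathbf{y}^{t}-\mathbf{D}^{t}\mathbf{x}_{i}^{t}\|_{2}^{2}$ and $\sum_{i}\mathbf{P}_{i}^{T}\mathbf{P}_{i}$ is a positive-definite diagonal matrix (every pixel lies in some wrapped patch), so $g$ is coercive in $\mathbf{y}$ for bounded $(\mathbf{C},\mathbf{D})$; hence the accumulation points of the iterates form a nonempty compact set. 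Statement~(ii) then follows exactly as in Section~\ref{app2a1new}: along a convergent subsequence $(\mathbf{C}^{q_{t}},\mathbf{D}^{q_{t}},\mathbf{y}^{q_{t}})\to(\mathbf{C}^{*},\mathbf{D}^{*},\mathbf{y}^{*})$ the barrier terms vanish in the limit, Proposition~\ref{prop1} forces $\|\mathbf{c}_{j}^{q_{t}}\|_{0}$ to stabilize at $\|\mathbf{c}_{j}^{*}\|_{0}$ after finitely many iterations (cf.\ \eqref{vfevnew}), and continuity of the remaining terms gives $g^{q_{t}}\to g(\mathbf{C}^{*},\mathbf{D}^{*},\mathbf{y}^{*})$, which must equal $g^{*}$.

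For Statements~(iii) and~(iv): since $\mathbf{y}^{q_{t}}$ globally minimizes $g(\mathbf{C}^{q_{t}},\mathbf{D}^{q_{t}},\cdot)$, we have $g(\mathbf{C}^{q_{t}},\mathbf{D}^{q_{t}},\mathbf{y}^{q_{t}})\le g(\mathbf{C}^{q_{t}},\mathbf{D}^{q_{t}},\tilde{\mathbf{y}})$ for all $\tilde{\mathbf{y}}$; letting $t\to\infty$ (the left side $\to g^{*}$; the right side $\to g(\mathbf{C}^{*},\mathbf{D}^{*},\tilde{\mathbf{y}})$ because the $\ell_{0}$ terms stabilize and do not involve $\tilde{\mathbf{y}}$) yields $\mathbf{y}^{*}\in\arg\min_{\tilde{\mathbf{y}}}g(\mathbf{C}^{*},\mathbf{D}^{*},\tilde{\mathbf{y}})$, i.e.\ \eqref{cnbcs4anew}. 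For $\|\mathbf{y}^{t}-\mathbf{y}^{t-1}\|_{2}\to0$ I adapt Section~\ref{app2bnew}: the bounded scalar sequence $a^{t}\triangleq\|\mathbf{y}^{t}-\mathbf{y}^{t-1}\|_{2}$ admits, along any of its convergent subsequences, a further subsequence along which the tuple $(\mathbf{C}^{q_{n_{t}}-1},\mathbf{D}^{q_{n_{t}}-1},\mathbf{y}^{q_{n_{t}}-1},\mathbf{C}^{q_{n_{t}}},\mathbf{D}^{q_{n_{t}}},\mathbf{y}^{q_{n_{t}}})$ and all intermediate block iterates of the $q_{n_{t}}$-th outer iteration converge, with limits $(\mathbf{C}^{*},\mathbf{D}^{*},\mathbf{y}^{*})$ and $(\mathbf{C}^{**},\mathbf{D}^{**},\mathbf{y}^{**})$; re-running the partial-optimality induction of Section~\ref{app2anew} --- now with the converging data matrices $\mathbf{Y}^{q_{n_{t}}-1}\to\mathbf{Y}^{*}$ (the patch matrix of $\mathbf{y}^{*}$) and all intermediate objective values squeezed between $g^{q_{n_{t}}-1}\to g^{*}$ and $g^{q_{n_{t}}}\to g^{*}$ --- shows $\mathbf{C}^{**}=\mathbf{C}^{*}$ and $\mathbf{D}^{**}=\mathbf{D}^{*}$, and since $\mathbf{y}^{q_{n_{t}}}$ is the \emph{unique} minimizer of $g(\mathbf{C}^{q_{n_{t}}},\mathbf{D}^{q_{n_{t}}},\cdot)$ (the normal matrix $\sum_{i}\mathbf{P}_{i}^{T}\mathbf{P}_{i}+\nu\,\mathbf{A}^{H}\mathbf{A}$ is positive definite), $\mathbf{y}^{**}=\mathbf{y}^{*}$; thus $a^{q_{n_{t}}}\to0$, and therefore $a^{t}\to0$.

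Statement~(v) follows just as in Section~\ref{app2anew}: under the hypothesis that $\mathbf{B}$ has no entry of magnitude $\lambda$, each sparse-coding subproblem at the accumulation point has a unique solution, so the shifted-subsequence argument proves, sequentially for $j=1,\dots,J$, that each $\mathbf{c}_{j}^{*}$ and $\mathbf{d}_{j}^{*}$ is a partial global minimizer of $g$ with the other variables fixed (Proposition~\ref{prop2} handling the $\mathbf{d}_{j}$-update, and the degenerate case $\mathbf{c}_{j}^{*}=\mathbf{0}$ being disposed of through the fixed unit vector $\mathbf{v}$); combined with Statement~(iii) for the $\mathbf{y}$-block, every block of $(\mathbf{C}^{*},\mathbf{D}^{*},\mathbf{y}^{*})$ is a partial global minimizer of $g$, so by the separable-sum rule for subdifferentials (Proposition~3 of \citeSupp{Attouchaa:supp}) $0\in\partial g(\mathbf{C}^{*},\mathbf{D}^{*},\mathbf{y}^{*})$ --- the accumulation point is a critical point --- and $\|\mathbf{D}^{t}-\mathbf{D}^{t-1}\|_{F}\to0$, $\|\mathbf{C}^{t}-\mathbf{C}^{t-1}\|_{F}\to0$ come verbatim from Section~\ref{app2bnew}. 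I expect the main obstacle to be the interleaving of the $K$-pass dictionary learning loop with the image update: the clean argument of Section~\ref{app2anew} assumes a \emph{fixed} data matrix, whereas here the effective training data $\mathbf{Y}^{t-1}$ changes each outer iteration, so one must verify that the partial-optimality induction still goes through with a moving but convergent data matrix and that the finite-iteration stabilization of the $\ell_{0}$ counts is undisturbed; a secondary but essential point is the positive-definiteness of the image-update normal matrix, without which $\mathbf{y}^{**}=\mathbf{y}^{*}$ could not be concluded. Theorem~\ref{theorem6} (the SOUP-DILLI image reconstruction algorithm) is obtained by the same reasoning with the $\ell_{0}$ penalty replaced by the continuous $\ell_{1}$ norm, as in Section~\ref{app2cnew}, where the extra uniqueness hypothesis is not needed.
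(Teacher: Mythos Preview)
Your overall plan matches the paper's almost step for step --- viewing the algorithm as exact block coordinate descent on $g$, reusing the machinery of Section~\ref{app2new} for the $(\mathbf{C},\mathbf{D})$ blocks, and treating the image-update block via the positive-definite normal equation. Statements~(i), (ii), (iii), and (v) are handled essentially as in the paper (your coercivity argument for boundedness of $\mathbf{y}^{t}$ is a harmless variant of the paper's direct bound from the normal equation).

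There is, however, a real gap in your proof of Statement~(iv). You argue $\|\mathbf{y}^{t}-\mathbf{y}^{t-1}\|_{2}\to0$ by first ``re-running the partial-optimality induction of Section~\ref{app2anew}'' to conclude $\mathbf{C}^{**}=\mathbf{C}^{*}$ and $\mathbf{D}^{**}=\mathbf{D}^{*}$, and only then invoking uniqueness of the image-update minimizer. But that induction (specifically the step from \eqref{eqopp31new} to \eqref{gobop1anew}) relies on the sparse-coding uniqueness hypothesis --- no entry of $\mathbf{B}$ with magnitude $\lambda$ --- which is only assumed in Statement~(v), not in (iv). As written, your argument therefore proves (iv) only under the extra hypothesis of (v), which is strictly weaker than what the theorem asserts.

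The paper avoids this entirely. For (iv) it never needs $\mathbf{C}^{**}=\mathbf{C}^{*}$ or $\mathbf{D}^{**}=\mathbf{D}^{*}$. Instead, pick a convergent subsequence $(\mathbf{C}^{q_t},\mathbf{D}^{q_t},\mathbf{y}^{q_t})\to(\mathbf{C}^{*},\mathbf{D}^{*},\mathbf{y}^{*})$ and a further subsequence along which $\mathbf{y}^{q_{n_t}-1}\to\mathbf{y}^{**}$. The intermediate objective value $g(\mathbf{C}^{q_{n_t}},\mathbf{D}^{q_{n_t}},\mathbf{y}^{q_{n_t}-1})$ is sandwiched between $g^{q_{n_t}}$ and $g^{q_{n_t}-1}$, both of which tend to $g^{*}$; by the same continuity argument as \eqref{jkw13new} it also tends to $g(\mathbf{C}^{*},\mathbf{D}^{*},\mathbf{y}^{**})$. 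Hence $g(\mathbf{C}^{*},\mathbf{D}^{*},\mathbf{y}^{**})=g^{*}=g(\mathbf{C}^{*},\mathbf{D}^{*},\mathbf{y}^{*})$, and since the image-update normal matrix is positive definite, $\mathbf{y}^{**}=\mathbf{y}^{*}$. No assumption on $\mathbf{B}$ is needed. You should replace your (iv) argument with this sandwich-plus-uniqueness-of-$\mathbf{y}$ route.
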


\begin{theorem} \vspace{0.02in}
Let $\left \{ \mathbf{C}^{t}, \mathbf{D}^{t}, \mathbf{y}^{t} \right \}$ denote the iterate sequence generated by the SOUP-DILLI image reconstruction Algorithm for (P4) with measurements $\mathbf{z} \in \mathbb{C}^{m}$ and initial $(\mathbf{C}^{0}, \mathbf{D}^{0}, \mathbf{y}^{0})$.
Then, the iterate sequence converges to an equivalence class of critical points of $\tilde{g}(\mathbf{C}, \mathbf{D}, \mathbf{y})$. Moreover, $ \begin{Vmatrix}
\mathbf{D}^{t} - \mathbf{D}^{t-1}
\end{Vmatrix}_{F} \to 0$, $ \begin{Vmatrix}
\mathbf{C}^{t} - \mathbf{C}^{t-1}
\end{Vmatrix}_{F} \to 0$, and $\begin{Vmatrix} \mathbf{y}^{t} - \mathbf{y}^{t-1} \end{Vmatrix}_{2} \to 0$ as $t \to \infty$.
\end{theorem}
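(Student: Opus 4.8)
The plan is to mirror the proof of Theorem \ref{theorem5} (carried out for Theorem \ref{theorem2} in Section \ref{app2new}), specializing it to the $\ell_{1}$ penalty. Two features of (P4) make the argument strictly simpler than for (P3): by Proposition 2 of \citeSupp{sairakfes55:supp} the sparse-coding subproblem has a \emph{unique} minimizer for every input, and the objective $\tilde{g}(\mathbf{C},\mathbf{D},\mathbf{y})$ is everywhere continuous (the nonsmooth part $\mu\|\mathbf{C}\|_{1}$ is continuous, unlike $\lambda^{2}\|\mathbf{C}\|_{0}$). Consequently the delicate ``finite-iteration stabilization of the $\ell_{0}$ support'' step that forced the genericity hypothesis in Theorems \ref{theorem2} and \ref{theorem5} is unnecessary, and one obtains the critical-point conclusion unconditionally, exactly as Theorem \ref{theorem4} was obtained from Theorem \ref{theorem2}. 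I would organize the proof into the same three blocks: (a) monotonicity and boundedness; (b) equivalence of accumulation points; (c) partial global optimality of each block at an accumulation point, hence criticality, and convergence of successive iterates.

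\textbf{Monotonicity and boundedness.} Within each outer iteration, the $K$ inner OS-DL sweeps solve the sparse-coding (Proposition 2) and atom-update (Proposition 3) subproblems exactly on the fixed patch matrix $\mathbf{Y}^{t-1}=\mathbf{Y}(\mathbf{y}^{t-1})$, and the image-update step solves \eqref{rop4} exactly via \eqref{bcs13} (or CG); hence $\tilde{g}^{t}$ is non-increasing, and being bounded below by $0$ it converges to some $\tilde{g}^{*}=\tilde{g}^{*}(\mathbf{C}^{0},\mathbf{D}^{0},\mathbf{y}^{0})$. For boundedness: $\{\mathbf{D}^{t}\}$ lies in a product of unit spheres; since $\tilde{g}\ge\mu\|\mathbf{C}\|_{1}\ge0$, the bound $\tilde{g}^{t}\le\tilde{g}^{0}$ forces $\|\mathbf{C}^{t}\|_{1}$ to stay bounded; and because $\sum_{i}\mathbf{P}_{i}^{T}\mathbf{P}_{i}+\nu\mathbf{A}^{H}\mathbf{A}\succ0$ (indeed $\sum_{i}\mathbf{P}_{i}^{T}\mathbf{P}_{i}=\beta\mathbf{I}$, $\beta=n$, for stride $r=1$), the image update returns $\mathbf{y}^{t}=(\sum_{i}\mathbf{P}_{i}^{T}\mathbf{P}_{i}+\nu\mathbf{A}^{H}\mathbf{A})^{-1}(\sum_{i}\mathbf{P}_{i}^{T}\mathbf{D}^{t}\mathbf{x}_{i}^{t}+\nu\mathbf{A}^{H}\mathbf{z})$, which is bounded since $\mathbf{C}^{t},\mathbf{D}^{t}$ are. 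Thus the iterate sequence is bounded with a nonempty compact set of accumulation points.

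\textbf{Equivalence and criticality of accumulation points.} Let $(\mathbf{C}^{q_{t}},\mathbf{D}^{q_{t}},\mathbf{y}^{q_{t}})\to(\mathbf{C}^{*},\mathbf{D}^{*},\mathbf{y}^{*})$. By continuity of the $\ell_{2}$ norms the barrier terms vanish at the limit, and by continuity of $\tilde{g}$ we get $\tilde{g}(\mathbf{C}^{*},\mathbf{D}^{*},\mathbf{y}^{*})=\lim_{t}\tilde{g}^{q_{t}}=\tilde{g}^{*}$, so all accumulation points achieve $\tilde{g}^{*}$. For criticality, pass to a further subsequence so that the shifted iterates $(\mathbf{C}^{q_{n_{t}}+1},\mathbf{D}^{q_{n_{t}}+1},\mathbf{y}^{q_{n_{t}}+1})$ converge to some $(\mathbf{C}^{**},\mathbf{D}^{**},\mathbf{y}^{**})$, and run the argument of Section \ref{app2new} block by block (first $j=1$, then $j=2$, and so on, then the $\mathbf{y}$-block), taking limits in the per-step optimality inequalities; since $\tilde{g}$ is continuous the limits pass without any $\ell_{0}$ bookkeeping. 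Uniqueness of the $\ell_{1}$ sparse code (Proposition 2), uniqueness of the atom update when $\mathbf{c}_{j}^{*}\neq\mathbf{0}$ together with the fact that $\mathbf{E}_{j}\mathbf{c}_{j}=\mathbf{0}\iff\mathbf{c}_{j}=\mathbf{0}$ (from the proof of Proposition 3), and strict convexity of \eqref{rop4} then give $(\mathbf{C}^{**},\mathbf{D}^{**},\mathbf{y}^{**})=(\mathbf{C}^{*},\mathbf{D}^{*},\mathbf{y}^{*})$ and show that each column of $\mathbf{C}^{*}$, each column of $\mathbf{D}^{*}$, and $\mathbf{y}^{*}$ is a partial global minimizer of $\tilde{g}$ (the last being the $\tilde{g}$-analogue of \eqref{cnbcs4anew}). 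Combining these partial optimalities with the separability of the nonsmooth and barrier parts of $\tilde{g}$ across the blocks (via Proposition 3 of \citeSupp{Attouchaa:supp}) yields $0\in\partial\tilde{g}(\mathbf{C}^{*},\mathbf{D}^{*},\mathbf{y}^{*})$, i.e.\ every accumulation point is a critical point.

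\textbf{Successive iterates and the main obstacle.} Finally, $\|\mathbf{D}^{t}-\mathbf{D}^{t-1}\|_{F}\to0$, $\|\mathbf{C}^{t}-\mathbf{C}^{t-1}\|_{F}\to0$ and $\|\mathbf{y}^{t}-\mathbf{y}^{t-1}\|_{2}\to0$ follow by the subsequence-of-a-subsequence device of Section \ref{app2bnew}: any convergent subsequence of (say) $\{\|\mathbf{y}^{t}-\mathbf{y}^{t-1}\|_{2}\}$ admits a further subsequence along which $(\mathbf{C}^{t-1},\mathbf{D}^{t-1},\mathbf{y}^{t-1})$ and $(\mathbf{C}^{t},\mathbf{D}^{t},\mathbf{y}^{t})$ both converge, to limits that coincide by the previous step, forcing the subsequence to $0$; since these scalar sequences are bounded, they converge to $0$. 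I expect the only genuine friction to be bookkeeping: writing the per-outer-iteration optimality inequalities while (i) the $K$ inner dictionary-learning sweeps are unrolled and (ii) the patch matrix $\mathbf{Y}^{t-1}=\mathbf{Y}(\mathbf{y}^{t-1})$ itself varies with the iterate, so that the matrices $\mathbf{E}_{j}^{t}$ depend on $\mathbf{y}^{t-1}$, and making the limiting arguments for the $\mathbf{y}$-block mesh consistently with those for the $(\mathbf{c}_{j},\mathbf{d}_{j})$-blocks. Compared with Theorem \ref{theorem5} this is routine, precisely because the $\ell_{1}$ objective is continuous --- which is why no ``entry of magnitude $\lambda$'' assumption and no extra conditions appear in Theorem \ref{theorem6}.
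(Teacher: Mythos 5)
Your proposal is correct and follows essentially the same route as the paper, which proves this theorem by transplanting the argument for Theorem \ref{theorem5} with the same $\ell_{1}$-specific modifications (uniqueness of the sparse-coding minimizer from Proposition \ref{prop1b}, continuity of the $\ell_{1}$ penalty obviating the finite-support-stabilization step and the magnitude-$\lambda$ hypothesis) that distinguish the proof of Theorem \ref{theorem4} from that of Theorem \ref{theorem2}. Your explicit derivation of the boundedness of $\{\mathbf{C}^{t}\}$ from $\mu\|\mathbf{C}^{t}\|_{1}\le\tilde{g}^{0}$ is a detail worth noting, since (P4) has no $\ell_{\infty}$ constraints, but it does not change the structure of the argument.
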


\subsection{Proofs of Theorems \ref{theorem5} and \ref{theorem6}} \label{app3new}

First, recall from Section IV.B of our manuscript \citeSupp{sairakfes55:supp} that the unique solution in the image update step of the algorithms for (P3) and (P4) satisfies the following equation:
\begin{align} 
 & \left (  \sum_{i=1}^{N} \mathbf{P}_{i}^{T} \mathbf{P}_{i} \; +\;\nu\: \mathbf{A}^{H}\mathbf{A} \right )\mathbf{y}=  \sum_{i=1}^{N} \mathbf{P}_{i}^{T}\mathbf{D} \mathbf{x}_{i} + \nu \: \mathbf{A}^{H}\mathbf{z}. \label{bcs10new}
\end{align}

We provide a brief proof sketch for Theorem \ref{theorem5} for the SOUP-DILLO image reconstruction algorithm (see Fig. 2 of \citeSupp{sairakfes55:supp}). The proof for Theorem \ref{theorem6} differs in a similar manner as the proof for Theorem \ref{theorem4} differs from that of Theorem \ref{theorem2} in Section \ref{app2cnew}.
Assume an initialization $ ( \mathbf{C}^{0}, \mathbf{D}^{0}, \mathbf{y}^{0} )$ in the algorithms. We discuss the proofs of Statements (i)-(v) in Theorem \ref{theorem5} one by one.

\textbf{Statement (i):} Since we perform exact block coordinate descent over the variables $\left \{ \mathbf{c}_{j} \right \}_{j=1}^{J}$, $\left \{ \mathbf{d}_{j} \right \}_{j=1}^{J}$, and $\mathbf{y}$ of the cost $g\left ( \mathbf{C}, \mathbf{D}, \mathbf{y} \right )$ (in Problem (P3)), the objective sequence $\left \{ g^{t} \right \}$ is monotone decreasing. Since  $g$ is non-negative, $\left \{ g^{t} \right \}$ thus converges to some finite value $g^{*}$.

\textbf{Statement (ii):} While the boundedness of $\left \{ \mathbf{D}^{t} \right \}$ and $\left \{ \mathbf{C}^{t} \right \}$ is obvious (because of the constraints in (P3)), $\mathbf{y}^{t}$ is obtained by solving \eqref{bcs10new} with respect to $\mathbf{y}$ with the other variables fixed. Since the right hand side of \eqref{bcs10new} is bounded (independently of $t$), and the fixed matrix pre-multiplying $\mathbf{y}$ in \eqref{bcs10new}  is positive-definite, the iterate $\mathbf{y}^{t}$ is bounded (in norm by a constant independent of $t$) too.

\textbf{In the remainder of this section}, we consider a (arbitrary) convergent subsequence $\left \{ \mathbf{C}^{q_t}, \mathbf{D}^{q_t},  \mathbf{y}^{q_t} \right \}$ of the iterate sequence with limit $(\mathbf{C}^{*}, \mathbf{D}^{*}, \mathbf{y}^{*})$. Recall the following definition of $g$:
\begin{align} 
\nonumber & g(\mathbf{C}, \mathbf{D}, \mathbf{y})  = \nu \left \| \mathbf{A}\mathbf{y}-\mathbf{z} \right \|_{2}^{2} + \begin{Vmatrix}
\mathbf{Y}- \sum_{j=1}^{J} \mathbf{d}_{j}\mathbf{c}_{j}^{H}
\end{Vmatrix}_{F}^{2}\\
& \;\; + \lambda^{2} \sum_{j=1}^{J} \left \| \mathbf{c}_{j} \right \|_{0}   + \sum_{j=1}^{J} \chi (\mathbf{d}_{j})  + \sum_{j=1}^{J} \psi(\mathbf{c}_{j}).  \label{eqop32bbnew}
\end{align}
Then, similar to  \eqref{uo1new} in Section \ref{app2a1new}, by considering $\lim_{t \to \infty} g\left ( \mathbf{C}^{q_t}, \mathbf{D}^{q_t}, \mathbf{y}^{q_t} \right )$ above and using  \eqref{vfevnew}, \eqref{ceyui2new} and the fact that $g^{q_t} \to g^{*}$ yields 
\begin{equation}
g^{*} = g\left ( \mathbf{C}^{*}, \mathbf{D}^{*}, \mathbf{y}^{*} \right ). \label{jkw13new}
\end{equation}

\textbf{Statement (iii):} The following optimality condition holds in the image update step of the Algorithm for all $\mathbf{y} \in \mathbb{C}^{p}$:
\begin{equation}
g\left ( \mathbf{C}^{q_t}, \mathbf{D}^{q_t}, \mathbf{y}^{q_t} \right ) \leq g\left ( \mathbf{C}^{q_t}, \mathbf{D}^{q_t}, \mathbf{y} \right ). \label{jkw12new}
\end{equation}
Taking $t \to \infty$ in \eqref{jkw12new} and using \eqref{vfevnew}, \eqref{ceyui2new}, and \eqref{jkw13new} yields 
\begin{equation}
g\left ( \mathbf{C}^{*}, \mathbf{D}^{*}, \mathbf{y}^{*} \right ) \leq g\left ( \mathbf{C}^{*}, \mathbf{D}^{*}, \mathbf{y} \right ) \label{jkw15new}
\end{equation}
for each $\mathbf{y} \in \mathbb{C}^{p}$, thereby establishing \eqref{cnbcs4anew}.

\textbf{Statement (iv):} Let $\left \{ \mathbf{y}^{q_{n_{t}}-1} \right \}$ be an arbitrary convergent subsequence of the bounded $\left \{ \mathbf{y}^{q_{t}-1} \right \}$, with limit $\mathbf{y}^{**}$. Then, using similar arguments as for \eqref{jkw13new}, we get $\lim_{t \to \infty} g\left ( \mathbf{C}^{q_{n_t}}, \mathbf{D}^{q_{n_t}}, \mathbf{y}^{q_{n_t}-1} \right ) $ $ = g\left ( \mathbf{C}^{*}, \mathbf{D}^{*}, \mathbf{y}^{**} \right )$. Moreover, $g\left ( \mathbf{C}^{*}, \mathbf{D}^{*}, \mathbf{y}^{**} \right ) = g\left ( \mathbf{C}^{*}, \mathbf{D}^{*}, \mathbf{y}^{*} \right )$ (equivalence of accumulation points), which together with \eqref{jkw15new} and the uniqueness of the minimizer in \eqref{bcs10new} implies that $\mathbf{y}^{**}=\mathbf{y}^{*}$ is the unique partial global minimizer of $g\left ( \mathbf{C}^{*}, \mathbf{D}^{*}, \mathbf{y} \right )$. Since $\mathbf{y}^{**}=\mathbf{y}^{*}$ holds for the limit of any arbitrary convergent subsequence of the bounded $\left \{ \mathbf{y}^{q_{t}-1} \right \}$, so $\mathbf{y}^{q_{t}-1} \to \mathbf{y}^{*}$.
Finally, using similar arguments as in Section \ref{app2bnew}, we can show that $0$ is the limit of any convergent subsequence of the bounded $\left \{ a^{t} \right \}$ with $a^{t} \triangleq \left \| \mathbf{y}^{t} - \mathbf{y}^{t-1} \right \|_{2}$. Thus, $\left \| \mathbf{y}^{t} - \mathbf{y}^{t-1} \right \|_{2} \to 0$ as $t \to \infty$.

\textbf{Statement (v):} For simplicity, first, we consider the case of (number of iterations of SOUP-DILLO in Fig. 2 of \citeSupp{sairakfes55:supp}) $K=1$.
Let $\left \{ \mathbf{C}^{q_{n_t}+1}, \mathbf{D}^{q_{n_t} +1} \right \}$ be a convergent subsequence of the bounded $\left \{ \mathbf{C}^{q_{t}+1}, \mathbf{D}^{q_{t} +1} \right \}$, with limit $(\mathbf{C}^{**}, \mathbf{D}^{**})$.

Let $\mathbf{E}_{j}^{t} \triangleq \mathbf{Y}^{t-1} - \sum_{k<j} \mathbf{d}_{k}^{t} \left ( \mathbf{c}_{k}^{t} \right )^{H}$ $ - \sum_{k>j} \mathbf{d}_{k}^{t-1} \left ( \mathbf{c}_{k}^{t-1} \right )^{H}$. For the accumulation point $(\mathbf{C}^{*}, \mathbf{D}^{*}, \mathbf{y}^{*})$, let $\mathbf{E}_{j}^{*} \triangleq \mathbf{Y}^{*} - \mathbf{D}^{*} \left ( \mathbf{C}^{*} \right )^{H} $ $ + \mathbf{d}_{j}^{*} \left ( \mathbf{c}_{j}^{*} \right )^{H}$.
Similar to \eqref{eqopp20new} and \eqref{eqopp20bbbnew} of Section~\ref{app2anew}, we denote the objectives in the $j$th inner sparse coding and dictionary atom update steps (only 1 iteration of SOUP-DILLO) of iteration $t$ in Fig. 2 of \citeSupp{sairakfes55:supp} as $g\left ( \mathbf{E}_{j}^{t},\mathbf{c}_{j}, \mathbf{d}_{j}^{t-1} \right )$ and
 $g\left ( \mathbf{E}_{j}^{t},\mathbf{c}_{j}^{t}, \mathbf{d}_{j} \right )$, respectively.
 The functions $g\left ( \mathbf{E}_{j}^{*},\mathbf{c}_{j}, \mathbf{d}_{j}^{*} \right )$ and $g\left ( \mathbf{E}_{j}^{*},\mathbf{c}_{j}^{*}, \mathbf{d}_{j} \right )$ are defined with respect to $(\mathbf{C}^{*}, \mathbf{D}^{*},\mathbf{y}^{*})$.

Then, we use the same series of arguments as in
Section~\ref{app2anew} but with respect to $g$ to show the partial global optimality of each column of $\mathbf{C}^{*}$ and $\mathbf{D}^{*}$ for the cost $g$ (and that $\mathbf{D}^{**} = \mathbf{D}^{*}$, $\mathbf{C}^{**} = \mathbf{C}^{*}$). Statement (iii) established that $\mathbf{y}^{*}$ is a global minimizer of $g\left ( \mathbf{C}^{*}, \mathbf{D}^{*}, \mathbf{y} \right )$. These results together imply (with Proposition 3 of \citeSupp{Attouchaa:supp}) that  $0 \in \partial g\left ( \mathbf{C}^{*}, \mathbf{D}^{*}, \mathbf{y}^{*} \right )
$ (critical point property).
Finally, by using similar arguments as in Section \ref{app2bnew}, we get $\begin{Vmatrix}
\mathbf{D}^{t} - \mathbf{D}^{t-1} 
\end{Vmatrix}_{F} \to 0$ and $\begin{Vmatrix}
\mathbf{C}^{t} - \mathbf{C}^{t-1} 
\end{Vmatrix}_{F} \to 0$. 

Although we considered $K=1$ above for simplicity, the $K>1$ case can be handled in a similar fashion by: 1) considering the set of (bounded) inner iterates (dictionaries and sparse coefficient matrices) generated during the $K$ iterations of SOUP-DILLO in the $t$th (outer) iteration in Fig. 2 of \citeSupp{sairakfes55:supp}, and 2) arranging these $K$ inner iterates as an ordered tuple, and 3) picking a convergent subsequence of such a sequence (over $t$) of tuples.
Then, the arguments in Section~\ref{app2anew} are repeated sequentially for each (inner) sparse coding and dictionary atom update step corresponding to each element in the above tuple (sequentially) to arrive at the same results as for $K=1$.
 $\;\;\; \blacksquare$

\section{Dictionary Learning Convergence Experiment} \label{sec5bnew}

To study the convergence behavior of the proposed SOUP-DILLO Algorithm for (P1), we use the same training data as in Section VI.B of our manuscript (i.e., we extract $3 \times 10^{4}$ patches of size $8 \times 8$ from randomly chosen locations in the $512 \times 512$ standard images Barbara, Boats, and Hill). The SOUP-DILLO Algorithm was used to learn a $64 \times 256$ overcomplete dictionary, with $\lambda = 69$.
The initial estimate for $\mathbf{C}$ was an all-zero matrix, and the initial estimate for $\mathbf{D}$ was the overcomplete DCT \citeSupp{elad2:supp, el2:supp}.
For comparison, we also test the convergence behavior of the recent OS-DL \citeSupp{sadeg33:supp} method for (P2). The data and parameter settings are the same as discussed above, but with $\mu = 615$. The $\mu$ was chosen to achieve the same eventual sparsity factor ($\sum_{j=1}^{J}\left ( \left \| \mathbf{c}_{j} \right \|_{0} / nN \right )$) for $\mathbf{C}$ in OS-DL as that achieved in SOUP-DILLO with $\lambda = 69$.

Fig. \ref{im2new} illustrates the convergence behaviors of the SOUP-DILLO and OS-DL methods for sparsity penalized dictionary learning. The objectives in these methods (Fig. \ref{im2new}(a)) converged monotonically and quickly over the iterations. Figs. \ref{im2new}(b) and \ref{im2new}(c) show the normalized sparse representation error and sparsity factor (for $\mathbf{C}$) respectively, with both quantities expressed as percentages. Both these quantities converged quickly for the SOUP-DILLO algorithm, and the NSRE improved by 1 dB beyond the first iteration, indicating the success of the SOUP-DILLO approach in representing data using a small number of non-zero coefficients (sparsity factor of $3.1\%$ at convergence).
For the same eventual (net) sparsity factor, SOUP-DILLO converged to a significantly lower NSRE than the $\ell_{1}$ norm-based OS-DL.

Finally, both the quantities $\left \| \mathbf{D}^{t} - \mathbf{D}^{t-1} \right \|_{F}$ (Fig. \ref{im2new}(d)) and $\left \| \mathbf{C}^{t} - \mathbf{C}^{t-1} \right \|_{F}$ (Fig. \ref{im2new}(e)) decrease towards $0$ for SOUP-DILLO and OS-DL, as predicted by Theorems \ref{theorem2} and \ref{theorem4}. However, the convergence is eventually quicker for the $\ell_{0}$ ``norm''-based SOUP-DILLO than for the $\ell_{1}$ norm-based OS-DL\footnote{The plots for SOUP-DILLO show some intermediate fluctuations in the changes between successive iterates at small values of these errors, which eventually vanish. This behavior is not surprising because Theorems \ref{theorem2} and \ref{theorem4} do not guarantee a strictly monotone decrease of $\left \| \mathbf{D}^{t} - \mathbf{D}^{t-1} \right \|_{F}$ or $\left \| \mathbf{C}^{t} - \mathbf{C}^{t-1} \right \|_{F}$, but rather their asymptotic convergence to $0$.}.
The above results are also indicative (are necessary but not sufficient conditions) of the convergence of the entire sequences $\left \{ \mathbf{D}^{t} \right \}$ and $\left \{ \mathbf{C}^{t} \right \}$ for the block coordinate descent methods (for both (P1) and (P2)) in practice.
In contrast, Bao et al. \citeSupp{bao1:supp} showed that the distance between successive iterates may not converge to zero for popular algorithms such as K-SVD.

\vspace{-0.05in}
\section{Adaptive Sparse Representation of Data: Additional Results}

\begin{figure}%[!t]
\begin{center}
\begin{tabular}{c}
\includegraphics[height=1.46in]{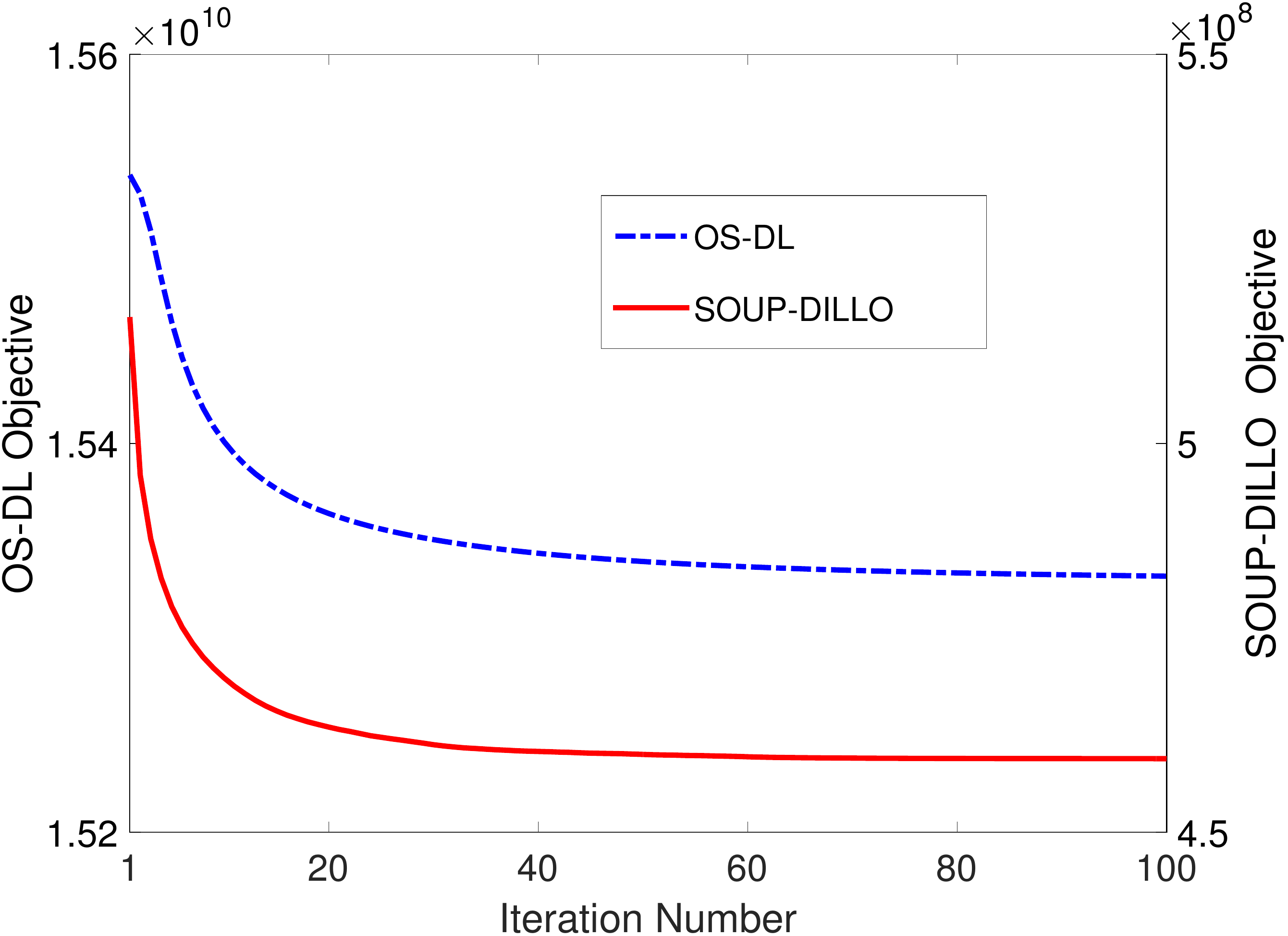}\\
 (a) \\
\includegraphics[height=1.43in]{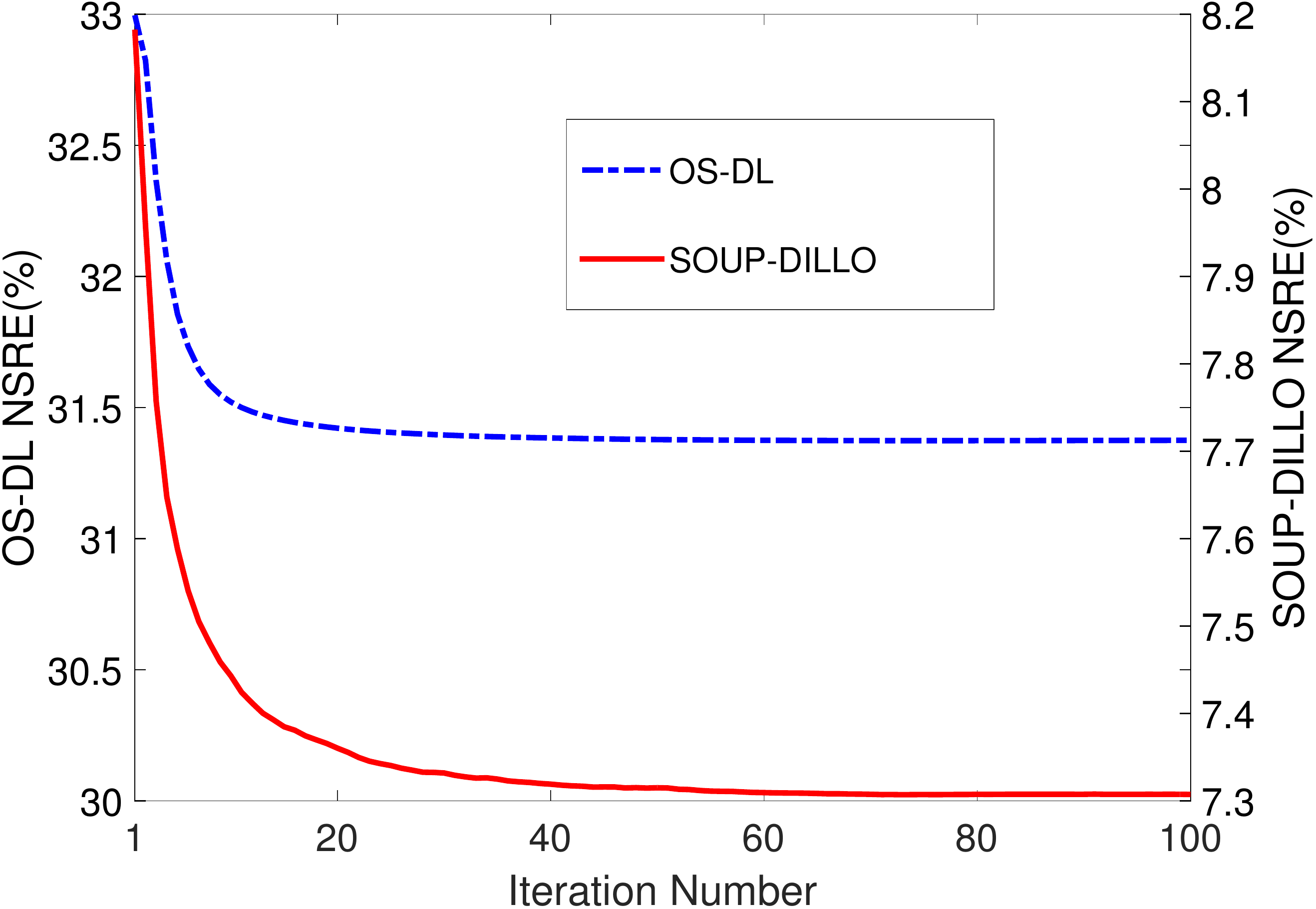}\\
(b) \\
\includegraphics[height=1.432in]{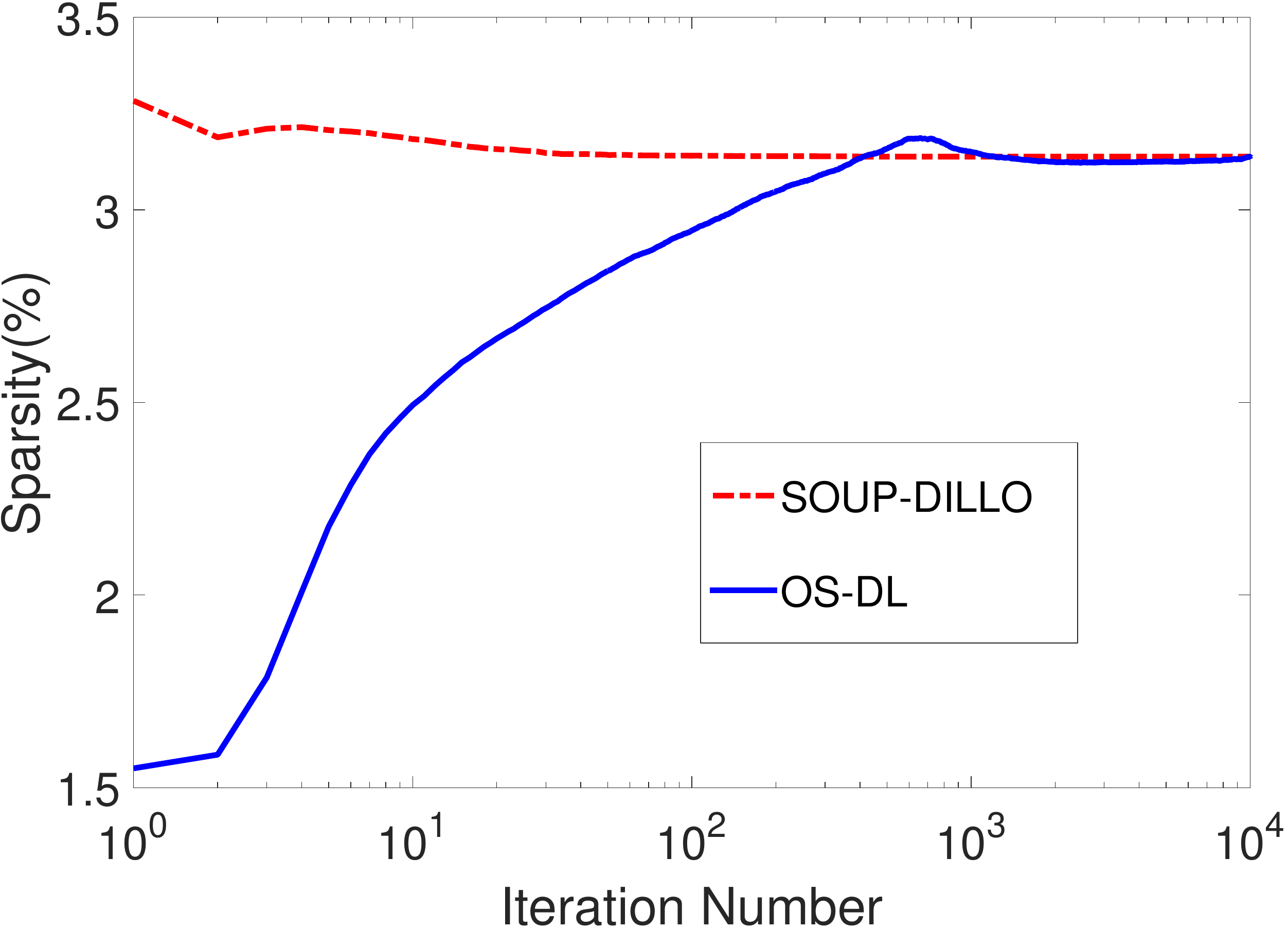}\\
 (c) \\
\includegraphics[height=1.42in]{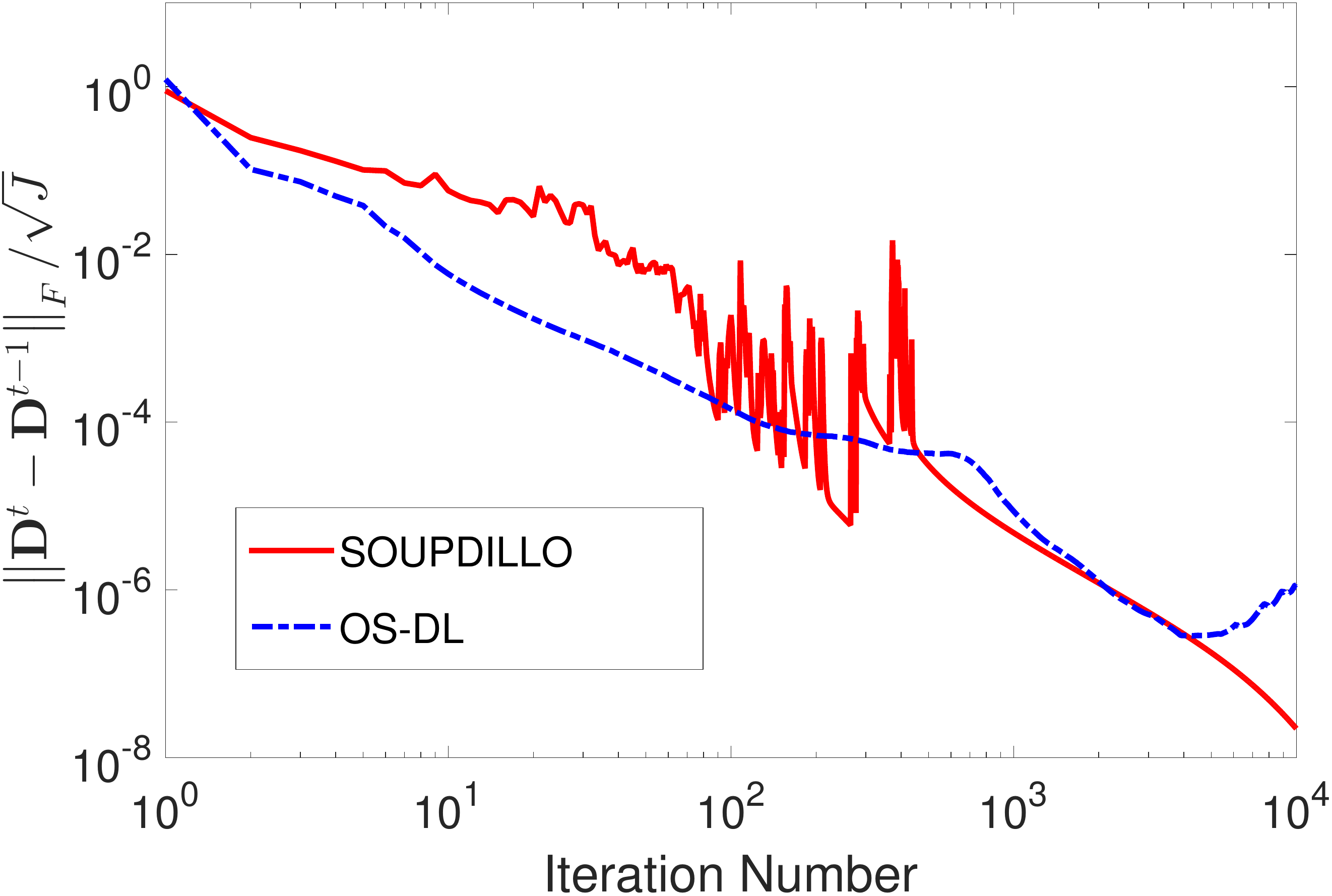}\\
(d) \\
\includegraphics[height=1.51in]{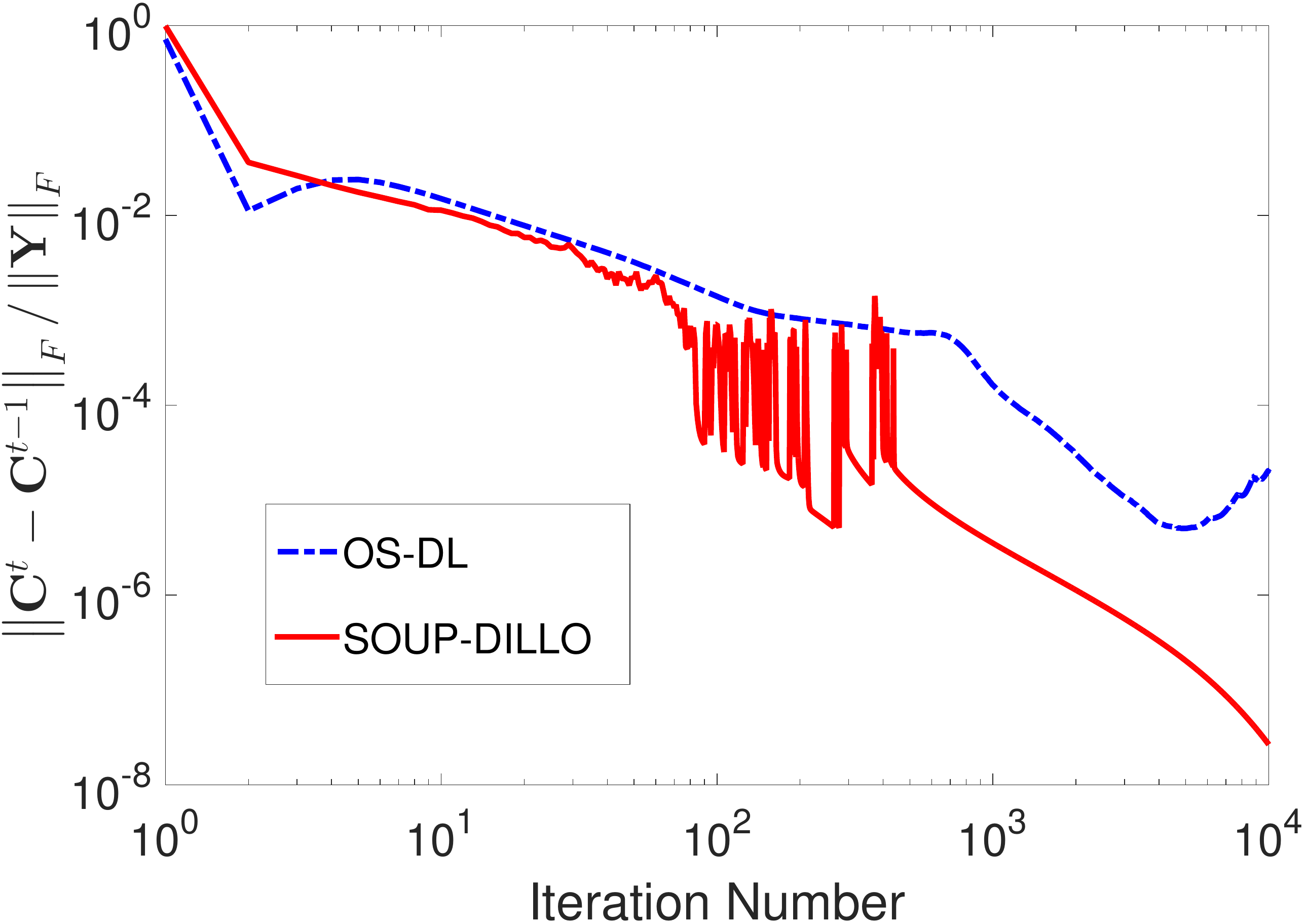}\\
(e)\\
\end{tabular}
\caption{Convergence behavior of the SOUP-DILLO and OS-DL Algorithms for (P1) and (P2): (a) Objective function; (b) Normalized sparse representation error (percentage); (c) sparsity factor of $\mathbf{C}$ ($\sum_{j=1}^{J}\left ( \left \| \mathbf{c}_{j} \right \|_{0} / nN \right )$ -- expressed as a percentage); (d) normalized changes between successive $\mathbf{D}$ iterates ($\left \| \mathbf{D}^{t} - \mathbf{D}^{t-1} \right \|_{F}/\sqrt{J}$); and (e) normalized changes between successive $\mathbf{C}$ iterates ($\left \| \mathbf{C}^{t} - \mathbf{C}^{t-1} \right \|_{F}/\left \| \mathbf{Y} \right \|_{F}$). Note that (a) and (b) have two vertical scales.}
\label{im2new}
\end{center}
%\vspace{-0.2in}
\end{figure}

This section considers the same training data and learned dictionaries as in Section VI.B of our manuscript \citeSupp{sairakfes55:supp}. Recall that dictionaries of size $64 \times 256$ were learned for various choices of $\lambda$ in (P1) using the SOUP-DILLO and PADL  \citeSupp{bao1:supp, bao2:supp} methods, and dictionaries were also learned for various choices of $\mu$ in (P2) using the OS-DL \citeSupp{sadeg33:supp} method.
While Problems (P1) and (P2) are for aggregate sparsity penalized dictionary learning (using $\ell_{0}$ or $\ell_{1}$ penalties), here, we investigate the behavior of the learned dictionaries to sparse code the 
same 
data with sparsity constraints that are imposed in a conventional column-by-column (or signal-by-signal) manner (as in (P0) of  \citeSupp{sairakfes55:supp}).
We performed column-wise sparse coding using orthogonal matching pursuit (OMP) \citeSupp{pati:supp}. We used the OMP implementation in the K-SVD package \citeSupp{el2:supp} that uses a fixed cardinality (sparsity bound of $s$ per signal) and a negligible squared error threshold ($10^{-6}$).

For the comparisons here, dictionaries learned by SOUP-DILLO and OS-DL at a certain net sparsity factor in Fig. 4(e) of \citeSupp{sairakfes55:supp} were matched to corresponding column sparsities ($\left \| C \right \|_{0}/N$). For PADL, we picked the dictionaries learnt at identical $\lambda$ values as for SOUP-DILLO.
Fig. \ref{im35new}(a) shows the NSRE values for column-wise sparse coding (via OMP) using the dictionaries learned by PADL and SOUP-DILLO.
Dictionaries learned by SOUP-DILLO provide 0.7 dB better NSRE on the average (over the sparsities) than those learned by the alternating proximal scheme PADL.
At the same time, when the dictionaries learnt by (direct block coordinate descent methods) SOUP-DILLO and OS-DL were used to perform column-wise sparse coding, their performance was similar (a difference of only 0.07 dB in NSRE on average over sparsities).

Next, to compare formulations (P1) and (P0) in \citeSupp{sairakfes55:supp}, we learned dictionaries at the same column sparsities as in Fig. \ref{im35new}(a) using 30 iterations of K-SVD \citeSupp{elad:supp, el2:supp} (initialized the same way as the other methods). We first compare (Fig. \ref{im35new}(b)) the NSRE values obtained with K-SVD to those obtained with column-wise sparse coding using the learned dictionaries in SOUP-DILLO.
While the dictionaries learned by SOUP-DILLO provide significantly lower column-wise sparse coding residuals (errors) than K-SVD at low sparsities, K-SVD does better as the sparsity increases ($s \geq 3$), since it is adapted for the learning problem with column sparsity constraints.
Importantly, the NSRE values obtained by K-SVD are much higher (Fig. \ref{im35new}) than those obtained by SOUP-DILLO in the sparsity penalized dictionary learning Problem (P1), with the latter showing improvements of 14-15 dB at low sparsities and close to a decibel at some higher sparsities.
Thus, solving (P1) may offer potential benefits for adaptively representing data sets (e.g., patches of an image) using very few total non-zero coefficients.

\section{Dictionary-blind Image Reconstruction: Additional Results}

\begin{figure}[!t]
\begin{center}
\begin{tabular}{c}
\includegraphics[height=1.73in]{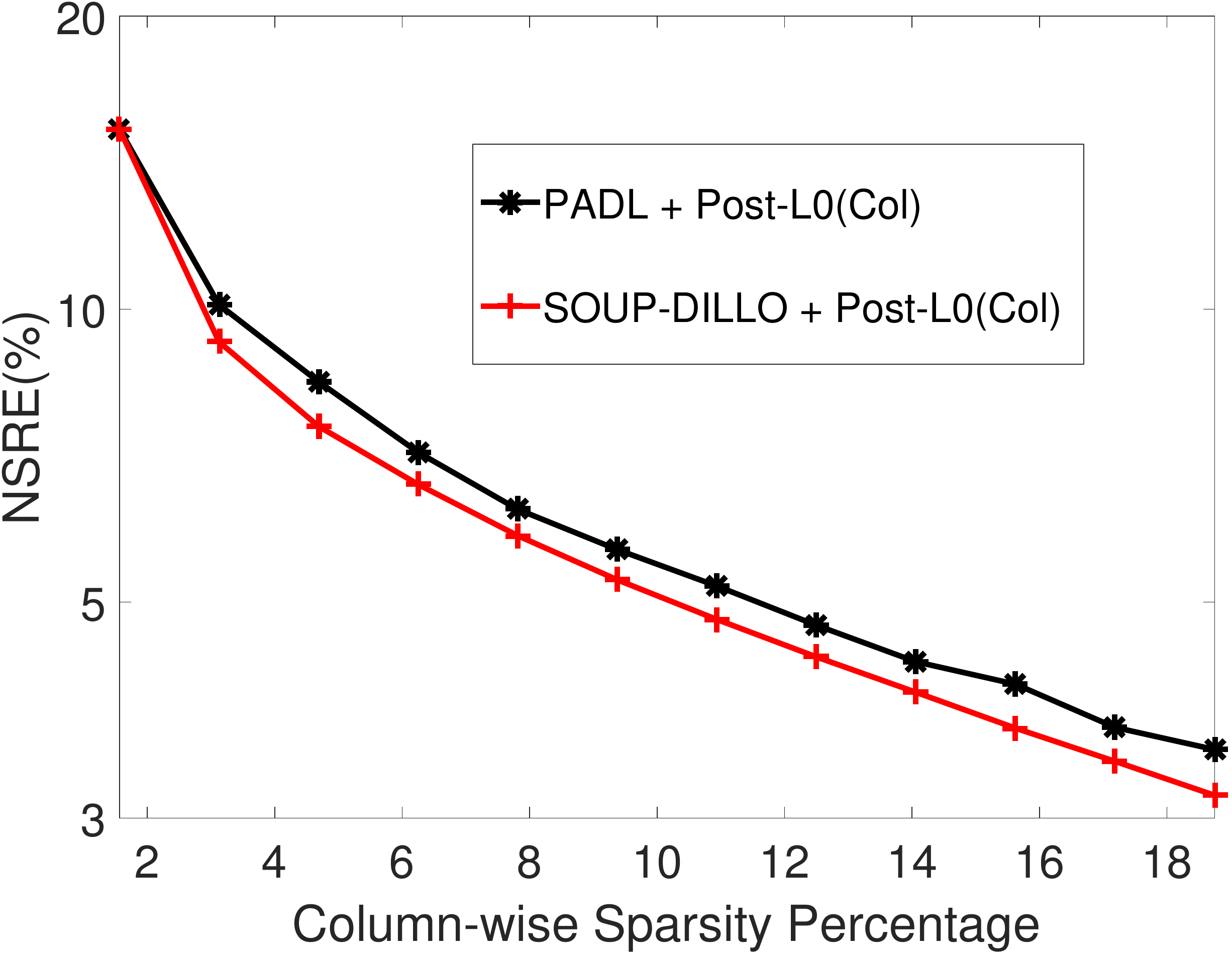}\\
(a) \\
\includegraphics[height=1.71in]{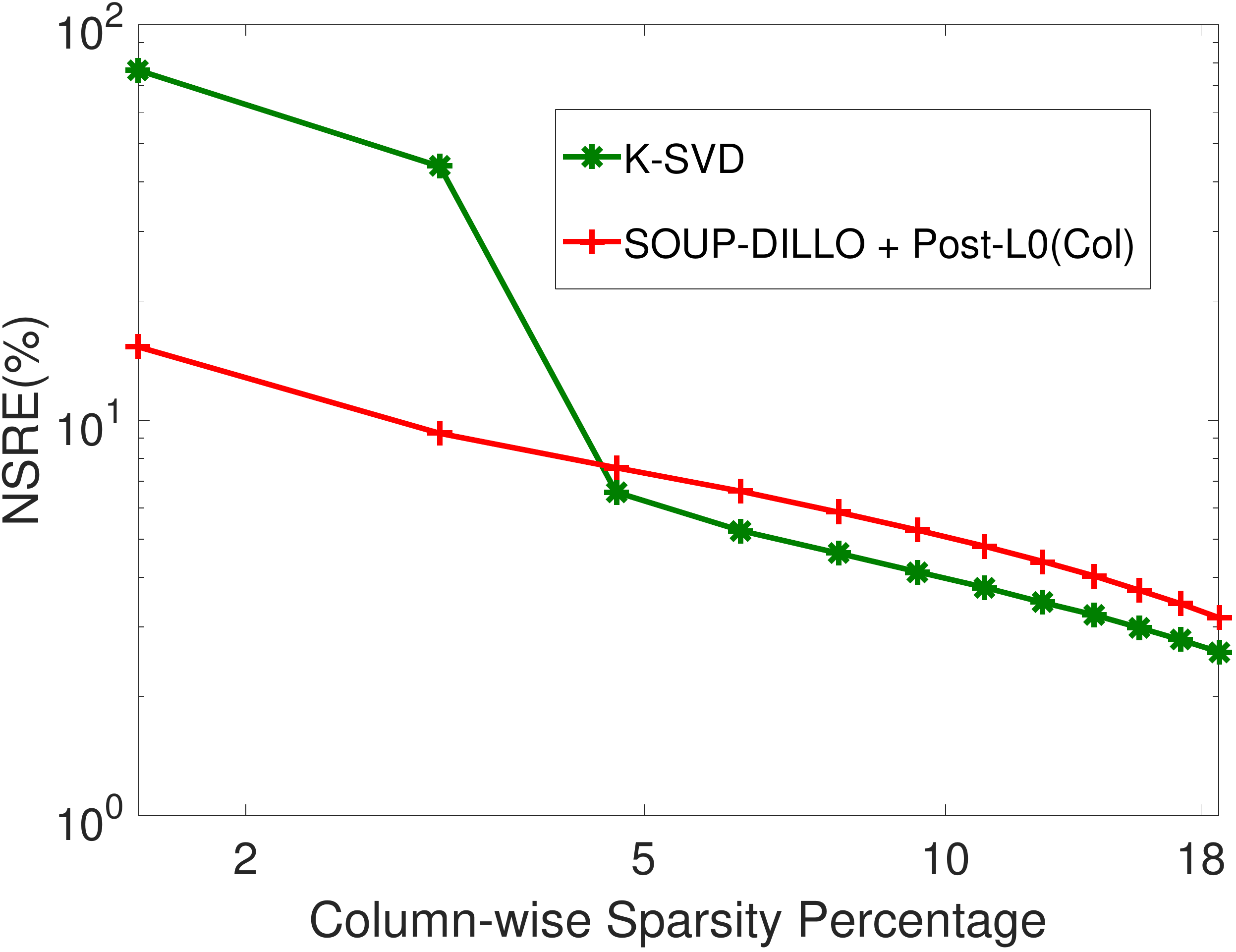}\\
(b) \\
\includegraphics[height=1.74in]{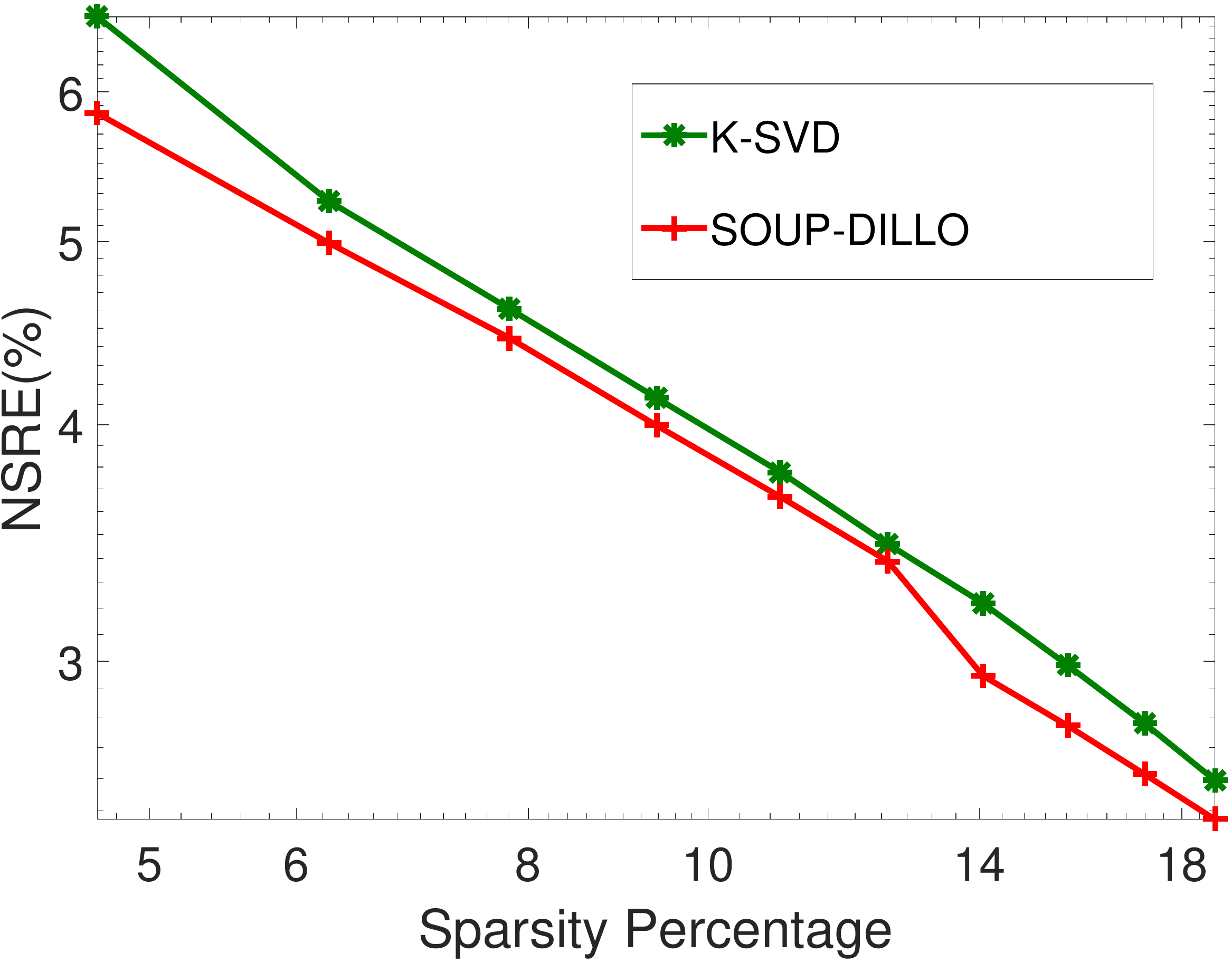} \\
(c)  \\
\end{tabular}
\caption{Comparison of dictionary learning methods for adaptive sparse representation (NSRE \protect\citeSupp{sairakfes55:supp} expressed as percentage): (a) NSRE values achieved with column-wise sparse coding (using orthogonal matching pursuit) using dictionaries learned with PADL \protect\citeSupp{bao1:supp, bao2:supp} and SOUP-DILLO; (b) NSRE values obtained using K-SVD \protect\citeSupp{elad:supp} compared to those with column-wise sparse coding (as in (a)) using learned SOUP-DILLO dictionaries; and (c) NSRE values obtained by SOUP-DILLO in Problem (P1) compared to those of K-SVD at matching net sparsity factors. The term `Post-L0(Col)' in the plot legends denotes (post) column-wise sparse coding using learned dictionaries.
}
\label{im35new}
\end{center}
%\vspace{-0.317in}
\end{figure}

Fig. \ref{im15bcsbbnew} shows the reconstructions and reconstruction error maps (i.e., the magnitude of the difference between the magnitudes of the reconstructed and reference images) for several methods for an example in Table I of the manuscript \citeSupp{sairakfes55:supp}.
The reconstructed images and error maps for the $\ell_{0}$ ``norm''-based SOUP-DILLO MRI (with zero-filling initial reconstruction) show fewer artifacts or smaller distortions than for the K-SVD-based DLMRI \citeSupp{bresai:supp}, non-local patch similarity-based PANO \citeSupp{Qu2014843:supp}, or the $\ell_{1}$ norm-based SOUP-DILLI MRI.

\begin{figure}[!t]
\begin{center}
\begin{tabular}{cc}
\includegraphics[height=1.45in]{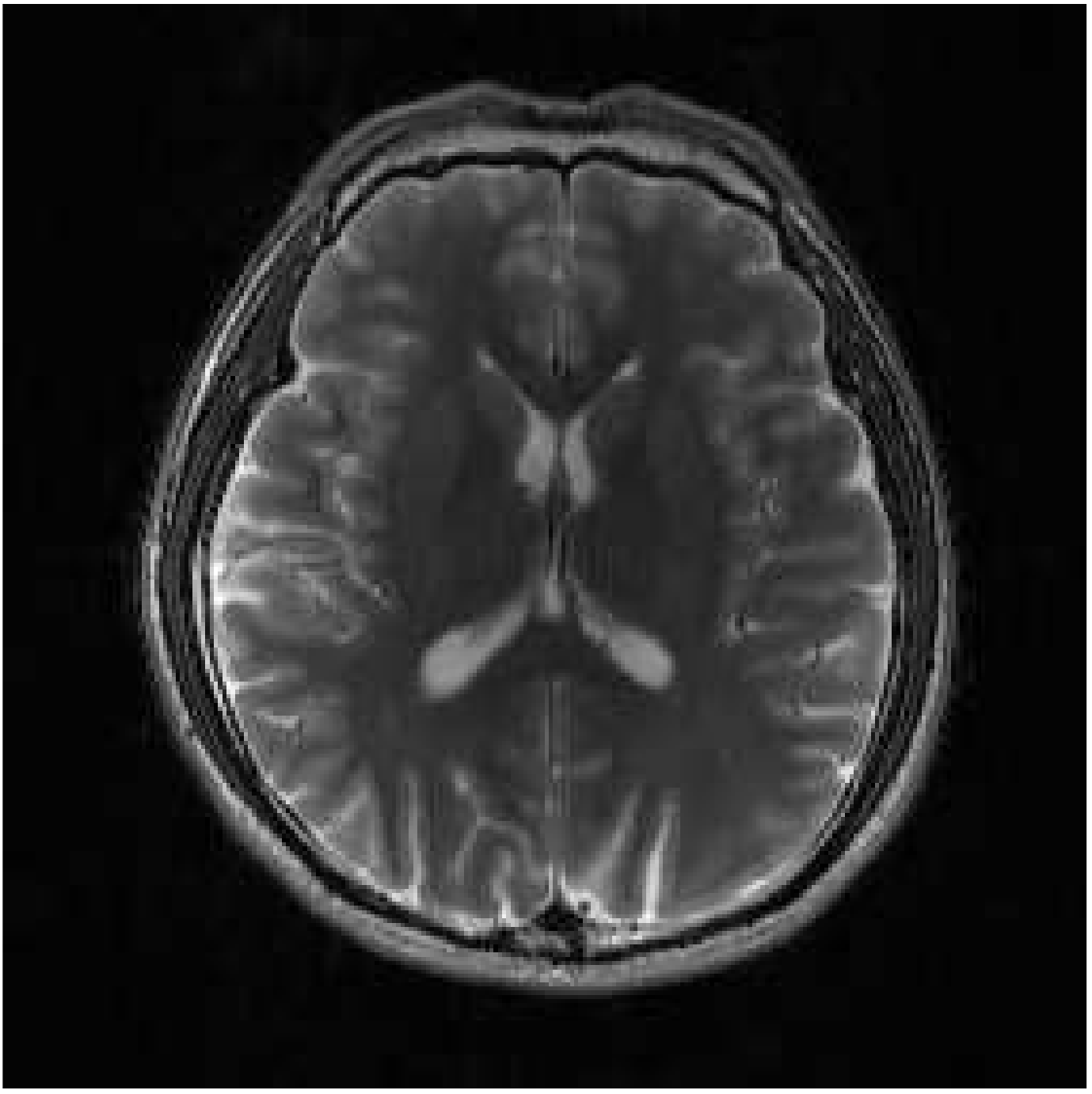}&
\includegraphics[height=1.47in]{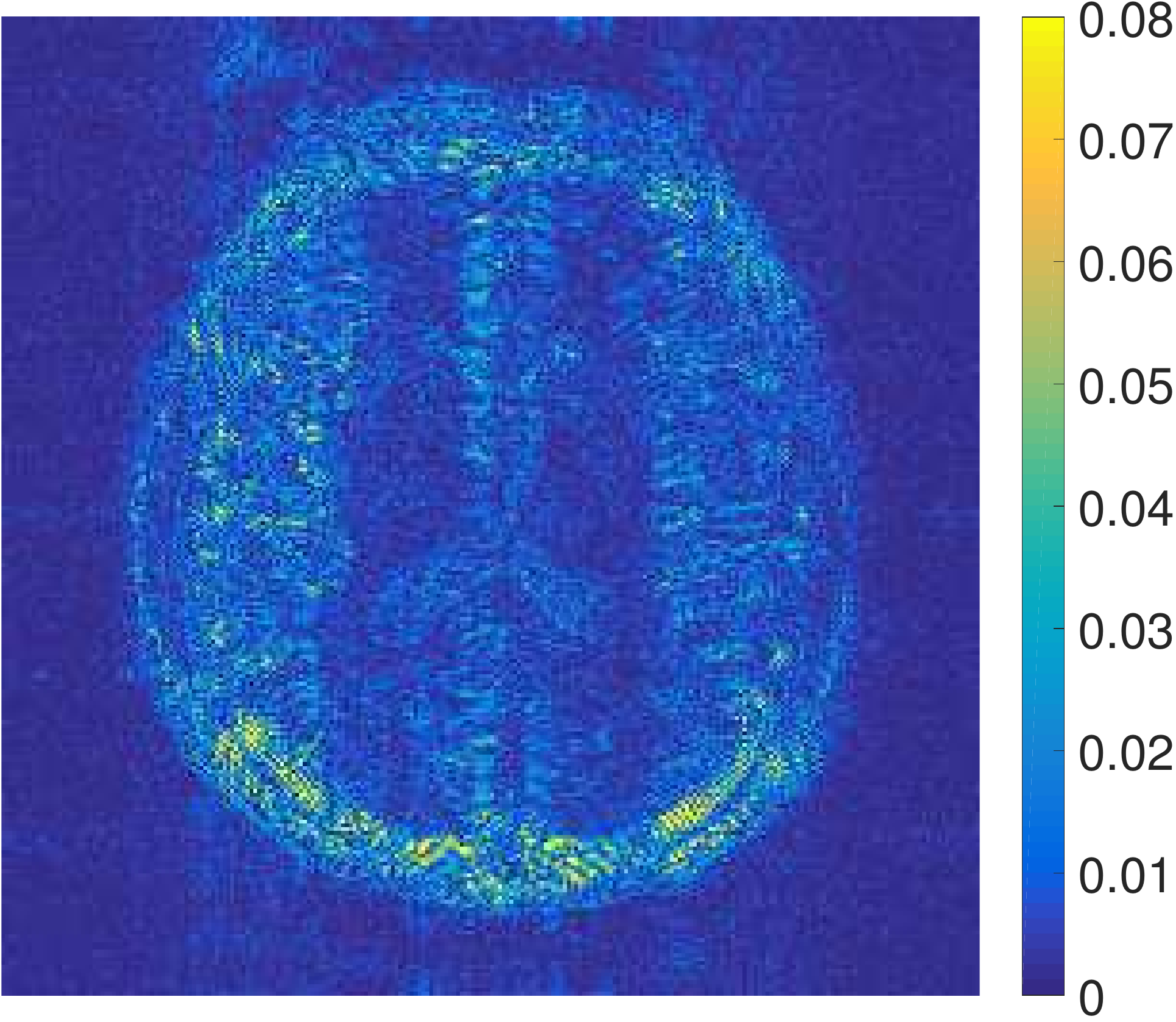}\\
(a) & (e) \\
\includegraphics[height=1.45in]{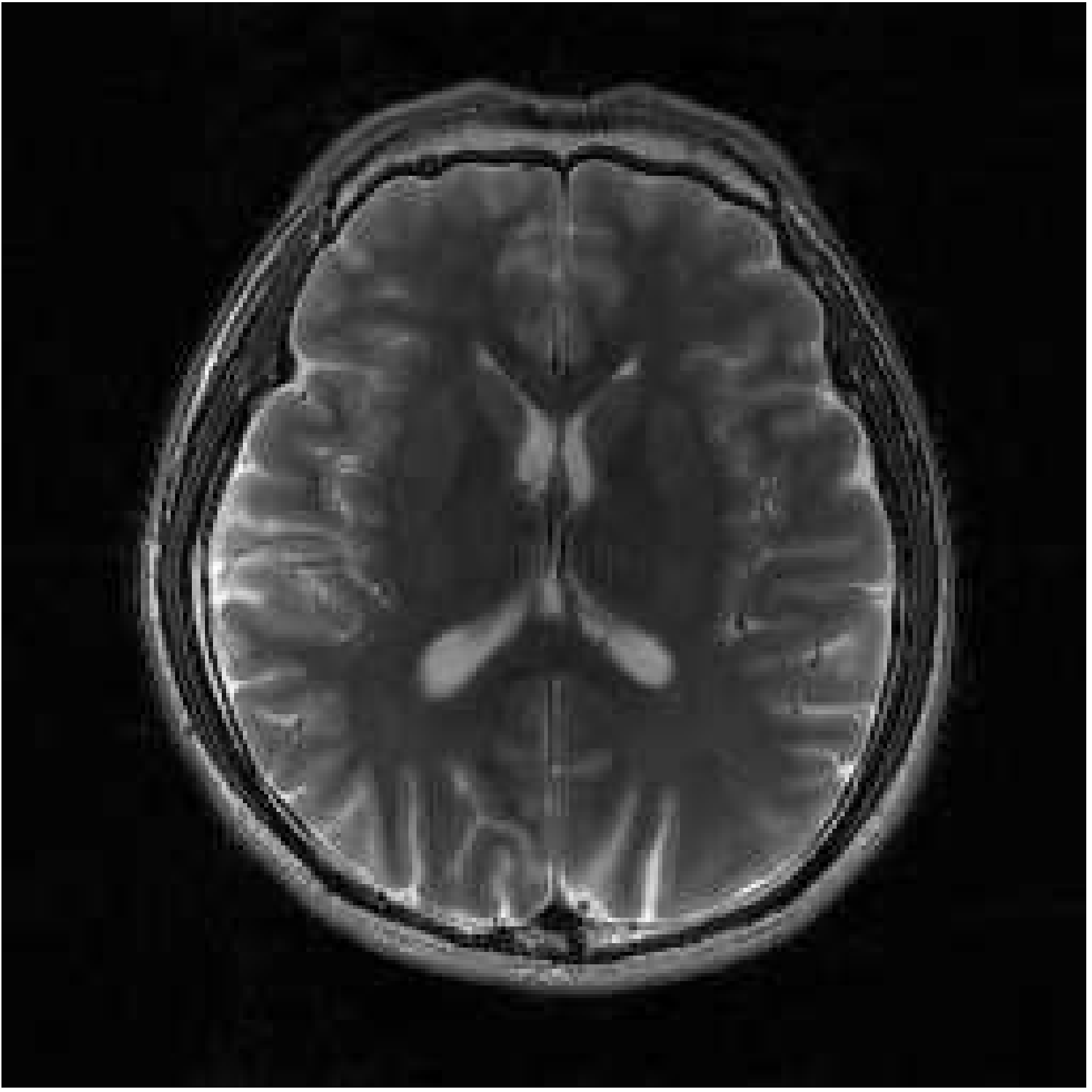}&
\includegraphics[height=1.47in]{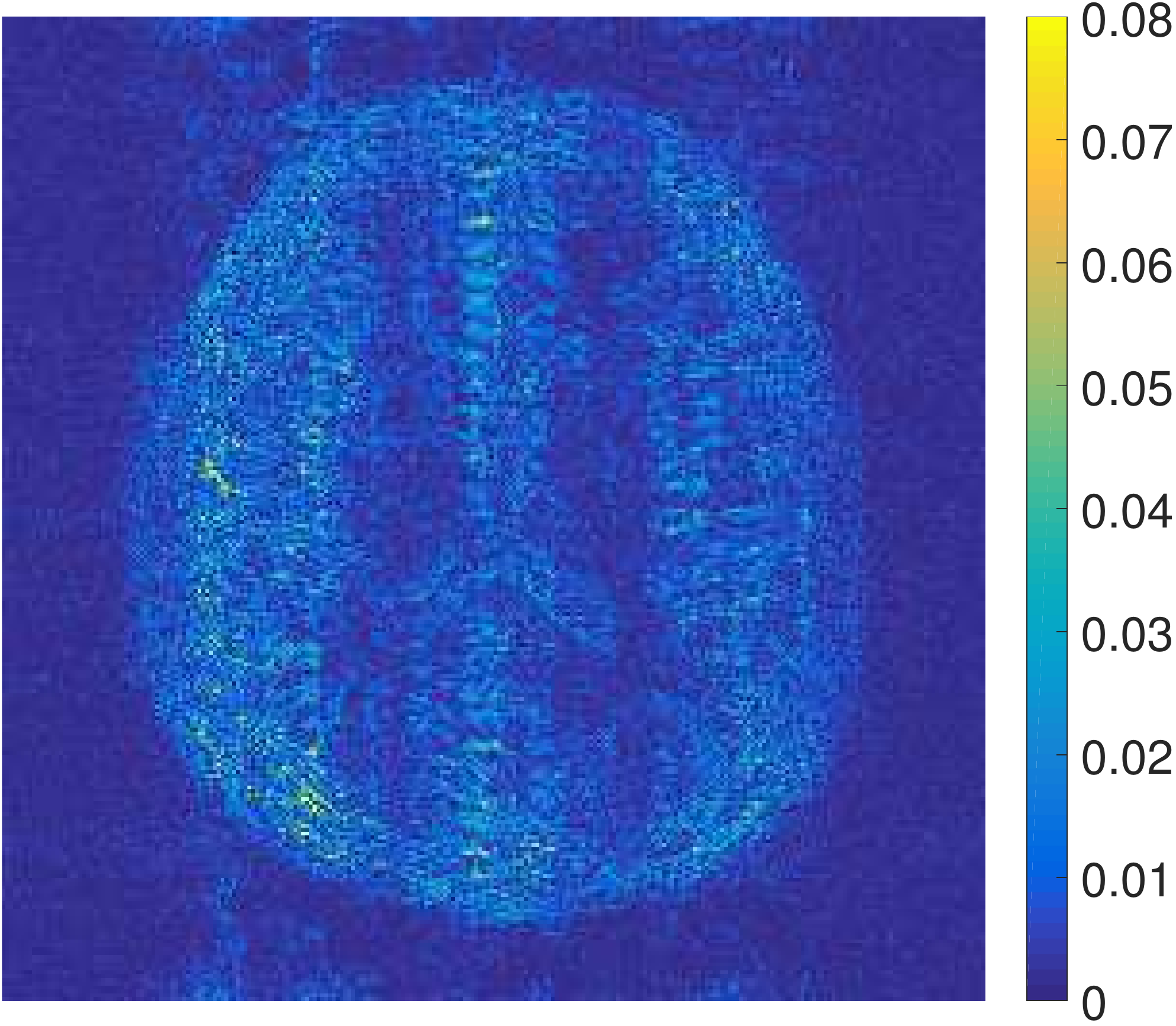}\\
(b) & (f) \\
\includegraphics[height=1.45in]{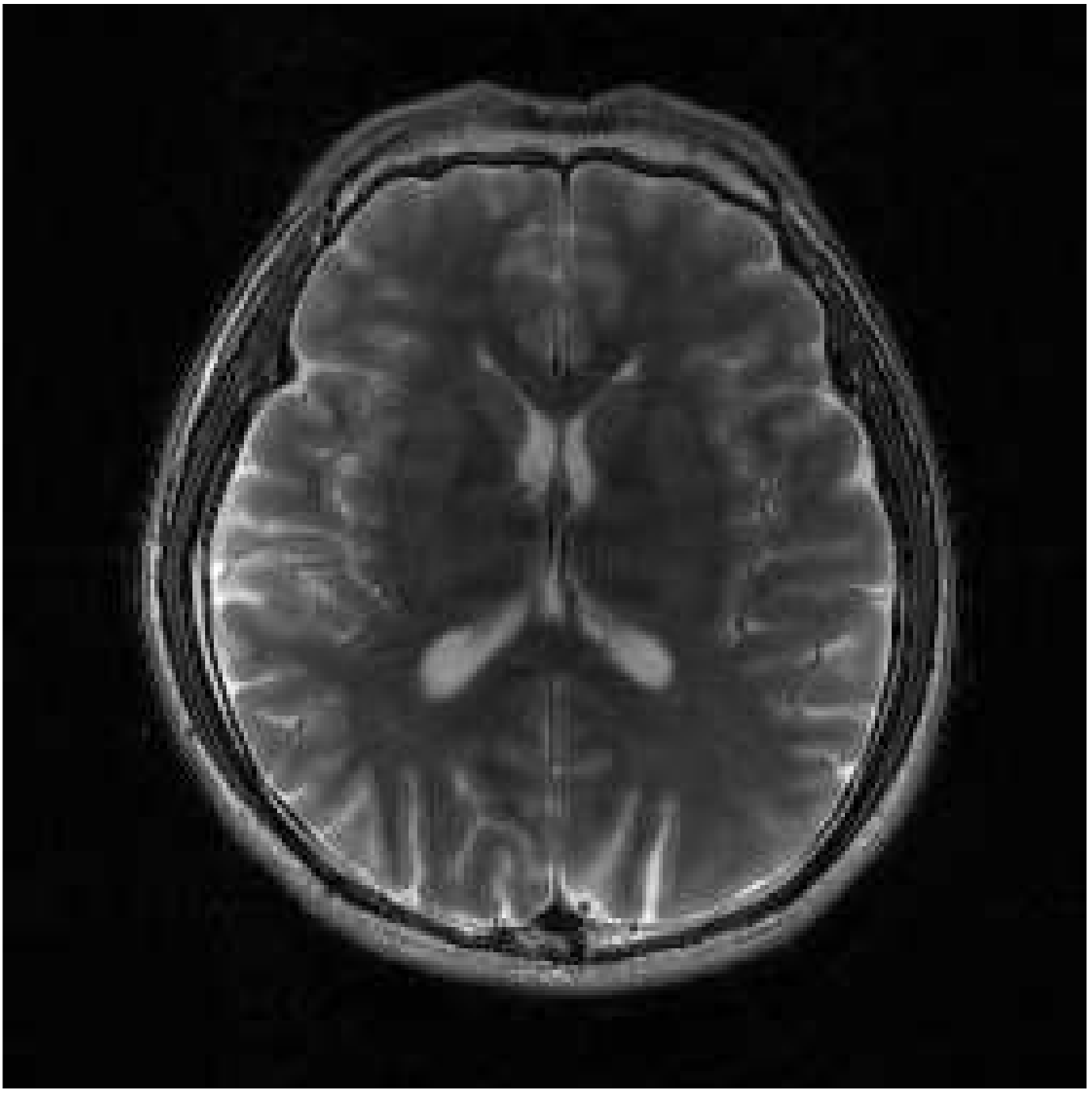}&
\includegraphics[height=1.47in]{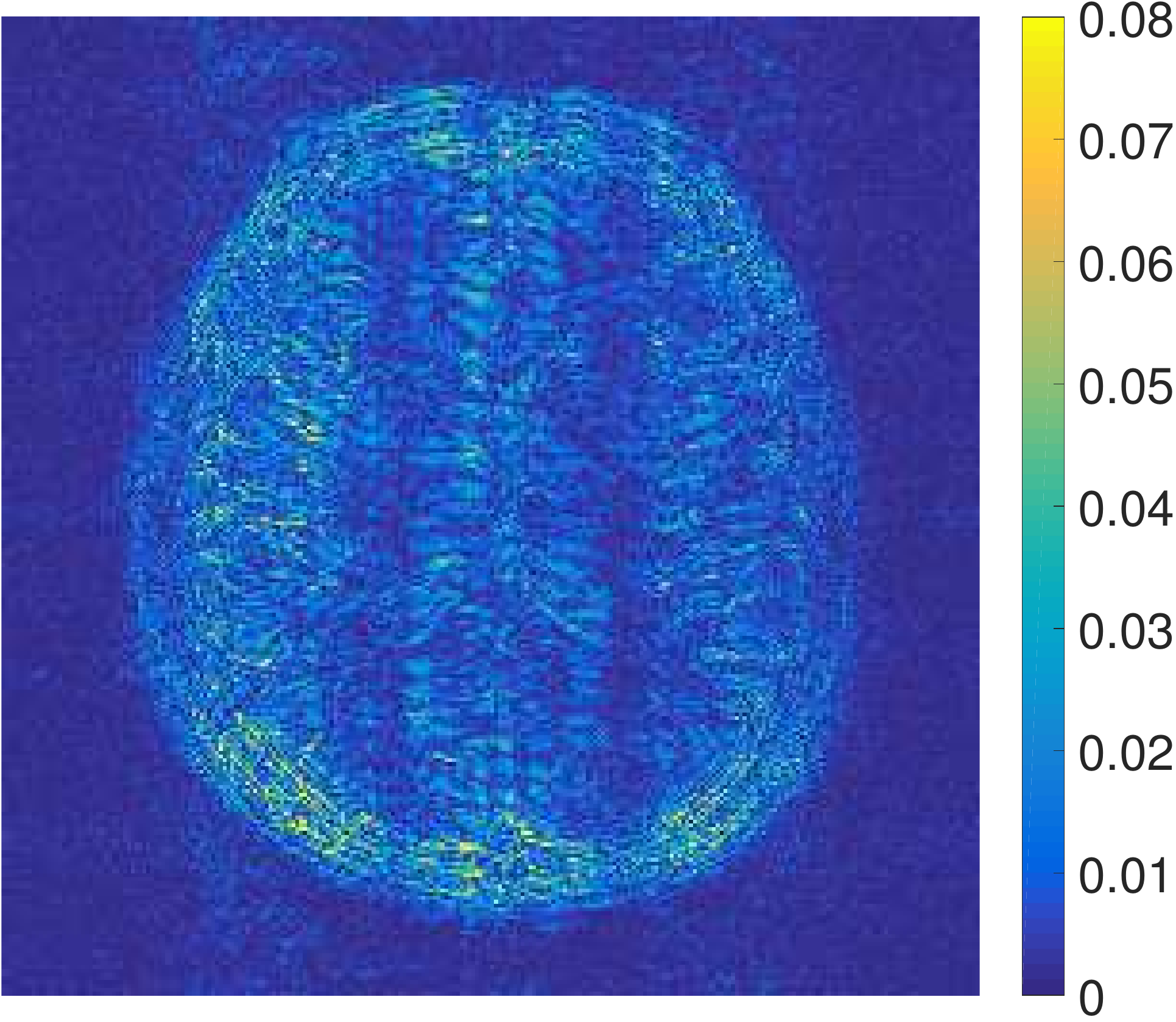}\\
 (c) & (g)\\
\includegraphics[height=1.45in]{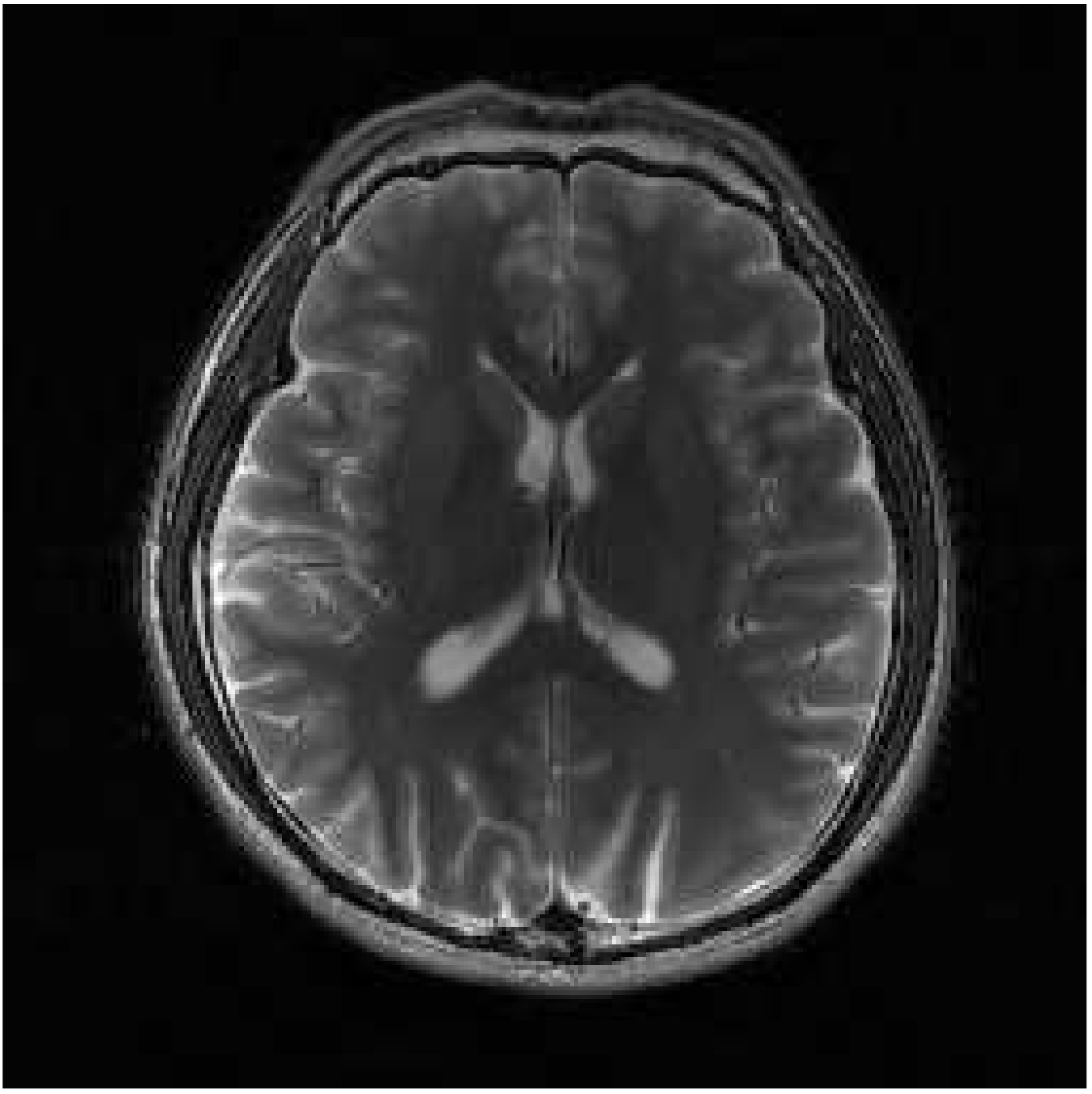}&
\includegraphics[height=1.47in]{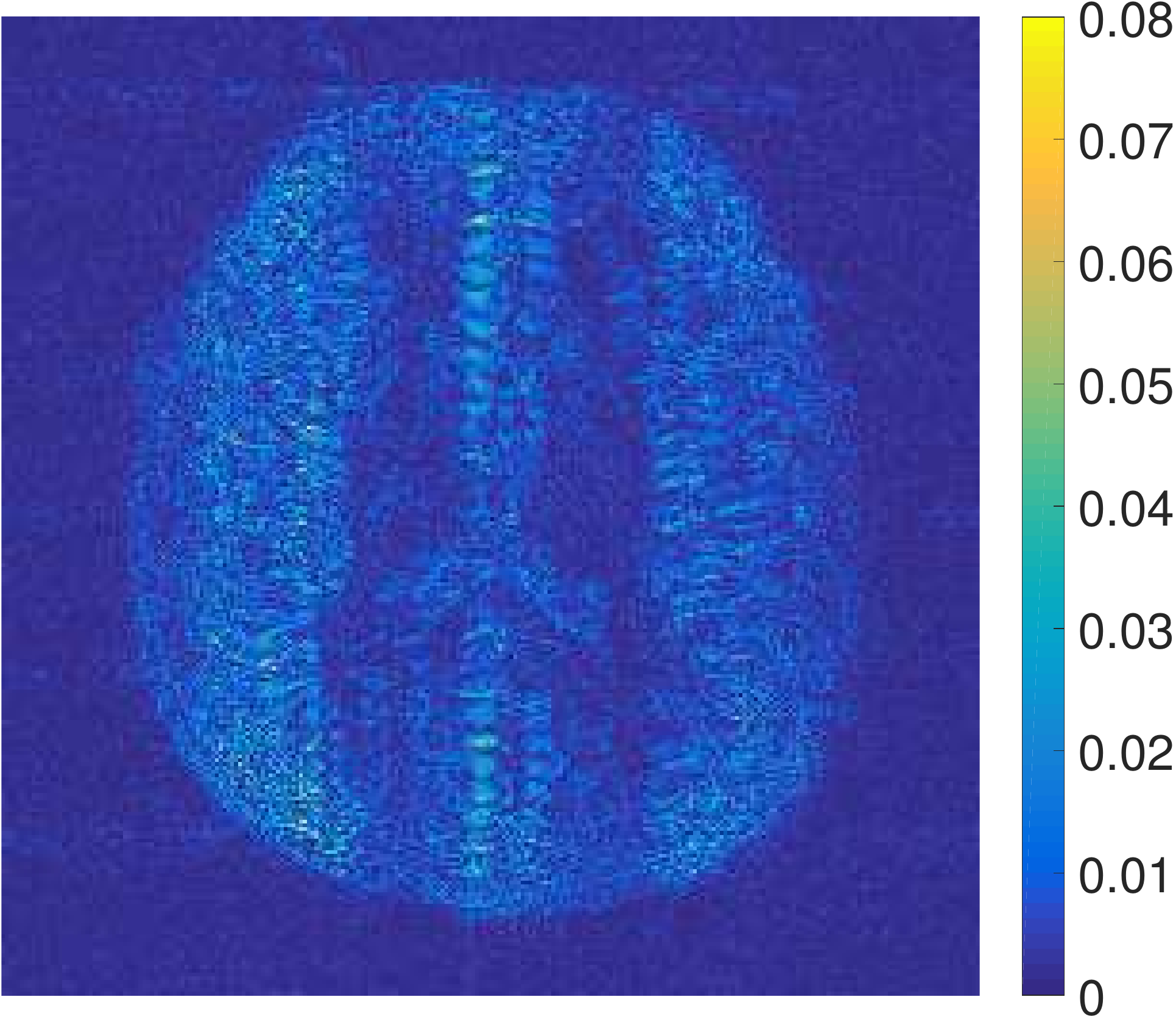}\\
(d) & (h) \\
\end{tabular}
\caption{Results for Image (e) in Fig. 3 (and Table I) of \protect\citeSupp{sairakfes55:supp}: Cartesian sampling with 2.5 fold undersampling. The sampling mask is shown in Fig. 5(a) of \protect\citeSupp{sairakfes55:supp}. Reconstructions (magnitudes): (a) DLMRI (PSNR = 38 dB) \protect\citeSupp{bresai:supp}; (b) PANO (PSNR = 40.0 dB) \protect\citeSupp{Qu2014843:supp}; (c) SOUP-DILLI MRI (PSNR = 37.9 dB); and (d) SOUP-DILLO MRI (PSNR = 41.5 dB). (e)-(h) are the reconstruction error maps for (a)-(d), respectively.}
\label{im15bcsbbnew}
\end{center}
%\vspace{-0.2in}
%\vspace{-0.31in}
\end{figure}

\appendices

\section{Sub-differential and Critical Points} \label{app56new}

\newtheorem{definition}{Definition}

This Appendix reviews the Fr\'{e}chet sub-differential of a function \citeSupp{vari1:supp, vari2:supp}. The norm and inner product notation in Definition~\ref{def1new} correspond to the euclidean $\ell_{2}$ settings.

\begin{definition} \label{def2new}
For a function $\phi: \mathbb{R}^{p} \mapsto (-\infty, + \infty]$, its domain is defined as $\mathrm{dom} \phi = \left \{ \mathbf{x} \in \mathbb{R}^{p} : \phi(\mathbf{x}) < + \infty \right \}$. Function $\phi$ is proper if $\mathrm{dom} \phi$ is nonempty.
\end{definition}

\begin{definition} \label{def1new}
Let $\phi: \mathbb{R}^{p} \mapsto (-\infty, + \infty]$ be a proper function and let $\mathbf{x} \in \mathrm{dom} \phi$. The Fr\'{e}chet sub-differential of the function $\phi$ at $\mathbf{x}$ is the following set denoted as $\hat{\partial }\phi(\mathbf{x}) $:
\begin{equation*}
 \begin{Bmatrix}
\mathbf{h} \in \mathbb{R}^{p} : \underset{\mathbf{b} \to \mathbf{x}, b \neq \mathbf{x}}{\lim \inf} \frac{1}{\left \| \mathbf{b}-\mathbf{x} \right \|}\left ( \phi(\mathbf{b}) - \phi(\mathbf{x}) - \left \langle \mathbf{b}-\mathbf{x}, \mathbf{h} \right \rangle \right ) \geq 0
\end{Bmatrix}.
\end{equation*}
If $\mathbf{x} \notin \mathrm{dom} \phi$, then $\hat{\partial }\phi(\mathbf{x}) \triangleq \emptyset$, the empty set.
The sub-differential of $\phi$ at $\mathbf{x}$ is the set $\partial \phi(\mathbf{x})$ defined as
\begin{equation*}
  \begin{Bmatrix}
\tilde{\mathbf{h}}  \in \mathbb{R}^{p} : \exists \mathbf{x}_{k} \to \mathbf{x}, \phi(\mathbf{x}_{k}) \to \phi(\mathbf{x}), \mathbf{h}_{k} \in \hat{\partial }\phi(\mathbf{x}_{k}) \to \tilde{\mathbf{h}} 
\end{Bmatrix}.
\end{equation*}
\end{definition}
A necessary condition for $\mathbf{x} \in \mathbb{R}^{p}$ to be a minimizer of the function $\phi$ is that $\mathbf{x}$ is a \emph{critical point} of $\phi$, i.e., $0 \in \partial \phi(\mathbf{x})$. 
Critical points are considered to be ``generalized stationary points" \citeSupp{vari1:supp}.

\bibliographystyle{./IEEEtran}
\bibliographySupp{./IEEEabrv,./OPDL_v11}

\end{document}